\newcommand{\Wlog}{w.l.o.g.\ }
\renewcommand{\emptyset}{\varnothing}
\newcommand{\ol}{\overline}
\newcommand{\valtrue}{\textnormal{\texttt{true}}\xspace}
\newcommand{\valfalse}{\textnormal{\texttt{false}}\xspace}
\newcommand{\CPNet}{CP\nobreakdash-net\xspace}
\newcommand{\MCPNet}{\ensuremath{m}CP\nobreakdash-net\xspace}
\newcommand{\CPNets}{CP\nobreakdash-nets\xspace}
\newcommand{\MCPNets}{\ensuremath{m}CP\nobreakdash-nets\xspace}
\newcommand{\SetFeat}{\mathcal}
\newcommand{\Pref}{\succ}
\newcommand{\Ferp}{\prec}
\newcommand{\Incomp}{\bowtie}
\newcommand{\PrefNet}[1]{\Pref_{#1}}
\newcommand{\FerpNet}[1]{\Ferp_{#1}}
\newcommand{\IncompNet}[1]{\Incomp_{#1}}
\newcommand{\PrefRanking}[1]{\Pref_{#1}}
\newcommand{\FerpRanking}[1]{\Ferp_{#1}}
\newcommand{\IncompRanking}[1]{\Incomp_{#1}}
\newcommand{\PrefParetoMNet}[1]{\Pref^{\mathit{p}}_{#1}}
\newcommand{\PrefParetoProfile}[1]{\Pref^{\mathit{p}}_{#1}}
\newcommand{\PrefMajorityMNet}[1]{\Pref^{\mathit{maj}}_{#1}}
\newcommand{\PrefMajorityProfile}[1]{\Pref^{\mathit{maj}}_{#1}}
\newcommand{\Profile}{\mathscr}
\newcommand{\Ranking}{\mathcal}
\newcommand{\RepScheme}{\mathscr}
\newcommand{\Net}{\mathit}
\newcommand{\MNet}{\mathcal}
\newcommand{\OptOutNet}[1]{\mathit{o}_{#1}}
\newcommand{\OutNet}[1]{\mathcal{O}_{#1}}
\newcommand{\OutMNet}[1]{\mathcal{O}_{#1}}
\newcommand{\GraphNet}[1]{\mathcal{G}_{#1}}
\newcommand{\ExtGraphNet}[1]{G_{#1}}
\newcommand{\FeatNet}[1]{\mathcal{F}_{#1}}
\newcommand{\FeatMNet}[1]{\mathcal{F}_{#1}}
\newcommand{\LinkNet}[1]{\mathcal{E}_{#1}}
\newcommand{\NodeExtNet}[1]{V_{#1}}
\newcommand{\EdgeExtNet}[1]{E_{#1}}
\newcommand{\CPTNet}[1]{\CPTNetFeat{#1}{}}
\newcommand{\CPTNetFeat}[2]{\ifthenelse{\equal{#2}{}}{\mathit{CPT}_{#1}}{\mathit{CPT}_{#1}^{#2}}}
\newcommand{\Dom}{\mathit{Dom}}
\newcommand{\DomNet}[1]{\Dom_{#1}}
\newcommand{\DomMNet}[1]{\Dom_{#1}}
\newcommand{\Par}{\mathit{Par}}
\newcommand{\ParNet}[1]{\Par_{#1}}
\newcommand{\SetOrdNet}[1]{\mathit{Ord}_{#1}}
\newcommand{\ImpFlipVarNet}[2]{\xrightarrow{}_{#2[#1]}}
\newcommand{\ImpFlipVar}[1]{\rightarrow_{#1}}
\newcommand{\ImpFlipNet}[1]{\rightarrow_{#1}}
\newcommand{\ImpFlip}{\ImpFlipVar{}}
\newcommand{\Optimality}{\textsc{Optimality-Testing}\xspace}
\newcommand{\Dominance}{\textsc{Dominance}\xspace}
\newcommand{\Incomparability}{\textsc{Incomparability}\xspace}
\newcommand{\ParetoQuery}{\textsc{Pareto-Dominance}\xspace}
\newcommand{\IsParetoOptimal}{\textsc{Is-Pareto-Optimal}\xspace}
\newcommand{\ExistsParetoOptimal}{\textsc{Exists-Pareto-Optimal}\xspace}
\newcommand{\IsParetoOptimum}{\textsc{Is-Pareto-Optimum}\xspace}
\newcommand{\ExistsParetoOptimum}{\textsc{Exists-Pareto-Optimum}\xspace}
\newcommand{\MajorityQuery}{\textsc{Majority-Dominance}\xspace}
\newcommand{\IsMajorityOptimal}{\textsc{Is-Majority-Optimal}\xspace}
\newcommand{\ExistsMajorityOptimal}{\textsc{Exists-Majority-Optimal}\xspace}
\newcommand{\IsMajorityOptimum}{\textsc{Is-Majority-Optimum}\xspace}
\newcommand{\ExistsMajorityOptimum}{\textsc{Exists-Majority-Optimum}\xspace}
\newcommand{\Sat}{\textsc{Sat}\xspace}
\newcommand{\Unsat}{\textsc{Unsat}\xspace}
\newcommand{\Taut}{\textsc{Taut}\xspace}
\newcommand{\QbfECNF}[1]{\ensuremath{\mathrm{QBF}_{{#1},\exists}^{\mathit{CNF}}}\xspace}
\newcommand{\QbfADNF}[1]{\ensuremath{\mathrm{QBF}_{{#1},\forall}^{\mathit{DNF}}}\xspace}
\newcommand{\tuple}[1]{\langle #1 \rangle}
\newcommand{\NetCNF}{\Net{F}}
\newcommand{\NetCNFSumm}{\NetCNF_{\mathit{s}}}
\newcommand{\MNetIsParOpt}{\MNet{M}_{\mathrm{ipo}}}
\newcommand{\NetIPO}[1]{\Net{N}_{#1}^\mathrm{ipo}}
\newcommand{\NetDirect}{\Net{D}}
\newcommand{\NetInterAND}{\Net{H}_{\mathrm{C}}}
\newcommand{\NetInterOR}{\Net{H}_{\mathrm{D}}}
\newcommand{\MNetExistsWeakCond}{\MNet{M}_{\mathrm{eml}}}
\newcommand{\NetEWC}[1]{\Net{N}_{#1}^\mathrm{eml}}
\newcommand{\MNetIsStrongCond}{\MNet{M}_{\mathrm{imm}}}
\newcommand{\NetISC}[1]{\Net{N}_{#1}^\mathrm{imm}}
\newcommand{\MNetMaj}{\MNet{M}_{\mathit{NoWin}}}
\newcommand{\SetAgentsPrefMNet}[1]{S^{\Pref}_{#1}}
\newcommand{\SetAgentsFerpMNet}[1]{S^{\Ferp}_{#1}}
\newcommand{\SetAgentsIncompMNet}[1]{S^{\Incomp}_{#1}}
\newcommand{\SetAgentsPrefProfile}[1]{S^{\Pref}_{#1}}
\newcommand{\SetAgentsFerpProfile}[1]{S^{\Ferp}_{#1}}
\newcommand{\SetAgentsIncompProfile}[1]{S^{\Incomp}_{#1}}
\newcommand{\Change}{\mathit{Change}}
\newcommand{\Up}{\mathit{Up}}
\newcommand{\Down}{\mathit{Down}}
\newcommand{\WitnessSet}{\mathit{Witn}}
\newcommand{\NonWitnessSet}{\ol{\WitnessSet}}
\newcommand{\LogSpace}{\textnormal{{LOGSPACE}}\xspace}
\newcommand{\PTIME}{\textnormal{{P}}\xspace}
\newcommand{\FP}{\textnormal{{FP}}\xspace}
\newcommand{\NP}{\textnormal{{NP}}\xspace}
\newcommand{\NPh}{\textnormal{{NP\nobreakdash-\hspace{0pt}hard}}\xspace}
\newcommand{\NPc}{\textnormal{{NP\nobreakdash-\hspace{0pt}complete}}\xspace}
\newcommand{\CoNP}{\textnormal{co\nobreakdash-NP}\xspace}
\newcommand{\CoNPh}{\textnormal{{co\nobreakdash-NP\nobreakdash-\hspace{0pt}hard}}\xspace}
\newcommand{\CoNPc}{\textnormal{{co\nobreakdash-NP\nobreakdash-\hspace{0pt}complete}}\xspace}
\newcommand{\DP}[1]{\textnormal{\ensuremath{\mathrm{D}^\mathrm{P}_{#1}}}\xspace}
\newcommand{\BH}[1]{{\ifthenelse{\equal{#1}{}}{\ensuremath{\mathrm{BH}}}{\ensuremath{\mathrm{BH}(#1)}}}\xspace}
\newcommand{\BHThree}[1]{{\ifthenelse{\equal{#1}{}}{\ensuremath{\mathrm{BH}_3}}{\ensuremath{\mathrm{BH}_3(#1)}}}\xspace}
\newcommand{\CoDP}[1]{\textnormal{co\nobreakdash-\ensuremath{\mathrm{D}^\mathrm{P}_{#1}}}\xspace}
\newcommand{\DeltaP}[1]{\textnormal{\ensuremath{\Delta^\mathrm{P}_{#1}}}\xspace}
\newcommand{\ThetaP}[1]{\textnormal{\ensuremath{\Theta^\mathrm{P}_{#1}}}\xspace}
\newcommand{\SigmaP}[1]{\textnormal{\ensuremath{\Sigma^\mathrm{P}_{#1}}}\xspace}
\newcommand{\SigmaPh}[1]{\textnormal{\ensuremath{\Sigma^\mathrm{P}_{#1}}\nobreakdash-\hspace{0pt}hard}\xspace}
\newcommand{\SigmaPc}[1]{\textnormal{\ensuremath{\Sigma^\mathrm{P}_{#1}}\nobreakdash-\hspace{0pt}complete}\xspace}
\newcommand{\PiP}[1]{\textnormal{\ensuremath{\Pi^\mathrm{P}_{#1}}}\xspace}
\newcommand{\PiPh}[1]{\textnormal{\ensuremath{\Pi^\mathrm{P}_{#1}}\nobreakdash-\hspace{0pt}hard}\xspace}
\newcommand{\PiPc}[1]{\textnormal{\ensuremath{\Pi^\mathrm{P}_{#1}}\nobreakdash-\hspace{0pt}complete}\xspace}
\newcommand{\PSpace}{\textnormal{{PSPACE}}\xspace}
\newcommand{\ExpTime}{\textnormal{{EXPTIME}}\xspace}
\theoremstyle{plain}
\newtheorem{theorem}{Theorem}[section]
\newtheorem{lemma}[theorem]{Lemma}
\newtheorem{corollary}[theorem]{Corollary}
\theoremstyle{definition}
\newtheorem{example}[theorem]{Example}
\title{Complexity Results for Preference Aggregation over (\texorpdfstring{$m$}{m})\CPNets:\\ Pareto and Majority Voting\footnote{Preliminary results of this paper have appeared in the Proceedings of the 30th AAAI Conference on Artificial Intelligence (\mbox{AAAI-16})~\cite{LukasiewiczMaliziaAAAI2016}.}}
\date{}
\author[1]{Thomas Lukasiewicz}
\author[2]{Enrico Malizia}
\affil[1]{%
Department of Computer Science\\
University of Oxford, UK%\\
}
\affil[2]{%
Department of Computer Science\\
University of Exeter, UK%\\
}
\begin{document}

\maketitle

\begin{abstract}
Aggregating preferences over combinatorial domains has many applications in artificial intelligence (AI).
Given the inherent exponential nature of preferences over combinatorial domains, compact representation languages are needed to represent them, and ($m$)\CPNets are among the most studied ones.
Sequential and global voting are two different ways of aggregating preferences represented via \CPNets.
In sequential voting, agents' preferences are aggregated feature-by-feature.
For this reason, sequential voting may exhibit voting paradoxes, i.e., the possibility to select sub-optimal outcomes when preferences have specific feature dependencies.
To avoid paradoxes in sequential voting, one has often assumed the (quite) restrictive constraint of $\mathcal{O}$-legality, which imposes a shared common topological order among all the agents' \CPNets.
On the contrary, in global voting, \CPNets are considered as a whole during the preference aggregation process.
For this reason, global voting is immune from paradoxes, and hence there is no need to impose restrictions over the \CPNets' structure when preferences are aggregated via global voting.
Sequential voting over $\mathcal{O}$-legal \CPNets has extensively been investigated, and $\mathcal{O}$-legality of \CPNets has often been required in other studies.
On the other hand, global voting over non-$\mathcal{O}$-legal \CPNets has not  carefully been analyzed,
despite it was explicitly stated in the literature that a theoretical comparison between global and sequential voting was highly promising and a precise complexity analysis for global voting has been asked for multiple times.
In quite few works, only very partial results on the complexity of global voting over \CPNets have been given.
In this paper, we start to fill this gap by carrying out a thorough computational complexity analysis of global voting tasks, for Pareto and majority voting, over not necessarily $\mathcal{O}$-legal acyclic binary polynomially connected ($m$)\CPNets.
We show that all these problems belong to various levels of the polynomial hierarchy, and some of them are even in \PTIME or \LogSpace.
Our results are a notable achievement, given that the previously known upper bound for most of these problems was the complexity class \ExpTime.
We provide various exact complexity results showing tight lower bounds and matching upper bounds for problems that (up to date) did not have any explicit non-obvious lower bound.
\end{abstract}

\section{Introduction}\label{sec_intro} The problem of managing and aggregating agent preferences has attracted extensive interest in the computer science community~\cite{SurveyCompSocChoice}, because methods for representing and reasoning about preferences are very important in artificial intelligence (AI) applications, such as recommender systems~\cite{HandbookRecSys}, (group) product configuration~\cite{ConfigurationBook,Brafman2002,Stein2014},
(group)~planning~\cite{Brafman2005,Shaparau2006,Russell2012,Son2006}, (group) preference-based constraint satisfaction~\cite{Boutilier2004a,Boerkoel2010,Brafman2010}, and (group) preference-based query answering/information retrieval~\cite{Lukasiewicz2014,Lukasiewicz2013,DiNoia2015,Borgwardt2016}.

In computer science, the study of preference aggregation has often been based on the solid ground of social choice theory, which is the branch of economics analyzing methods for collective decision making~\cite{Arrow2002,Arrow2011}.
Having a well-founded theory and practice on how to properly and efficiently manage and aggregate preferences of real software agents, and hence support the growth and use of these technologies, has been one of the main drivers for investigating social choice theory from a computational perspective.
In social choice theory, the actual ways of representing agent preferences are rarely taken into consideration, also because the sets of candidates usually considered are relatively small in size.
For this reason, most of the insights obtained in the computational social choice literature about the computational properties of preference aggregation functions (or voting procedures) have assumed that agent preferences over the set of candidates are extensively listed (see~\cite{SurveyCompSocChoice} and references therein).
Although this is perfectly reasonable when we reason about, e.g., (political) elections among a not too numerous set of human candidates, this is not feasible when the voting domain (i.e., the set of candidates) has a \emph{combinatorial structure}~\cite{Lang2004,Lang_handbook,Lang_ai_magazine}.
By combinatorial structure, we mean that the set of candidates (or \emph{outcomes}) is the Cartesian product of finite value domains for each of a set of \emph{features} (also called \emph{variables}, or \emph{issues}, or \emph{attributes}).
The problem of aggregating agents' preferences over combinatorial domains (or \emph{multi-issue} domains) is called a \emph{combinatorial vote}~\cite{Lang2002,Lang2004}.

Interestingly, voting over combinatorial domains is rather common.
For example, in~2012, on the day of the US presidential election, voters in California had to vote also for eleven referenda~\cite{Lang_handbook}.
As another example, it may be the case that the inhabitants of a town have to make a joint decision about different related issues regarding their community, which could be whether and where to build new public facilities (such as a swimming pool or a library), or whether to levy new taxes.
Note that these voting scenarios are often also called \emph{multiple elections} or \emph{multiple referenda}~\cite{Brams1998,Xia2011,Xia2007,Lang2009,Lang_handbook,Lang_ai_magazine}.
Other examples are group product configurations and group planning~\cite{Stein2014,Lang_handbook}.
As for the latter, consider, e.g., a situation in which multiple autonomous agents have to agree upon a shared plan of actions to reach a goal that is preferred by the group as a whole,
such as a group of autonomous robots coordinating during the exploration of a remote area/planet.
Each robot has a specific task to accomplish, and the group as a whole coordinates to achieve a common goal.
That is, the robots have their own specific preferences over a vast amount of variables/features emerging from the contingency of the situation to complete their individual tasks,
however, their individual preferences have to be blended in all together, so that the course of action of a robotic agent does not interfere~with the tasks of the other agents, and the overall mission is successful.
These examples show the great relevance of dealing with combinatorial votes, and hence the pressing necessity of finding ways to represent agent~preferences over multi-issue domains and algorithms for aggregating them.

Combinatorial domains contain an exponential number of outcomes in the number of features, and hence compact representations for combinatorial preferences are needed~\cite{Lang2004,Lang_handbook}
(see also \cref{sec_representation_preferences} for more background).
The graphical model of \emph{\CPNets}~\cite{Boutilier2004} is among the most studied of these representations, as proven by a vast literature on them.
In \CPNets, the vertices of a graph represent features, and an edge from vertex $A$ to vertex $B$ models the influence of the value of feature $A$ on the choice of the value of feature $B$.
Intuitively, this model captures preferences like ``if the rest of the dinner is the same, with a fish dish ($A$'s value), I~prefer a white wine ($B$'s value)'',
also called \emph{conditional~ceteris paribus preferences}; a more detailed example is given below.

\begin{example}
Assume that we want to model one's preferences for a dinner with a main dish and a wine.
In the \CPNet in \cref{fig:dinner_example_cpnet}, an edge from vertex $\mathit{Main}$ to vertex $\mathit{Wine}$  models that the value of feature $\mathit{Main}$ influences the choice of the value of feature $\mathit{Wine}$. More
precisely,
 $m$ and $f$ are the possible values of feature $\mathit{Main}$, and they denote ``$m$eat'' and ``$f$ish'', respectively, while $r$ and $w$ are the possible values of feature $\mathit{Wine}$, and they denote ``$r$ed (wine)'' and ``$w$hite (wine)'', respectively.
The table associated with feature $\mathit{Wine}$ specifies that when a meat dish is chosen, then a red wine is preferred to a white one, and when a fish main is chosen, then a white wine is preferred to a red wine.
The table associated with feature $\mathit{Main}$ indicates that a meat dish is preferred to a fish one.
These tables are called \emph{CP tables}.
A~\CPNet like this one can represent the above conditional ceteris paribus preference ``given that the rest of the dinner does not change, with a meat dish ($\mathit{Main}$'s value), I~prefer a red wine ($\mathit{Wine}$'s value)''.

\begin{figure}[t]
  \centering%
  \begin{subfigure}[c]{0.5\textwidth}
    \centering
    \includegraphics[width=0.6\textwidth]{./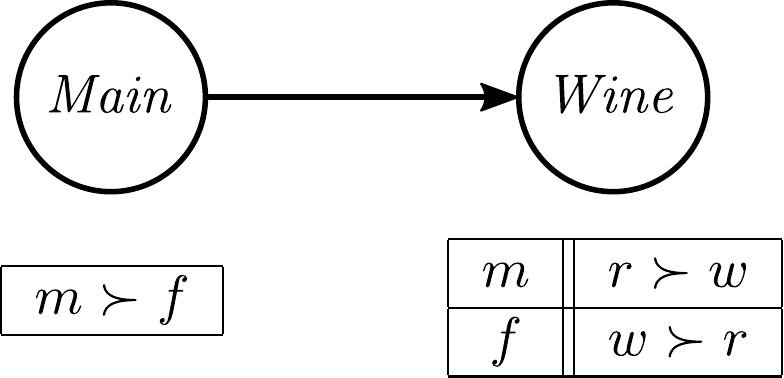}
    \caption{{\footnotesize A \CPNet modeling dinner preferences.}}
    \label{fig:dinner_example_cpnet}
  \end{subfigure}%
  %\hspace{2cm}
  \begin{subfigure}[c]{0.3\textwidth}
    \centering
    \includegraphics[width=0.5\textwidth]{./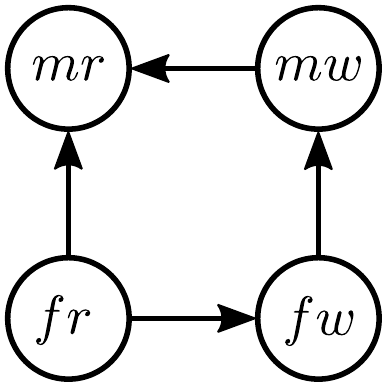}
   \vspace*{-1ex}
   \caption{{\footnotesize The CP-net's extended preference graph.}}
    \label{fig:dinner_example_ext_graph}
  \end{subfigure}
  \caption{A \CPNet and its preference graph.}\label{fig:dinner_example}
\end{figure}

Every \CPNet has an associated \emph{extended preference graph}, whose vertices are all the possible outcomes of the domain,
and whose  edges connect outcomes differing on only one value. More precisely, there is a directed edge from an outcome to another, if the latter is preferred to the former according to the preferences encoded in the tables of the \CPNet.
\cref{fig:dinner_example_ext_graph} shows the extended preference graph of the \CPNet in \cref{fig:dinner_example_cpnet},
having as vertices all the possible combinations for the dinner, and there is, e.g., an edge from $\mathit{mw}$ to $\mathit{mr}$, because the combination meat and red wine is preferred to the combination meat and white wine.
The preferences encoded in a \CPNet{} are the transitive closure of its extended preference graph.
Intuitively, an outcome $\alpha$ is preferred to an outcome $\beta$ according to the preferences of a \CPNet, if there is a directed path from $\beta$ to $\alpha$ in the extended preference graph. \hfill$\lhd$
\end{example}

\CPNets are also used to model preferences of \emph{groups} of individuals, obtaining a multi-agent model, called \emph{$m$\CPNets}~\cite{Rossi2004},
which is a set, or \emph{profile}, of \CPNets, one for each agent.
The preference semantics of \MCPNets is defined via voting schemes:
through its own individual \CPNet, every agent votes whether an outcome is preferred to another.
Various voting schemes were proposed for \MCPNets~\cite{Rossi2004,Li2015}, and different voting schemes give rise to different dominance semantics for \MCPNets.
In this paper, we consider \emph{Pareto} and \emph{majority voting} as they were defined in~\cite{Rossi2004}.
In the voting schemes proposed for \MCPNets, the voting protocol adopted, i.e., the actual way in which votes are collected \cite{Conitzer2005}, is \emph{global voting}~\cite{Lang2007,Lang_handbook}.
In this protocol, the results of the voting procedure are computed by having as input the \CPNets as a whole (see \cref{sec72} for related works on different voting protocols over \CPNets).

\begin{example}
Consider again the dinner scenario, and assume that there are three agents (Alice, Bob, and Chuck), expressing their preferences via \CPNets (see \cref{fig:dinner_aggregation}).
In Pareto voting, an outcome $\alpha$ dominates an outcome $\beta$, if \emph{all} agents prefers $\alpha$ to $\beta$.
In majority voting, an outcome $\alpha$ dominates an outcome $\beta$, if the \emph{majority} of agents prefers $\alpha$ to~$\beta$.

\begin{figure}
  \centering%
  \includegraphics[width=0.75\textwidth]{./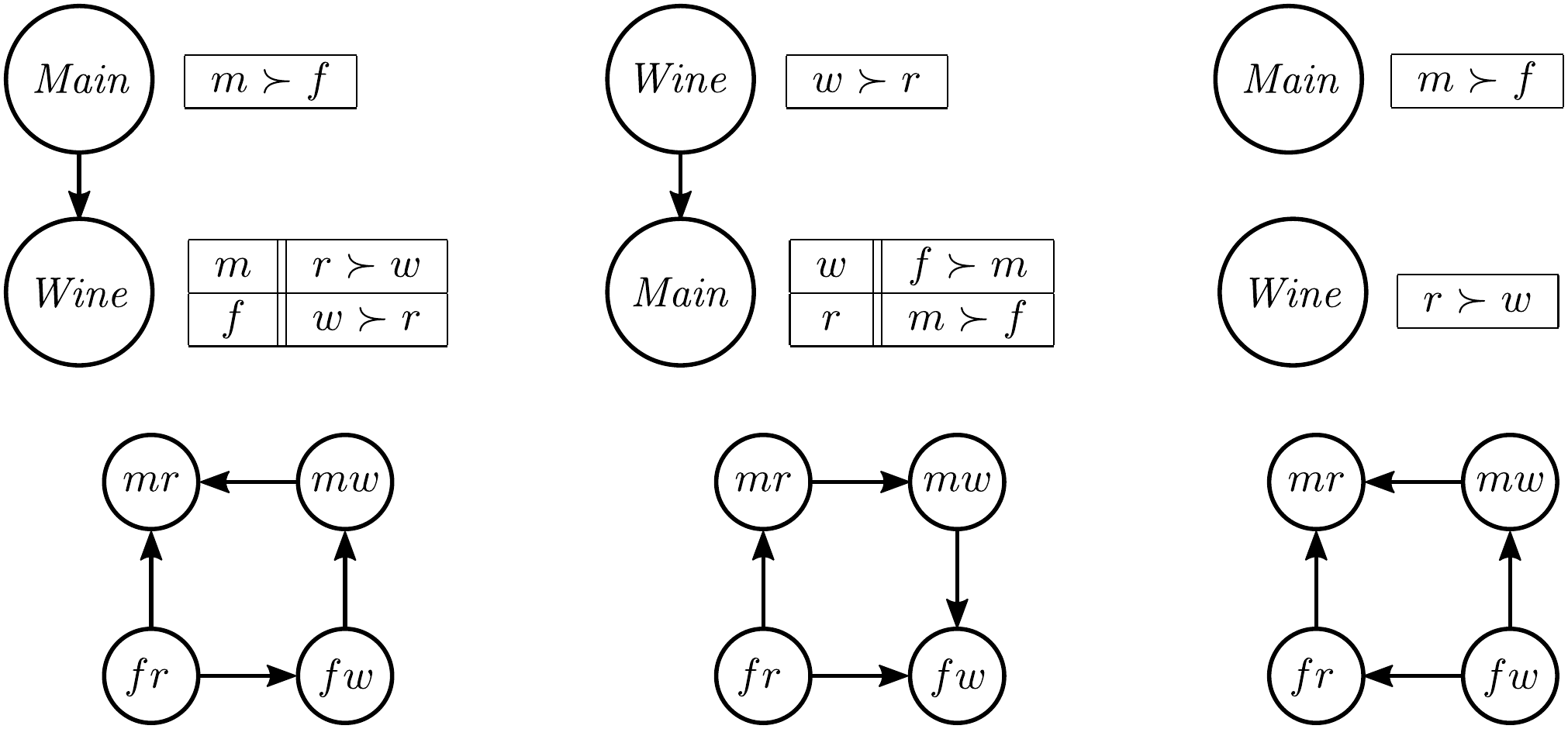}
  \caption{Dinner preferences of  Alice, Bob, and Chuck (in this order) modeled via \CPNets (above) and their extended preference graphs (below).}\label{fig:dinner_aggregation}
\end{figure}

The outcome $\mathit{fr}$ is not \emph{Pareto optimal}, because there is an outcome (namely $mr$), which is preferred to $\mathit{fr}$ by \emph{all} the agents.
The outcome $mw$, instead, is Pareto optimal, because there is no outcome Pareto dominating $mw$.
Hence, from a Pareto perspective, $mw$ is better than $\mathit{fr}$.
The outcome $mw$, however, is not \emph{majority optimal}, because
$mr$ majority dominates $mw$ (Alice and Chuck prefer $mr$ to $mw$).
On the other hand, $mr$ is majority optimal, because there is no outcome majority dominating $mr$.
Hence, from a majority perspective, $mr$ is better than $mw$.
Moreover, again according~to~the majority voting scheme, $mr$ is a very good outcome, because $mr$ is also \emph{majority optimum}, which means that $mr$ majority dominates all other outcomes.
On the contrary, in this example, there is no Pareto optimum outcome, i.e., there is no outcome Pareto dominating all other outcomes.\hfill$\lhd$
\end{example}

In the literature, a comparison between sequential voting (which is another voting protocol; see \cref{sec72}) and global voting over \CPNets was explicitly asked for and stated to be highly promising~\cite{Lang2007}.
However, global voting over \CPNets has not been as thoroughly investigated as sequential voting.
In fact, unlike \CPNets, which 
were extensively analyzed, a precise complexity analysis of \MCPNets has been missing for a long time, as explicitly mentioned several times in the literature~\cite{Lang2007,Li2010,Li2010a,Li2011,Li2015,Sikdar2017}---since the dominance semantics for \MCPNets is global voting over \CPNets, in the following, we use them interchangeably.
Furthermore, it was conjectured that the complexity of computing majority optimal and majority optimum outcomes in \MCPNets is harder than \NP and \CoNP~\cite{Li2010,Li2011}.

\paragraph{Contributions}
The aim of this paper is to explore the complexity of \MCPNets (and hence of global voting over \CPNets).
In particular, we focus on acyclic binary polynomially connected \MCPNets (see \cref{sec_prelim} for these notions) built with standard \CPNets, i.e., the constituent \CPNets of an \MCPNet rank all the features, and they are not partial \CPNets (which instead were allowed in the original definition of \MCPNets~\cite{Rossi2004}). 
Unlike what is often assumed in the literature, in this work, we do not restrict the profiles of \CPNets to be $\mathcal{O}$-legal 
(which means that there is a topological order common to all the \CPNets  of the profile;
see \cref{sec73}).
We carry out a thorough complexity analysis for the (a)~Pareto and (b)~majority voting schemes, as defined in~\cite{Rossi2004}, of deciding (1)~dominance, (2)~optimal and  (3) optimum outcomes, and (4)~the existence of optimal and (5) optimum outcomes.
Deciding the dominance for a voting scheme $s$ means deciding, given two outcomes, whether one dominates the other according to $s$.
Deciding whether an outcome is optimal or optimum for a voting scheme $s$ means deciding whether the outcome is not dominated or dominates all others, respectively, according to $s$.
Deciding the existence of optimal and optimum outcomes is the natural extension of the previous problems.

A summary of the complexity results obtained in this paper is provided in \cref{fig:summary_decision_results_mcp_nets}.
More precisely, deciding dominance and optimal outcomes is complete for NP and co-NP, respectively, for both Pareto and majority voting,
while deciding the existence of optimal outcomes can be done in constant time for Pareto voting and is complete for $\SigmaP2$ for majority voting.
Furthermore, deciding optimum outcomes and their existence is in $\LogSpace$ and $\PTIME$ for Pareto voting, and
complete for $\PiP2$ and between $\PiP2$ and $\DP2$ for majority voting, respectively.

It thus turns out that Pareto voting is the easiest  voting scheme to evaluate among the two analyzed here. More precisely, both Pareto and majority dominance are \NPc, however, only the complexity of majority dominance carries over to deciding optimal and optimum outcomes and their existence, and causes a substantial increase of their complexity, e.g., deciding the existence of majority~optimal~and~optimum outcomes is hard for \SigmaP{2} and \PiP{2}, respectively.
This is due to the fact that majority voting is structurally more complex than Pareto voting.
Intuitively, Pareto voting is based on unanimity, hence, to disprove Pareto dominance between two outcomes, it suffices to find \emph{one} agent that does not agree with the dominance relationship.
This particular structure of Pareto voting makes the other tasks not more difficult than the dominance test or even tractable.
Our results hence prove the conjecture posed in~\cite{Li2010,Li2011} about majority voting~tasks over ($m$)\CPNets being harder than \NP and \CoNP.

\begin{figure}
\centering%
\begin{tabular}{|c|l||c|}
\hline
& \textbf{Problem} & \textbf{Complexity}\\
\hline
\hline
\parbox[t]{2.5mm}{\multirow{5}{*}{\rotatebox[origin=c]{90}{\textsc{Pareto}}}}
& \ParetoQuery & \NPc \\
& \IsParetoOptimal & \CoNPc \\
& \ExistsParetoOptimal & $\Theta(1)$\textsuperscript{*}\\
& \IsParetoOptimum & in \LogSpace \\
& \ExistsParetoOptimum & in \PTIME \\
\hline
\parbox[t]{2.5mm}{\multirow{5}{*}{\rotatebox[origin=c]{90}{\textsc{Majority}}}}
& \MajorityQuery & \NPc \\
& \IsMajorityOptimal & \CoNPc \\
& \ExistsMajorityOptimal & \SigmaPc2 \\
& \IsMajorityOptimum & \PiPc2 \\
& \ExistsMajorityOptimum & \PiPh2, in $\DP2$ \\
\hline
\end{tabular}
\caption{Summary of the complexity results obtained in this paper for global voting over \CPNets. Membership results above \PTIME are valid for any representation scheme whose dominance test is feasible in \NP. \ \textsuperscript{*}\!{A different proof is available in~\cite{Rossi2004}.}}
\label{fig:summary_decision_results_mcp_nets}
\end{figure}

We show completeness results for most cases, and we provide tight lower bounds for problems that (up to date) did not have any explicit lower bound transcending the obvious hardness due to the dominance test over the underlying \CPNets.
Many of our results are intractability results, where the problems are put at various levels of the polynomial hierarchy.
However, although intractability is usually ``bad'' news, these results are quite interesting, as for most of these tasks, only \ExpTime upper bounds were known in the literature to date~\cite{Rossi2004}.
Even more interestingly, some of these problems are actually tractable, as they are in \PTIME or even \LogSpace, which is a huge leap from \ExpTime.

Our hardness results are given for binary acyclic polynomially connected ($m$)\CPNets.
This means that our hardness results extend to classes of ($m$)\CPNets encompassing the \CPNets considered here, and in particular also to general \MCPNets with partial \CPNets or multi-valued features.
More generally, the hardness results proven here extend to any representation scheme as ``expressive and succinct'' as the class of \CPNets used in the proofs (see \cref{sec_framework_representations}).
Moreover, the membership results above \PTIME that we prove here extend to any ``\NP-representation'' scheme
(see \cref{sec_framework_representations}). 
Our hardness results on the existence of optimal and optimum outcomes provide also lower bounds for the computational problems.
Indeed, actually computing optimal or optimum outcomes cannot be easier than the bounds shown here, because otherwise it would be possible to decide their existence more efficiently.

\paragraph{Organization of the paper}
The rest of this paper is organized as follows.
\Cref{sec_prelim} provides some preliminaries.
In \cref{sec_complexity_cpnets},
we prove some basic complexity results for \CPNets.
\Cref{sec5,sec6} analyze the complexity of Pareto and majority voting, respectively:
first, we analyze the complexity of dominance testing;
then, we study the complexity of deciding whether an outcome is optimal and whether there exists an optimal outcome;
and we conclude by dealing with the complexity of deciding whether an outcome is optimum and whether there exists an optimum outcome.
In~\cref{sec_related}, we discuss related works.
\cref{sec_concl_future}~summarizes~the main results and
gives an outlook on future research.
For several results, we give only proof sketches 
in the body of the paper, while detailed proofs are provided in \cref{sec_detailed_proofs}.

\section{Preliminaries}\label{sec_prelim}
In this section, we give some preliminaries, briefly recalling from the literature preference relations and aggregation, conditional preference nets (\CPNets), \CPNets for groups of $m$ agents (\MCPNets), and the complexity classes that we will encounter in our complexity results. We also define a formal framework for preference representation schemes, because our membership results will be given for generic representations whose dominance test is feasible in \NP.

\subsection{Preference relations and aggregation}
Before dwelling upon the details of \CPNets, which is the specific preference representation analyzed in this paper, we now give an introductory overview of the general concepts of preferences and their aggregation.

In this paper, a \emph{preference relation} $\Ranking{R}$ over a set of outcomes $\mathcal{O}$ is a strict order over $\mathcal{O}$, i.e., $\Ranking{R}$ is a binary relation over $\mathcal{O}$
that is irreflexive (i.e., $\tuple{\alpha,\alpha}\notin\Ranking{R}$), asymmetric (i.e., if~$\tuple{\alpha,\beta}\in\Ranking{R}$, then $\tuple{\beta,\alpha}\notin\Ranking{R}$), and transitive (i.e., if~$\tuple{\alpha,\beta}\in\Ranking{R}$ and $\tuple{\beta,\gamma}\in\Ranking{R}$, then $\tuple{\alpha,\gamma}\in\Ranking{R}$).
A \emph{preference ranking}  $\Ranking{R}$ is a preference relation that is total (i.e., either $\tuple{\alpha,\beta}\in\Ranking{R}$ or $\tuple{\beta,\alpha}\in\Ranking{R}$ for any two different outcomes $\alpha$ and $\beta$).
Usually, given two outcomes $\alpha$ and~$\beta$, their preference relationship stated in $\Ranking{R}$ is denoted by $\alpha\PrefRanking{\Ranking{R}}\beta$, instead of $\tuple{\alpha,\beta}\in\Ranking{R}$, which means that, in $\Ranking{R}$, $\alpha$ is strictly preferred to $\beta$, or $\alpha$ \emph{dominates}~$\beta$.
On the other hand, $\alpha\not\PrefRanking{\Ranking{R}}\beta$ means that $\tuple{\alpha,\beta}\notin\Ranking{R}$, and $\alpha\Incomp_\Ranking{R}$ means that $\tuple{\alpha,\beta}\notin\Ranking{R}$
\emph{and}~$\tuple{\beta,\alpha}\notin\Ranking{R}$, i.e., $\alpha$ and $\beta$ are \emph{incomparable} in $\Ranking{R}$.
Observe that in a preference ranking, it cannot be the case that two outcomes are incomparable.
Given a preference relation $\Ranking{R}$, an outcome $\alpha$ is \emph{optimal} in $\Ranking{R}$ if there is no outcome $\beta$ such that~$\beta\PrefRanking{\Ranking{R}}\alpha$.
We say that $\alpha$ is \emph{optimum} in $\Ranking{R}$, if for all outcomes $\beta$ such that $\beta\neq\alpha$, it holds that $\alpha\PrefRanking{\Ranking{R}}\beta$.
Clearly, if there is an optimum outcome in $\Ranking{R}$, then it is unique.
For notational convenience, if the preference relation $\Ranking{R}$ is clear from the context, we do not explicitly mention $\Ranking{R}$ as a subscript in the notations above.
In the following, if not stated otherwise, when we speak of preferences structures, we mean preference relations.

In preference aggregation, we
deal with preferences of multiple agents.
A preference \emph{profile} $\Profile{P}=\tuple{\Ranking{R}_1,\dots,\Ranking{R}_m}$ is a set of $m$ preference relations.
We assume that all the preferences $\Ranking{R}_i$ of $\Profile{P}$ are defined over the same set of outcomes, i.e., the agents express their preferences over the same set of candidates.
In this paper, we  focus on voting procedures based on comparisons of pairs of outcomes (see, e.g.,~\cite{BaumeisterRoth:Voting} for a classification of different kinds of preference aggregation procedures).
For this reason, we need to define the following sets of agents.
For a profile $\Profile{P} = \tuple{\Ranking{R}_1,\dots,\Ranking{R}_m}$, we denote by $\SetAgentsPrefProfile{\Profile{P}}(\alpha,\beta) = \{i \mid \alpha\PrefRanking{\Ranking{R}_i}\beta\}$, $\SetAgentsFerpProfile{\Profile{P}}(\alpha,\beta) = \{i \mid \alpha\FerpRanking{\Ranking{R}_i}\beta\}$, and $\SetAgentsIncompProfile{\Profile{P}}(\alpha,\beta) = \{i \mid \alpha\IncompRanking{\Ranking{R}_i}\beta\}$, the sets of agents preferring $\alpha$ to $\beta$, preferring $\beta$ to $\alpha$, and for which $\alpha$ and $\beta$ are incomparable, respectively.

The voting schemes considered in this paper are Pareto and majority. The definition of their dominance semantics over preference profiles, reported below, is a generalization of the respective definition over \MCPNets given in~\cite{Rossi2004}.

\begin{description}
  \item[Pareto:] An outcome $\beta$ \emph{Pareto dominates} an outcome $\alpha$, denoted $\beta\PrefParetoProfile{\Profile{P}}\alpha$, if \emph{all} agents prefer $\beta$ to $\alpha$, i.e., $|\SetAgentsPrefProfile{\Profile{P}}(\beta,\alpha)| = m$.

  \item[Majority] An outcome $\beta$ \emph{majority dominates} an outcome $\alpha$, denoted $\beta\PrefMajorityProfile{\Profile{P}}\alpha$, if the \emph{majority} of the agents prefer $\beta$ to~$\alpha$, i.e., $|\SetAgentsPrefProfile{\Profile{P}}(\beta,\alpha)| > |\SetAgentsFerpProfile{\Profile{P}}(\beta,\alpha)| + |\SetAgentsIncompProfile{\Profile{P}}(\beta,\alpha)|$.
\end{description}

For a preference profile $\Profile{P}$ and a voting scheme $s$, if outcome $\beta$ does not $s$ dominate outcome $\alpha$, we denote this by $\beta\not\Pref^{\mathit{s}}_{\Profile{P}}\alpha$.
An outcome $\alpha$ is \emph{$s$ optimal} in $\Profile{P}$, if for all $\beta\neq\alpha$, it holds that $\beta\not\Pref^{\mathit{s}}_{\Profile{P}}\alpha$, while $\alpha$ is \emph{$s$ optimum} in $\Profile{P}$, if for all $\beta\neq\alpha$, it holds that $\alpha\Pref^{\mathit{s}}_{\Profile{P}}\beta$.
Note that optimum outcomes, if they exist, are unique.

\subsection{CP-nets}\label{sec_cpnets}

We now focus on \CPNets, which is the preference representation that we will more closely investigate in this work.
As mentioned in the introduction, the set of outcomes of a preference relation is often defined as the Cartesian product of finite value domains for each of a set of {features}.
Conditional preference nets (\CPNets)~\cite{Boutilier2004} are a formalism to encode conditional ceteris paribus preferences over such combinatorial domains.
The distinctive element of~\CPNets is that a directed graph, whose vertices represent the features of a combinatorial domain,
is used to intuitively model the conditional part of conditional ceteris paribus preference statements.
Below, we recall the syntax, semantics, and some properties of \CPNets; see \cref{sec_representation_preferences} for more on conditional ceteris paribus preferences and preference representations in general.

\paragraph{Syntax of CP-nets}
A \emph{\CPNet} $\Net{N}$ is a triple $\tuple{\GraphNet{\Net{N}},\DomNet{\Net{N}},{(\CPTNetFeat{\Net{N}}{F})}_{F\in\FeatNet{\Net{N}}}}$, where $\GraphNet{\Net{N}}=\linebreak[0]\tuple{\FeatNet{\Net{N}},\LinkNet{\Net{N}}}$ is a directed graph whose vertices $\FeatNet{\Net{N}}$ represent the \emph{features} of a combinatorial domain, and $\DomNet{\Net{N}}$ and ${(\CPTNetFeat{\Net{N}}{F})}_{F\in\FeatNet{\Net{N}}}$ are a function and a family of functions, respectively.
The function $\DomNet{\Net{N}}$ associates a \emph{(value) domain} $\DomNet{\Net{N}}(F)$ with every feature $F$, while the functions $\CPTNetFeat{\Net{N}}{F}$ are the \emph{CP tables} for every feature $F$,
which are defined below.
The value domain of a feature $F$ is the set of all values that $F$ may assume in the possible outcomes.
In this paper, we assume features to be \emph{binary}, i.e., the domain of each feature $F$ contains exactly two values,
 usually denoted $\ol{f}$ and $f$, and called the \emph{overlined} and the \emph{non-overlined} value (of $F$), respectively.
For a set of features $\SetFeat{S}$, $\DomNet{\Net{N}}(\SetFeat{S})=\times_{F\in\SetFeat{S}}\DomNet{\Net{N}}(F)$ denotes the Cartesian product of the domains of the features in $\SetFeat{S}$.
Thus, an \emph{outcome} is an element of $\OutNet{\Net{N}}=\DomNet{\Net{N}}(\FeatNet{\Net{N}})$.
Given a feature $F$ and an outcome $\alpha$, we denote by $\alpha[F]$ the value of $F$ in $\alpha$, while, given a set of features $\SetFeat{F}$, $\alpha[\SetFeat{F}]$ is the projection of $\alpha$ over $\SetFeat{F}$.
For two outcomes $\alpha$ and $\beta$, and a set of features $\SetFeat{F}$, we denote by $\alpha[\SetFeat{F}] = \beta[\SetFeat{F}]$ that $\alpha[F] = \beta[F]$ for all $F\in\SetFeat{F}$; we write $\alpha[\SetFeat{F}] \neq \beta[\SetFeat{F}]$, when this is not the case, i.e., when there is at least one feature $F\in\SetFeat{F}$ such that $\alpha[F] \neq \beta[F]$.
The CP tables encode preferences over feature values.
Intuitively, the CP table of a feature $F$ specifies how the values of the parent features of $F$ influence the preferences over the values of $F$.
More formally, for a feature $F$, we denote by $\ParNet{\Net{N}}(F)=\{G\in\FeatNet{\Net{N}}\mid \tuple{G,F}\in\LinkNet{\Net{N}}\}$ the set of all features in $\GraphNet{\Net{N}}$ from which there is an edge to $F$.
We call $\ParNet{\Net{N}}(F)$ the set of the \emph{parents} of $F$ (in $\Net{N}$).
We denote by $\SetOrdNet{\Net{N}}(F)$ the set of all the (strict) preference rankings over the elements of $\DomNet{\Net{N}}(F)$.
Each function $\CPTNetFeat{\Net{N}}{F}\colon \DomNet{\Net{N}}(\ParNet{\Net{N}}(F)) \rightarrow \SetOrdNet{\Net{N}}(F)$ maps every element of $\DomNet{\Net{N}}(\ParNet{\Net{N}}(F))$ to a (strict) preference ranking over the domain of $F$.
If $\ParNet{\Net{N}}(F)\,{=}\,\emptyset$, then $\CPTNetFeat{\Net{N}}{F}$ is a single (strict) preference ranking over $\DomNet{\Net{N}}(F)$.
Note that indifferences between feature values are not admitted in (classical) \CPNets.
Each function $\CPTNetFeat{\Net{N}}{F}$ is represented via a two-column table, in which, given a row, the element in the first column is the input value of the function $\CPTNetFeat{\Net{N}}{F}$, and the element in the second column is the associated (strict) preference ranking  over $\DomNet{\Net{N}}(F)$.
Since $\CPTNetFeat{\Net{N}}{F}$ is total, in the table representing the function, there is a row for any combination of values of the parent features, i.e., for a feature $F$, there are $2^{|\DomNet{\Net{N}}(\ParNet{\Net{N}}(F))|}$ rows in the table of $F$.

In the following, when we define CP tables, we often use a logical notation to identify for which specific values of the parent features, a particular row in the CP table has to be considered.
Although this is the notation on which generalized propositional \CPNets~\cite{Goldsmith2008} are based on, it is used here only for notational convenience.
In this paper, we always assume that CP tables are explicitly represented in the input instances.
In the CP tables, $\ol{f}\Pref f$ denotes $\ol{f}$ being preferred to $f$.
We denote by $\|\Net{N}\|$ the size of \CPNet $\Net{N}$, i.e., the space in terms of bits required to represent the whole net $\Net{N}$ (which includes features, edges, feature domains, and CP tables).

\paragraph{Semantics of CP-nets}
The preference semantics of \CPNets
can be defined in several different but equivalent ways~\cite{Boutilier2004}.
A first definition has a model-theoretic flavour~\cite[Definitions~2 and~3]{Boutilier2004}.
Intuitively, a preference ranking $\Ranking{R}$ violates a \CPNet $\Net{N}$, if there are two outcomes $\alpha$ and $\beta$ that according to the CP tables of $\Net{N}$ should be ranked $\beta\Pref\alpha$, but they are not ranked in such a way in $\Ranking{R}$
(i.e., $\alpha\PrefRanking{\Ranking{R}}\beta$, since $\Ranking{R}$ is total). 
Formally, a preference ranking $\Ranking{R}$ \emph{violates} a \CPNet $\Net{N}$, if there are two distinct outcomes $\alpha,\beta$ and a feature $F$ such that $\alpha[\FeatNet{\Net{N}}\setminus\{F\}] = \beta[\FeatNet{\Net{N}}\setminus\{F\}]$ (i.e., $\alpha$ and $\beta$ differ only on the value of~$F$), $\beta[F] \Pref \alpha[F]$ in the order $\CPTNetFeat{\Net{N}}{F}(\alpha[\ParNet{\Net{N}}(F)])$, and $\alpha\PrefRanking{\Ranking{R}}\beta$.
A preference ranking $\Ranking{R}$ \emph{satisfies} a \CPNet $\Net{N}$, if $\Ranking{R}$ does not violate $\Net{N}$.
Given two outcomes $\alpha$ and $\beta$, a \CPNet $\Net{N}$ \emph{entails} the preference $\alpha\Pref\beta$, denoted $\Net{N}\models\alpha\Pref\beta$, if
$\alpha\PrefRanking{\Ranking{R}}\beta$ for every preference ranking $\Ranking{R}$ over $\OutNet{\Net{N}}$
that satisfies $\Net{N}$.
The preference semantics of \CPNets can be equivalently defined via the concept of improving (or alternatively worsening) flip~\cite[Definition~4]{Boutilier2004}:
let $F$ be a feature, and let $\alpha$ be an outcome.
Intuitively, flipping the value of $F$ in $\alpha$ from $\alpha[F]$ to a different one is an improving flip, if the new value of $F$ is preferred, given the values in $\alpha$ of the parent features of $F$.
More formally, flipping $F$ from $\alpha[F]$ to a different value $f'$ is an \emph{improving flip}, if $f'\Pref \alpha[F]$ holds in  $\CPTNetFeat{\Net{N}}{F}(\alpha[\ParNet{\Net{N}}(F)])$.
Given two outcomes $\alpha$ and $\beta$ differing only on the value of a feature $F$, there is an improving flip from $\alpha$ to $\beta$, denoted  $\alpha\ImpFlipVarNet{F}{\Net{N}}\beta$, if  flipping the value of $F$ from $\alpha[F]$ to $\beta[F]$ is an improving flip.
In the following, we often omit the feature $F$ 
and simply write $\alpha\ImpFlipNet{\Net{N}}\beta$; and
when we say that we flip a feature, then we  often mean that the flipping is improving. The \emph{(extended) preference graph of} $\Net{N}$ is the pair  $\ExtGraphNet{\Net{N}}=\tuple{\NodeExtNet{\Net{N}},\EdgeExtNet{\Net{N}}}$, where the nodes $\NodeExtNet{\Net{N}}$ are all the possible outcomes of $\Net{N}$, and, given two outcomes $\alpha,\beta\in\NodeExtNet{\Net{N}}$, the directed edge from $\alpha$ to $\beta$ belongs to $\EdgeExtNet{\Net{N}}$ if and only if $\alpha\ImpFlipNet{\Net{N}}\beta$.

It can be shown that, for a \CPNet $\Net{N}$ and two outcomes $\alpha$ and $\beta$, $\Net{N}\models\beta\Pref\alpha$ if and only if there is a sequence of improving flips from $\alpha$ to $\beta$~\cite[Theorems~7 and~8]{Boutilier2004}.
Therefore, for an agent whose preferences are encoded through a \CPNet $\Net{N}$, we say that the agent \emph{prefers} $\beta$ to $\alpha$, or that $\beta$ \emph{dominates} $\alpha$ (in $\Net{N}$), denoted $\beta\PrefNet{\Net{N}}\alpha$, if $\Net{N}$ entails $\beta\Pref\alpha$, or, equivalently, if there is an improving flipping sequence from $\alpha$ to $\beta$.
If for two outcomes $\alpha$ and $\beta$, neither $\alpha\PrefNet{\Net{N}}\beta$ nor~$\beta\PrefNet{\Net{N}}\alpha$, then $\alpha$ and $\beta$ are 
\emph{incomparable} (in $\Net{N}$), denoted $\alpha\IncompNet{\Net{N}}\beta$ (which is equivalent to the existence of preference rankings $\Ranking{R}_1$ and $\Ranking{R}_2$ that both satisfy $\Net{N}$ such that $\alpha\PrefRanking{\Ranking{R}_1}\beta$ and $\beta\PrefRanking{\Ranking{R}_2}\alpha$).

Note here that, since there are no indifferences between features values in (classical) \CPNets, for any two outcomes~$\alpha$ and $\beta$, either one dominates the other, or they are incomparable.

\begin{example}\label{ex:first_example}
\begin{figure}
  \centering%
  \begin{subfigure}[c]{0.4\textwidth}
    \includegraphics[width=\textwidth]{./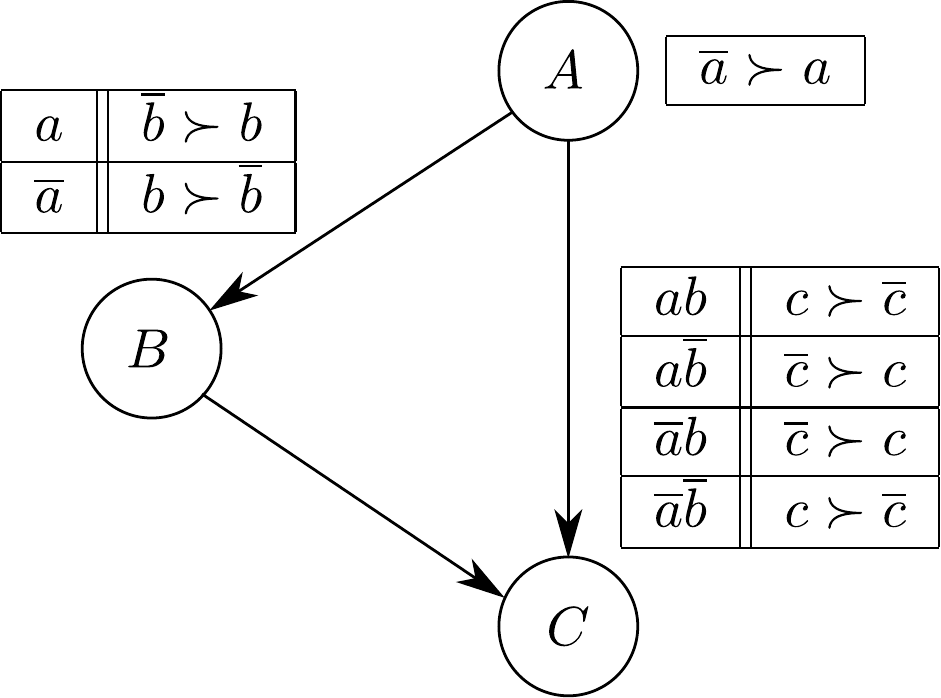}
    \caption{{\footnotesize A \CPNet $\Net{N}$ with three features.}}
    \label{fig:first_example_net}
  \end{subfigure}%
  \hspace{2cm}
  \begin{subfigure}[c]{0.4\textwidth}
    \includegraphics[width=\textwidth]{./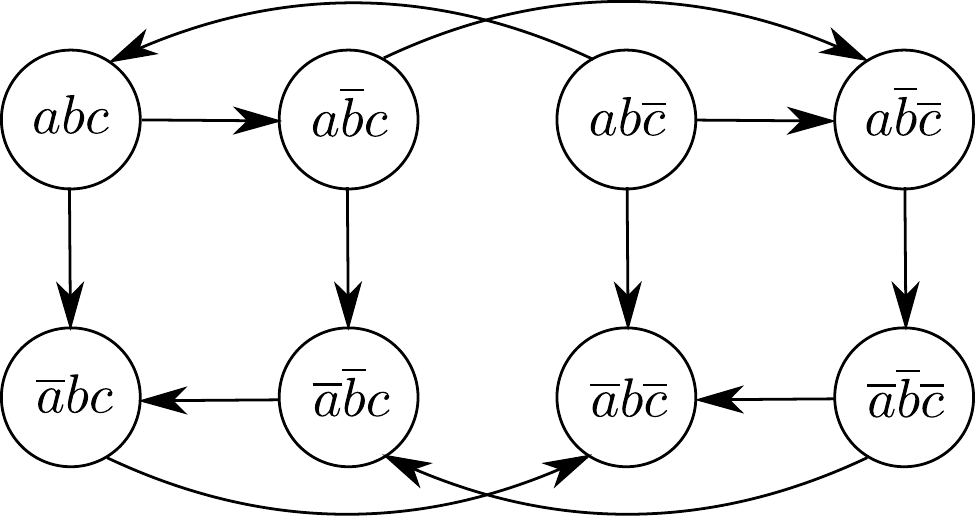}
    \caption{{\footnotesize The CP-net's extended preference graph $\ExtGraphNet{\Net{N}}$.}}
    \label{fig:first_example_ext_graph}
  \end{subfigure}
  \caption{A \CPNet and its extended preference graph.}\label{fig:first_example}
\end{figure}

Consider the \CPNet $\Net{N}$ shown in \cref{fig:first_example}. For the
 outcomes $\alpha=abc$ and $\beta=\ol{a}bc$, it holds that
$\beta\PrefNet{\Net{N}}\alpha$, because $\alpha\ImpFlipVarNet{A}{\Net{N}}\beta$. For
 the outcomes $\alpha=abc$ and $\beta=ab\ol{c}$, it holds that
 $\beta\not\PrefNet{\Net{N}}\alpha$, because there is no path from $\alpha$ to $\beta$ in $\ExtGraphNet{\Net{N}}$.
However, $\alpha\PrefNet{\Net{N}}\beta$, because $\beta\ImpFlipVarNet{C}{\Net{N}}\alpha$, and hence it is not the case that $\alpha\IncompNet{\Net{N}}\beta$.
Consider now the outcomes $\alpha=abc$ and $\beta=\ol{a}\ol{b}\ol{c}$.
Then, $\beta\PrefNet{\Net{N}}\alpha$ by the improving flipping sequence $abc\ImpFlip a\ol{b}c\ImpFlip a\ol{b}\ol{c}\ImpFlip \ol{a}\ol{b}\ol{c}$. \hfill$\lhd$
\end{example}

\paragraph{Properties of CP-nets}
A \CPNet is \emph{binary}, if all its features are binary.
The \emph{indegree} of a \CPNet $\Net{N}$ is the maximum number of edges entering in a node of the graph of $\Net{N}$.
A \CPNet $\Net{N}$ is \emph{singly connected}, if, for any two distinct features~$G$ and $F$, there is at most one path from $G$ to $F$  in $\GraphNet{\Net{N}}$.
A class $\mathscr{F}$ of \CPNets is \emph{polynomially connected}, if there exists a polynomial $p$ such that, for any \CPNet $\Net{N}\in\mathscr{F}$ and for any two features $G$ and $F$ of $\Net{N}$, there are at most $p(\|\Net{N}\|)$ distinct paths from $G$ to $F$ in $\GraphNet{\Net{N}}$.
A \CPNet $\Net{N}$ is \emph{acyclic}, if $\GraphNet{\Net{N}}$ is acyclic.
It is well known that acyclic \CPNets $\Net{N}$ always have a preference ranking satisfying $\Net{N}$, their extended preference graph $\ExtGraphNet{\Net{N}}$ is acyclic, the preferences encoded by $\Net{N}$ are consistent (i.e., there is no outcome $\alpha$ such that $\alpha\PrefNet{\Net{N}}\alpha$), and there is a unique optimum outcome $\OptOutNet{\Net{N}}$ dominating all other outcomes (and, clearly, not dominated by any other), which can be computed in polynomial time~\cite{Boutilier2004}.

It is known that dominance testing, i.e., deciding, for any two given outcomes $\alpha$ and $\beta$, whether $\beta\Pref\alpha$, is feasible in~\NP over polynomially connected classes of binary acyclic \CPNets~\cite{Boutilier2004}.
However, it is an open problem whether dominance testing is feasible in \NP over non-polynomially-connected classes of binary acyclic \CPNets.
Also, the complexity of  dominance testing for non-binary \CPNets is currently still open.
Whereas dominance testing for the class of acyclic binary singly connected \CPNets whose indegree is at most six is \NPh~\cite{Boutilier2004}---we  improve this result in \cref{sec_complexity_cpnets}, requiring only indegree three.
Dominance testing is feasible in polynomial time on acyclic binary \CPNets whose graph is a tree or a polytree~\cite{Boutilier2004},
and it is \PSpace-complete for cyclic \CPNets~\cite{Goldsmith2008}.

In the rest of this paper, we consider only binary acyclic (and often polynomially connected) classes of \CPNets.
When the \CPNet $\Net{N}$ is clear from the context, we often omit the subscript ``$\Net{N}$'' from the notations introduced above.

\subsection{\texorpdfstring{$m$}{m}CP-nets}\label{sec_mcpnets}
In this section, we focus on \MCPNets~\cite{Rossi2004}, which are a formalism to reason about conditional ceteris paribus preferences when a group of multiple agents is considered.
Intuitively, an \MCPNet is a profile of~$m$ (individual) \CPNets, one for each agent of the group.
The original definition of \MCPNets also allows for partial \CPNets.
Here, we consider only \MCPNets consisting of a collection of standard \CPNets.
The difference is that we do not allow for non-ranked features in agents' \CPNets, and hence there is no distinction between private, shared, and visible features (see~\cite{Rossi2004} for definitions), i.e., all features are ranked in all the individual \CPNets of an \MCPNet.

As underlined in~\cite{Rossi2004}, the ``$m$'' of an \MCPNet stands for multiple agents and also indicates that the preferences of $m$ agents are modeled, so a $3$\CPNet is an \MCPNet with $m=3$.
Formally, an $m$\CPNet $\MNet{M}=\tuple{\Net{N}_1,\dots,\Net{N}_m}$ consists of $m$ \CPNets $N_1,\ldots,N_m$, all of them defined over the same set of features, which, in turn, have the same domains.
If $\MNet{M}$ is an \MCPNet, we denote by $\FeatMNet{\MNet{M}}$ the set of all features of $\MNet{M}$, and by $\DomMNet{\MNet{M}}(F)$ the domain of feature $F$ in $\MNet{M}$.
Given this notation, $\FeatNet{\Net{N}_i}=\FeatMNet{\MNet{M}}$, for all $1\leq i\leq m$, and $\DomNet{\Net{N_i}}(F)=\DomMNet{\MNet{M}}(F)$, for all features $F\in\FeatMNet{\MNet{M}}$ and all $1\leq i\leq m$.
Although the features of the individual \CPNets are the same, their graphical structures may be different, i.e., the edges between the features in the various individual \CPNets may vary.
We underline here that, unlike in other papers in the literature, we do \emph{not} impose that the individual \CPNets of the agents share a common topological order (i.e., we do not restrict the profiles of \CPNets to be $\mathcal{O}$-legal);
see \cref{sec_related} for more on $\mathcal{O}$-legality.

An \emph{outcome} for an \MCPNet is an assignment to all the features of the \CPNets, and given an \MCPNet $\MNet{M}$, we denote by $\OutMNet{\MNet{M}}$ the set of all the outcomes in $\MNet{M}$.
The preference semantics of \MCPNets is defined through global voting over \CPNets.
In particular, via its own individual \CPNet, each agent votes whether an outcome dominates another, and hence different ways of collecting votes (i.e., different voting schemes) give rise to different group dominance semantics for an \MCPNet.
Let $\MNet{M}=\tuple{\Net{N}_1,\dots,\Net{N}_m}$ be an \MCPNet, and let $\alpha$ and $\beta$ be two outcomes.
With a notation similar to the one defined above, $\SetAgentsPrefMNet{\MNet{M}}(\alpha,\beta)=\{i \mid \alpha\PrefNet{\Net{N}_i}\beta\}$, $\SetAgentsFerpMNet{\MNet{M}}(\alpha,\beta)=\{i \mid \alpha\FerpNet{\Net{N}_i}\beta\}$, and $\SetAgentsIncompMNet{\MNet{M}}(\alpha,\beta)=\{i \mid \alpha\IncompNet{\Net{N}_i}\beta\}$ are the sets of the agents of $\MNet{M}$ preferring $\alpha$ to $\beta$, preferring $\beta$ to~$\alpha$, and for which $\alpha$ and $\beta$ are incomparable, respectively.

In~\cite{Rossi2004,Li2015}, various voting schemes were proposed and analyzed to define multi-agent dominance semantics for \MCPNets.
In this paper, we focus on two of them, namely, Pareto and majority voting, whose dominance semantics definitions are the natural specializations to \MCPNets of Pareto and majority dominance semantics defined above.
Consider an \MCPNet $\MNet{M}=\tuple{\Net{N_1},\dots,\Net{N_m}}$, and let $\alpha$ and $\beta$ be two outcomes. Then:
\begin{description}
\item[Pareto:] $\beta$ \emph{Pareto dominates} $\alpha$, denoted  $\beta\PrefParetoMNet{\MNet{M}}\alpha$, if \emph{all} the agents of $\MNet{M}$ prefer $\beta$ to $\alpha$, i.e., $|\SetAgentsPrefMNet{\MNet{M}}(\beta,\alpha)|=m$.

\item[Majority:] $\beta$ \emph{majority dominates} $\alpha$, denoted  $\beta\PrefMajorityMNet{\MNet{M}}\alpha$, if the \emph{majority} of the agents of $\MNet{M}$ prefers $\beta$ to $\alpha$, i.e.,\\ $|\SetAgentsPrefMNet{\MNet{M}}(\beta,\alpha)|>|\SetAgentsFerpMNet{\MNet{M}}(\beta,\alpha)|+|\SetAgentsIncompMNet{\MNet{M}}(\beta,\alpha)|$.
\end{description}

For a voting scheme $s$, $s$ optimal and $s$ optimum outcomes in \MCPNets are defined in the natural way.

An \MCPNet is \emph{acyclic}, \emph{binary}, and \emph{singly connected}, if all its \CPNets are acyclic, binary, and singly connected, respectively.
A class $\mathscr{F}$ of \MCPNets is \emph{polynomially connected}, if the set of \CPNets{} constituting the \MCPNets in $\mathscr{F}$ is a polynomially connected class of \CPNets.
The \emph{indegree} of an \MCPNet is the maximum indegree of its constituent individual \CPNets.
Unless stated otherwise, we consider only polynomially connected classes of acyclic binary \MCPNets.
When the \MCPNet $\MNet{M}$ is clear from the context, we often omit the subscript ``$\MNet{M}$'' from the above notations.

\subsection{Computational complexity}
\label{sec_some_note_before_start}
We now give some notions from computational complexity theory, which will be required for the complexity analysis carried out in this paper.
First, we briefly recall the complexity classes that we will encounter in this paper (along with some closely related ones), and then we recall the notion of polynomial-time reductions among decision problems,
	and~some~decision problems that are hard for some of these complexity classes.
	We assume that the reader has some elementary background in computational complexity theory, including the notions of Boolean formulas and quantified Boolean formulas, Turing machines,
	 and hardness and completeness of a problem for a complexity class, as can be found, e.g., in \cite{Johnson1990,Papadimitriou1994}.

\paragraph{Complexity classes} The class $\PTIME$ is the set of all decision problems that can be solved by a deterministic Turing machine in polynomial time with respect to the input size, i.e., with respect to the length of the string that encodes the input instance.
For a given input string $s$, its size is usually denoted by $\|s\|$.
The class of decision problems that can be solved by nondeterministic Turing machines in polynomial time is denoted by~$\NP$.
They enjoy a remarkable property: any ``yes''-instance $s$ has a \emph{certificate} for being a ``yes''-instance, which has polynomial length and can be checked in deterministic polynomial time (in $\|s\|$).
For example, deciding whether a Boolean formula $\phi(X)$ over the Boolean variables $X=\{x_1,\dots,x_n\}$ is satisfiable, i.e., whether there exists some truth assignment to these variables making $\phi$ true, is a well-known problem in $\NP$;
in fact, any satisfying truth assignment for $\phi$ is clearly a certificate that $\phi$ is a ``yes''-instance, i.e., that $\phi$ is satisfiable.

For a complexity class $C$, we denote by \textrm{co}-$C$ the complementary class to $C$, i.e., the class containing the complementary languages of those in $C$.
For example, the problem of deciding whether a Boolean formula $\phi$ is \emph{not} satisfiable is in $\CoNP$.
The class $\PTIME$ is contained in both $\NP$ and $\CoNP$, i.e., $\PTIME\subseteq \NP\cap\CoNP$.

By \LogSpace, we denote the set of decision problems that can be solved by deterministic Turing machines in logarithmic space.
For such machines, it is assumed that the input tape is read-only, and that these machine have a read/write tape, called work tape, for intermediate computations.
The logarithmic space bound is given on the space available on the work tape.
The class \LogSpace is contained in \PTIME.

The class $\DP{}$, defined originally in~\cite{Papadimitriou1984}, is the class of problems that are a ``conjunction'' of two problems, one from $\NP$ and one from $\CoNP$, i.e., $\DP{}=\{L \mid L = L' \cap L'', L'\in\NP, L''\in\CoNP\}$.
The class $\CoDP{}$ is the class of problems whose complements are in $\DP{}$, equivalently, it can be defined as the class of problems that are a ``disjunction'' of two problems, one from $\NP$ and one from $\CoNP$, i.e., $\CoDP{}=\{L \mid L = L' \cup L'', L'\in\NP, L''\in\CoNP\}$.

The classes $\SigmaP{k}$, $\PiP{k}$, and $\DeltaP{k}$, forming the \emph{polynomial hierarchy (PH)}~\cite{Stockmeyer1976}, are defined as follows: $\SigmaP{0} = \PiP{0} = \DeltaP{0} = \PTIME$, and, for all $k\geq 1$, $\SigmaP{k}=\NP^{\SigmaP{k-1}}$, $\DeltaP{k}=\PTIME^{\SigmaP{k-1}}$, and $\PiP{k}=\textnormal{co\nobreakdash-}\SigmaP{k}$.
Here, $\SigmaP{k}$ (resp., $\DeltaP{k}$) is  the set of decision problems solvable by nondeterministic (resp., deterministic) polynomial-time Turing machines with an oracle to recognize, at unit cost, a language in $\SigmaP{k-1}$.
Note that $\SigmaP{1} = \NP^{\SigmaP{0}} = \NP^\PTIME = \NP$, $\PiP{1} = \textnormal{co\nobreakdash-}\SigmaP{0} = \CoNP$, and $\DeltaP{1} = \PTIME^{\SigmaP{0}} = \PTIME^\PTIME = \PTIME$.
Sometimes a bound is imposed on the number of calls that are allowed to be issued to the oracle.
For example, $\ThetaP{k}=\PTIME^{\SigmaP{k-1}[O(\log n)]}$~denotes the set of decision problems solvable by a deterministic polynomial-time Turing machine that is allowed to query a $\SigmaP{k-1}$ oracle at most logarithmically many times (in the size of the input).
By definition, $\ThetaP{k}\subseteq\DeltaP{k}$.\footnote{For the complexity class $\ThetaP{k}$, an interesting characterization has recently been provided:
$\ThetaP{k}$ is the class of languages involving the counting and comparison of the number of ``yes''-instances in two sets containing instances of $\SigmaP{k-1}$ or $\PiP{k-1}$ languages~\cite{LukasiewiczMalizia-GeneralThetaPK}.
This is quite useful for reductions in voting settings where votes have to be counted and compared.}

The classes \DP{} and \CoDP{} can be generalized to the classes $\DP{k} = \{L \mid L = L' \cap L'', L'\in\SigmaP{k}, L''\in\PiP{k}\}$ and $\CoDP{k} = \{L \mid L = L' \cup L'', L'\in\SigmaP{k}, L''\in\PiP{k}\}$, respectively, for $k \geq 1$, that are the conjunction and the disjunction, respectively, of $\SigmaP{k}$ and $\PiP{k}$; in particular, $\DP1 = \DP{}$. 
Note also that $\DP{k} \subseteq \ThetaP{k+1}$.

\begin{figure}
	\centering%
	\includegraphics[width=0.75\textwidth]{./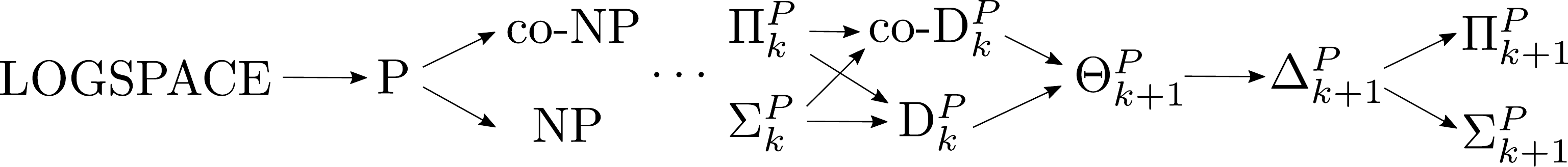}
	\caption{Inclusion relationships among complexity classes: an arrow from class $\mathcal{C}'$ to class $\mathcal{C}''$ means that $C' \subseteq C''$.}
	\label{fig:complexity_classes}
\end{figure}

Given their definitions, for all $k\geq 1$, the relationships among the mentioned classes are as follows (see \cref{fig:complexity_classes} for an illustration):
$(\SigmaP{k} \cup \PiP{k}) \subseteq \DP{k},\,\CoDP{k} \subseteq \ThetaP{k+1} \subseteq \DeltaP{k+1} \subseteq (\SigmaP{k+1} \cap \PiP{k+1})$ (see, e.g., \cite{Wagner1990,Wagner1988}).

\paragraph{Reductions and hard problems} 
A decision problem $L_1$ is \emph{(Karp) reducible} to a decision problem $L_2$, denoted $L_1 \leq L_2$, if there is a computable function~$h$, called \emph{(Karp) reduction}, such that, for every string $s$, $h(s)$ is defined, and $s$ is a ``yes''-instance of $L_1$ if and only if $h(s)$ is a ``yes''-instance of $L_2$.
A~decision problem~$L_1$ is \emph{polynomially (Karp) reducible} to a decision problem~$L_2$, denoted $L_1 \leq_p L_2$, if there is a polynomial-time (Karp) reduction from $L_1$ to $L_2$.
In this paper, we consider only 
Karp reductions.

To prove hardness for a complexity class, we show reductions from various problems known to be complete for the complexity classes that they belong to.
We next define such problems, so that we can later refer to them by name.

Deciding the satisfiability of Boolean formulas, denoted \Sat, is the prototypical \NPc problem, which remains \NPh even if only $3$CNF formulas are considered~\cite{Garey1979,Karp1972}, i.e., Boolean formulas in conjunctive normal form with three literals per clause.
The complementary problem \Unsat of deciding whether a given Boolean formula is \emph{not} satisfiable is \CoNPc.
It remains \CoNPh even if only $3$CNF formulas are considered, and it is the equivalent to the problem \Taut of deciding whether a $3$DNF formula is a tautology.
A $3$DNF formula is a Boolean formula in disjunctive normal form with three literals per term.
CNFs and DNFs are actually linked (see~\cite{Gottlob_Malizia2014,Gottlob_Malizia:DUAL_journal}).

The prototypical $\SigmaP{k}$- and $\PiPc{k}$ $\mathrm{QBF}_{Q_1,k}$ problems are defined as follows:
given a quantified Boolean~formula~(QBF) $\Phi = (Q_1 X_1)\linebreak[0] (Q_2 X_2)\linebreak[0] \dots\linebreak[0] (Q_k X_k)\linebreak[0] \phi(X_1,\linebreak[0] X_2,\linebreak[0] \dots,\linebreak[0] X_k)$, where
\begin{itemize}
\item $Q_1,Q_2,\ldots,Q_k$ is a sequence of $k$ alternating quantifiers $Q_i\in \{\exists,\forall\}$, and
\item $\phi(X_1,X_2,\dots,X_k)$ is a (non-quantified) Boolean formula over $k$ disjoint sets $X_1,X_2,\dots,X_k$ of Boolean variables,
\end{itemize}
decide whether $\Phi$ is valid. The problem $\mathrm{QBF}_{\exists,k}$ is $\SigmaPc{k}$ \cite{Stockmeyer1976,Wrathall1976},
while  $\mathrm{QBF}_{\forall,k}$ is \PiPc{k}~\cite{Stockmeyer1976,Wrathall1976}.
These problems remain hard for their respective classes even if $\phi(X_1,X_2,\dots,X_k)$ is in $3$CNF, when $Q_k = \exists$, and if $\phi(X_1,X_2,\dots,X_k)$ is in $3$DNF, when $Q_k = \forall$~\cite{Stockmeyer1976,Wrathall1976}.
We denote by QBF$^\mathit{CNF}_{k,\exists}$ (resp., QBF$^\mathit{DNF}_{k,\forall}$)%
\footnote{Note the difference in the subscripts of the notations $\mathrm{QBF}_{Q_1,k}$ and QBF$^\mathit{CNF}_{k,Q_k}$ (resp., QBF$^\mathit{DNF}_{k,Q_k}$). In the former notation, $Q_1$ is the \emph{first} quantifier of the sequence, and, for notational convenience, we place ``$Q_1$'' before ``$k$'' in the subscript. On the other hand, in the latter notation, $Q_k$ is the \emph{last} quantifier of the sequence, and, for notational convenience, we place ``$Q_k$'' after ``$k$'' in the subscript.} %
the problem of deciding the validity of formulas $\Phi = (Q_1 X_1)\linebreak[0] \dots \linebreak[0] (Q_k X_{k})\linebreak[0] \phi(X_1,\dots,\linebreak[0]X_{k})$, where $Q_k$ is $\exists$ (resp.,~$\forall$), and $\phi(X_1,\dots,X_{k})$ is in $3$CNF (resp., $3$DNF).
For odd $k$, QBF$^\mathit{CNF}_{k,\exists}$ (resp., QBF$^\mathit{DNF}_{k,\forall}$) is complete for $\SigmaP{k}$ (resp., $\PiP{k}$),
while, for even~$k$, QBF$^\mathit{CNF}_{k,\exists}$ (resp., QBF$^\mathit{DNF}_{k,\forall}$) is complete for $\PiP{k}$ (resp., $\SigmaP{k}$).
Observe that \QbfECNF1 (resp., \QbfADNF1) is equivalent to \Sat (resp., \Taut).

Sometimes, it is preferable that in QBF formulas the non-quantified formula is CNF rather than DNF, or vice-versa.
For example, to show \SigmaPh{2}{}ness it might be the case that we would prefer to start our reduction from formulas $\Phi = (\exists X)(\forall Y)\phi(X,Y)$ with $\phi(X,Y)$ being in CNF, rather than in DNF, as required by QBF$^\mathit{DNF}_{2,\forall}$.
To achieve this, we can exploit De~Morgan's laws.
Indeed, we have that $\Phi = (Q_1 X_1)\linebreak[0] \dots \linebreak[0] (Q_k X_{k})\linebreak[0] \phi(X_1,\dots,\linebreak[0]X_{k})$ is logically equivalent to $\Phi' = (Q_1 X_1)\linebreak[0] \dots \linebreak[0] (Q_k X_{k})\linebreak[0] \lnot \phi'(X_1,\dots,\linebreak[0]X_{k})$, where $\phi'(X_1,\dots,\linebreak[0]X_{k}) = \lnot \phi(X_1,\dots,\linebreak[0]X_{k})$.
We thus extend the notation above.
We denote by QBF$^{\mathit{DNF}}_{k,\exists,\lnot}$ (resp., QBF$^{\mathit{CNF}}_{k,\forall,\lnot}$) the problem of deciding the validity of formulas $\Phi = (Q_1 X_1)\linebreak[0] \dots \linebreak[0] (Q_k X_{k})\linebreak[0] \lnot \phi(X_1,\dots,\linebreak[0]X_{k})$, where $Q_k$ is $\exists$ (resp.,~$\forall$), and $\phi(X_1,\dots,X_{k})$ is in $3$DNF (resp., $3$CNF).
For odd $k$, QBF$^{\mathit{DNF}}_{k,\exists,\lnot}$ (resp., QBF$^{\mathit{CNF}}_{k,\forall,\lnot}$) is complete for $\SigmaP{k}$ (resp., $\PiP{k}$),
while, for even~$k$, QBF$^{\mathit{DNF}}_{k,\exists,\lnot}$ (resp., QBF$^{\mathit{CNF}}_{k,\forall,\lnot}$) is complete for $\PiP{k}$ (resp., $\SigmaP{k}$).

\subsection{A framework for preference representation schemes}\label{sec_framework_representations}
In this paper, most of the membership results that we will show hold for generic preference representation schemes.
For this reason,  we now introduce the general framework of representation schemes that we will refer to.
Inspired by the concept of compact representations in~\cite{core_bargset_kernel-rivista,nucleolus-rivista,Lang_handbook,Lang_ai_magazine}, we define preference representation schemes $\RepScheme{S}$ as suitable encodings for a class of preference relations, denoted $\mathcal{C}(\RepScheme{S})$.
Formally, a \emph{preference representation scheme} $\RepScheme{S}$ defines a computable representation function $\xi^{\RepScheme{S}}(\cdot)$ and a computable Boolean  function $\mathit{Pref}^\RepScheme{S}(\cdot,\cdot,\cdot)$ such that, for any relation $\Ranking{R} \in \mathcal{C}(\RepScheme{S})$, $\xi^\RepScheme{S}(\Ranking{R})$ is the encoding of $\Ranking{R}$ according to $\RepScheme{S}$, and $\mathit{Pref}^\RepScheme{S}(\xi^\RepScheme{S}(\Ranking{R}),\alpha,\beta)$ evaluates to $1$, if $\tuple{\alpha,\beta}\in\Ranking{R}$ (i.e., if $\alpha\Pref_{\Ranking{R}}\beta$), and to $0$, otherwise.
By $\|\xi^\RepScheme{S}(\Ranking{R})\|$, we denote the size of the representation of $\Ranking{R}$ via $\RepScheme{S}$.

Let $\RepScheme{S}_1$ and $\RepScheme{S}_2$ be two preference representation schemes.
We say that $\RepScheme{S}_2$ is \emph{at least as expressive} (and \emph{succinct}) \emph{as} $\RepScheme{S}_1$, denoted $\RepScheme{S}_1\precsim_e\RepScheme{S}_2$, if there exists a function $f$ in \FP (i.e., computable in deterministic polynomial time) that translates a preference relation $\xi^{\RepScheme{S}_1}(\Ranking{R})$ represented in $\RepScheme{S}_1$ into an equivalent preference relation $\xi^{\RepScheme{S}_2}(\Ranking{R})$ represented in $\RepScheme{S}_2$, i.e., into a preference relation over the same outcomes and with the same preference relationships between them.
More precisely, we require that $\xi^{\RepScheme{S}_2}(\Ranking{R}) = f(\xi^{\RepScheme{S}_1}(\Ranking{R}))$ and $\mathit{Pref}^{\RepScheme{S}_1}(\xi^{\RepScheme{S}_1}(\Ranking{R}),\alpha,\beta) = \mathit{Pref}^{\RepScheme{S}_2}(\xi^{\RepScheme{S}_2}(\Ranking{R}),\alpha,\beta)$, for each pair of outcomes $\alpha$ and $\beta$.
Observe that $f$ belonging to \FP entails that there exists a constant $c_f$ (depending on $f$) such that $\|\xi^{\RepScheme{S}_2}(\Ranking{R})\| \leq {\|\xi^{\RepScheme{S}_1}(\Ranking{R})\|}^{c_f}$, i.e., the size of $\xi^{\RepScheme{S}_2}(\Ranking{R})$ is polynomially bounded in the size of $\xi^{\RepScheme{S}_1}(\Ranking{R})$.%
\footnote{Note that the above definitions are slightly different from the ones in~\cite{Lang_handbook}: the counterpart of this paper's function $\mathit{Pref}$ in~\cite{Lang_handbook} is not required to be computable, and the transformation function $f$ in~\cite{Lang_handbook} is only required to be polynomially bounded, but not polynomially computable.}

A \emph{\PTIME-} and an \emph{\NP-representation} $\RepScheme{S}$ is a preference representation scheme whose function $\mathit{Pref}^\RepScheme{S}$ is in \PTIME and  \NP, respectively.
For example, the polynomially connected classes of acyclic binary \CPNets are \NP-representation schemes.

When a compact representation $\RepScheme{S}$ is clear from the context, we often simply write $\Ranking{R}$ instead of $\xi^\RepScheme{S}(\Ranking{R})$, and $\alpha\PrefRanking{\Ranking{R}}\beta$ and $\alpha\not\PrefRanking{\Ranking{R}}\beta$ instead of $\mathit{Pref}^\RepScheme{S}(\xi^\RepScheme{S}(\Ranking{R}),\alpha,\beta) = 1$ and $\mathit{Pref}^\RepScheme{S}(\xi^\RepScheme{S}(\Ranking{R}),\alpha,\beta) = 0$, respectively.
While doing so, we are identifying the preference relation with its actual representation.

\section{Complexity of basic tasks on CP-nets}\label{sec_complexity_cpnets}
To precisely characterize the complexity of Pareto and majority voting tasks in \cref{sec_complexity_pareto_voting,sec_complexity_majority_voting}, we need to understand how complex is deciding, given a \CPNet $\Net{N}$ and two outcomes $\alpha$ and $\beta$, whether $\alpha$ dominates $\beta$ or whether $\alpha$ and $\beta$ are incomparable.
Here, we prove that the former problem is \NPc, while the latter is \CoNPc.
To achieve this, after giving some preliminary definitions on how to encode Boolean formulas into \CPNets, we  show that deciding the satisfiability of Boolean formula can be reduced to the problem of deciding dominance between outcomes in \CPNets.
This allows us to prove the \NPh{}ness of the dominance test in \CPNets, and the \CoNPh{}ness of deciding incomparability is shown as a byproduct of this property.

\subsection{Preliminaries}\label{sec_formula_net}

We first introduce a notation 
mapping Boolean assignments to outcomes of CP-nets; this notation will frequently be used later in the paper.
In particular, to prove the hardness of voting tasks on \MCPNets, we often provide reductions from problems regarding the satisfiability (or validity) of (quantified) Boolean formulas.
For this reason, \MCPNets will often have sets of features associated with sets of Boolean variables.
For example, for a Boolean formula $\phi(X)$  over the set of Boolean variables $X=\{x_1,\dots,x_n\}$, we often define an \MCPNet $\MNet{M}(\phi)$ that has  $\SetFeat{V}=\bigcup \{\{V_i^T,V_i^F\}\mid x_i\in X\}$ as
	a subset of its set of features.
Then, for a (partial or complete) assignment $\sigma_X$ over $X$, an outcome $\alpha_{\sigma_X}$ of $\MNet{M}(\phi)$ encoding $\sigma_X$ over the features set $\SetFeat{V}$ is such that, for the features in~$\SetFeat{V}$, if $\sigma_X[x_i] = \valtrue$, then $\alpha_{\sigma_X}[V_i^T V_i^F] = \ol{v_i^T} v_i^F$; if $\sigma_X[x_i] = \valfalse$, then $\alpha_{\sigma_X}[V_i^T V_i^F] = v_i^T \ol{v_i^F}$; and if $\sigma_X[x_i]$ is undefined, then $\alpha_{\sigma_X}[V_i^T V_i^F] = v_i^T v_i^F$.
The values of the features $F\notin\SetFeat{V}$ in $\alpha_{\sigma_X}$ will be specified in each particular case.

We next define \emph{formula nets}, which will be used in hardness proofs, and which are intuitively \CPNets aiming at having a particular preference relationship between two outcomes depending on the satisfiability of associated Boolean formulas in CNF.
This will allow us to show that deciding dominance in \CPNets is \NPh.\footnote{The \NPh{}ness of dominance in \CPNets was proven already in~\cite{Boutilier2004}. Here, we show this result, because the construction proposed here, which allows us to prove a stricter result, is different from the one available in the literature, and it is required in multiple reductions in the rest of the paper.}

Formally, let $\phi(X)$ be a Boolean formula in $3$CNF defined over the set of Boolean variables $X=\{x_1,\dots,x_n\}$, and whose set of clauses is $C=\{c_1,\dots,c_m\}$.
We often omit the variable set from the notation of the Boolean formula, i.e., we write $\phi$ instead of $\phi(X)$, if this does not cause ambiguity.
We denote by $\ell_{j,k}$ the $k$-th literal of the $j$-th clause.
From $\phi(X)$, we build the \CPNet $\NetCNF(\phi)$ in the following way
(see \cref{fig:formula_net} for an example).

\begin{figure}
	\centering%
	\includegraphics[width=0.95\textwidth]{./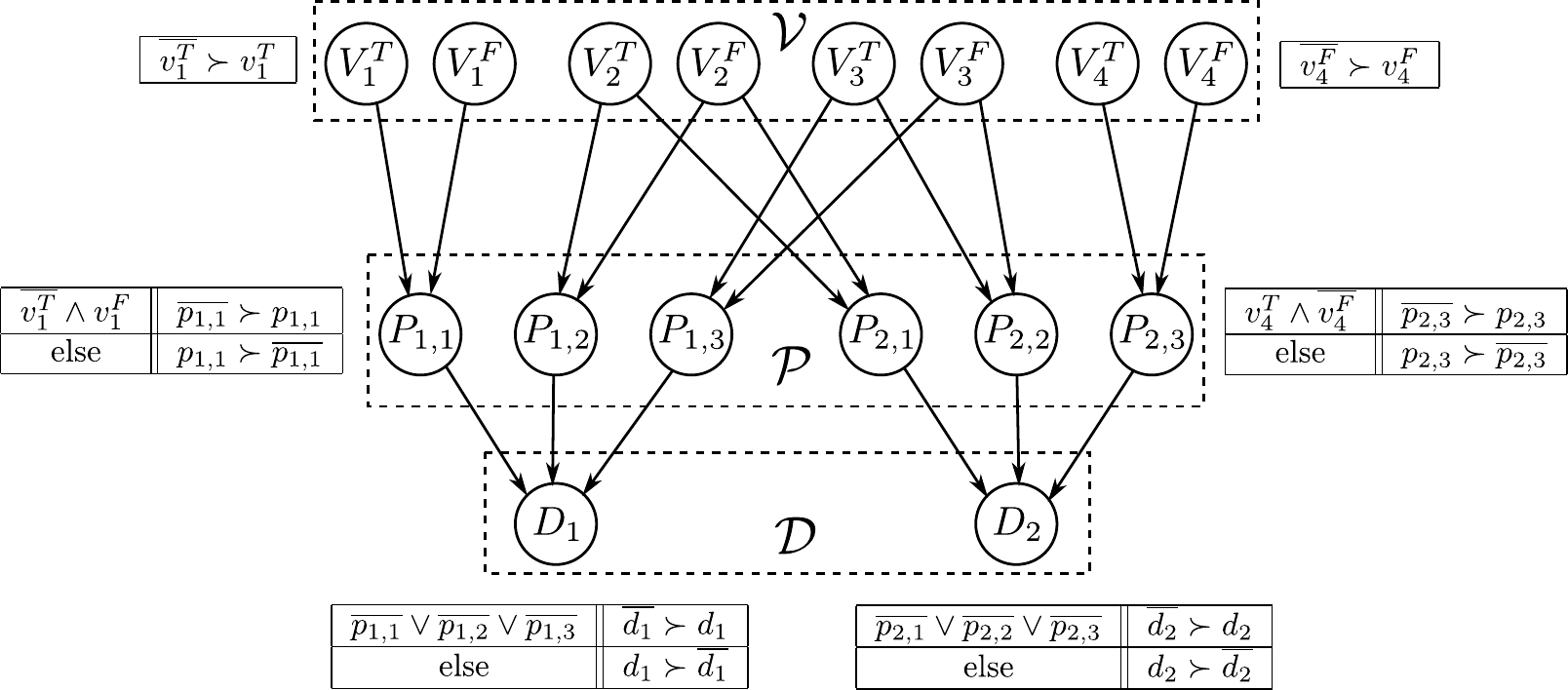}
	\caption{The \CPNet $\NetCNF(\phi)$, where $\phi(x_1,x_2,x_3,x_4)=(x_1\lor x_2 \lor \lnot x_3)\land (\lnot x_2 \lor x_3 \lor \lnot x_4)$. Not all the CP tables are reported in the figure.}\label{fig:formula_net}
\end{figure}

The features of $\NetCNF(\phi)$ are:
\begin{itemize}
	\item for each variable $x_i\in X$, there are features $V_i^T$ and $V_i^F$ (called \emph{variable features}), and we denote by $\SetFeat{V}_\phi$ the set of variable features;
	
	\item for each clause $c_j\in C$, there is a feature $D_j$ (called \emph{clause feature}), and we denote by $\SetFeat{D}_\phi$ the set of clause features; and
	
	\item for each literal $\ell_{j,k}$, there is a feature $P_{j,k}$ (called \emph{literal feature}), and we denote by $\SetFeat{P}_\phi$ the set of literal features.
\end{itemize}
All features are binary, with the usual notation for their values.
When the formula $\phi$ is clear from the context, we often omit the subscript ``$\phi$'' from the notation of the sets of features illustrated above.
The edges of $\NetCNF(\phi)$ are: for each literal $\ell_{j,k}=x_i$ or $\ell_{j,k}=\lnot x_i$, there are edges $\tuple{V_i^T,P_{j,k}}$, $\tuple{V_i^F,P_{j,k}}$, and $\tuple{P_{j,k},D_j}$.
The CP tables of $\NetCNF(\phi)$ are:
\begin{itemize}
	\item for each variable $x_i\in X$, features $V_i^T$ and $V_i^F$ have the CP tables

\smallskip 	\begin{tabular}{|l|}
		\hline
		$\ol{v_i^T}\Pref v_i^T$\\
		\hline
	\end{tabular} and
	\begin{tabular}{|l|}
		\hline
		$\ol{v_i^F}\Pref v_i^F$\\
		\hline
	\end{tabular}, respectively;
	
	\item for each literal $\ell_{j,k}$, if $\ell_{j,k}=x_i$, then feature $P_{j,k}$ has the CP table

\smallskip 	\begin{tabular}{|c||l|}
		\hline
		$\ol{v_i^T}\land v_i^F$ & $\ol{p_{j,k}}\Pref p_{j,k}$\\
		\hline
		else & $p_{j,k}\Pref \ol{p_{j,k}}$\\
		\hline
	\end{tabular};
	
	\smallskip
	otherwise  (i.e., $\ell_{j,k}=\lnot x_i$) $P_{j,k}$ has the CP table
	
	\smallskip
	\begin{tabular}{|c||l|}
		\hline
		$v_i^T\land \ol{v_i^F}$ & $\ol{p_{j,k}}\Pref p_{j,k}$\\
		\hline
		else & $p_{j,k}\Pref \ol{p_{j,k}}$\\
		\hline
	\end{tabular};
	
	\item for each clause $c_j\in C$, feature $D_j$ has the CP table
	
	\smallskip
	\begin{tabular}{|c||l|}
		\hline
		$\ol{p_{j,1}}\lor \ol{p_{j,2}}\lor \ol{p_{j,3}}$ & $\ol{d_j}\Pref d_j$\\
		\hline
		else & $d_j\Pref \ol{d_j}$\\
		\hline
	\end{tabular}.
\end{itemize}

Note that $\NetCNF(\phi)$ is binary, acyclic, singly connected, its indegree is three, and the \CPNet can be built in polynomial time in the size of $\phi$.

The following \lcnamecref{lemma_Pref_Net_CNF_new} and \lcnamecref{corol:Pref_Net_CNF} show an important property of formula nets, which is that $\phi$ is satisfiable if and only if a particular outcome dominates others in $\NetCNF(\phi)$.

\begin{lemma}\label{lemma_Pref_Net_CNF_new}
Let $\phi(X)$ be a Boolean formula in $3$CNF defined over a set $X$ of Boolean variables, and let $\sigma_{X}$ be an assignment on $X$.
Let $\alpha_{\sigma_X}$ be the outcome of $\NetCNF(\phi)$ encoding $\sigma_X$ on the feature set $\SetFeat{V}$, and assigning non-overlined values to all other features, and let $\ol{\beta}$ be the outcome assigning overlined values to all and only variable and clause features.
Then:
\begin{itemize}
\item[(1)] There is an extension of $\sigma_{X}$ to $X$ satisfying $\phi(X)$ if and only if $\ol{\beta}\PrefNet{\NetCNF(\phi)}\alpha_{\sigma_X}$;
\item[(2)] There is no extension of $\sigma_{X}$ to $X$ satisfying $\phi(X)$ if and only if  $\ol{\beta}\IncompNet{\NetCNF(\phi)}\alpha_{\sigma_X}$.
\end{itemize}
\end{lemma}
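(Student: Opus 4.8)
The plan is to analyze the structure of the extended preference graph of $\NetCNF(\phi)$ and show that flipping sequences from $\alpha_{\sigma_X}$ to $\ol\beta$ correspond exactly to choices of a satisfying extension of $\sigma_X$. First I would observe the layered structure of the net: variable features $V_i^T, V_i^F$ are sources, literal features $P_{j,k}$ depend only on the two variable features of their underlying variable, and clause features $D_j$ depend only on the three literal features of clause $c_j$. Since all CP tables prefer the overlined value except for $P_{j,k}$ and $D_j$, whose preferred value is conditional, the only way to reach $\ol\beta$ (overlined on variable and clause features, and -- crucially -- $\ol\beta$ must also be specified on the literal features; I would double-check the intended reading, presumably non-overlined or "don't care" there, matching $\alpha_{\sigma_X}$) is to first possibly flip some variable features, which in turn may enable flipping literal features to overlined, which in turn enables flipping the clause features to overlined.

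The key combinatorial claim to isolate is: a clause feature $D_j$ can be flipped to $\ol{d_j}$ in some reachable outcome if and only if at least one literal feature $P_{j,k}$ of clause $c_j$ has been flipped to $\ol{p_{j,k}}$; and $P_{j,k}$ with $\ell_{j,k} = x_i$ can be flipped to $\ol{p_{j,k}}$ iff the variable features are at $\ol{v_i^T}\,v_i^F$, i.e.\ iff $x_i$ has been "set to true" relative to the starting encoding (and symmetrically for negative literals). Starting from $\alpha_{\sigma_X}$, the features in $\SetFeat V$ already encode $\sigma_X$; a variable $x_i$ with $\sigma_X[x_i]$ undefined sits at $v_i^T v_i^F$, and from there one can improving-flip $V_i^F$ down? -- no, both variable CP tables prefer overlined, so from $v_i^T v_i^F$ we can flip $V_i^T \to \ol{v_i^T}$ (giving $\ol{v_i^T} v_i^F$, the "true" encoding) or $V_i^F \to \ol{v_i^F}$ (giving $v_i^T \ol{v_i^F}$, the "false" encoding), but not both in a way that returns to a literal-enabling configuration. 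This is exactly the mechanism that lets each undefined variable be freely assigned true or false along the flipping sequence, while already-defined variables are frozen at their $\sigma_X$-value (flipping the remaining non-overlined variable feature to overlined would destroy the literal-enabling pattern). I would make this precise by showing reachable configurations of $\{V_i^T,V_i^F\}$ from the $\sigma_X$-encoding are limited to what corresponds to extensions of $\sigma_X$, together with the "both overlined" terminal configuration that no longer enables literals of $x_i$.

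With these local facts, the proof of (1) goes: ($\Leftarrow$) given an improving flipping sequence from $\alpha_{\sigma_X}$ to $\ol\beta$, since $\ol\beta$ has all clause features overlined, each $D_j$ was flipped, so each clause had some literal feature flipped, so reading off the variable-feature configuration just before literal flips gives an extension $\sigma_X'$ of $\sigma_X$ satisfying every clause; ($\Rightarrow$) given a satisfying extension $\sigma_X'$, flip the undefined variables' features to match $\sigma_X'$, then flip in each clause one literal feature whose literal is satisfied, then flip all clause features, then flip the remaining non-overlined variable features and literal features to overlined -- each step is an improving flip by the CP tables, reaching $\ol\beta$. Part (2) is then essentially immediate: for acyclic \CPNets, for any two outcomes either one dominates the other or they are incomparable (stated in the preliminaries), and one checks $\alpha_{\sigma_X} \not\PrefNet{\NetCNF(\phi)} \ol\beta$ always holds (there is no improving flip sequence in that direction, since $\ol\beta$ is the optimum-like configuration on variable/clause features and any flip away from $\alpha_{\sigma_X}$ toward non-overlined values on $\SetFeat V$ would be worsening); hence $\ol\beta \not\PrefNet{} \alpha_{\sigma_X}$ is equivalent to $\ol\beta \IncompNet{} \alpha_{\sigma_X}$, and (2) follows from (1) by contraposition.

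The main obstacle I anticipate is handling the ordering and interleaving of flips carefully -- in particular making airtight the claim that once a clause feature $D_j$ is flipped to $\ol{d_j}$ it can then be left there (no later flip forces it back) and that the "freezing" of defined variables really works, i.e.\ that no clever detour through the "both overlined" variable configuration can simulate a different truth value. This requires arguing about the monotone-ish structure: the net has the property that each feature, once set to overlined, has no improving flip back to non-overlined unless its parents change, and the parents' overlined-enabling patterns are mutually exclusive -- so I would phrase it as a potential/normal-form argument, showing any flipping sequence can be reordered into the canonical "variables, then literals, then clauses, then clean-up" form without loss, which both streamlines the $\Leftarrow$ direction of (1) and rules out pathological sequences.
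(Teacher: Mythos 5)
Your overall strategy is the same as the paper's: exploit that variable features are parentless and monotone (once overlined, never flippable back), read a truth assignment off the asymmetric configurations of $(V_i^T,V_i^F)$ that enable literal flips, and reduce part~(2) to part~(1) by observing that $\alpha_{\sigma_X}\not\PrefNet{\NetCNF(\phi)}\ol{\beta}$ always holds. The backward direction of~(1) and the treatment of~(2) are essentially the paper's argument (the paper does not even need your proposed reordering/normal-form step; monotonicity of the variable features plus the improving-flip preconditions at the moments the $P_{j,k}$ and $D_j$ are flipped suffice).

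There is, however, one step that fails as written, and it stems from the ambiguity you flagged but did not resolve: in $\ol{\beta}$ the literal features are \emph{non-overlined} (the statement says overlined values on ``all and only'' variable and clause features). Your forward construction ends with ``flip the remaining non-overlined variable features and literal features to overlined.'' Flipping literal features to overlined at that stage is neither legal nor useful: for a literal feature whose literal is false under $\sigma_X'$ the enabling configuration $\ol{v_i^T}v_i^F$ (resp.\ $v_i^T\ol{v_i^F}$) never held, and once both $V_i^T$ and $V_i^F$ are overlined the ``else'' row of $\CPTNetFeat{}{P_{j,k}}$ makes $p_{j,k}$ the preferred value, so such a flip is worsening; moreover the resulting outcome would not equal $\ol{\beta}$ anyway. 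The correct clean-up (as in the paper) is the opposite: after the clause features are flipped, raise all remaining variable features to their overlined values, and then flip the literal features that had been raised \emph{back down} to $p_{j,k}$ --- these down-flips are improving precisely because every pair $(V_i^T,V_i^F)$ now reads $\ol{v_i^T}\,\ol{v_i^F}$, so the ``else'' row applies. With that single correction (and fixing the corresponding loose phrasing in your sketch of the backward direction, where the witnessing configurations are read off at the individual flip times rather than at one common time, using monotonicity of the variable features to get consistency), your proof coincides with the paper's.
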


\begin{proof}[Proof (sketch).]
	The idea at the base of this proof is that the CP tables in $\NetCNF(\phi)$ are designed so that the features enact the role of variables, literals, and clauses of a CNF Boolean formula.
	Details of the proof are at page~\pageref{page_pointer:Pref_Net_CNF_new_detailed}.
\end{proof}

\begin{corollary}\label{corol:Pref_Net_CNF}
Let $\phi(X)$ be a Boolean formula in $3$CNF defined over a set $X$ of Boolean variables, and let $\alpha$ and $\ol{\beta}$ be two outcomes of $\OutNet{\NetCNF(\phi)}$ assigning non-overlined values to all features and overlined values to all and only variable and clause features, respectively.
	Then:
	\begin{itemize}
		\item $\phi$ is satisfiable if and only if $\ol{\beta}\PrefNet{\NetCNF(\phi)}\alpha$;
		\item $\phi$ is unsatisfiable if and only if $\ol{\beta}\IncompNet{\NetCNF(\phi)}\alpha$.
	\end{itemize}
\end{corollary}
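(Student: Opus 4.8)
The plan is to derive \cref{corol:Pref_Net_CNF} as an immediate instantiation of \cref{lemma_Pref_Net_CNF_new}, with no new construction required. The key observation is that taking $\sigma_X$ to be the \emph{empty} (nowhere-defined) assignment over $X$ makes the outcome $\alpha_{\sigma_X}$ of the lemma coincide exactly with the outcome $\alpha$ of the corollary, and turns the ``extension'' quantifier of the lemma into plain (un)satisfiability of $\phi$.

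First I would unfold the encoding convention from \cref{sec_formula_net}: since $\sigma_X[x_i]$ is undefined for every $x_i\in X$, the definition of the encoding outcome gives $\alpha_{\sigma_X}[V_i^T V_i^F] = v_i^T v_i^F$ for all $i$, i.e.\ non-overlined values on all of $\SetFeat{V}$; and, by the statement of \cref{lemma_Pref_Net_CNF_new}, $\alpha_{\sigma_X}$ assigns non-overlined values to all remaining features (the literal features $\SetFeat{P}$ and the clause features $\SetFeat{D}$) as well. Hence $\alpha_{\sigma_X}$ assigns non-overlined values to \emph{every} feature of $\NetCNF(\phi)$, which is precisely the outcome $\alpha$ named in the corollary. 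The outcome $\ol{\beta}$ is literally the same object in both statements: overlined on all and only the variable and clause features.

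Next I would translate the quantifier. The extensions of the empty assignment to $X$ are exactly the complete truth assignments over $X$; therefore the hypothesis ``there is an extension of $\sigma_X$ to $X$ satisfying $\phi$'' of \cref{lemma_Pref_Net_CNF_new}(1) is equivalent to ``$\phi$ is satisfiable'', and the hypothesis ``there is no extension of $\sigma_X$ to $X$ satisfying $\phi$'' of part~(2) is equivalent to ``$\phi$ is unsatisfiable''. Substituting these equivalences together with $\alpha_{\sigma_X}=\alpha$ into parts (1) and (2) of the lemma yields exactly the two claimed statements: $\phi$ is satisfiable iff $\ol{\beta}\PrefNet{\NetCNF(\phi)}\alpha$, and $\phi$ is unsatisfiable iff $\ol{\beta}\IncompNet{\NetCNF(\phi)}\alpha$.

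There is essentially no real obstacle. The only point deserving care is the bookkeeping in the previous paragraph: one must confirm that the ``non-overlined on all other features'' clause of \cref{lemma_Pref_Net_CNF_new}, combined with the undefined-variable branch of the encoding, reproduces $\alpha$ on the nose across \emph{all three} kinds of features, so that the lemma and the corollary genuinely speak of the same pair of outcomes $\tuple{\ol{\beta},\alpha}$. Once this is checked, the corollary follows directly, and no further argument about improving flips in $\NetCNF(\phi)$ is needed here, as that work is already done inside \cref{lemma_Pref_Net_CNF_new}.
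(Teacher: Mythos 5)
Your proposal is correct and follows exactly the paper's own argument: instantiate \cref{lemma_Pref_Net_CNF_new} with the empty assignment $\sigma_X$, note that $\alpha_{\sigma_X}$ then coincides with $\alpha$ (and $\ol{\beta}$ is the same outcome in both statements), and that extensions of the empty assignment are precisely the complete assignments, so the lemma's conditions reduce to (un)satisfiability of $\phi$. No gaps; the extra bookkeeping you mention is the same observation the paper makes implicitly.
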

\begin{proof}
	Observe that, when the empty assignment $\sigma_X$ is considered, outcomes $\alpha_{\sigma_X}$ and $\ol{\beta}_{\sigma_X}$ of the statement of \cref{lemma_Pref_Net_CNF_new} coincide with outcomes $\alpha$ and $\ol{\beta}$ of the statement of this corollary, respectively.
	Moreover, any assignment over $X$ is an extension of the empty one, and hence \cref{lemma_Pref_Net_CNF_new} applies.
\end{proof}

\subsection{Complexity of dominance, incomparability, and optimality on CP-nets}

As mentioned above, via formula nets, it is possible to show the \NPh{}ness and the \CoNPh{}ness of dominance and incomparability on \CPNets, respectively.
We start by showing the \NPh{}ness of dominance on \CPNets.
More formally, consider the following problem on \CPNets.

\medbreak

\begin{tabular}{rl}
  \textit{Problem:} & \Dominance\\
  \textit{Instance:} & A \CPNet $\Net{N}$, and two outcomes $\alpha,\beta\in\OutNet{\Net{N}}$.\\
  %\textit{Output:} &\\
  \textit{Question:} & Is $\beta\PrefNet{\Net{N}}\alpha$?
\end{tabular}

\medbreak

\Dominance is known to be feasible in \NP for some classes of instances, and in particular it is in \NP for polynomially connected classes of acyclic binary \CPNets~\cite{Boutilier2004}.
For these classes of \CPNets, it was shown that \Dominance is \NPh, and hardness holds even if the considered \CPNets are singly connected and the indegree of each feature in the net is at most six~\cite{Boutilier2004}.
Moreover, \Dominance is feasible in polynomial time on acyclic binary \CPNets whose graph is a tree or a polytree~\cite{Boutilier2004}.
However, the exact complexity of \Dominance for general (non polynomially connected classes of) acyclic binary \CPNets is still an open problem~\cite{Boutilier2004}, and in particular it is unknown whether it belongs to \NP or not.

First, we give an improved result on the hardness of dominance testing in \CPNets.
In particular, we show that the \NPh{}ness holds even if the indegree of the \CPNet is three, while the minimum indegree previously required to show the hardness was six~\cite{Boutilier2004}.

\begin{theorem}[improved over \cite{Boutilier2004}]\label{theo_dominance_cpnets_hardness}
Let $\Net{N}$ be a \CPNet, and let $\alpha,\beta\in\OutNet{\Net{N}}$ be two outcomes.
Deciding whether $\beta\PrefNet{\Net{N}}\alpha$ is \NPh.
Hardness holds even if $\Net{N}$ is acyclic, binary, singly connected, and its indegree is three.
\end{theorem}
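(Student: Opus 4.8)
The plan is to reduce \Sat (restricted to $3$CNF formulas, which remains \NPh) to \Dominance over the class of acyclic, binary, singly connected \CPNets of indegree three, using exactly the formula net construction $\NetCNF(\phi)$ introduced in \cref{sec_formula_net}. Given an instance $\phi$ of $3$\Sat over variables $X=\{x_1,\dots,x_n\}$ with clauses $C=\{c_1,\dots,c_m\}$, the reduction outputs the \CPNet $\NetCNF(\phi)$ together with the two distinguished outcomes $\alpha$ and $\ol{\beta}$ of \cref{corol:Pref_Net_CNF}: $\alpha$ assigns the non-overlined value to every feature, and $\ol{\beta}$ assigns the overlined value to exactly the variable and clause features (and the non-overlined value to the literal features). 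The output question is whether $\ol{\beta}\PrefNet{\NetCNF(\phi)}\alpha$.

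The key steps are as follows. First I would observe that $\NetCNF(\phi)$ is, by the remarks immediately following its definition, binary, acyclic, singly connected, of indegree exactly three (each literal feature $P_{j,k}$ has the two parents $V_i^T,V_i^F$, each clause feature $D_j$ has the three parents $P_{j,1},P_{j,2},P_{j,3}$, and all other features have indegree at most two), and constructible in polynomial time in $\|\phi\|$ — so the reduction is a legitimate polynomial-time Karp reduction landing in the claimed restricted class. Second, I would invoke \cref{corol:Pref_Net_CNF} directly: $\phi$ is satisfiable if and only if $\ol{\beta}\PrefNet{\NetCNF(\phi)}\alpha$. This immediately gives correctness of the reduction and hence \NPh{}ness of \Dominance even under the stated structural restrictions; the improvement over \cite{Boutilier2004} is precisely that the construction here achieves indegree three rather than six.

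Since \cref{corol:Pref_Net_CNF} (via \cref{lemma_Pref_Net_CNF_new}) is stated earlier and may be assumed, the proof is essentially a two-line corollary of it, so there is no serious obstacle in the write-up itself. If one instead wanted a self-contained argument, the one genuinely substantive point — the one I would flag as the ``hard part'' — is verifying the semantics of $\NetCNF(\phi)$: that the CP tables are engineered so that, starting from $\alpha$ (all non-overlined), an improving flipping sequence can reach $\ol{\beta}$ iff $\phi$ is satisfiable. The intuition is that to flip a variable feature pair $V_i^T V_i^F$ up toward its overlined values one must route through an intermediate $v_i^T v_i^F$ or $\ol{v_i^T} v_i^F$ configuration that ``commits'' variable $x_i$ to true or false; a literal feature $P_{j,k}$ can be (improvingly) flipped to $\ol{p_{j,k}}$ exactly when its variable is set so as to satisfy that literal; a clause feature $D_j$ can be improvingly flipped to $\ol{d_j}$ exactly when at least one of its literal features sits at $\ol{p_{j,k}}$; and finally the variable features can only be brought all the way to their overlined values once all clause features have been flipped, which forces a consistent satisfying assignment. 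Checking that no ``cheating'' flipping order exists — that acyclicity and single-connectedness prevent shortcuts — is the delicate combinatorial core, but it is already carried out in the proof of \cref{lemma_Pref_Net_CNF_new}, so here I would simply cite that lemma and its corollary and conclude.
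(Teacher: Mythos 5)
Your proposal matches the paper's proof essentially verbatim: the paper also reduces \Sat for $3$CNF formulas to \Dominance by outputting $\NetCNF(\phi)$ with $\alpha$ all non-overlined and $\ol{\beta}$ overlined on exactly the variable and clause features, and concludes via \cref{corol:Pref_Net_CNF} (itself resting on \cref{lemma_Pref_Net_CNF_new}), with the structural properties (binary, acyclic, singly connected, indegree three, polynomial-time constructible) noted just as you do. No gaps to report.
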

\begin{proof}
To show that \Dominance is \NPh, we prove that \Sat $\leq_p$ \Dominance.
Let $\phi$ be a Boolean formula in $3$CNF.
Consider the \CPNet $\NetCNF(\phi)$ (defined in \cref{sec_formula_net}) and outcomes $\alpha$ and $\beta$ such that in $\alpha$ the values of all features are non-overlined, and in $\beta$ the values of all and only variable and clause features are overlined.
By \cref{corol:Pref_Net_CNF}, $\phi$ is satisfiable if and only if $\beta\PrefNet{\NetCNF(\phi)}\alpha$.
\end{proof}

We now focus on the problem of testing incomparability between outcomes in \CPNets and show its \CoNPc{}ness.
We  show hardness via the properties of formula nets.
More formally, consider the following problem.

\medbreak

\begin{tabular}{rl}
  \textit{Problem:} & \Incomparability\\
  \textit{Instance:} & A \CPNet $\Net{N}$, and two outcomes $\alpha,\beta\in\OutNet{\Net{N}}$.\\
  %\textit{Output:} &\\
  \textit{Question:} & Is $\alpha \IncompNet{\Net{N}} \beta$?
\end{tabular}

\medbreak

The following theorem shows that deciding whether two outcomes are incomparable in a preference relation represented via an \NP-representation scheme is in \CoNP.

\begin{theorem}\label{theo_incomparability_generic_membership}
Let $\Ranking{R}$ be a preference relation represented via an \NP-representation scheme, and let $\alpha$ and $\beta$ be two outcomes.
Deciding whether $\alpha\Incomp_{\Ranking{R}}\beta$ is feasible in \CoNP.
\end{theorem}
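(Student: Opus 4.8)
The plan is to establish the claim by showing that the \emph{complementary} problem---deciding whether $\alpha$ and $\beta$ are \emph{comparable} in $\Ranking{R}$---lies in \NP, from which membership of incomparability in \CoNP is immediate, since \CoNP is by definition the class of complements of \NP languages.

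First I would unfold the definitions. By definition, $\alpha\Incomp_{\Ranking{R}}\beta$ holds precisely when neither $\alpha\Pref_{\Ranking{R}}\beta$ nor $\beta\Pref_{\Ranking{R}}\alpha$; equivalently, $\alpha$ and $\beta$ are comparable in $\Ranking{R}$ if and only if $\alpha\Pref_{\Ranking{R}}\beta$ or $\beta\Pref_{\Ranking{R}}\alpha$. Since $\Ranking{R}$ is represented via an \NP-representation scheme $\RepScheme{S}$, the Boolean function $\mathit{Pref}^\RepScheme{S}(\cdot,\cdot,\cdot)$ is computable in \NP; hence each of the two conditions ``$\mathit{Pref}^\RepScheme{S}(\xi^\RepScheme{S}(\Ranking{R}),\alpha,\beta)=1$'' and ``$\mathit{Pref}^\RepScheme{S}(\xi^\RepScheme{S}(\Ranking{R}),\beta,\alpha)=1$'' is an \NP predicate of the instance $\tuple{\xi^\RepScheme{S}(\Ranking{R}),\alpha,\beta}$ (the second being just the first with the roles of $\alpha$ and $\beta$ swapped).

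A nondeterministic polynomial-time procedure for comparability then proceeds as follows: it nondeterministically guesses a bit selecting one of the two orderings to test, and then runs the \NP verification for $\mathit{Pref}^\RepScheme{S}$ on the corresponding ordered pair, accepting if and only if that verification accepts. Because the disjunction of two \NP predicates is again in \NP (equivalently, \NP is closed under union), comparability is in \NP, and therefore incomparability---its complement---is in \CoNP.

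I do not foresee a genuine obstacle here; the only point requiring minor care is that the input to the machine must be the encoding $\xi^\RepScheme{S}(\Ranking{R})$, not the abstract relation $\Ranking{R}$ itself, but this is automatic since an instance of the problem is, by our conventions, already presented as such an encoding, and the two invocations of $\mathit{Pref}^\RepScheme{S}$ incur no additional cost beyond reordering the pair $(\alpha,\beta)$.
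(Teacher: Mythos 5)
Your proposal is correct and follows essentially the same route as the paper: the paper also argues that disproving $\alpha\Incomp_{\Ranking{R}}\beta$ is in \NP, since comparability means $\alpha\Pref_{\Ranking{R}}\beta$ or $\beta\Pref_{\Ranking{R}}\alpha$, each of which admits a polynomially checkable certificate under an \NP-representation scheme. Your explicit remark about guessing which ordering to verify and \NP{}'s closure under union is just a slightly more formal phrasing of the same argument.
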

\begin{proof}
We show that disproving $\alpha\Incomp\beta$ is feasible in \NP.
In fact, if $\alpha\not\Incomp\beta$, then either $\alpha\Pref\beta$ or $\beta\Pref\alpha$.
In these cases, since $\Ranking{R}$ is represented via an \NP-representation, there is a polynomial certificate either witnessing $\alpha\Pref\beta$ or witnessing $\beta\Pref\alpha$. 
To conclude, observe that such a certificate can be checked in polynomial time.
\end{proof}

We now focus on \CPNets and show that deciding incomparability is \CoNPh.

\begin{theorem}\label{theo_incomparability_cpnets_hardness}
Let $\Net{N}$ be a \CPNet, and let $\alpha$ and $\beta$ be two outcomes.
Deciding whether $\alpha\IncompNet{\Net{N}}\beta$ is \CoNPh.
Hardness holds even if $\Net{N}$ is acyclic, binary, singly connected, and its indegree is three.
\end{theorem}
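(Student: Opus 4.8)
The plan is to reduce \Unsat (equivalently, \Taut for $3$DNF formulas) to \Incomparability, reusing the formula-net construction already in hand. Given a $3$CNF formula $\phi$, I would build the \CPNet $\NetCNF(\phi)$ exactly as in \cref{sec_formula_net}, and take the two outcomes $\alpha$ and $\ol{\beta}$ of \cref{corol:Pref_Net_CNF}: $\alpha$ assigns non-overlined values to all features, and $\ol{\beta}$ assigns overlined values to all and only the variable and clause features. By \cref{corol:Pref_Net_CNF}, $\phi$ is unsatisfiable if and only if $\ol{\beta}\IncompNet{\NetCNF(\phi)}\alpha$. Since $\NetCNF(\phi)$ is constructible in polynomial time and \Unsat is \CoNPc, this gives a polynomial Karp reduction witnessing \CoNPh{}ness of \Incomparability.

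The only point that needs a moment of care is that $\alpha$ and $\ol{\beta}$ in \cref{corol:Pref_Net_CNF} are specific outcomes of $\NetCNF(\phi)$, so I should note that the reduction outputs the triple $\tuple{\NetCNF(\phi),\alpha,\ol{\beta}}$ and that this is computable in polynomial time in $\|\phi\|$ (which the remark after the construction in \cref{sec_formula_net} already asserts). I would also observe explicitly that $\NetCNF(\phi)$ is acyclic, binary, singly connected, and of indegree three — again recorded in \cref{sec_formula_net} — so the hardness claim holds even under those structural restrictions, matching the statement of the theorem. No new combinatorial work is required: the equivalence ``$\phi$ unsatisfiable $\iff$ $\ol{\beta}\Incomp\alpha$'' is precisely the second bullet of \cref{corol:Pref_Net_CNF}.

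There is essentially no hard step here, since the heavy lifting has been done in \cref{lemma_Pref_Net_CNF_new} and \cref{corol:Pref_Net_CNF}; the ``main obstacle'' is purely a matter of phrasing the reduction cleanly and pointing at the right corollary. If one wanted the proof to be self-contained rather than corollary-driven, the real content would be re-deriving \cref{corol:Pref_Net_CNF}, i.e.\ showing that the CP tables make the literal features $P_{j,k}$ and clause features $D_j$ mirror the truth evaluation of $\phi$ under the assignment encoded in the variable features, so that an improving flipping sequence from $\alpha$ to $\ol{\beta}$ exists exactly when some assignment satisfies $\phi$ — but that is already established earlier, so I would simply invoke it.

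\begin{proof}
We show that \Unsat $\leq_p$ \Incomparability. Let $\phi$ be a Boolean formula in $3$CNF. Consider the \CPNet $\NetCNF(\phi)$ (defined in \cref{sec_formula_net}) together with the outcomes $\alpha$ and $\ol{\beta}$, where in $\alpha$ all features take non-overlined values, and in $\ol{\beta}$ all and only the variable and clause features take overlined values. Both $\NetCNF(\phi)$ and these two outcomes are computable in polynomial time in $\|\phi\|$, and, as noted in \cref{sec_formula_net}, $\NetCNF(\phi)$ is acyclic, binary, singly connected, and of indegree three. By \cref{corol:Pref_Net_CNF}, $\phi$ is unsatisfiable if and only if $\ol{\beta}\IncompNet{\NetCNF(\phi)}\alpha$. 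Since \Unsat is \CoNPc, this proves that \Incomparability is \CoNPh, and that hardness holds already under the stated structural restrictions.
\end{proof}
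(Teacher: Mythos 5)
Your proposal is correct and follows exactly the same route as the paper's own proof: a polynomial-time reduction from \Unsat using the formula net $\NetCNF(\phi)$ of \cref{sec_formula_net} and the second bullet of \cref{corol:Pref_Net_CNF}, with the structural restrictions inherited from the properties of $\NetCNF(\phi)$. No gaps; your extra remarks on the polynomial computability of the triple are just a slightly more explicit phrasing of the same argument.
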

\begin{proof}
To show that \Incomparability is \CoNPh, we prove that \Unsat $\leq_p$ \Incomparability.
Let $\phi$ be a Boolean formula in $3$CNF.
Consider the \CPNet $\NetCNF(\phi)$ (defined in \cref{sec_formula_net}) and outcomes $\alpha$ and $\beta$ such that in $\alpha$ the values of all features are non-overlined, and in $\beta$ the values of all and only variable and clause features are overlined.
By \cref{corol:Pref_Net_CNF}, $\phi$ is unsatisfiable if and only if $\alpha\IncompNet{\NetCNF(\phi)}\beta$.
\end{proof}

By combining the previous two results, we obtain that testing incomparability over polynomially connected classes of acyclic \CPNets is \CoNPc.

\begin{corollary}
Let $\mathcal{C}$ be a polynomially connected class of acyclic \CPNets.
Let $\Net{N}\in\mathcal{C}$ be a \CPNet, and let $\alpha$ and $\beta$ be two outcomes.
Deciding whether $\alpha\IncompNet{\Net{N}}\beta$ is \CoNPc.
\end{corollary}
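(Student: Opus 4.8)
The plan is to obtain the corollary simply as the conjunction of the two preceding theorems, so essentially no new argument is required; the only thing to verify is that both the membership and the hardness sides apply to the class $\mathcal{C}$ in question.

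For the upper bound, I would first recall (as noted before \cref{theo_dominance_cpnets_hardness}, originally from~\cite{Boutilier2004}) that dominance testing is feasible in \NP over every polynomially connected class of acyclic binary \CPNets. By the definitions of \cref{sec_framework_representations}, this says precisely that such a class $\mathcal{C}$ is an \NP-representation scheme: the associated function $\mathit{Pref}$, which on input $\tuple{\Net{N},\alpha,\beta}$ with $\Net{N}\in\mathcal{C}$ returns $1$ iff $\alpha\PrefNet{\Net{N}}\beta$, lies in \NP. Hence \cref{theo_incomparability_generic_membership} applies directly and yields that, for every $\Net{N}\in\mathcal{C}$ and all outcomes $\alpha$ and $\beta$, deciding $\alpha\IncompNet{\Net{N}}\beta$ is in \CoNP.

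For the lower bound, I would invoke \cref{theo_incomparability_cpnets_hardness}, which already establishes the \CoNPh{}ness of deciding incomparability even when the input \CPNet is acyclic, binary, singly connected, and of indegree three. Since in a singly connected \CPNet there is at most one path between any two features, the acyclic binary singly connected \CPNets form (trivially) a polynomially connected class; in particular, the \CPNets $\NetCNF(\phi)$ produced by the reduction in the proof of \cref{theo_incomparability_cpnets_hardness} belong to $\mathcal{C}$. Therefore that reduction already witnesses \CoNPh{}ness for instances drawn from $\mathcal{C}$, and combining it with the upper bound gives that deciding incomparability over $\mathcal{C}$ is \CoNPc.

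I do not anticipate any genuine obstacle: the result is immediate once one observes (i) that ``polynomially connected acyclic binary'' is exactly the property that places the dominance test, and hence (via \cref{theo_incomparability_generic_membership}) the incomparability test, inside the generic \NP/\CoNP framework, and (ii) that the restricted \CPNets used for hardness are themselves polynomially connected, so that \cref{theo_incomparability_cpnets_hardness} need not be strengthened to make its instances fit within $\mathcal{C}$.
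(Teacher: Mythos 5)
Your proposal is correct and matches the paper's own (implicit) argument: the corollary is obtained exactly by combining \cref{theo_incomparability_generic_membership} (membership, since polynomially connected classes of acyclic binary \CPNets are \NP-representation schemes) with \cref{theo_incomparability_cpnets_hardness} (hardness, whose singly connected formula nets are trivially polynomially connected). Nothing further is needed.
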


We emphasize here that checking the incomparability between two outcomes is different from deciding the ``ordering query'' defined in~\cite{Boutilier2004}.
For a \CPNet $\Net{N}$ and two outcomes $\alpha$ and $\beta$, an ordering query is deciding whether there is at least a preference raking $\Ranking{R}$ satisfying $\Net{N}$ such that $\alpha\PrefRanking{\Ranking{R}}\beta$.
As noticed in~\cite{Boutilier2004}, this is tantamount to decide whether $\Net{N}\not\models\beta\Pref\alpha$.
Since $\Net{N}\not\models\beta\Pref\alpha$ if and only if $\beta\not\PrefNet{\Net{N}}\alpha$, deciding an ordering query is actually \CoNPh, because dominance testing is \NPh. 
Therefore, it comes as no surprise that the polynomial algorithm proposed in~\cite{Boutilier2004} to decide ordering queries is actually ``partially complete'' (as said in~\cite{Boutilier2004}).
In fact, given the \CoNPh{}ness of the ordering query problem, there is no sound and complete deterministic polynomial-time algorithm for this problem (unless a major breakthrough in complexity theory occurs, showing that $\PTIME = \NP$).

We conclude this section by looking at the complexity of deciding whether an outcome is optimal in a \CPNet.
This result is needed in \cref{sec_complexity_pareto_voting} to characterize the complexity of one of the voting tasks of Pareto voting.
Here, we show that this problem can be decided in \LogSpace.
More formally, consider the following problem.

\medbreak

\begin{tabular}{rl}
  \textit{Problem:} & \Optimality\\
  \textit{Instance:} & A \CPNet $\Net{N}$, and an outcome $\alpha\in\OutNet{\Net{N}}$.\\
  %\textit{Output:} &\\
  \textit{Question:} & Is $\alpha$ optimal in $\Net{N}$?
\end{tabular}

\medbreak

Recall that for acyclic \CPNets, there is an outcome that is optimum~\cite{Boutilier2004}.
Clearly, this outcome is also the only optimal one in a \CPNet.
Therefore, checking whether an outcome is optimal is tantamount to checking whether the outcome is optimum.
It was shown that, given an acyclic \CPNet $\Net{N}$, computing the unique optimal outcome $\OptOutNet{\Net{N}}$ of $\Net{N}$ is feasible in deterministic polynomial time, more precisely in linear time, through the ``forward sweep'' procedure~\cite{Boutilier2004}.
Hence, as pointed out in~\cite{Rossi2004}, a simple procedure to decide whether a given outcome $\alpha$ is optimal in an acyclic \CPNet~$\Net{N}$ is to compute $\OptOutNet{\Net{N}}$ (in polynomial time) and then to compare $\alpha$ to $\OptOutNet{\Net{N}}$.
However,  this problem actually belongs to a complexity class that is a below \PTIME when acyclic \CPNets are considered.

\begin{theorem}\label{theo_individual_optimality_membership}
Let $\Net{N}$ be an acyclic \CPNet, and let $\alpha\in\OutNet{\Net{N}}$ be an outcome.
Deciding whether $\alpha$ is optimal in $\Net{N}$ is feasible in \LogSpace.
\end{theorem}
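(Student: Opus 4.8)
The plan is to show that \Optimality over acyclic \CPNets can be solved in \LogSpace by reducing it, essentially, to a reachability-style check that is local to each feature. The key observation is the one already recalled before the statement: for an acyclic \CPNet $\Net{N}$, the unique optimal (indeed optimum) outcome $\OptOutNet{\Net{N}}$ is characterized feature-by-feature by the ``forward sweep'': for each feature $F$, $\OptOutNet{\Net{N}}[F]$ is the top value of the CP table row $\CPTNetFeat{\Net{N}}{F}(\OptOutNet{\Net{N}}[\ParNet{\Net{N}}(F)])$. Since the \CPNet is acyclic, this recursion is well founded. Hence $\alpha$ is optimal in $\Net{N}$ if and only if, for every feature $F$, the value $\alpha[F]$ is the most-preferred value in the row of $\CPTNetFeat{\Net{N}}{F}$ selected by the values that $\OptOutNet{\Net{N}}$ assigns to $\ParNet{\Net{N}}(F)$ — equivalently, $\alpha$ is optimal iff $\alpha = \OptOutNet{\Net{N}}$.

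The first step is to make this characterization purely ``downward-looking'' so that it can be checked in logarithmic space. I claim that $\alpha$ is optimal iff for every feature $F$ the following holds: letting $\gamma_F$ be the assignment to $\ParNet{\Net{N}}(F)$ that agrees with $\alpha$ on those parents, the value $\alpha[F]$ is the top of $\CPTNetFeat{\Net{N}}{F}(\gamma_F)$, \emph{and} moreover each parent $G \in \ParNet{\Net{N}}(F)$ is itself ``locally optimal'' in the same recursive sense. But because the graph is acyclic, a straightforward induction on the topological order shows that ``$\alpha[F]$ is the top of the row selected by $\alpha$ on $\ParNet{\Net{N}}(F)$, for every feature $F$'' already forces $\alpha = \OptOutNet{\Net{N}}$: the source features must take their unconditional top value, which matches $\OptOutNet{\Net{N}}$, and then inductively each feature's parents already equal their $\OptOutNet{\Net{N}}$-values, so the selected row coincides with the one used in the forward sweep. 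Thus the global test collapses to the following purely local test: \emph{for every feature $F$, $\alpha[F]$ is the most preferred value in the CP-table row $\CPTNetFeat{\Net{N}}{F}(\alpha[\ParNet{\Net{N}}(F)])$.}

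The second step is the complexity bookkeeping. A \LogSpace machine iterates over the features $F$ of $\Net{N}$ (a feature index needs only logarithmically many bits), and for each $F$ it reads $\alpha[\ParNet{\Net{N}}(F)]$ directly off the input outcome $\alpha$, locates the corresponding row of $\CPTNetFeat{\Net{N}}{F}$ in the (explicitly represented) input, and checks whether $\alpha[F]$ appears as the top element of that row's preference ranking — since features are binary, the row's ranking is a single comparison $f \Pref \ol{f}$ or $\ol{f} \Pref f$, so this is a trivial comparison. All of this uses only a constant number of pointers into the read-only input plus a feature counter, hence logarithmic work space. The machine accepts iff the check succeeds for all $F$. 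Correctness is exactly the equivalence established in the first step, and acyclicity (needed for that equivalence, and implicitly verifiable in \LogSpace by an \textsc{NL}-type cycle check if one wants $\Net{N}$'s acyclicity certified rather than assumed) guarantees the collapse of the global condition to the local one.

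The main obstacle is the first step: justifying rigorously that the \emph{local} per-feature optimality condition is equivalent to \emph{global} optimality of $\alpha$, i.e. that one need not chase improving flips through the exponential-size extended preference graph. This is where acyclicity is essential, and the argument is the induction on a topological order sketched above, together with the known fact \cite{Boutilier2004} that the forward-sweep outcome is the unique undominated outcome; once that equivalence is in hand, the \LogSpace implementation is routine.
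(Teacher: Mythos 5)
Your proposal is correct and takes essentially the same approach as the paper: both reduce optimality in an acyclic \CPNet to the purely local, per-feature check that $\alpha[F]$ is the most preferred value in the CP-table row selected by $\alpha[\ParNet{\Net{N}}(F)]$ (equivalently, that no single improving flip from $\alpha$ exists), which a \LogSpace machine verifies by iterating over the features with a few pointers into the input. The only (cosmetic) difference is the justification of the local-to-global equivalence: the paper argues via an improving flipping sequence from a non-optimal $\alpha$ to the unique optimum, whereas you argue by induction on a topological order that the local condition forces $\alpha$ to coincide with the forward-sweep optimum.
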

\begin{proof}
Being $\Net{N}$ acyclic, if an outcome $\alpha$ is not optimal in $\Net{N}$, then there is an improving flipping sequence from $\alpha$ to the optimum outcome, and hence there is at least a feature whose value can be flipped in $\alpha$ to obtain a better outcome.
Therefore, to decide whether $\alpha$ is optimal, it suffices to consider in turn all features $F$ and check whether it is possible to perform an improving flip according to the CP table of $F$.
If no feature can be flipped to improve the outcome, then~$\alpha$ is optimal.
Clearly, this procedure requires only logarithmic space to be carried out.
\end{proof}

Now that we have analyzed the complexity of dominance, incomparability, and optimality in \CPNets, we can devote our focus to the complexity of Pareto and majority voting on \CPNets in the next two sections.

\section{Complexity of Pareto voting on \texorpdfstring{$m$}{m}CP-nets}\label{sec_complexity_pareto_voting}\label{sec5}

In this section, we characterize the complexity of Pareto voting tasks on \MCPNets.
In particular, after giving some preliminaries on specific structures of \CPNets that we will use in our reductions, we analyze the complexity of Pareto dominance, which is proven \NPc.
Then, we devote our analysis to the problems related to Pareto optimal outcomes, namely, deciding whether an outcome is Pareto optimal, and deciding whether an \MCPNet has a Pareto optimal outcome.
We prove the former \CoNPc, while for the latter, we are able to show that every \MCPNet has a Pareto optimal outcome, which implies that the problem is trivial (i.e., feasible in constant time).
To conclude, we study the complexity of problems on Pareto optimum outcomes, namely, deciding whether an outcome is Pareto optimum, and deciding whether an \MCPNet has a Pareto optimum outcome.
Both problems are proven to be tractable, and in particular we  show the former to be in feasible in \LogSpace and the latter to be feasible in \PTIME.
We recall that the Pareto voting semantics is based on the concept of unanimity, i.e., given an \MCPNet $\MNet{M}$ and two outcomes $\alpha,\beta\in\OutMNet{\MNet{M}}$, it holds that~$\beta\PrefParetoMNet{\MNet{M}}\alpha$, if all agents prefer $\beta$ to $\alpha$, i.e., $|\SetAgentsPrefMNet{\MNet{M}}(\beta,\alpha)|=m$.

\subsection{Preliminaries}
\label{sec_interconnecting_net}
We now introduce a specific structure of \CPNets that will be used in the forthcoming reductions.
In particular, the \emph{conjunctive} $\NetInterAND(\cdot)$ and \emph{disjunctive $\NetInterOR(\cdot)$ interconnecting nets} are CP-nets whose role is intuitively to link different parts of bigger \CPNets:
given a set $\SetFeat{S}$ of $m$ features, the aim of the \CPNets $\NetInterAND(m)$ and $\NetInterOR(m)$ is to propagate the information that all features of $\SetFeat{S}$ and at least one of the features of $\SetFeat{S}$, respectively, have been flipped to their overlined value.

\begin{figure}
	\centering%
	\includegraphics[width=0.8\textwidth]{./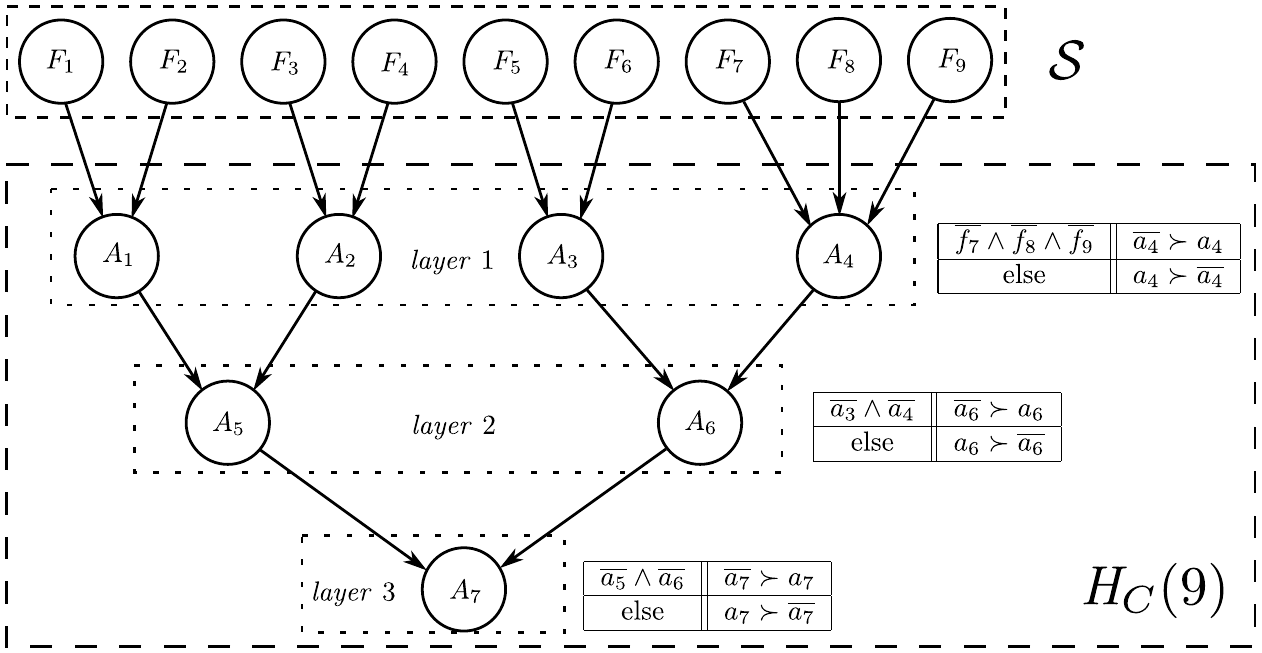}
	\caption{An interconnecting net $\NetInterAND(9)$. Not all the CP tables are reported in the figure.}\label{fig:interconnect_H}
\end{figure}

We first introduce the conjunctive interconnecting \CPNet $\NetInterAND(m)$.
See \cref{fig:interconnect_H} for an example of a $\NetInterAND(9)$ inter\-connecting net.
Such a \CPNet is partitioned into layers, and it is a kind of ``inverted pyramid''.
In particular, $\NetInterAND(m)$ is an acyclic DAG in which each feature of a layer has two or three distinct parents in the previous layer, and at most one child in the next layer.
Features belonging to the same layer have no parents in common, and in every layer at most one feature has three parents.
The first layer of $\NetInterAND(m)$ is attached to a set $\SetFeat{S}$ of $m$ different features of the \CPNet that we want to interconnect.
In the first layer, the above described connection properties hold relative to the features of $\SetFeat{S}$.
The layer with a unique feature, which we call \emph{apex}, is the last layer of the \CPNet.
All the features of $\NetInterAND(m)$ are named $A_i$ with a proper increasing index $i$, and their values are, as usual, $\{a_i,\ol{a_i}\}$.
The CP table for a feature $A_i$ states that value $\ol{a_i}$ is preferred to $a_i$ whenever the value of every parent of $A_i$ is overlined.
Otherwise, the value $a_i$ is preferred to $\ol{a_i}$.
Let $\alpha$ be the outcome in which the values of all the features $A_i$ are non-overlined.
It is not difficult to see that, whenever all the features in $\SetFeat{S}$ have overlined values in $\alpha$, there is an improving flipping sequence starting from $\alpha$ changing the values of all $A_i$ (and hence also the value of the apex) to their overlined values.

The disjunctive interconnecting net $\NetInterOR(m)$ is similar to $\NetInterAND(m)$.
The features and the structure of $\NetInterOR(m)$ are the same as those in $\NetInterAND(m)$.
The only variations are on the CP tables.
Being $\NetInterOR(m)$ a disjunctive net, given a feature $A_i$ of $\NetInterOR(m)$, value $\ol{a_i}$ is preferred to value $a_i$, whenever at least one of the parents of $A_i$ has been flipped to its overlined value.
Let $\alpha$ be the outcome in which the values of all the features $A_i$ are non-overlined.
It is easy to see that, whenever at least a feature in $\SetFeat{S}$ has an overlined value in $\alpha$, there is an improving flipping sequence starting from $\alpha$ and reaching an outcome in which the apex has an overlined value.

Note that $\NetInterAND(m)$ and $\NetInterOR(m)$ are binary, acyclic, singly connected, their indegree is at most three, and the \CPNets can be built in polynomial time in $|\SetFeat{S}|=m$, as the number of their features is polynomial in $m$ (in particular, strictly less than $m$), and each feature has a bounded number of parents which translates into CP tables of bounded sizes.

\subsection{Complexity of Pareto dominance on \texorpdfstring{$m$}{m}CP-nets}
First, we analyze the problem of deciding Pareto dominance on \CPNets, which is shown \NPc.
More formally, consider the following problem.

\medbreak

\begin{tabular}{rl}
  \textit{Problem:} & \ParetoQuery\\
  \textit{Instance:} & An \MCPNet $\MNet{M}$, and two outcomes $\alpha,\beta\in\OutMNet{\MNet{M}}$.\\
  %\textit{Output:} &\\
  \textit{Question:} & Is $\beta \PrefParetoMNet{\MNet{M}} \alpha$?
\end{tabular}

\medbreak

The following \lcnamecref{theo_pareto_query_generic_membership} shows that deciding Pareto dominance over preference profiles represented via an \NP-representation scheme is in \NP.

\begin{theorem}\label{theo_pareto_query_generic_membership}
Let $\Profile{P}$ be a preference profile defined over the same combinatorial domain and represented via an \NP-representation scheme, and let $\alpha$ and $\beta$ be two outcomes.
Then, deciding whether $\beta \PrefParetoProfile{\Profile{P}} \alpha$ is feasible in \NP.
\end{theorem}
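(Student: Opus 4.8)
The plan is to give a direct certificate-based argument showing membership in \NP. Recall that, for a preference profile $\Profile{P} = \tuple{\Ranking{R}_1,\dots,\Ranking{R}_m}$, we have by definition $\beta \PrefParetoProfile{\Profile{P}} \alpha$ if and only if $|\SetAgentsPrefProfile{\Profile{P}}(\beta,\alpha)| = m$, i.e., if and only if $\beta \PrefRanking{\Ranking{R}_i} \alpha$ for every agent $i \in \{1,\dots,m\}$. Since each $\Ranking{R}_i$ is represented via an \NP-representation scheme $\RepScheme{S}$, the function $\mathit{Pref}^\RepScheme{S}(\xi^\RepScheme{S}(\Ranking{R}_i),\beta,\alpha)$ is computable in \NP; in particular, whenever $\beta \PrefRanking{\Ranking{R}_i} \alpha$ holds, there is a polynomial-size certificate $w_i$ witnessing this, checkable in deterministic polynomial time.

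The nondeterministic procedure I would describe is: guess, for each $i \in \{1,\dots,m\}$, a candidate certificate $w_i$ of polynomial size; then, for each $i$, run the deterministic polynomial-time verifier for $\mathit{Pref}^\RepScheme{S}$ on input $(\xi^\RepScheme{S}(\Ranking{R}_i),\beta,\alpha,w_i)$; accept if and only if all $m$ verifications succeed. For correctness: if $\beta \PrefParetoProfile{\Profile{P}} \alpha$, then every agent $i$ prefers $\beta$ to $\alpha$, so a suitable witness $w_i$ exists for each $i$, and the guessed tuple $\tuple{w_1,\dots,w_m}$ leads to acceptance; conversely, if the procedure accepts on some guess, then for each $i$ the verifier confirmed $\mathit{Pref}^\RepScheme{S}(\xi^\RepScheme{S}(\Ranking{R}_i),\beta,\alpha) = 1$, i.e., $\beta \PrefRanking{\Ranking{R}_i} \alpha$ for all $i$, which is exactly Pareto dominance. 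For the complexity bound: $m$ is at most the size of the input instance, each guessed $w_i$ has polynomial size, and each of the $m$ verifier runs takes polynomial time, so the overall nondeterministic computation runs in polynomial time.

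There is really no hard part here; the only mild subtlety is making sure the total certificate length and total verification time remain polynomial when one concatenates $m$ separate \NP-certificates — but this is immediate because $m$ is polynomially bounded by the instance size and each certificate and verification is individually polynomial, so a polynomial times a polynomial is still a polynomial. An alternative (slightly slicker) phrasing would observe that the class \NP is closed under polynomially-bounded conjunction, and that $\beta \PrefParetoProfile{\Profile{P}} \alpha$ is precisely the conjunction over $i$ of the \NP predicates ``$\beta \PrefRanking{\Ranking{R}_i} \alpha$''; I would likely present the explicit guess-and-check version, since it is self-contained and mirrors the style used elsewhere in the paper.
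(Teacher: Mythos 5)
Your proposal is correct and follows essentially the same argument as the paper: guess a polynomial-size certificate of $\beta \PrefRanking{\Ranking{R}_i} \alpha$ for each of the $m$ agents (available by the \NP-representation assumption), verify all of them in deterministic polynomial time, and note that the concatenated guess and the total verification time remain polynomial since $m$ is bounded by the input size. No gaps; nothing further is needed.
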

\begin{proof}
To show that this problem resides in \NP, we exhibit a concise certificate for it.
Let $\Profile{P}=\tuple{\Ranking{R}_1,\dots,\Ranking{R}_m}$.
If $\beta \PrefParetoProfile{\Profile{P}} \alpha$, then, for all $1\leq i\leq m$, it holds that $\beta \PrefRanking{\Ranking{R}_i}\alpha$.
Since, for every agent $i$, $\Ranking{R}_i$ is represented via an \NP-representation scheme, there is a concise certificate witnessing that $\beta \PrefRanking{\Ranking{R}_i}\alpha$.
Therefore, in order to decide whether $\beta \PrefParetoProfile{\Profile{P}} \alpha$, it suffices to guess and subsequently check the polynomial witnesses that, for each agent $i$, it holds that $\beta \PrefRanking{\Ranking{R}_i} \alpha$.
The overall guess requires only polynomial space, and it can be checked in polynomial time.
\end{proof}

Observe that on $1$\CPNets, Pareto dominance is equivalent to dominance (on simple \CPNets).
Therefore, the following result, which follows directly from \cref{theo_dominance_cpnets_hardness}, shows that deciding Pareto dominance in \MCPNets is \NPh.

\begin{theorem}\label{theo_pareto_query_mcpnets_hardness}
Let $\MNet{M}$ be an \MCPNet, and let $\alpha$ and $\beta$ be two outcomes.
Then, deciding whether $\beta\PrefParetoMNet{\MNet{M}}\alpha$ is \NPh.
Hardness holds even on classes of singly connected acyclic binary \MCPNets with indegree at most three and at most one agent.
\end{theorem}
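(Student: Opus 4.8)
The plan is to show that \Dominance (on plain \CPNets) reduces in polynomial time---indeed, essentially trivially---to \ParetoQuery on $1$\CPNets, and then to invoke \cref{theo_dominance_cpnets_hardness}. The key observation is that, when an \MCPNet consists of a single \CPNet $\Net{N}$ (i.e.\ $m=1$), Pareto dominance collapses to ordinary dominance in $\Net{N}$: by definition, $\beta\PrefParetoMNet{\MNet{M}}\alpha$ holds iff $|\SetAgentsPrefMNet{\MNet{M}}(\beta,\alpha)| = m$, and with $m=1$ this says precisely that the unique agent prefers $\beta$ to $\alpha$, i.e.\ that $\beta\PrefNet{\Net{N}}\alpha$.

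Concretely, given an instance $\tuple{\Net{N},\alpha,\beta}$ of \Dominance, the reduction outputs the $1$\CPNet $\MNet{M}=\tuple{\Net{N}}$ together with the same pair of outcomes $\alpha,\beta$; this is clearly computable in polynomial (in fact, logarithmic) space. By the observation above, $\beta\PrefNet{\Net{N}}\alpha$ iff $\beta\PrefParetoMNet{\MNet{M}}\alpha$, so the map is a correct reduction. Moreover, whatever structural restrictions hold of $\Net{N}$ are inherited by $\MNet{M}$: if $\Net{N}$ is acyclic, binary, and singly connected with indegree at most three, then so is $\MNet{M}$, which moreover has a single agent.

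Since \cref{theo_dominance_cpnets_hardness} establishes that \Dominance is \NPh already for acyclic binary singly connected \CPNets of indegree three, the reduction transfers this hardness to \ParetoQuery restricted to singly connected acyclic binary \MCPNets with indegree at most three and at most one agent, as claimed. There is essentially no obstacle here; the only point to verify is that the counting condition $|\SetAgentsPrefMNet{\MNet{M}}(\beta,\alpha)| = m$ defining Pareto dominance degenerates correctly to ordinary \CPNet dominance when $m=1$, which is immediate.
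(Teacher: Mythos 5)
Your proposal is correct and matches the paper's own argument: the paper likewise observes that on $1$\CPNets Pareto dominance coincides with ordinary \CPNet dominance, so the hardness follows directly from \cref{theo_dominance_cpnets_hardness}, with the structural restrictions (acyclic, binary, singly connected, indegree three, one agent) carrying over unchanged.
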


By combining the two previous results, we immediately obtain that Pareto dominance over polynomially connected classes of acyclic \MCPNets is \NPc.

\begin{corollary}
Let $\mathcal{C}$ be a polynomially connected class of acyclic \MCPNets.
Let $\MNet{M}\in\mathcal{C}$ be an \MCPNet, and let $\alpha$ and $\beta$ be two outcomes.
Then, deciding whether $\beta\PrefParetoMNet{\MNet{M}}\alpha$ is \NPc.
\end{corollary}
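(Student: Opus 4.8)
The plan is to derive this corollary immediately by pairing \cref{theo_pareto_query_generic_membership} (for the upper bound) with \cref{theo_pareto_query_mcpnets_hardness} (for the lower bound), after checking that the class $\mathcal{C}$ in the statement meets the hypotheses of the former.

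For \NP-membership, I would first invoke the fact, recalled in \cref{sec_cpnets} and \cref{sec_framework_representations}, that every polynomially connected class of acyclic binary \CPNets is an \NP-representation scheme, since dominance testing over such a class is feasible in \NP~\cite{Boutilier2004}. An \MCPNet $\MNet{M}=\tuple{\Net{N}_1,\dots,\Net{N}_m}\in\mathcal{C}$ is, by definition, a profile $\tuple{\Ranking{R}_1,\dots,\Ranking{R}_m}$ of preference relations all defined over the same combinatorial domain (the constituent \CPNets share features and their domains), and each $\Ranking{R}_i$ is represented via the same \NP-representation scheme given by the constituent class of \CPNets of $\mathcal{C}$. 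Hence \cref{theo_pareto_query_generic_membership} applies directly and yields that deciding $\beta\PrefParetoMNet{\MNet{M}}\alpha$ lies in \NP.

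For \NP-hardness, nothing further is needed: \cref{theo_pareto_query_mcpnets_hardness} already establishes \NPh{}ness of Pareto dominance restricted to singly connected acyclic binary \MCPNets of indegree at most three with a single agent, and such \MCPNets form a subclass of any polynomially connected class $\mathcal{C}$ of acyclic \MCPNets; thus the hardness is inherited by $\mathcal{C}$. Combining the two bounds gives \NPc{}ness. I do not expect any real obstacle here; the only points requiring a moment of care are verifying that polynomial connectedness together with acyclicity and binariness is exactly what makes the constituent \CPNets an \NP-representation (so that \cref{theo_pareto_query_generic_membership} is applicable), and noting that $\mathcal{C}$ is broad enough to contain the very restricted single-agent instances used to witness hardness in \cref{theo_pareto_query_mcpnets_hardness}, so that the lower bound transfers.
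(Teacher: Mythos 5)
Your proposal matches the paper's own proof: the corollary is obtained exactly by combining \cref{theo_pareto_query_generic_membership} (using that polynomially connected classes of acyclic binary \CPNets are \NP-representation schemes) with the hardness in \cref{theo_pareto_query_mcpnets_hardness}. The only minor imprecision is the phrase that the hard instances ``form a subclass of any'' such class $\mathcal{C}$ --- strictly, hardness transfers because the formula-net instances themselves constitute a polynomially connected class of acyclic binary \MCPNets, which is how the paper (and the corollary statement) is to be read.
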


\subsection{Complexity of Pareto optimality on \texorpdfstring{$m$}{m}CP-nets}

Here, we devote our analysis to problems on Pareto optimal outcomes.
In particular, the problems analyzed are deciding whether an outcome is Pareto optimal, and deciding whether an \MCPNet has a Pareto optimal outcome.
We first focus on deciding Pareto optimality of outcomes in \MCPNets.
We  show that this problem is \CoNPc.
More formally, consider the following problem.

\medbreak

\begin{tabular}{rl}
  \textit{Problem:} & \IsParetoOptimal\\
  \textit{Instance:} & An \MCPNet $\MNet{M}$, and an outcome $\alpha\in\OutMNet{\MNet{M}}$.\\
  %\textit{Output:} &\\
  \textit{Question:} & Is $\alpha$ Pareto optimal in $\MNet{M}$?
\end{tabular}

\medbreak

The following result shows that, for a preference profile represented via an \NP-representation scheme, deciding the Pareto optimality of an outcome is feasible in \CoNP.

\begin{theorem}\label{theo_is_pareto_optimal_generic_membership}
Let $\Profile{P}$ be a preference profile defined over the same combinatorial domain and represented via an \NP-representation scheme, and let $\alpha$ be an outcome.
Then, deciding whether $\alpha$ is Pareto optimal in $\Profile{P}$ is feasible in \CoNP.
\end{theorem}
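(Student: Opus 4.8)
The plan is to prove membership in \CoNP\ by showing that the \emph{complement} of \IsParetoOptimal{} — i.e., disproving that $\alpha$ is Pareto optimal — is feasible in \NP. By definition, $\alpha$ is not Pareto optimal in $\Profile{P}$ exactly when there exists an outcome $\beta \neq \alpha$ such that $\beta \PrefParetoProfile{\Profile{P}} \alpha$, i.e., some outcome Pareto dominates $\alpha$. So the certificate disproving Pareto optimality will consist of such a witnessing outcome $\beta$, bundled together with whatever is needed to certify that $\beta$ Pareto dominates $\alpha$.

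First I would note that a candidate dominating outcome $\beta$ is succinct: it is merely an assignment of values to the finitely many features of the shared combinatorial domain over which $\Profile{P}$ is defined, hence of size polynomial in the input, and the inequality $\beta \neq \alpha$ can be checked in polynomial time. Next I would invoke \cref{theo_pareto_query_generic_membership}: since each agent's preference relation $\Ranking{R}_i$ is represented via an \NP-representation scheme, whenever $\beta \PrefParetoProfile{\Profile{P}} \alpha$ there is, for every agent $i$, a polynomially sized certificate witnessing $\beta \PrefRanking{\Ranking{R}_i} \alpha$, and the whole collection of these $m$ certificates is checkable in deterministic polynomial time. Putting these together, a nondeterministic polynomial-time machine can guess $\beta$ together with the $m$ per-agent dominance certificates, verify $\beta \neq \alpha$, and check all the certificates; it accepts iff $\alpha$ is not Pareto optimal. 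Hence disproving Pareto optimality is in \NP, so \IsParetoOptimal{} is in \CoNP.

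I do not expect a genuine obstacle here, as the statement is essentially a corollary of the \NP{} membership of \ParetoQuery{} established in \cref{theo_pareto_query_generic_membership}, combined with the succinctness of outcomes in a combinatorial domain. The only point requiring a little care is that the nondeterministic procedure must \emph{guess} the full witnessing outcome $\beta$ in addition to the dominance certificates (rather than issuing an oracle call to \ParetoQuery{}); but because outcomes have polynomial size this is unproblematic and keeps the whole verification inside \NP.
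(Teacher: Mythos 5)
Your proposal is correct and follows essentially the same route as the paper: show the complement is in \NP by guessing a dominating outcome $\beta$ together with the per-agent dominance certificates guaranteed by the \NP-representation scheme, and verify them in polynomial time, exactly as in the paper's appeal to the proof of \cref{theo_pareto_query_generic_membership}. Your extra remarks on the polynomial size of $\beta$ and the $\beta \neq \alpha$ check are fine but add nothing beyond the paper's argument.
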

\begin{proof}
We show that disproving $\alpha$ being Pareto optimal is feasible in \NP.
If $\alpha$ is not Pareto optimal in $\Profile{P}$, then there is an outcome $\beta$ such that $\beta\PrefParetoProfile{\Profile{P}}\alpha$.
Therefore, we can guess such an outcome $\beta$ along with the witness that $\beta\PrefParetoProfile{\Profile{P}}\alpha$.
This guess requires only polynomial space and can be checked in polynomial time (see the proof of \cref{theo_pareto_query_generic_membership}).
\end{proof}

To prove the \CoNPh{}ness of \IsParetoOptimal, we use a reduction from \Unsat.
Consider the following construction.
Let $\phi(X)$ be a Boolean formula in $3$CNF defined over the set of Boolean variables $X=\{x_1,\dots,x_n\}$, and whose set of clauses is $C=\{c_1,\dots,c_m\}$.
From $\phi$, we build the $2$\CPNet $\MNetIsParOpt(\phi)=\tuple{\NetIPO1,\NetIPO2}$ in the following way.
The \CPNets $\NetIPO1$ and $\NetIPO2$ are built similarly, and we discuss first $\NetIPO1$.
See \cref{fig:is_pareto_optimal_hardness} for a schematic representation of the interconnections between the building blocks of $\NetIPO1$ and $\NetIPO2$.

\begin{figure}
  \centering%
  \includegraphics[width=0.7\textwidth]{./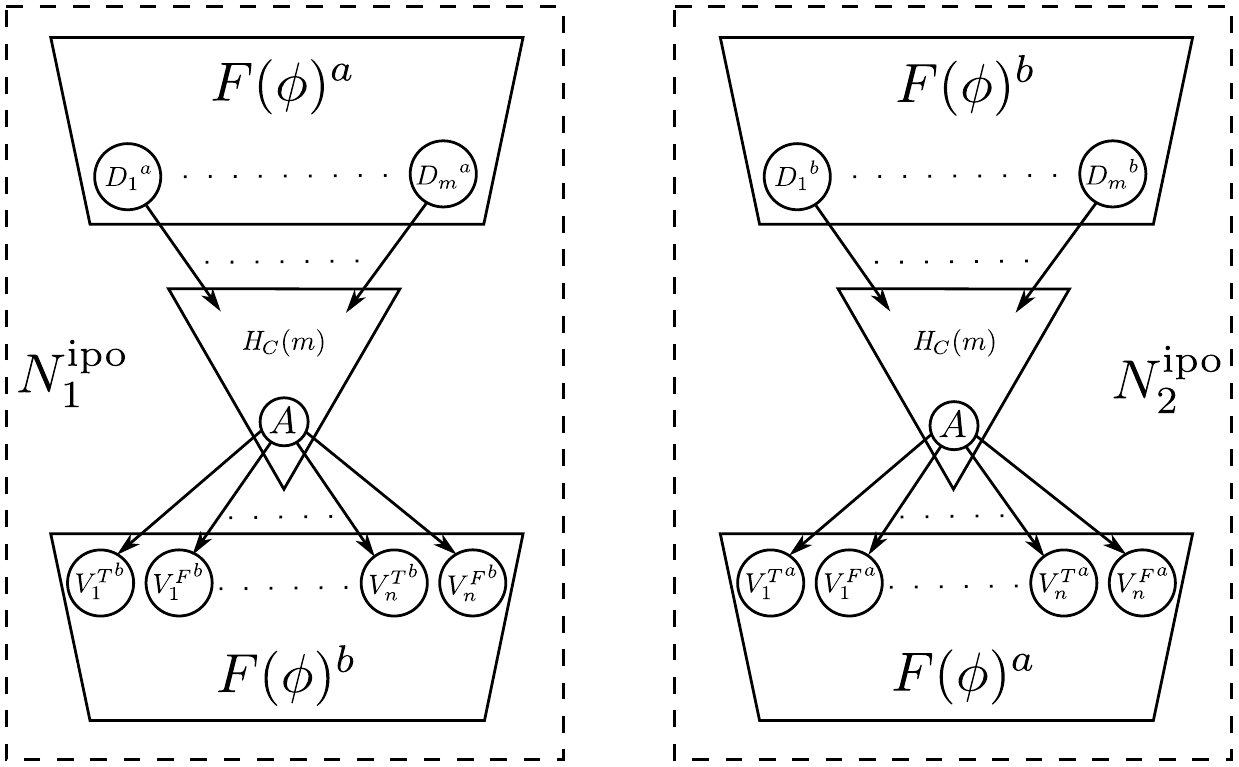}
  \caption{A schematic representation of the building blocks of the $2$\CPNet $\MNetIsParOpt(\phi)=\tuple{\NetIPO1,\NetIPO2}$.}\label{fig:is_pareto_optimal_hardness}
\end{figure}

In $\NetIPO1$, there are two complete copies of the net $\NetCNF(\phi)$ (defined in \cref{sec_formula_net}), with its features, edges, and CP tables.
To distinguish these two copies, we append two different superscript to them and obtain $\NetCNF(\phi)^a$ and $\NetCNF(\phi)^b$.
All features of these two nets have the corresponding superscript $a$ or $b$ to make them different and distinguish them from the others.
The clause features of $\NetCNF(\phi)^a$, i.e., ${D_1}^a,\dots,{D_m}^a$, are attached to a conjunctive interconnecting net $\NetInterAND(m)$ (defined in \cref{sec_interconnecting_net}).
The apex $A$ of $\NetInterAND(m)$ is attached to \emph{all} the variable features of $\NetCNF(\phi)^b$.
Since the variable features of $\NetCNF(\phi)^b$ now have one parent (more precisely the same parent, i.e., the apex of $\NetInterAND(m)$), the CP tables of these features are a bit different from those in $\NetCNF(\phi)$.
Variable features $F$ of $\NetCNF(\phi)^b$ have the CP tables

\smallskip
\begin{tabular}{|c||l|}
\hline
$\ol{a}$ & $\ol{f}\Pref f$\\
\hline
$a$ & $f\Pref \ol{f}$\\
\hline
\end{tabular}.

The \CPNet $\NetIPO2$ is similar to $\NetIPO1$, with the only difference that the conjunctive interconnecting \CPNet $\NetInterAND(m)$ attaches the clause features of $\NetCNF(\phi)^b$ to the variable features of $\NetCNF(\phi)^a$.
The CP tables in $\NetIPO2$ of the variable features of~$\NetCNF(\phi)^a$ are therefore modified accordingly.

Observe that $\MNetIsParOpt(\phi)$ is acyclic, binary, its indegree is three, and can be computed in polynomial time from $\phi$.
Moreover, the class of \MCPNets ${\{\MNetIsParOpt(\phi)\}}_{\phi}$ derived from formulas $\phi$ of the specified kind and according to the reduction shown above is polynomially connected.
The following result shows
that $\phi(X)$ is satisfiable if and only if a particular outcome of $\MNetIsParOpt(\phi)$ is not Pareto optimal.

\begin{lemma}\label{lemma_properties_of_IsParetoOptimal_reduction}
Let $\phi(X)$ be a $3$CNF Boolean formula, and let $\alpha$ be the outcome of $\MNetIsParOpt(\phi)$ assigning non-overlined values to all features.
Then, $\phi(X)$ is satisfiable if and only if $\alpha$ is not Pareto optimal in $\MNetIsParOpt(\phi)$.
\end{lemma}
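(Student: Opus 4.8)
The plan is to identify the ``intended'' dominator of $\alpha$ and then argue the two directions separately. Let $\beta$ be the outcome of $\MNetIsParOpt(\phi)$ that assigns the overlined value to every variable feature, every clause feature, and every feature $A_i$ of the interconnecting nets, and the non-overlined value to every literal feature. I will show that (i)~if $\phi(X)$ is satisfiable then $\beta\PrefParetoMNet{\MNetIsParOpt(\phi)}\alpha$, so $\alpha$ is not Pareto optimal; and (ii)~if $\phi(X)$ is unsatisfiable then no outcome Pareto dominates $\alpha$, so $\alpha$ is Pareto optimal.

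For (i) it suffices to build an improving flipping sequence from $\alpha$ to $\beta$ in $\NetIPO1$, the argument for $\NetIPO2$ being symmetric (swap the two copies of $\NetCNF(\phi)$). The sequence has three stages. First, the sub-net of $\NetIPO1$ induced by the features of the ``free'' copy $\NetCNF(\phi)^a$ coincides with $\NetCNF(\phi)$, so, since $\phi$ is satisfiable, \cref{corol:Pref_Net_CNF} gives an improving sequence inside this copy from the restriction of $\alpha$ (all non-overlined) to the restriction of $\beta$ (overlined on $a$'s variable and clause features, non-overlined on its literal features); this lifts to $\NetIPO1$ because the copy-$a$ CP tables depend only on copy-$a$ features. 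Second, all of ${D_1}^a,\dots,{D_m}^a$ are now overlined, so the property of $\NetInterAND(m)$ recalled in \cref{sec_interconnecting_net} provides an improving sequence flipping every $A_i$, and in particular the apex $A$, to its overlined value. Third, with $A$ overlined the conditional CP tables of the variable features of $\NetCNF(\phi)^b$ prefer the overlined value exactly as unconditional tables would, so the sub-net on copy $b$ also behaves like $\NetCNF(\phi)$ and \cref{corol:Pref_Net_CNF} again supplies an improving sequence inside copy $b$ ending at the restriction of $\beta$. Concatenating the three stages reaches $\beta$; hence $\beta\PrefNet{\NetIPO1}\alpha$ and, symmetrically, $\beta\PrefNet{\NetIPO2}\alpha$, so $\beta$ Pareto dominates $\alpha$.

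For (ii) the key is a monotonicity fact: in any improving flipping sequence of a copy of $\NetCNF(\phi)$ whose variable features carry the original unconditional CP tables, a variable feature that has been flipped to its overlined value can never be flipped back, so each variable passes through at most one of its two ``single-overlined'' encoding states; recording these commitments yields a partial truth assignment $\tau$, and a clause feature $D_j$ can be overlined at some moment only if some literal feature $P_{j,k}$ of clause $j$ was flipped up earlier, which forces $\tau$ to satisfy $\ell_{j,k}$. Hence, if $\phi$ is unsatisfiable, along any improving sequence of $\NetIPO2$ starting at $\alpha$ at least one clause feature $D_{j_0}^b$ of the free copy $b$ is never overlined; propagating this up the layers of $\NetIPO2$'s conjunctive interconnecting net (each $A_i$ prefers its non-overlined value while at least one of its parents is non-overlined, so, starting non-overlined, it can never be flipped up, and this cascades up the funnel to the apex) keeps the apex $A$ non-overlined throughout, whence the gating CP tables keep all variable features of copy $a$ non-overlined, and then all of its literal and clause features non-overlined. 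So any $\gamma$ with $\gamma\PrefNet{\NetIPO2}\alpha$ agrees with $\alpha$ on every feature of $\NetCNF(\phi)^a$. Now suppose also $\gamma\PrefNet{\NetIPO1}\alpha$ and fix an improving sequence from $\alpha$ to $\gamma$ in $\NetIPO1$: the copy-$a$ variable features of $\gamma$ are non-overlined, hence by monotonicity they stay non-overlined along the whole sequence, which keeps copy $a$'s literal and clause features non-overlined, then (layer propagation through $\NetIPO1$'s interconnecting net, which is fed by copy $a$'s clauses) every $A_i$ non-overlined, then (gating) every feature of copy $b$ non-overlined. Therefore $\gamma$ equals $\alpha$ on all features, i.e., $\gamma=\alpha$, contradicting that $\gamma\Pref\alpha$ is strict; so no outcome Pareto dominates $\alpha$.

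The main obstacle is step (ii): one must verify that ``committing to an encoding state'' is genuinely irreversible---which is precisely where the unconditional CP tables of the free copy's variable features and the gating CP tables of the other copy's variable features enter---and that the layer-by-layer propagation really turns ``some input clause feature is always non-overlined'' into ``the apex is always non-overlined''. The cross-wiring of the two nets is what makes the two one-sided constraints combine into $\gamma=\alpha$: in $\NetIPO2$ the free copy is $b$ and the gated copy is $a$ (pinning down copy $a$ in $\gamma$), and feeding this into $\NetIPO1$, where the free copy is $a$ and the gated copy is $b$, then pins down the rest.
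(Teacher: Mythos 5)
Your proof is correct and takes essentially the same route as the paper's: for the satisfiable direction you build the same Pareto-dominating outcome (overlined variable, clause, and interconnecting features) via \cref{corol:Pref_Net_CNF} plus the conjunctive interconnecting net, and for the unsatisfiable direction you use the same mechanism the paper's Property argument relies on, namely that unsatisfiability blocks the apex, which freezes the gated copy, so the cross-wiring of $\NetIPO{1}$ and $\NetIPO{2}$ excludes any common dominator. The only difference is bookkeeping: the paper shows a dominator in one net must have overlined features of the ``free'' copy and untouched features of the gated copy and hence cannot be a dominator of $\alpha$ in the other net, whereas you pin the hypothetical common dominator down to $\gamma=\alpha$; your variant is, if anything, slightly more explicit about the interconnecting features.
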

\begin{proof}[Proof (sketch).]
The key point of the proof is that the features of the formula net downline of the interconnecting net can be flipped to their overlined values if and only if the formula $\phi$ is satisfiable.
Details of the proof are at page~\pageref{page_pointer:properties_of_IsParetoOptimal_reduction_detailed}.
\end{proof}

The above property implies that, in \MCPNets, deciding the Pareto optimality of an outcome is \CoNPh.

\begin{theorem}\label{theo_is_pareto_optimal_mcpnets_hardness}
Let $\MNet{M}$ be an \MCPNet, and let $\alpha\in\OutMNet{\MNet{M}}$ be an outcome.
Deciding whether $\alpha$ is Pareto optimal is \CoNPh.
Hardness holds even on polynomially connected classes of acyclic binary \MCPNets with indegree at most three and at most two agents.
\end{theorem}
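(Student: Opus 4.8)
The plan is to establish \CoNPh{}ness of \IsParetoOptimal by a polynomial-time Karp reduction from \Unsat, reusing the $2$\CPNet construction $\MNetIsParOpt(\phi)$ introduced above together with the characterization already provided by \cref{lemma_properties_of_IsParetoOptimal_reduction}. First I would recall the reduction itself: given an instance $\phi(X)$ of \Unsat, i.e., a Boolean formula in $3$CNF, build the $2$\CPNet $\MNetIsParOpt(\phi)=\tuple{\NetIPO1,\NetIPO2}$ exactly as described, and let $\alpha$ be the outcome of $\MNetIsParOpt(\phi)$ assigning non-overlined values to all features. As already observed when the construction was given, both $\MNetIsParOpt(\phi)$ and $\alpha$ are computable in polynomial time from $\phi$.

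Next I would invoke \cref{lemma_properties_of_IsParetoOptimal_reduction}, which states that $\phi(X)$ is satisfiable if and only if $\alpha$ is not Pareto optimal in $\MNetIsParOpt(\phi)$. Taking the contrapositive, $\phi(X)$ is \emph{unsatisfiable} if and only if $\alpha$ \emph{is} Pareto optimal in $\MNetIsParOpt(\phi)$. Hence the map $\phi\mapsto\tuple{\MNetIsParOpt(\phi),\alpha}$ is a correct polynomial-time Karp reduction from \Unsat to \IsParetoOptimal, and therefore \IsParetoOptimal is \CoNPh.

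Finally I would verify that the hardness is retained within the restricted class claimed in the statement: by construction $\MNetIsParOpt(\phi)$ is acyclic and binary, has indegree three (the formula nets $\NetCNF(\phi)^a,\NetCNF(\phi)^b$, the interconnecting net $\NetInterAND(m)$, and the modified variable CP tables all have at most three parents per feature), uses exactly two agents, and the family ${\{\MNetIsParOpt(\phi)\}}_{\phi}$ is polynomially connected. Combining these observations, \IsParetoOptimal remains \CoNPh even on polynomially connected classes of acyclic binary \MCPNets with indegree at most three and at most two agents, which is exactly the assertion.

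The main obstacle is not located in this theorem at all: once \cref{lemma_properties_of_IsParetoOptimal_reduction} is available, the argument is merely a routine packaging of the reduction and a structural audit of $\MNetIsParOpt(\phi)$. The genuine difficulty lies in \cref{lemma_properties_of_IsParetoOptimal_reduction} itself (whose detailed proof is deferred), where one has to show that the variable, literal, and clause features of the downstream copy $\NetCNF(\phi)^b$ admit an improving flipping sequence to their overlined values precisely when $\phi$ is satisfiable, and to synchronize this behaviour across both \CPNets $\NetIPO1$ and $\NetIPO2$ of the profile so that the unanimity condition defining Pareto dominance of some $\beta$ over $\alpha$ is met in exactly the satisfiable case.
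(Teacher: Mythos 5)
Your proposal is correct and follows essentially the same route as the paper: a polynomial-time reduction from \Unsat mapping $\phi$ to $\tuple{\MNetIsParOpt(\phi),\alpha}$ with $\alpha$ the all-non-overlined outcome, and then invoking \cref{lemma_properties_of_IsParetoOptimal_reduction} (whose contrapositive gives unsatisfiability iff Pareto optimality). Your closing remark is also accurate: the substantive work is in that lemma, and the theorem itself is just the packaging plus the structural audit of the construction.
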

\begin{proof}
We prove that \IsParetoOptimal is \CoNPh by showing a reduction from \Unsat.
Let $\phi(X)$ be a $3$CNF formula, and consider the $2$\CPNet $\MNetIsParOpt(\phi)=\tuple{\NetIPO1,\NetIPO2}$.
Consider the outcome $\alpha\in\OutMNet{\MNetIsParOpt(\phi)}$ in which the values of all features are non-overlined.
By \cref{lemma_properties_of_IsParetoOptimal_reduction}, $\phi(X)$ is unsatisfiable (and hence a ``yes''-instance of \Unsat) if and only if $\alpha$ is Pareto optimal in $\MNetIsParOpt(\phi)$.
\end{proof}

Notice here that the presence of at least two agents in an \MCPNet is an essential source of complexity for the problem \IsParetoOptimal.
In fact, if there were only one agent, then deciding whether an outcome $\alpha$ is Pareto optimal would be tantamount to checking whether $\alpha$ is optimal for that only agent, and we have seen already that this task can be carried out in \LogSpace (see \cref{theo_individual_optimality_membership}).

By combining the two above results, we  immediately conclude that deciding the Pareto optimality of an outcome over polynomially connected classes of acyclic \MCPNets is \CoNPc.

\begin{corollary}
Let $\mathcal{C}$ be a polynomially connected class of acyclic \MCPNets.
Let $\MNet{M}\in\mathcal{C}$ be an \MCPNet, and let $\alpha\in\OutMNet{\MNet{M}}$ be an outcome.
Deciding whether $\alpha$ is Pareto optimal is \CoNPc.
\end{corollary}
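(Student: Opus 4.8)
The plan is to obtain \CoNPc{}ness by combining the generic \CoNP membership already established in \cref{theo_is_pareto_optimal_generic_membership} with the \CoNP hardness of \cref{theo_is_pareto_optimal_mcpnets_hardness}, exactly as was done for incomparability in the previous subsection.

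For the upper bound, the first step is to note that a polynomially connected class of acyclic binary \CPNets is an \NP-representation scheme, since dominance testing is feasible in \NP over such classes~\cite{Boutilier2004} (this is precisely the observation recorded in \cref{sec_framework_representations}). Hence the profile of \CPNets constituting $\MNet{M}$ is represented via an \NP-representation scheme, and \cref{theo_is_pareto_optimal_generic_membership} applies verbatim, placing \IsParetoOptimal in \CoNP when restricted to $\mathcal{C}$.

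For the lower bound, the second step is simply to invoke \cref{theo_is_pareto_optimal_mcpnets_hardness}: the reduction from \Unsat given there builds, from a $3$CNF formula $\phi$, the $2$\CPNet $\MNetIsParOpt(\phi)=\tuple{\NetIPO1,\NetIPO2}$, and the family ${\{\MNetIsParOpt(\phi)\}}_\phi$ was already argued to be a polynomially connected class of acyclic binary \MCPNets of indegree at most three. Thus the hardness instances all lie within the scope of a class $\mathcal{C}$ of the kind considered in the statement, so \CoNPh{}ness carries over; together with the membership direction this yields \CoNPc{}ness.

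I do not anticipate any genuine obstacle here: the whole content of the proof is already contained in the two cited theorems, and the only point requiring (minor) care is checking that the instances produced by the hardness reduction really do meet the structural constraints defining $\mathcal{C}$ --- acyclicity, binariness, bounded indegree, and polynomial connectedness --- which was verified when $\MNetIsParOpt(\phi)$ was introduced.
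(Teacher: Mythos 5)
Your proposal is correct and follows exactly the paper's route: the corollary is obtained by combining the generic \CoNP membership of \cref{theo_is_pareto_optimal_generic_membership} (applicable because polynomially connected classes of acyclic binary \CPNets are \NP-representation schemes) with the hardness of \cref{theo_is_pareto_optimal_mcpnets_hardness}, whose \Unsat-reduction instances were already verified to lie in such a class. Your added check that the hardness instances satisfy the structural constraints of $\mathcal{C}$ is precisely the (implicit) content of the paper's one-line argument.
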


We now focus on the problem of deciding the existence of Pareto optimal outcomes in \MCPNets.
We  show that every \MCPNets has a Pareto optimal outcome, which implies that the problem is trivial (i.e., feasible in constant time).
More formally, consider the following problem.

\medbreak

\begin{tabular}{rl}
  \textit{Problem:} & \ExistsParetoOptimal\\
  \textit{Instance:} & An \MCPNet $\MNet{M}$.\\
  %\textit{Output:} &
  \textit{Question:} & Does $\MNet{M}$ have a Pareto optimal outcome?
\end{tabular}

\medbreak

The following lemma shows that an acyclic \CPNet has always a Pareto optimal outcome.
Its proof is different from the
proof of a similar result that appeared in \cite{Rossi2004}.

\begin{lemma}[different proof from~\cite{Rossi2004}]\label{lemma_always_exists_pareto_optimal_mcpnets}
Let $\MNet{M}$ be an acyclic \MCPNet.
Then, $\MNet{M}$ has (always) a Pareto optimal outcome.
\end{lemma}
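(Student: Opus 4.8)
The plan is to exhibit an explicit Pareto optimal outcome rather than to argue abstractly. Recall from the preliminaries (citing Boutilier et al.) that every acyclic \CPNet $\Net{N}$ admits a unique optimum outcome $\OptOutNet{\Net{N}}$, which dominates all other outcomes of $\Net{N}$ and is itself dominated by \emph{no} outcome of $\Net{N}$; moreover $\OptOutNet{\Net{N}}$ can be computed in linear time by forward sweep. Given an acyclic \MCPNet $\MNet{M}=\tuple{\Net{N}_1,\dots,\Net{N}_m}$ (with, as usual, at least one agent), I would simply take $o = \OptOutNet{\Net{N}_1}$, the optimum outcome of the first agent's \CPNet (any agent's would do equally well), and claim that $o$ is Pareto optimal in $\MNet{M}$.

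To establish the claim I would argue by contradiction. If $o$ were not Pareto optimal, there would be an outcome $\beta\neq o$ with $\beta\PrefParetoMNet{\MNet{M}}o$, i.e., $|\SetAgentsPrefMNet{\MNet{M}}(\beta,o)|=m$, so in particular agent $1$ prefers $\beta$ to $o$, that is $\beta\PrefNet{\Net{N}_1}o$. But $o=\OptOutNet{\Net{N}_1}$ is, by the recalled property of acyclic \CPNets, dominated by no other outcome in $\Net{N}_1$ (equivalently, $o\PrefNet{\Net{N}_1}\beta$ holds for every $\beta\neq o$, and $\PrefNet{\Net{N}_1}$ is asymmetric), which contradicts $\beta\PrefNet{\Net{N}_1}o$. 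Hence no such $\beta$ exists, $o$ is Pareto optimal, and therefore $\MNet{M}$ has a Pareto optimal outcome.

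There is essentially no hard step here: the only point requiring care is to invoke the precise property of $\OptOutNet{\Net{N}_1}$ that it is dominated by no other outcome (not merely that it dominates all others), which is exactly what acyclicity of the constituent \CPNet buys us. For completeness I would also note that the argument is constructive — the witness $\OptOutNet{\Net{N}_1}$ is polynomial-time computable and can be reused wherever an explicit Pareto optimal outcome is later needed — and that, unlike the proof in \cite{Rossi2004}, it does not proceed via any iterative or fixed-point construction over the whole profile, since with standard (totally ranked) \CPNets a single agent's optimum already suffices.
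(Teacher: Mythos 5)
Your proposal is correct and follows essentially the same argument as the paper: both take the individual optimum outcome of one agent's acyclic \CPNet (which exists and is dominated by no other outcome) and observe that no outcome can Pareto dominate it, since that would require the agent itself to prefer some other outcome to its own optimum. The only difference is presentational — you exhibit the witness directly, while the paper phrases the same contradiction starting from the assumption that no Pareto optimal outcome exists.
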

\begin{proof}
Let $\MNet{M}=\tuple{\Net{N}_1,\dots,\Net{N}_m}$ be an $m$\CPNet, and assume by contradiction that $\MNet{M}$ has no Pareto optimal outcome.
Consider any net $\Net{N}_i$ of $\MNet{M}$.
By the fact that $\Net{N}_i$ is acyclic, it follows that there is a unique optimum outcome $\OptOutNet{\Net{N}_i}$ in $\Net{N}_i$.
Since we assume that in $\MNet{M}$ there are no Pareto optimal outcomes, there must be an outcome $\beta\neq\OptOutNet{\Net{N}_i}$ such that $\beta\PrefParetoMNet{\MNet{M}}\OptOutNet{\Net{N}_i}$, i.e., $\beta$ is preferred to $\OptOutNet{\Net{N}_i}$ by \emph{all} agents of $\MNet{M}$.
However, there is no outcome $\beta$ such that $\beta\PrefNet{\Net{N}_i}\OptOutNet{\Net{N}_i}$, because $\OptOutNet{\Net{N}_i}$ is the optimum outcome in $\Net{N}_i$: a contradiction.
Therefore, there must be a Pareto optimal outcome in $\MNet{M}$.
\end{proof}

\subsection{Complexity of Pareto optimums on \texorpdfstring{$m$}{m}CP-nets}

We now focus on Pareto optimum outcomes.
In particular, the problems analyzed are deciding whether an outcome is Pareto optimum, and deciding whether an \MCPNet has a Pareto optimum outcome.
To study the complexity of these problems, we first prove an intermediate property stating that an \MCPNet has a Pareto optimum outcome if and only if all its individual \CPNets have the very same optimum outcome.

\begin{lemma}\label{lemma_pareto_optimum_iff_same_individual_optimal}
Let $\MNet{M}$ be an acyclic $m$\CPNet.
Then, $\MNet{M}$ has a Pareto optimum outcome if and only if all the individual \CPNets of $\MNet{M}$ have the very same optimum outcome (that, in which case, is also the Pareto optimum outcome of $\MNet{M}$).
\end{lemma}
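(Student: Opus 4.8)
The plan is to prove both directions directly from the definitions, leaning on the standard fact recalled in \cref{sec_cpnets}: every acyclic \CPNet $\Net{N}_i$ has a \emph{unique} optimum outcome $\OptOutNet{\Net{N}_i}$, i.e.\ a unique outcome dominating all others in $\Net{N}_i$. The statement is essentially an unwinding of the Pareto dominance definition combined with this uniqueness.

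For the ``if'' direction, I would assume that all the individual \CPNets $\Net{N}_1,\dots,\Net{N}_m$ of $\MNet{M}$ share a common optimum outcome $o$. Then, for each agent $i$ and every outcome $\beta\neq o$, we have $o\PrefNet{\Net{N}_i}\beta$ because $o=\OptOutNet{\Net{N}_i}$; since this holds for every agent, $|\SetAgentsPrefMNet{\MNet{M}}(o,\beta)| = m$, that is, $o\PrefParetoMNet{\MNet{M}}\beta$. As $\beta\neq o$ was arbitrary, $o$ is a Pareto optimum outcome of $\MNet{M}$, which also settles the parenthetical claim in the statement.

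For the ``only if'' direction, I would assume $\MNet{M}$ has a Pareto optimum outcome $\alpha$. By definition of Pareto optimum, $\alpha\PrefParetoMNet{\MNet{M}}\beta$ for every $\beta\neq\alpha$, which unfolds to: for every agent $i$ and every $\beta\neq\alpha$, $\alpha\PrefNet{\Net{N}_i}\beta$. Hence $\alpha$ dominates all other outcomes in $\Net{N}_i$; but as $\Net{N}_i$ is acyclic, the only outcome with this property is $\OptOutNet{\Net{N}_i}$, so $\alpha=\OptOutNet{\Net{N}_i}$. Since $i$ was arbitrary, all the individual \CPNets of $\MNet{M}$ have the same optimum outcome, namely $\alpha$.

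I do not expect any real obstacle here; the proof is short. The one point that needs to be invoked carefully is the \emph{uniqueness} of the per-net optimum in the ``only if'' direction: it is what lets us conclude $\alpha = \OptOutNet{\Net{N}_i}$ exactly (rather than merely that $\alpha$ is \emph{some} optimal outcome of $\Net{N}_i$), and hence what makes the equivalence tight. Uniqueness of the Pareto optimum of $\MNet{M}$, noted earlier in the preliminaries, then guarantees the object described in both directions is the same one.
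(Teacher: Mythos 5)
Your proof is correct and follows essentially the same route as the paper's: both directions are obtained by unwinding the Pareto-dominance definition and invoking the uniqueness of the optimum outcome in each acyclic individual \CPNet (the paper spells out only the ``only if'' direction and declares the converse easy, which you make explicit). No gaps.
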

\begin{proof}
We show first that if $\MNet{M}=\tuple{\Net{N}_1,\dots,\Net{N}_m}$ has a Pareto optimum outcome $\alpha$, then all the individual \CPNets have the very same optimum outcome, which is $\alpha$.
By definition, a Pareto optimum outcome is unique and Pareto dominates all other outcomes.
This means that, for any outcome $\beta \neq \alpha$, and for all $i$, $\alpha \PrefNet{\Net{N}_i} \beta$.
We know that in each individual \CPNet the only outcome dominating all the others is the individual optimum.
Therefore, $\alpha$ equals all the individual optimum outcomes.
The other direction of the proof is easy.
\end{proof}

Based on the property above, we  characterize the complexity of problems on Pareto optimum outcomes.
We first focus on deciding whether outcomes are Pareto optimum in \MCPNets.
We show that this problem is solvable in \LogSpace.
More formally, consider the following problem.

\medbreak

\begin{tabular}{rl}
  \textit{Problem:} & \IsParetoOptimum\\
  \textit{Instance:} & An \MCPNet $\MNet{M}$ and an outcome $\alpha$.\\
  %\textit{Output:} &
  \textit{Question:} & Is $\alpha$ Pareto optimum in $\MNet{M}$?
\end{tabular}

\medbreak

The following theorem shows that, in acyclic \MCPNets, deciding that an outcome is Pareto optimum is feasible in \LogSpace.

\begin{theorem}\label{theo_is_pareto_optimum_mcpnets_membership}
Let $\MNet{M}$ be an acyclic $m$\CPNet, and let $\alpha\in\OutMNet{\MNet{M}}$ be an outcome.
Deciding whether $\alpha$ is Pareto optimum in $\MNet{M}$ is feasible in \LogSpace.
\end{theorem}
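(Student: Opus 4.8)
The plan is to combine \cref{lemma_pareto_optimum_iff_same_individual_optimal} with the logarithmic-space individual optimality test of \cref{theo_individual_optimality_membership}. First, I would observe that, since every constituent \CPNet $\Net{N}_i$ of an acyclic \MCPNet $\MNet{M}=\tuple{\Net{N}_1,\dots,\Net{N}_m}$ is itself acyclic, $\Net{N}_i$ has a unique optimum outcome, which is also its only optimal outcome. Hence, by \cref{lemma_pareto_optimum_iff_same_individual_optimal}, $\alpha$ is Pareto optimum in $\MNet{M}$ if and only if $\alpha$ coincides with the optimum outcome of every $\Net{N}_i$, which in turn holds if and only if $\alpha$ is optimal in each $\Net{N}_i$. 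So the whole task reduces to deciding, for all $i\in\{1,\dots,m\}$, whether $\alpha$ is optimal in $\Net{N}_i$, and accepting exactly when all these checks succeed.

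Next, I would implement this as a single deterministic machine that loops $i$ from $1$ to $m$ and, for each $i$, runs the procedure behind \cref{theo_individual_optimality_membership}: it goes through the features $F$ of $\Net{N}_i$ one at a time, looks up the row of $\CPTNetFeat{\Net{N}_i}{F}$ indexed by $\alpha[\ParNet{\Net{N}_i}(F)]$ (the CP tables are explicitly part of the input), and checks that $\alpha[F]$ is the most preferred value in that ranking, i.e., that no improving flip of $F$ at $\alpha$ is possible. If for some $i$ and some $F$ an improving flip is available, the machine rejects; if all loops complete without finding one, it accepts. Correctness is immediate from the equivalence above together with the correctness of the single-\CPNet test.

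For the space bound, I would argue that the machine needs only: an index $i\le m$ and an index for the current feature $F$, each storable in $O(\log\|\MNet{M}\|)$ bits; read-only pointers into the input to locate the relevant CP table, its appropriate row, and the entries of $\alpha$, again logarithmic; and the $O(\log\|\Net{N}_i\|)$ workspace of the individual optimality test itself. Crucially, no intermediate results have to be retained: since only the \emph{conjunction} of the $m$ tests matters, the work tape used for test $i$ can be entirely overwritten before test $i+1$ starts. Hence the overall computation stays within logarithmic space.

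The step I expect to be the most delicate is precisely the bookkeeping in the last paragraph: arguing that sequentially composing $m$ logarithmic-space subroutines, and in particular locating the row of $\CPTNetFeat{\Net{N}_i}{F}$ that corresponds to the parent assignment $\alpha[\ParNet{\Net{N}_i}(F)]$, can be done with logarithmically many work-tape cells plus read-only pointers into the instance, rather than copying pieces of the input. The algorithmic content itself is light, since the procedure is essentially a reformulation of \cref{lemma_pareto_optimum_iff_same_individual_optimal} layered on top of \cref{theo_individual_optimality_membership}.
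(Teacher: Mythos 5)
Your proposal is correct and follows essentially the same route as the paper: reduce the question via \cref{lemma_pareto_optimum_iff_same_individual_optimal} to checking that $\alpha$ is individually optimal in each constituent \CPNet, apply the \LogSpace test of \cref{theo_individual_optimality_membership} agent by agent, and reuse the work space across the $m$ checks. The extra implementation details you give about pointers and CP-table row lookup are fine but not needed beyond what the paper's argument already covers.
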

\begin{proof}
From \cref{lemma_pareto_optimum_iff_same_individual_optimal}, we know that $\alpha$ is Pareto optimum in $\MNet{M}$ if and only if $\alpha$ equals all the individual optimum outcomes of the individual \CPNets of $\MNet{M}$.
Therefore, in order to check whether $\alpha$ is actually Pareto optimum it suffices to check, for each individual \CPNet in turn, whether $\alpha$ is the individual optimal outcome for that agent.
If $\alpha$ is different from even just one of the individual optimum outcomes, then $\alpha$ is not Pareto optimum.
Remember that checking the individual optimality of $\alpha$ is feasible in \LogSpace (see \cref{theo_individual_optimality_membership}), and hence, by reusing of work space, we can check in \LogSpace whether $\alpha$ equals all the individual optimum outcomes.
\end{proof}

To conclude, we study the complexity of deciding whether an \MCPNet has a Pareto optimum outcome.
We show that this problem is feasible in \PTIME.
More formally, consider the following problem.

\medbreak

\begin{tabular}{rl}
  \textit{Problem:} & \ExistsParetoOptimum\\
  \textit{Instance:} & An \MCPNet $\MNet{M}$.\\
  %\textit{Output:} &
  \textit{Question:} & Does $\MNet{M}$ have a Pareto optimum outcome?
\end{tabular}

\medbreak

The following theorem states that deciding whether an acyclic \MCPNet has a Pareto optimum outcome is in~\PTIME.

\begin{theorem}\label{theo_exists_pareto_optimum_mcpnets_membership}
Let $\MNet{M}$ be an acyclic $m$\CPNet.
Deciding whether $\MNet{M}$ has a Pareto optimum outcome is feasible in \PTIME.
\end{theorem}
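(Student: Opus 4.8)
The plan is to reduce the problem to computing and comparing the individual optimum outcomes of the constituent \CPNets, exploiting \cref{lemma_pareto_optimum_iff_same_individual_optimal}. By that \lcnamecref{lemma_pareto_optimum_iff_same_individual_optimal}, an acyclic \MCPNet $\MNet{M}=\tuple{\Net{N}_1,\dots,\Net{N}_m}$ has a Pareto optimum outcome if and only if all the individual \CPNets $\Net{N}_1,\dots,\Net{N}_m$ have the very same optimum outcome. Hence it suffices to test this latter condition in polynomial time.

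The algorithm is as follows. First, for each $1\leq i\leq m$, compute the unique optimum outcome $\OptOutNet{\Net{N}_i}$ of the acyclic \CPNet $\Net{N}_i$. Recall that this outcome is guaranteed to exist and can be computed in (linear, hence) polynomial time via the ``forward sweep'' procedure of~\cite{Boutilier2004}: one processes the features in a topological order of $\GraphNet{\Net{N}_i}$, and for each feature $F$ picks the value that is most preferred in $\CPTNetFeat{\Net{N}_i}{F}$ given the values already assigned to the parents of $F$. Second, check whether $\OptOutNet{\Net{N}_1} = \OptOutNet{\Net{N}_2} = \cdots = \OptOutNet{\Net{N}_m}$, i.e., whether these $m$ outcomes all assign the same value to every feature of $\MNet{M}$. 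If they do, accept; otherwise, reject. Correctness is immediate from \cref{lemma_pareto_optimum_iff_same_individual_optimal}.

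As for the running time, computing each $\OptOutNet{\Net{N}_i}$ takes time polynomial in $\|\Net{N}_i\|$, and there are $m$ such computations, so this phase runs in time polynomial in the size of $\MNet{M}$. Comparing the $m$ outcomes, each of which has one entry per feature of $\MNet{M}$, is clearly feasible in polynomial time as well. Thus the whole procedure runs in polynomial time, which proves membership in \PTIME.

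There is no genuine obstacle here: the structural characterization in \cref{lemma_pareto_optimum_iff_same_individual_optimal} together with the polynomial-time computability of individual optima in acyclic \CPNets already does all the work, and the only remaining task is the routine bookkeeping of computing $m$ optima and comparing them feature by feature.
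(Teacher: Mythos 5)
Your proof is correct and follows essentially the same route as the paper: both rely on \cref{lemma_pareto_optimum_iff_same_individual_optimal} to reduce the question to whether all individual optima coincide, and then use the polynomial-time computability of optimum outcomes in acyclic \CPNets. The only (immaterial) difference is that you compute all $m$ individual optima and compare them, whereas the paper computes only the first agent's optimum and then checks, via the \LogSpace optimality test, that it is optimal in each remaining \CPNet; either way the procedure is clearly in \PTIME.
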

\begin{proof}
From \cref{lemma_pareto_optimum_iff_same_individual_optimal}, we know that $\MNet{M}$ has a Pareto optimum outcome if and only if all the individual \CPNets have the very same individual optimum outcome.
Hence, in order to decide whether $\MNet{M}$ has a Pareto optimum outcome, it suffices to compute the individual optimum outcome of the first agent.
We can do this in polynomial time~\cite{Boutilier2004}.
After this, we compare the just computed individual optimum outcome with all the other individual optimum outcomes.
This can be carried out, by reusing of working space, in logarithmic space (see \cref{theo_is_pareto_optimum_mcpnets_membership}), and hence in polynomial time (by the inclusion $\LogSpace\subseteq\PTIME$).
If all the individual optimum outcomes are equal, then we answer yes, otherwise no.
Observe that the overall procedure is feasible in \PTIME.
\end{proof}

\section{Complexity of majority voting on \texorpdfstring{$m$}{m}CP-nets}\label{sec_complexity_majority_voting}\label{sec6}

In this section, we characterize the complexity of majority voting tasks on \MCPNets.
We start with a preliminary section by showing that there are \MCPNets without majority optimal and optimum outcomes, which implies that deciding the existence of majority optimal and optimum outcomes is not a trivial problem.
In the preliminary section, we also define some \CPNets that will be used in the reductions exhibited in this section.
More specifically, we will introduce directs nets, which are \CPNets having a designated outcome as optimal.
Moreover, we  introduce summarized formula nets, which, similarly to formula nets, are \CPNets encoding Boolean formulas, but they associate the satisfiability of formulas with outcomes having specific values on just two features.
Then, we  analyze the complexity of deciding majority dominance in \MCPNets, which is shown \NPc.
Subsequently, we devote our analysis to the problems related to majority optimal outcomes, namely, deciding whether an outcome is majority optimal, and deciding whether an \MCPNet has a majority optimal outcome.
We prove the former \CoNPc and the latter~\SigmaPc{2}.
To conclude, we study the complexity of problems on majority optimum outcomes, namely, deciding whether an outcome is majority optimum, and deciding whether an \MCPNet has a majority optimum outcome.
We  prove the former \PiPc{2} and the latter is shown \PiPh{2} and belonging to \DP{2}.

Recall that, given an \MCPNet~$\MNet{M}$ and two different outcomes $\alpha,\beta\in\OutMNet{\MNet{M}}$, it holds that $\beta\PrefMajorityMNet{\MNet{M}}\alpha$, if the majority of agents prefer $\beta$ to $\alpha$, i.e., $|\SetAgentsPrefMNet{\MNet{M}}(\beta,\alpha)| > |\SetAgentsFerpMNet{\MNet{M}}(\beta,\alpha)| + |\SetAgentsIncompMNet{\MNet{M}}(\beta,\alpha)|$.
What we call majority optimal and majority optimum outcomes, in some works (see, e.g.,~\cite{Li2011,Felsenthal2014}) are named weak and (strong) Condorcet winners, respectively.
However, in the literature (see, e.g.,~\cite{BrandtEtAlMAS2013,BaumeisterRoth:Voting,SurveyCompSocChoice}), the nomenclature of weak/strong Condorcet winner has also been used with a slightly different meaning.
To avoid any confusion, in this paper, we prefer to stick to the concepts of majority optimal and majority optimum outcomes, which we introduced in \cref{sec_prelim}.

\subsection{Preliminaries} \label{sec_direct_net}\label{sec_summ_formula_net}

We first show that there are \MCPNets that do not have any majority optimal outcome, and hence neither a majority optimum outcome.
Thus, deciding whether an \MCPNet has majority optimal or optimum outcomes is a non-trivial problem.

\begin{theorem}\label{theo_no_weak_strong_condorcet_winners}
There are acyclic binary singly-connected \MCPNets not having majority optimal and majority optimum outcomes.
\end{theorem}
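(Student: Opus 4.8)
The plan is to exhibit an explicit small \MCPNet witnessing the claim. The classical way to kill majority optimal (and hence optimum) outcomes is to reproduce a Condorcet cycle over the outcome space: three agents whose preferences cycle through three (or more) outcomes so that every candidate majority optimal outcome is majority dominated by another. So first I would fix a small number of binary features --- two features suffice, giving four outcomes $\{ab, a\ol b, \ol a b, \ol a\ol b\}$ --- and a $3$\CPNet $\MNet{M}=\tuple{\Net N_1,\Net N_2,\Net N_3}$ on them. The goal is to arrange the three acyclic \CPNets so that, writing the four outcomes as $o_1,o_2,o_3,o_4$, we get a majority-dominance cycle $o_1\PrefMajorityMNet{\MNet M}o_2\PrefMajorityMNet{\MNet M}o_3\PrefMajorityMNet{\MNet M}o_1$ (and, if needed, arrange $o_4$ so it too is majority dominated and does not itself majority dominate everything). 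Since each individual \CPNet is acyclic, each $\Net N_i$ is internally consistent; the cycle arises only at the aggregate level, exactly as in a standard Condorcet paradox.

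Concretely, I would take the three agents to have the three ``rotations'' of a fixed ranking of the outcomes, as far as this is realizable by acyclic binary \CPNets. For instance, let agent $1$ have CP tables making $ab$ optimum with $ab\Pref a\ol b\Pref \ol a b\Pref\ol a\ol b$-ish structure, agent $2$ a rotation placing $\ol a b$ on top, and agent $3$ a rotation placing $a\ol b$ on top; the precise CP tables are a routine finite check. Then for each ordered pair of outcomes I would tabulate $|\SetAgentsPrefMNet{\MNet M}(\cdot,\cdot)|$, $|\SetAgentsFerpMNet{\MNet M}(\cdot,\cdot)|$, $|\SetAgentsIncompMNet{\MNet M}(\cdot,\cdot)|$, and verify that for \emph{every} outcome $\alpha$ there is some $\beta$ with $|\SetAgentsPrefMNet{\MNet M}(\beta,\alpha)| > |\SetAgentsFerpMNet{\MNet M}(\beta,\alpha)| + |\SetAgentsIncompMNet{\MNet M}(\beta,\alpha)|$, i.e.\ $\beta\PrefMajorityMNet{\MNet M}\alpha$; this says no outcome is majority optimal. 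Since a majority optimum outcome must in particular be majority optimal, the nonexistence of majority optimal outcomes immediately gives nonexistence of majority optimum outcomes. Finally I would note that the constructed \CPNets are binary by construction, acyclic by construction, and --- choosing the dependency graphs to be e.g.\ a single edge $A\to B$ or no edges at all --- singly connected, so the instance lies in the required class.

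The main obstacle is a realizability issue rather than a conceptual one: with only two binary features and acyclic \CPNets, not every permutation of the four outcomes is the preference order of some \CPNet (acyclic binary \CPNets on two features realize only certain orders of the $4$ outcomes), so I cannot literally use three arbitrary cyclic rotations of a $4$-element list. The fix is either (i) to check that the three specific acyclic \CPNets I write down do produce a majority cycle covering all four outcomes --- possibly with the cycle only involving three of them and the fourth handled separately --- or (ii) to use three binary features instead of two, which gives much more freedom to place optima and order outcomes and makes the Condorcet-style construction go through comfortably while keeping the nets acyclic, binary, and singly connected. Either way the verification is a finite case analysis over a handful of outcome pairs, so once the CP tables are pinned down the proof is a short tabulation; the only real work is choosing those tables so that the aggregate majority relation has no sink (no majority optimal outcome).
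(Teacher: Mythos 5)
Your proposal follows essentially the same route as the paper: the paper's proof exhibits a concrete counterexample over two binary features, namely a $4$\CPNet whose acyclic, binary, singly connected constituent nets (one edge each) induce the four rotations of the cyclic ranking $\ol{a}\ol{b}\Pref\ol{a}b\Pref ab\Pref a\ol{b}$, and then checks directly that each of the four outcomes is majority dominated by another, so there is no majority optimal and hence no majority optimum outcome. The realizability obstacle you flag is genuine but easily dispatched: exactly these four rotations are induced by acyclic binary \CPNets with a single edge between the two features, and in fact any three of them already yield a $3$-agent witness, so your plan goes through once the CP tables are written out.
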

\begin{proof}
Consider the acyclic binary singly connected $4$\CPNet $\MNetMaj=\tuple{\Net{N}_1,\Net{N}_2,\Net{N}_3,\Net{N}_4}$ defined in \cref{fig:no_majority_optimal}.

\begin{figure}
  \centering%
  \includegraphics[width=0.9\textwidth]{./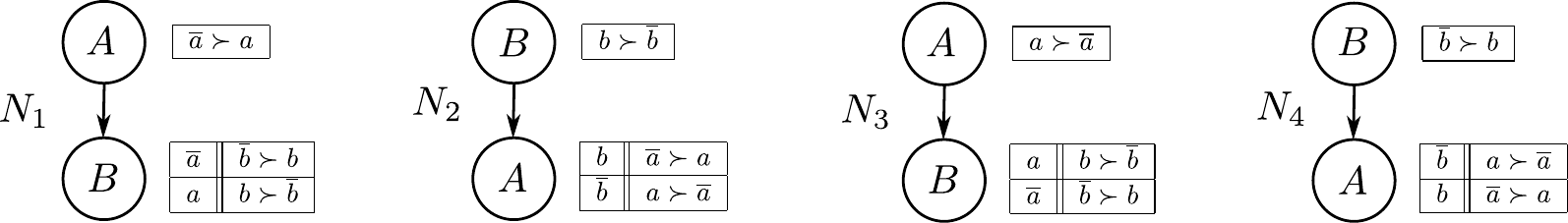}
  \caption{The $4$\CPNet $\MNetMaj$ of \cref{theo_no_weak_strong_condorcet_winners}.}
  \label{fig:no_majority_optimal}
\end{figure}

The preferences encoded in the four nets are: $\ol{a}\ol{b}\PrefNet{\Net{N}_1}\ol{a}b\PrefNet{\Net{N}_1}ab\PrefNet{\Net{N}_1}a\ol{b}$;
$\ol{a}b\PrefNet{\Net{N}_2}ab\PrefNet{\Net{N}_2}a\ol{b}\PrefNet{\Net{N}_2}\ol{a}\ol{b}$;
$ab\PrefNet{\Net{N}_3}a\ol{b}\PrefNet{\Net{N}_3}\ol{a}\ol{b}\PrefNet{\Net{N}_3}\ol{a}b$; and
$a\ol{b}\PrefNet{\Net{N}_4}\ol{a}\ol{b}\PrefNet{\Net{N}_4}\ol{a}b\PrefNet{\Net{N}_4}ab$.
Observe that:
$ab$ is not majority optimal, because $\ol{a}b\PrefMajorityMNet{\MNetMaj}ab$;
$\ol{a}b$ is not majority optimal, because $\ol{a}\ol{b}\PrefMajorityMNet{\MNetMaj}\ol{a}b$;
$a\ol{b}$ is not majority optimal, because $ab\PrefMajorityMNet{\MNetMaj}a\ol{b}$;
and $\ol{a}\ol{b}$ is not majority optimal, because $a\ol{b}\PrefMajorityMNet{\MNetMaj}\ol{a}\ol{b}$.
This implies that $\MNetMaj$ does not have any majority optimal outcome, and hence also either a majority optimum outcome.
\end{proof}

We now define two \CPNets that will be used in the reductions of this section on majority voting.
First, we define \emph{direct nets},
which intuitively are \CPNets that have a specific desired optimum outcome (recall that an acyclic \CPNet has a unique optimal outcome, which is also optimum).
Let $\SetFeat{S}$ be a set of binary features defined over the usual values, and let $\alpha\in\Dom(\SetFeat{S})$ be an outcome over $\SetFeat{S}$.
The \emph{direct net} $\NetDirect(\alpha)=\tuple{\GraphNet{\NetDirect(\alpha)},\DomNet{\NetDirect(\alpha)},\CPTNet{\NetDirect(\alpha)}}$ is the \CPNet such that $\FeatNet{\NetDirect(\alpha)}=\SetFeat{S}$, $\LinkNet{\NetDirect(\alpha)}=\emptyset$, the domain of each feature $F$ of $\NetDirect(\alpha)$ is the same as the domain of $F$ in $\SetFeat{S}$, and the CP tables of $\NetDirect(\alpha)$ are such that, given a feature $F$, if $\alpha[F]=f$, then the CP table for $F$ is
\begin{tabular}{|l|}
	\hline
	$f\Pref \ol{f}$\\
	\hline
\end{tabular},
otherwise (i.e., $\alpha[F]=\ol{f}$) the CP table for $F$ is
\begin{tabular}{|l|}
	\hline
	$\ol{f}\Pref f$\\
	\hline
\end{tabular}.
See \cref{fig:direct_net} for an example.
Clearly, given any outcome $\beta\in\OutNet{\NetDirect(\alpha)}$ such that $\beta\neq\alpha$, it holds that $\alpha\PrefNet{\NetDirect(\alpha)}\beta$.
Moreover, let $\beta$ and $\gamma$ be two different outcomes of $\NetDirect(\alpha)$ such that, for all features $F$ for which $\gamma[F]\neq\beta[F]$, it holds that $\gamma[F]=\alpha[F]$ (and hence $\beta[F] \neq \alpha[F]$).
Then, $\gamma\PrefNet{\NetDirect(\alpha)}\beta$.
Note that $\NetDirect(\alpha)$ is binary, acyclic, singly connected, its indegree is zero, and the net can be built in polynomial time from $\alpha$.

\begin{figure}
	\centering%
	\includegraphics[width=0.45\textwidth]{./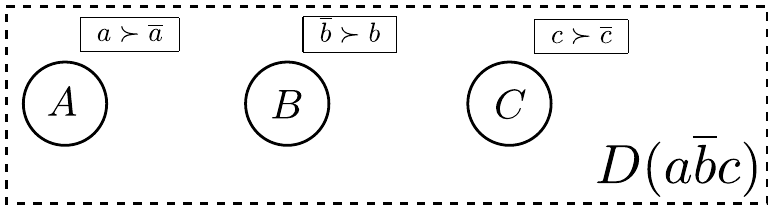}
	\caption{The direct net $\NetDirect(\alpha)$, with $\alpha=a\ol{b}c$.}
	\label{fig:direct_net}
\end{figure}

We finally introduce \emph{summarized formula nets}, which are similar to formula nets (of \cref{sec_formula_net}), with the advantage that these new nets put in relationship the satisfiability of Boolean formulas with the flip of only two features, instead of with the flip of all variable and clause features.
This advantage comes at cost of loosing the single connectedness property of the nets, which, instead, is satisfied in non-summarized formula nets.
Formally, let
$\phi(X)$ be a Boolean formula in $3$CNF defined over the set of Boolean variables $X=\{x_1,\dots,x_n\}$, and whose set of clauses is $C=\{c_1,\dots,c_m\}$.
From $\phi(X)$, we build the \CPNet $\NetCNFSumm(\phi)$ in the following way (see \cref{fig:summ_formula_net}).

\begin{figure}
	\centering%
	\includegraphics[width=0.3\textwidth]{./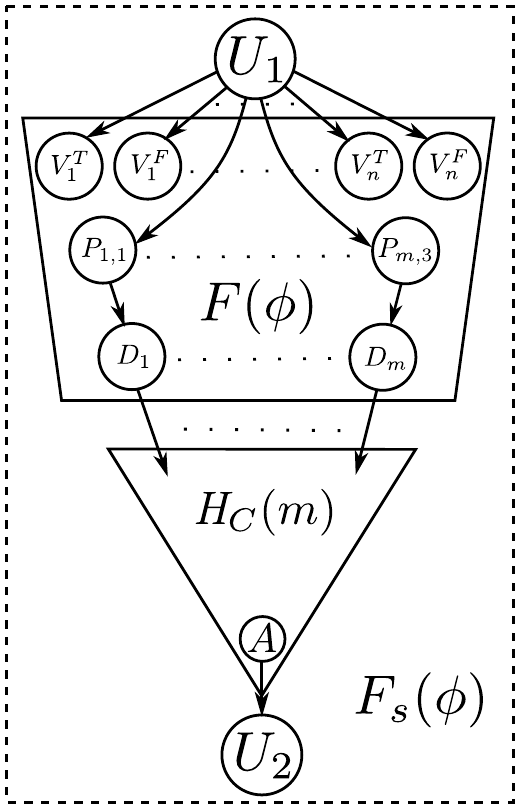}
	\caption{A schematic representation of CP-net $\NetCNFSumm(\phi)$.}
	\label{fig:summ_formula_net}
\end{figure}

The \CPNet $\NetCNFSumm(\phi)$ embeds a formula net $\NetCNF(\phi)$ (defined in \cref{sec_formula_net}) with its features and links.
Moreover, in $\NetCNFSumm(\phi)$, there is an interconnecting net $\NetInterAND(m)$ (defined in \cref{sec_interconnecting_net}), which is attached to all clause features of $\NetCNF(\phi)$.
We denote by $\SetFeat{A}$ the set of features belonging to the net $\NetInterAND(m)$ embedded in $\NetCNFSumm(\phi)$.
To conclude with the features of~$\NetCNFSumm(\phi)$, there are two more features: $U_1$ and $U_2$, where
$U_1$ has no parents and is linked to all variable and literal features of~$\NetCNF(\phi)$, while
$U_2$ is not the parent of any feature and its unique parent is the apex of the interconnecting net.

The CP tables of $\NetCNFSumm(\phi)$ are as follows:

\begin{itemize}
\item feature $U_1$ has the CP table
	\begin{tabular}{|l|}
		\hline
		$\ol{u_1}\Pref u_1$\\
		\hline
	\end{tabular};
	
	\item for each variable $x_i\in X$, features $V_i^T$ and $V_i^F$ have the CP tables
	
	\smallskip
	\begin{tabular}{|c||l|}
		\hline
		$u_1$ & $\ol{v_i^T}\Pref v_i^T$\\
		\hline
		$\ol{u_1}$ & $v_i^T\Pref \ol{v_i^T}$\\
		\hline
	\end{tabular} and
	\begin{tabular}{|c||l|}
		\hline
		$u_1$ & $\ol{v_i^F}\Pref v_i^F$\\
		\hline
		$\ol{u_1}$ & $v_i^F\Pref \ol{v_i^F}$\\
		\hline
	\end{tabular}, respectively;
	
	\item for each literal $\ell_{j,k}$, if $\ell_{j,k}=x_i$, then feature $P_{j,k}$ has the CP table
	
	\smallskip \begin{tabular}{|c||l|}
		\hline
		$u_1\land\ol{v_i^T}\land v_i^F$ & $\ol{p_{j,k}}\Pref p_{j,k}$\\
		\hline
		else & $p_{j,k}\Pref \ol{p_{j,k}}$\\
		\hline
	\end{tabular};
	
	otherwise, if $\ell_{j,k}=\lnot x_i$, then $P_{j,k}$ has the CP table

\smallskip 	\begin{tabular}{|c||l|}
		\hline
		$u_1\land v_i^T\land \ol{v_i^F}$ & $\ol{p_{j,k}}\Pref p_{j,k}$\\
		\hline
		else & $p_{j,k}\Pref \ol{p_{j,k}}$\\
		\hline
	\end{tabular};
	
	\item clause features have the same CP table as in $\NetCNF(\phi)$;
	
	\item features of the conjunctive interconnecting net $\NetInterAND(m)$ have the usual CP tables;
	
	\item feature $U_2$, if feature $A$ is the apex of the interconnecting net, has the CP table

\smallskip 	\begin{tabular}{|c||l|}
		\hline
		$\ol{a}$ & $\ol{u_2}\Pref u_2$\\
		\hline
		$a$ & $u_2\Pref \ol{u_2}$\\
		\hline
	\end{tabular}.
\end{itemize}

Note that $\NetCNFSumm(\phi)$
is binary, acyclic, its indegree is three, and the net can be built in polynomial time in the size of~$\phi$.
Moreover, the class of \MCPNets ${\{\MNetIsParOpt(\phi)\}}_{\phi}$ derived from formulas $\phi$ of the specified kind and according to the reduction shown above is polynomially connected.

Now we give an equivalent of \cref{lemma_Pref_Net_CNF_new} for $\NetCNFSumm(\phi)$.
In particular, the following \lcnamecref{lemma_Pref_Net_CNF_Summ_new} and \lcnamecref{corol:Pref_Net_CNF_Summ} show that $\phi$ is satisfiable if and only if a particular outcome dominates others in $\NetCNFSumm(\phi)$.

\begin{lemma}\label{lemma_Pref_Net_CNF_Summ_new}
	Let $\phi(X)$ be a Boolean formula in $3$CNF defined over a set $X$ of Boolean variables, and let $\sigma_X$ be an assignment on $X$.
	Let $\alpha_{\sigma_X}$ be the outcome of $\NetCNFSumm(\phi)$ encoding $\sigma_X$ on the feature set $\SetFeat{V}$, and assigning non-overlined values to all other features.
	Let $\ol{\beta}$ be an outcome of $\NetCNFSumm(\phi)$ such that $\ol{\beta}[U_1 U_2] = \ol{u_1}\ol{u_2}$, assigning any value to the features of $\SetFeat{V}$, and assigning non-overlined values to all other features.
	Then:
	\begin{itemize}
		\item[(1)] There is an extension of $\sigma_X$ to $X$ satisfying $\phi(X)$ if and only if $\ol{\beta}\PrefNet{\NetCNFSumm(\phi)}\alpha_{\sigma_X}$;
		\item[(2)] There is no extension of $\sigma_X$ to $X$ satisfying $\phi(X)$ if and only if $\ol{\beta}\IncompNet{\NetCNFSumm(\phi)}\alpha_{\sigma_X}$.
	\end{itemize}
\end{lemma}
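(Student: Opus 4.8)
The plan is to mirror the proof of \cref{lemma_Pref_Net_CNF_new}, using that $\NetCNFSumm(\phi)$ embeds the formula net $\NetCNF(\phi)$, but accounting for the new features $U_1,U_2$ and the conjunctive interconnecting net $\NetInterAND(m)$. The structural fact on which everything rests is that the CP table of $U_1$ prefers $\ol{u_1}$ to $u_1$ \emph{unconditionally}; hence along any improving flipping sequence $U_1$ can be flipped at most once, and only from $u_1$ to $\ol{u_1}$. Every improving sequence therefore splits into a \emph{first phase} (while $U_1=u_1$) and a \emph{second phase} (after $U_1$ has been flipped, so $U_1=\ol{u_1}$ forever). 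In the first phase the parent value of every variable feature is $u_1$, so each $V_i^T,V_i^F$ can only be flipped \emph{upward}, and once the variable features encode an assignment the literal features fire exactly as the corresponding literal features of $\NetCNF(\phi)$ do; in the second phase the variable features can only be flipped downward, and every literal, clause and interconnecting feature has its non‑overlined value preferred. I would also record immediately that item~(2) follows from item~(1): since $\ol{\beta}[U_1]=\ol{u_1}$ while $\alpha_{\sigma_X}[U_1]=u_1$, an improving sequence from $\ol{\beta}$ to $\alpha_{\sigma_X}$ would have to flip $U_1$ the wrong way, which is impossible, so $\alpha_{\sigma_X}\not\PrefNet{\NetCNFSumm(\phi)}\ol{\beta}$; as $\NetCNFSumm(\phi)$ is acyclic, $\ol{\beta}\not\PrefNet{\NetCNFSumm(\phi)}\alpha_{\sigma_X}$ is then equivalent to $\ol{\beta}\IncompNet{\NetCNFSumm(\phi)}\alpha_{\sigma_X}$, and item~(1) identifies the former with the non‑existence of a satisfying extension of $\sigma_X$.

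For the ``only if'' direction of~(1) I would build an explicit improving flipping sequence from $\alpha_{\sigma_X}$ to $\ol{\beta}$ out of a satisfying extension $\tau$ of $\sigma_X$. While $U_1$ is still $u_1$: (i)~flip up the variable features of the variables left undefined by $\sigma_X$ so that the variable features encode $\tau$; (ii)~for each clause $c_j$ pick a literal $\ell_{j,k_j}$ made true by $\tau$ and flip $P_{j,k_j}$ up (its CP‑table condition holds precisely because the variable features encode $\tau$ and $U_1=u_1$); (iii)~flip every clause feature $D_j$ up; (iv)~propagate through $\NetInterAND(m)$, flipping all its features including the apex up; (v)~flip $U_2$ up; (vi)~flip up whatever further variable features are needed so that the variable features already agree with $\ol{\beta}$ on the features it has overlined. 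Then flip $U_1$ to $\ol{u_1}$, and in the second phase flip all literal, clause and interconnecting features back down to their non‑overlined values in topological order, and flip down the remaining variable features that are overlined but should be non‑overlined in $\ol{\beta}$, leaving $U_2$ at $\ol{u_2}$; the resulting outcome is exactly $\ol{\beta}$. A little care is needed with the relative order of (i), (ii) and (vi) so as not to destroy the CP‑table conditions firing the literal features, but this is routine.

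For the ``if'' direction of~(1), suppose there is an improving flipping sequence from $\alpha_{\sigma_X}$ to $\ol{\beta}$. Since $\ol{\beta}[U_2]=\ol{u_2}$ while $\alpha_{\sigma_X}[U_2]=u_2$, $U_2$ is flipped up at some point, and at that moment the apex $A$ of $\NetInterAND(m)$ has its overlined value. I would then prove a small monotonicity lemma for acyclic conjunctive nets: if a feature has its overlined value at some time, then every ancestor of it in the net had its overlined value at some earlier time; applied along the inverted pyramid of $\NetInterAND(m)$, this yields that every clause feature $D_j$ is overlined at some point. Fix for each $j$ the \emph{first} such moment; the CP table of $D_j$ then forces some literal feature $P_{j,k}$ of $c_j$ to be overlined there, hence $P_{j,k}$ was flipped up earlier, and by its CP table this happened while $U_1=u_1$ with the variable features of the underlying variable $x_i$ in the unique configuration making $\ell_{j,k}$ ``true''. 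Because $U_1$ flips at most once, throughout the whole $u_1$‑phase each variable feature is monotone non‑decreasing, so for each variable the pair $(V_i^T,V_i^F)$ can pass through at most one of the two ``activating'' configurations. Reading off, for each variable, which configuration (if any) occurred defines an assignment $\tau$ that is consistent (by that monotonicity), extends $\sigma_X$ (the monotone evolution started from $\alpha_{\sigma_X}$ rules out the configuration opposite to $\sigma_X[x_i]$ whenever $\sigma_X[x_i]$ is defined), and makes the witnessing literal of every clause true; hence $\tau$ is a satisfying extension of $\sigma_X$.

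The main obstacle is the ``if'' direction, and within it two points deserve care: the monotonicity lemma for the conjunctive interconnecting net (certifying that the apex being overlined forces every clause feature to have been overlined at some point), and the argument that the assignment $\tau$ read off from the variable features is both consistent and really an extension of $\sigma_X$ — both hinge on the single‑flip property of $U_1$ and the resulting monotone behaviour of the variable features during the first phase. The remaining work (the explicit sequence in the ``only if'' direction, the corollary \cref{corol:Pref_Net_CNF_Summ}, and deducing item~(2)) is then straightforward.
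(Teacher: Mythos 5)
Your proposal is correct and follows essentially the same route as the paper's proof: everything rests on the single up-flip of the parentless feature $U_1$, an explicit improving sequence built from a satisfying extension for the forward direction, extraction of a consistent, $\sigma_X$-extending assignment from the literal-feature flips that must precede the $U_1$ flip for the converse, and the reduction of item~(2) to item~(1) via the impossibility of flipping $U_1$ back. The only cosmetic differences are that you obtain the per-clause overlining through a small monotonicity lemma for the conjunctive interconnecting net and read off a possibly partial assignment (then extended arbitrarily), whereas the paper argues via an index at which the clause features are overlined and normalizes the sequence (w.l.o.g.\ all variable features overlined at the $U_1$ flip) to extract a complete assignment.
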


\begin{proof}[Proof (sketch).]
	The intuition at the base of the proof of this property is that, by linking $U_1$ to variable and literal features, these cannot be flipped to their overlined values once $U_1$ is $\ol{u_1}$.
	Therefore, distinct literal features, attached to the same features $V_i^T$ and $V_i^F$, cannot be flipped to their overlined values according to contrasting values of $V_i^T$ and $V_i^F$.
	Details of the proof are at page~\pageref{page_pointer:Pref_Net_CNF_Summ_new_detailed}.
\end{proof}

\begin{corollary}\label{corol:Pref_Net_CNF_Summ}
	Let $\phi(X)$ be a Boolean formula in $3$CNF, and let $\alpha$ and $\ol{\beta}$ be two outcomes of $\OutNet{\NetCNFSumm(\phi)}$ such that in $\alpha$ the values of all features are non-overlined, and in $\ol{\beta}$ the values of only $U_1$ and of $U_2$ are overlined.
	Then:
	\begin{itemize}
		\item $\phi(X)$ is satisfiable if and only if $\ol{\beta}\PrefNet{\NetCNFSumm(\phi)}\alpha$, and
		\item $\phi(X)$ is unsatisfiable if and only if $\ol{\beta}\IncompNet{\NetCNFSumm(\phi)}\alpha$.
	\end{itemize}
\end{corollary}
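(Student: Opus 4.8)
The plan is to obtain this corollary as an immediate specialization of \cref{lemma_Pref_Net_CNF_Summ_new}, in exactly the same way that \cref{corol:Pref_Net_CNF} is derived from \cref{lemma_Pref_Net_CNF_new}. Concretely, I would instantiate \cref{lemma_Pref_Net_CNF_Summ_new} with the empty assignment $\sigma_X$ on $X$ and then check that the two outcomes produced by the lemma coincide with the outcomes $\alpha$ and $\ol{\beta}$ named in the present statement.

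First, I would verify that $\alpha_{\sigma_X}$ of \cref{lemma_Pref_Net_CNF_Summ_new}, when $\sigma_X$ is the empty assignment, is precisely the outcome $\alpha$ of this corollary. Indeed, by the encoding convention for assignments into outcomes, an undefined $\sigma_X[x_i]$ yields $\alpha_{\sigma_X}[V_i^T V_i^F] = v_i^T v_i^F$, i.e.\ non-overlined values on all variable features; and the lemma stipulates non-overlined values on every other feature as well, so $\alpha_{\sigma_X}$ assigns non-overlined values to all features, which is exactly $\alpha$. Second, I would note that the lemma allows $\ol{\beta}$ to take \emph{any} value on the features of $\SetFeat{V}$ while fixing $\ol{\beta}[U_1 U_2] = \ol{u_1}\,\ol{u_2}$ and non-overlined values everywhere else; choosing the non-overlined values on $\SetFeat{V}$ makes the lemma's $\ol{\beta}$ coincide with the $\ol{\beta}$ of this corollary, in which only $U_1$ and $U_2$ are overlined.

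Finally, since every total truth assignment over $X$ is an extension of the empty assignment, the hypothesis ``there is an extension of $\sigma_X$ to $X$ satisfying $\phi(X)$'' in \cref{lemma_Pref_Net_CNF_Summ_new} reduces, for the empty $\sigma_X$, to ``$\phi(X)$ is satisfiable'', and its negation to ``$\phi(X)$ is unsatisfiable''. Item~(1) of the lemma then gives the first bullet ($\phi$ satisfiable iff $\ol{\beta}\PrefNet{\NetCNFSumm(\phi)}\alpha$) and item~(2) gives the second ($\phi$ unsatisfiable iff $\ol{\beta}\IncompNet{\NetCNFSumm(\phi)}\alpha$).

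I do not expect any real obstacle here: all the genuine combinatorial work — tracking which literal and clause features can be flipped to their overlined values, and the role of $U_1$ in preventing contradictory flips of the variable features — is already carried out in the proof of \cref{lemma_Pref_Net_CNF_Summ_new}. The only points that warrant an explicit check are the two outcome identifications above, namely that the empty-assignment encoding matches $\alpha$ and that the freedom in the lemma's $\ol{\beta}$ on $\SetFeat{V}$ can be exercised to match the corollary's $\ol{\beta}$.
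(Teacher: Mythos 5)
Your proposal is correct and follows the paper's own route exactly: specialize \cref{lemma_Pref_Net_CNF_Summ_new} to the empty assignment, identify the lemma's $\alpha_{\sigma_X}$ with $\alpha$, instantiate the lemma's $\ol{\beta}$ with non-overlined values on $\SetFeat{V}$, and use that every assignment extends the empty one. The paper's proof is just a terser version of the same argument, so nothing is missing.
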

\begin{proof}
	Observe that, when the empty assignment $\sigma_X$ is considered, outcome $\alpha_{\sigma_X}$ of the statement of \cref{lemma_Pref_Net_CNF_Summ_new} coincides with outcome $\alpha$ of the statement of this corollary.
	Moreover, any assignment over $X$ is an extension of the empty one, and hence \cref{lemma_Pref_Net_CNF_Summ_new} applies.
\end{proof}

\subsection{Complexity of majority dominance on \texorpdfstring{$m$}{m}CP-nets}

First, we analyze the problem of deciding majority dominance on \CPNets, which is shown \NPc.
More formally, consider the following problem.

\medbreak

\begin{tabular}{rl}
  \textit{Problem:} & \MajorityQuery\\
  \textit{Instance:} & An \MCPNet $\MNet{M}$, and two outcomes $\alpha,\beta\in\OutMNet{\MNet{M}}$.\\
  %\textit{Output:} &\\
  \textit{Question:} & Is $\beta \PrefMajorityMNet{\MNet{M}} \alpha$?
\end{tabular}

\medbreak

The following result shows that, for preference profiles represented via an \NP-representation scheme, deciding majority dominance is feasible in \NP.

\begin{theorem}\label{theo_majority_query_generic_membership}
Let $\Profile{P}$ be a preference profile defined over the same combinatorial domain and represented via an \NP-representation scheme, and let $\alpha$ and $\beta$ be two outcomes.
Then, deciding whether $\beta \PrefMajorityProfile{\Profile{P}} \alpha$ is feasible in \NP.
\end{theorem}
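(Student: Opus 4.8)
The plan is to exhibit a polynomial-size certificate for majority dominance, just as was done for Pareto dominance in \cref{theo_pareto_query_generic_membership}, but now the certificate must account for \emph{three} sets of agents rather than one. Recall that $\beta \PrefMajorityProfile{\Profile{P}} \alpha$ holds precisely when $|\SetAgentsPrefProfile{\Profile{P}}(\beta,\alpha)| > |\SetAgentsFerpProfile{\Profile{P}}(\beta,\alpha)| + |\SetAgentsIncompProfile{\Profile{P}}(\beta,\alpha)|$, i.e., a strict majority of the $m$ agents prefer $\beta$ to $\alpha$. The key observation is that this condition depends only on the single number $|\SetAgentsPrefProfile{\Profile{P}}(\beta,\alpha)|$: since the three sets $\SetAgentsPrefProfile{\Profile{P}}(\beta,\alpha)$, $\SetAgentsFerpProfile{\Profile{P}}(\beta,\alpha)$, $\SetAgentsIncompProfile{\Profile{P}}(\beta,\alpha)$ partition $\{1,\dots,m\}$, we have $|\SetAgentsFerpProfile{\Profile{P}}(\beta,\alpha)| + |\SetAgentsIncompProfile{\Profile{P}}(\beta,\alpha)| = m - |\SetAgentsPrefProfile{\Profile{P}}(\beta,\alpha)|$, so the majority condition is equivalent to $|\SetAgentsPrefProfile{\Profile{P}}(\beta,\alpha)| > m/2$. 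Hence it suffices to certify that at least $\lfloor m/2 \rfloor + 1$ agents prefer $\beta$ to $\alpha$.

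**First I would** describe the nondeterministic procedure: guess a subset $W \subseteq \{1,\dots,m\}$ with $|W| \ge \lfloor m/2 \rfloor + 1$, and for each agent $i \in W$ guess a polynomial-size witness certifying $\beta \PrefRanking{\Ranking{R}_i} \alpha$. Such witnesses exist because each $\Ranking{R}_i$ is represented via an \NP-representation scheme, meaning $\mathit{Pref}^\RepScheme{S}$ is in \NP, so every ``yes''-instance of $\mathit{Pref}^\RepScheme{S}(\xi^\RepScheme{S}(\Ranking{R}_i),\beta,\alpha)$ has a concise, polynomial-time-checkable certificate. The overall guess has size $O(m \cdot q(n))$ for some polynomial $q$, where $n$ is the input size, and is therefore polynomially bounded. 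The verification step checks, for each $i \in W$, that the guessed witness indeed certifies $\beta \PrefRanking{\Ranking{R}_i} \alpha$ in deterministic polynomial time, and checks that $|W| \ge \lfloor m/2 \rfloor + 1$; this is all feasible in polynomial time.

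**The correctness argument** is then short: if $\beta \PrefMajorityProfile{\Profile{P}} \alpha$, then $|\SetAgentsPrefProfile{\Profile{P}}(\beta,\alpha)| \ge \lfloor m/2 \rfloor + 1$, so taking $W = \SetAgentsPrefProfile{\Profile{P}}(\beta,\alpha)$ (or any large enough subset of it) together with the appropriate witnesses yields an accepting computation. Conversely, any accepting computation produces a set $W$ of size at least $\lfloor m/2 \rfloor + 1$ such that $\beta \PrefRanking{\Ranking{R}_i} \alpha$ for every $i \in W$, whence $|\SetAgentsPrefProfile{\Profile{P}}(\beta,\alpha)| \ge |W| > m/2$, which is exactly the majority condition. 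I do not expect any real obstacle here; the only point requiring a moment's care is that, unlike Pareto dominance, one cannot simply guess a witness for \emph{every} agent — one must guess \emph{which} agents form the majority and produce witnesses only for those, because for the remaining agents there may be no witness of preference at all (they may be the ones preferring $\alpha$ or finding the outcomes incomparable). This is a cosmetic adaptation of the Pareto proof rather than a genuine difficulty, so the proof will be essentially as brief as that of \cref{theo_pareto_query_generic_membership}.
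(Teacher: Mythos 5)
Your proposal is correct and follows essentially the same route as the paper's proof: reduce the majority condition to $|\SetAgentsPrefProfile{\Profile{P}}(\beta,\alpha)| > \lfloor m/2 \rfloor$ using the fact that the three agent sets partition $\{1,\dots,m\}$, then guess the majority set of agents together with an \NP-certificate of $\beta \PrefRanking{\Ranking{R}_i} \alpha$ for each of them and verify in polynomial time. Your closing remark about guessing only the witnessing agents (rather than witnesses for all agents, as in the Pareto case) is exactly the adaptation the paper makes as well.
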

\begin{proof}
Let $\Profile{P}=\tuple{\Ranking{R}_1,\dots,\Ranking{R}_m}$.
Observe first that, since $|\SetAgentsPrefProfile{\Profile{P}}(\beta,\alpha)|+|\SetAgentsFerpProfile{\Profile{P}}(\beta,\alpha)|+|\SetAgentsIncompProfile{\Profile{P}}(\beta,\alpha)|=m$, it holds that  $|\SetAgentsPrefProfile{\Profile{P}}(\beta,\alpha)|>|\SetAgentsFerpProfile{\Profile{P}}(\beta,\alpha)|+|\SetAgentsIncompProfile{\Profile{P}}(\beta,\alpha)|$ if and only if $|\SetAgentsPrefProfile{\Profile{P}}(\beta,\alpha)|>\left\lfloor\frac{m}{2}\right\rfloor$.
If $\beta \PrefMajorityProfile{\Profile{P}} \alpha$, then, for more than half of the agents $i$, $\beta \PrefRanking{\Ranking{R}_i}\alpha$.
For such agents,
since the preferences are represented via an \NP-representation scheme, there is a polynomial witness that they prefer $\beta$ to $\alpha$.
Therefore, to show that $\beta \PrefMajorityProfile{\Profile{P}} \alpha$, it suffices to guess a set $S$ of players preferring $\beta$ to $\alpha$, along with the polynomial witness of their preference, and then check that $|S|>\left\lfloor\frac{m}{2}\right\rfloor$ and that the witnesses are valid.
The overall guess requires only polynomial space, and it can be checked in polynomial time.
\end{proof}

Observe that, on $1$\CPNets, majority dominance is equivalent to dominance on (simple) \CPNets.
Therefore, the following result, which follows directly from \cref{theo_dominance_cpnets_hardness}, states that on \MCPNets deciding majority dominance is \NPh.

\begin{theorem}\label{theo_majority_query_mcpnets_hardness}
Let $\MNet{M}$ be an \MCPNet, and let $\alpha,\beta\in\OutMNet{\MNet{M}}$ be two outcomes.
Then, deciding whether $\beta\PrefMajorityMNet{\MNet{M}}\alpha$ is \NPh.
Hardness holds even on classes of singly connected acyclic binary \MCPNets with indegree at most three and at most one agent.
\end{theorem}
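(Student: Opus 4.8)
The plan is to reuse the \Sat-reduction underlying \cref{theo_dominance_cpnets_hardness}, after observing that majority dominance collapses to ordinary \CPNet dominance as soon as the profile has a single agent. First I would record this collapse explicitly: for a $1$\CPNet $\MNet{M} = \tuple{\Net{N}}$ we have $|\SetAgentsPrefMNet{\MNet{M}}(\beta,\alpha)| + |\SetAgentsFerpMNet{\MNet{M}}(\beta,\alpha)| + |\SetAgentsIncompMNet{\MNet{M}}(\beta,\alpha)| = 1$, so the defining inequality $|\SetAgentsPrefMNet{\MNet{M}}(\beta,\alpha)| > |\SetAgentsFerpMNet{\MNet{M}}(\beta,\alpha)| + |\SetAgentsIncompMNet{\MNet{M}}(\beta,\alpha)|$ holds if and only if $|\SetAgentsPrefMNet{\MNet{M}}(\beta,\alpha)| = 1$, i.e.\ if and only if the unique agent prefers $\beta$ to $\alpha$. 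Hence $\beta \PrefMajorityMNet{\MNet{M}} \alpha$ if and only if $\beta \PrefNet{\Net{N}} \alpha$.

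Next I would instantiate the construction of \cref{sec_formula_net}: given a $3$CNF formula $\phi$, form the $1$\CPNet $\MNet{M}(\phi) = \tuple{\NetCNF(\phi)}$ together with the outcomes $\alpha$ (all features non-overlined) and $\beta$ (overlined on exactly the variable and clause features). By \cref{corol:Pref_Net_CNF}, $\phi$ is satisfiable if and only if $\beta \PrefNet{\NetCNF(\phi)} \alpha$, which by the previous paragraph is equivalent to $\beta \PrefMajorityMNet{\MNet{M}(\phi)} \alpha$. Since $\NetCNF(\phi)$ is acyclic, binary, singly connected, has indegree three, and is computable in polynomial time in $\|\phi\|$, the same holds for the $1$\CPNet $\MNet{M}(\phi)$; this yields a polynomial-time Karp reduction $\Sat \leq_p \MajorityQuery$ whose images all lie in the class of singly connected acyclic binary \MCPNets of indegree at most three with exactly one agent, establishing \NPh{}ness on that restricted class.

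There is essentially no obstacle: the only point requiring care is the degeneracy check of the majority condition at $m = 1$, which is immediate from the partition identity above, so the statement is a direct corollary of \cref{theo_dominance_cpnets_hardness} via the $1$\CPNet embedding and needs no further combinatorial argument.
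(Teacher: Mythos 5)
Your proof is correct and follows essentially the same route as the paper: the paper likewise observes that on $1$\CPNets majority dominance coincides with ordinary \CPNet dominance and then invokes the \Sat{} reduction of \cref{theo_dominance_cpnets_hardness} via $\NetCNF(\phi)$ and \cref{corol:Pref_Net_CNF}. Your explicit verification of the degenerate majority inequality at $m=1$ is a detail the paper states without elaboration, but it is the same argument.
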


By combining the two above results, we immediately obtain that deciding majority dominance over polynomially connected classes of acyclic \MCPNets is \NPc.

\begin{corollary}
Let $\mathcal{C}$ be a polynomially connected class of acyclic \MCPNets.
Let $\MNet{M}\in\mathcal{C}$ be an \MCPNet, and let $\alpha,\beta\in\OutMNet{\MNet{M}}$ be two outcomes.
Then, deciding whether $\beta\PrefMajorityMNet{\MNet{M}}\alpha$ is \NPc.
\end{corollary}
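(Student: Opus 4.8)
The plan is to read off the \NPc{}ness by pairing the generic membership bound with the hardness bound already obtained for a restricted subclass, exactly as was done for Pareto dominance.

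For membership, the first step is to observe that, by the hypothesis on $\mathcal{C}$, the individual \CPNets constituting the \MCPNets of $\mathcal{C}$ form a polynomially connected class of acyclic binary \CPNets; such a class is an \NP-representation scheme, since dominance testing over it is feasible in \NP (as recalled in \cref{sec_cpnets} and noted in \cref{sec_framework_representations}). Hence each $\MNet{M}\in\mathcal{C}$, viewed as a preference profile over a common combinatorial domain, is represented via an \NP-representation scheme, so \cref{theo_majority_query_generic_membership} applies verbatim, placing \MajorityQuery over $\mathcal{C}$ in \NP.

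For hardness, the second step is to invoke \cref{theo_majority_query_mcpnets_hardness}, which already yields \NPh{}ness on singly connected acyclic binary \MCPNets of indegree at most three with a single agent. Since single connectedness implies (indeed, linear, hence) polynomial connectedness, this hard subclass sits inside the universe of polynomially connected acyclic \MCPNets, so the lower bound transfers. Combining the two bounds gives the claim.

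No real obstacle is expected: the only things to verify are the routine inclusion ``singly connected $\Rightarrow$ polynomially connected'' and the trivial observation that a one-agent \MCPNet is a bona fide instance, both of which are immediate; everything substantive is already contained in \cref{theo_majority_query_generic_membership,theo_majority_query_mcpnets_hardness}.
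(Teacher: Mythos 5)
Your proposal is correct and follows essentially the same route as the paper: combine the \NP membership from \cref{theo_majority_query_generic_membership} (using that polynomially connected classes of acyclic binary \CPNets are \NP-representation schemes) with the \NPh{}ness from \cref{theo_majority_query_mcpnets_hardness}, whose singly connected single-agent instances lie inside the polynomially connected class. The extra verifications you mention (singly connected implies polynomially connected, and a one-agent \MCPNet being a valid instance) are indeed immediate and implicit in the paper's one-line argument.
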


\subsection{Complexity of majority optimality on \texorpdfstring{$m$}{m}CP-nets}\label{sec_complexity_majority_optimal}

Here, we analyze the problems on majority optimal outcomes.
In particular, the problems considered are deciding whether an outcome is majority optimal, and deciding whether an \MCPNet has a majority optimal outcome.
We first focus on deciding majority optimality of outcomes in \MCPNets.
We  show that this problem is \CoNPc.
More formally, consider the following problem.

\medbreak

\begin{tabular}{rl}
  \textit{Problem:} & \IsMajorityOptimal\\
  \textit{Instance:} & An \MCPNet $\MNet{M}$, and an outcome $\alpha\in\OutMNet{\MNet{M}}$.\\
  %\textit{Output:} &\\
  \textit{Question:} & Is $\alpha$ majority optimal in $\MNet{M}$?
\end{tabular}

\medbreak
The following theorem shows that, on preference profiles represented via an \NP-representation scheme, deciding whether an outcome is majority optimal is feasible in \CoNP.

\begin{theorem}\label{theo_is_weak_condorcet_generic_membership}
Let $\Profile{P}$ be a preference profile defined over the same combinatorial domain and represented via an \NP-representation scheme, and let $\alpha$ be an outcome.
Then, deciding whether $\alpha$ is majority optimal in~$\Profile{P}$ is feasible in \CoNP.
\end{theorem}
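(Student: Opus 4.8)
The plan is to show that the complementary problem---deciding whether $\alpha$ is \emph{not} majority optimal in $\Profile{P}$---is feasible in \NP, from which the claim follows at once by the definition of \CoNP. This mirrors the argument used for Pareto optimality in \cref{theo_is_pareto_optimal_generic_membership}, combining the universal ``for all outcomes'' quantifier implicit in optimality with the \NP-certificate for majority dominance.

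First I would recall that, by definition, $\alpha$ fails to be majority optimal exactly when there exists an outcome $\beta \neq \alpha$ with $\beta \PrefMajorityProfile{\Profile{P}} \alpha$. Thus the natural certificate that $\alpha$ is not majority optimal consists of such an outcome $\beta$ together with a proof that $\beta$ majority dominates $\alpha$. The key point is that a \emph{single} dominating outcome suffices to witness the violation, so one existential guess is enough.

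Then I would reuse the certificate for majority dominance already exhibited in the proof of \cref{theo_majority_query_generic_membership}: writing $\Profile{P}=\tuple{\Ranking{R}_1,\dots,\Ranking{R}_m}$ and using $|\SetAgentsPrefProfile{\Profile{P}}(\beta,\alpha)|+|\SetAgentsFerpProfile{\Profile{P}}(\beta,\alpha)|+|\SetAgentsIncompProfile{\Profile{P}}(\beta,\alpha)|=m$, we have $\beta \PrefMajorityProfile{\Profile{P}} \alpha$ if and only if $|\SetAgentsPrefProfile{\Profile{P}}(\beta,\alpha)| > \lfloor m/2 \rfloor$. Hence a nondeterministic machine can guess the outcome $\beta$, a set $S$ of agents with $|S| > \lfloor m/2 \rfloor$, and, for each $i \in S$, a polynomial-size witness that $\beta \PrefRanking{\Ranking{R}_i} \alpha$; such witnesses exist and are polynomially checkable because each $\Ranking{R}_i$ is given via an \NP-representation scheme (i.e., its $\mathit{Pref}$ function is in \NP). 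The machine then verifies in polynomial time that $|S| > \lfloor m/2 \rfloor$ (reading $m$ off the input) and that every guessed witness is valid. The whole guess has polynomial size and is checkable in polynomial time, so ``$\alpha$ is not majority optimal'' is in \NP, and therefore \IsMajorityOptimal over \NP-represented profiles is in \CoNP.

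I do not expect a genuine obstacle here: the argument is a routine composition of a co-\NP outer quantifier with the \NP dominance certificate, structurally identical to \cref{theo_is_pareto_optimal_generic_membership}. The only small points to double-check are that the agent count $m$ and the threshold $\lfloor m/2 \rfloor$ are available from the input and comparable in polynomial time, and that guessing one outcome $\beta$ (rather than reasoning about all outcomes simultaneously) is legitimate---which it is, since optimality is defeated by the existence of a single majority-dominating outcome.
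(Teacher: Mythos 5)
Your proposal is correct and matches the paper's own argument: the paper likewise shows the complement is in \NP by guessing a majority-dominating outcome $\beta$ together with the dominance witness (the set of agents and their certificates), reusing the proof of \cref{theo_majority_query_generic_membership}. No gaps to report.
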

\begin{proof}
We show that disproving $\alpha$ being majority optimal is feasible in \NP.
If $\alpha$ is not majority optimal in $\Profile{P}$, then there is an outcome $\beta$ such that $\beta\PrefMajorityProfile{\Profile{P}}\alpha$.
Therefore, we can guess such an outcome $\beta$ along with the witness that $\beta\PrefMajorityProfile{\Profile{P}}\alpha$ (i.e., the set of agents preferring $\beta$ to $\alpha$).
This guess requires only polynomial space, and can be checked in polynomial time (see the proof of \cref{theo_majority_query_generic_membership}).
\end{proof}

Observe that, on $2$\CPNets, majority dominance and Pareto dominance are equivalent.
Therefore, the following result, which follows directly from \cref{theo_is_pareto_optimal_mcpnets_hardness}, shows that on \MCPNets deciding majority optimality is \CoNPh.

\begin{theorem}\label{theo_is_weak_condorcet_generic_mcpnets_membership}
Let $\MNet{M}$ be an \MCPNet, and let $\alpha\in\OutMNet{\MNet{M}}$ be an outcome.
Then, deciding whether $\alpha$ is majority optimal is \CoNPh.
Hardness holds even on polynomially connected classes of acyclic binary \MCPNets with indegree at most three and at most two agents.
\end{theorem}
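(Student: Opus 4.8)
The plan is to reduce the problem to the already-established \CoNPh{}ness of \IsParetoOptimal (\cref{theo_is_pareto_optimal_mcpnets_hardness}) by exploiting the fact that Pareto and majority dominance coincide on two-agent profiles. The only genuinely new ingredient is a one-line counting argument; everything else is reused verbatim.

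First I would record the coincidence explicitly. For an \MCPNet $\MNet{M}$ with exactly $m = 2$ agents and any two outcomes $\alpha,\beta\in\OutMNet{\MNet{M}}$, we have $|\SetAgentsPrefMNet{\MNet{M}}(\beta,\alpha)| + |\SetAgentsFerpMNet{\MNet{M}}(\beta,\alpha)| + |\SetAgentsIncompMNet{\MNet{M}}(\beta,\alpha)| = 2$. Hence the majority condition $|\SetAgentsPrefMNet{\MNet{M}}(\beta,\alpha)| > |\SetAgentsFerpMNet{\MNet{M}}(\beta,\alpha)| + |\SetAgentsIncompMNet{\MNet{M}}(\beta,\alpha)|$ is equivalent to $2\,|\SetAgentsPrefMNet{\MNet{M}}(\beta,\alpha)| > 2$, i.e., to $|\SetAgentsPrefMNet{\MNet{M}}(\beta,\alpha)| = 2 = m$, which is precisely the condition for $\beta \PrefParetoMNet{\MNet{M}} \alpha$. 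Consequently $\beta \PrefMajorityMNet{\MNet{M}} \alpha$ holds if and only if $\beta \PrefParetoMNet{\MNet{M}} \alpha$ holds, for every pair of outcomes, and therefore an outcome $\alpha$ is majority optimal in a $2$\CPNet $\MNet{M}$ if and only if it is Pareto optimal in $\MNet{M}$.

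Then I would invoke the reduction built for \cref{theo_is_pareto_optimal_mcpnets_hardness}: from a $3$CNF formula $\phi$ it produces the $2$\CPNet $\MNetIsParOpt(\phi)$, which is acyclic, binary, of indegree three, and lies in a polynomially connected class, together with the outcome $\alpha$ assigning non-overlined values to all features, such that (by \cref{lemma_properties_of_IsParetoOptimal_reduction}) $\phi$ is unsatisfiable if and only if $\alpha$ is Pareto optimal in $\MNetIsParOpt(\phi)$. Since $\MNetIsParOpt(\phi)$ has exactly two agents, by the equivalence above $\alpha$ is Pareto optimal in it iff $\alpha$ is majority optimal in it. Hence $\phi$ is a ``yes''-instance of \Unsat iff $\alpha$ is majority optimal in $\MNetIsParOpt(\phi)$, which is a polynomial-time reduction $\Unsat \leq_p \IsMajorityOptimal$ staying within the claimed class of \MCPNets, and this establishes the \CoNPh{}ness claim.

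I do not anticipate any real obstacle here. All the heavy lifting — constructing $\MNetIsParOpt(\phi)$, checking its acyclicity, binariness, indegree, and polynomial connectedness, and proving \cref{lemma_properties_of_IsParetoOptimal_reduction} — has already been done for the Pareto case, so the ``main step'' is simply the observation that the strict-majority rule degenerates to unanimity when there are only two voters. If one wishes the theorem to read self-containedly, the only care needed is to state this degeneration cleanly and to note that the two-agent bound in \cref{theo_is_pareto_optimal_mcpnets_hardness} is exactly what makes the argument go through.
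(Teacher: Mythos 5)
Your proposal is correct and follows essentially the same route as the paper: the paper also obtains this theorem directly from \cref{theo_is_pareto_optimal_mcpnets_hardness} by observing that on $2$\CPNets majority dominance and Pareto dominance coincide, so majority optimality of $\alpha$ in $\MNetIsParOpt(\phi)$ is equivalent to its Pareto optimality. Your explicit counting argument for the two-agent degeneration just spells out what the paper states as an observation.
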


By combining the two previous results, it follows immediately that deciding the majority optimality of an outcome over polynomially connected classes of acyclic \MCPNets is \CoNPc.

\begin{corollary}
Let $\mathcal{C}$ be a polynomially connected class of acyclic \MCPNets.
Let $\MNet{M}\in\mathcal{C}$ be an \MCPNet, and let $\alpha\in\OutMNet{\MNet{M}}$ be an outcome.
Then, deciding whether $\alpha$ is majority optimal is \CoNPc.
\end{corollary}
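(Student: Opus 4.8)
The plan is to establish \CoNP-completeness by sandwiching the problem between the two results just proven, with no new construction required. First I would argue membership: by the standing convention of the paper, the class $\mathcal{C}$ consists of binary acyclic \MCPNets, and since it is additionally polynomially connected, the induced profiles of \CPNets form an \NP-representation scheme (dominance testing over polynomially connected acyclic binary \CPNets is in \NP by~\cite{Boutilier2004}, as recalled in \cref{sec_cpnets}). Hence \cref{theo_is_weak_condorcet_generic_membership} applies directly and gives that deciding whether $\alpha$ is majority optimal in $\MNet{M}$ is in \CoNP.

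For the matching lower bound I would invoke \cref{theo_is_weak_condorcet_generic_mcpnets_membership}: its reduction from \Unsat is built on the family ${\{\MNetIsParOpt(\phi)\}}_{\phi}$, whose members are acyclic, binary, polynomially connected, of indegree three, and involve only two agents, so they already lie in $\mathcal{C}$ (read, as usual for such corollaries, as any polynomially connected acyclic binary class encompassing that family). Therefore deciding majority optimality is \CoNPh on $\mathcal{C}$, and combining with the membership bound yields \CoNPc.

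There is essentially no obstacle here — the corollary is a bookkeeping consequence of \cref{theo_is_weak_condorcet_generic_membership,theo_is_weak_condorcet_generic_mcpnets_membership}. The only point worth a sentence is the compatibility of the hypotheses: one must check that the generic membership argument is legitimately available, i.e., that $\mathcal{C}$ being polynomially connected and (by convention) binary is exactly what makes it an \NP-representation scheme, so that both the upper and the lower bound speak about the same class; once this is noted the proof is immediate.
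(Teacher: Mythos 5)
Your proposal is correct and matches the paper's own argument: the corollary is obtained exactly by combining the generic \CoNP membership result (\cref{theo_is_weak_condorcet_generic_membership}, applicable since polynomially connected classes of acyclic binary \MCPNets are \NP-representation schemes) with the \CoNPh{}ness on two-agent acyclic binary polynomially connected \MCPNets (\cref{theo_is_weak_condorcet_generic_mcpnets_membership}). Your remark about checking that both bounds refer to the same class is the same bookkeeping the paper leaves implicit, so nothing further is needed.
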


We now focus on the problem of deciding the existence of majority optimal outcomes in \MCPNets.
We  show that this problem is \SigmaPc{2}.
More formally, consider the following problem.

\medbreak

\begin{tabular}{rl}
  \textit{Problem:} & \ExistsMajorityOptimal\\
  \textit{Instance:} & An \MCPNet $\MNet{M}$.\\
  %\textit{Output:} &\\
  \textit{Question:} & Does $\MNet{M}$ have a majority optimal outcome?
\end{tabular}

\medbreak

The following \lcnamecref{theo_exists_weak_condorcet_generic_membership} shows that deciding whether a preference profile represented via an \NP-representation scheme has a majority optimal outcome is feasible in \SigmaP{2}.

\begin{theorem}\label{theo_exists_weak_condorcet_generic_membership}
Let $\Profile{P}$ be a preference profile defined over the same combinatorial domain and represented via an \NP-representation scheme.
Then, deciding whether $\Profile{P}$ has a majority optimal outcome is feasible in \SigmaP2.
\end{theorem}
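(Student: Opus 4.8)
The plan is to write ``$\Profile{P}$ has a majority optimal outcome'' as $\exists\alpha\,\psi(\Profile{P},\alpha)$, where $\alpha$ ranges over (polynomially bounded) outcomes and $\psi$ is a \PiP1 predicate; by definition of $\SigmaP2$ this suffices. First I would note that an outcome of $\Profile{P}$ is an assignment to the finitely many features of the common combinatorial domain, hence has size polynomial in $\|\Profile{P}\|$, so it is a legitimate object for a polynomially bounded existential quantifier. By the definition of majority optimality, $\Profile{P}$ has a majority optimal outcome if and only if there exists an outcome $\alpha$ that is majority optimal in $\Profile{P}$; and by \cref{theo_is_weak_condorcet_generic_membership}, the predicate ``$\alpha$ is majority optimal in $\Profile{P}$'' (with $\Profile{P}$ and $\alpha$ taken as joint input) is in \CoNP. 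Prefixing this \CoNP condition with the existential quantifier over $\alpha$ yields exactly the $\exists$-then-\PiP1 form defining $\SigmaP2$.

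Equivalently, one can describe the procedure operationally: a nondeterministic polynomial-time Turing machine guesses an outcome $\alpha$, then asks a \CoNP (equivalently \NP) oracle whether $\alpha$ is majority optimal in $\Profile{P}$, and accepts iff the oracle answers yes; this is an $\NP^{\CoNP}=\NP^{\NP}=\SigmaP2$ computation. Unfolding one more level, $\alpha$ is majority optimal iff for every outcome $\beta$ and every purported counter-witness --- a set $S$ of agents together with, for each $i\in S$, a certificate meant to witness $\beta\PrefRanking{\Ranking{R}_i}\alpha$ --- it fails to be the case that $|S|>\lfloor m/2\rfloor$ and all these certificates are valid (the case $\beta=\alpha$ needs no separate treatment, since $\beta\PrefMajorityProfile{\Profile{P}}\alpha$ then fails whenever $m\ge 1$). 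Since certificates of an \NP-representation scheme are polynomially bounded and verifiable in deterministic polynomial time, and $\beta$ and $S$ are polynomially bounded, this inner statement is a deterministic polynomial-time matrix guarded by a single universal quantifier, so the whole predicate has the shape $\exists\alpha\,\forall(\beta,S,\text{certs})\,\psi$, which is the canonical $\SigmaP2$ form.

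I do not anticipate a genuine obstacle: the argument is a routine quantifier count built on \cref{theo_majority_query_generic_membership,theo_is_weak_condorcet_generic_membership}. The only points requiring care are verifying that all guessed objects (the outcome $\alpha$, the counter-outcome $\beta$, the agent set $S$, and the dominance certificates) are polynomially bounded --- which follows from $\Profile{P}$ being defined over a fixed combinatorial domain and from the definition of an \NP-representation scheme --- and invoking the \CoNP upper bound for single-outcome majority optimality rather than re-deriving it.
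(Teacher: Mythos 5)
Your proposal is correct and follows essentially the same route as the paper: guess an outcome $\alpha$ with an \NP machine and verify its majority optimality via the \CoNP check from \cref{theo_is_weak_condorcet_generic_membership}, giving a \SigmaP2 upper bound. The additional unfolding of the inner universal quantifier is fine but not needed beyond citing that result.
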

\begin{proof}
To show that $\Profile{P}$ has a majority optimal outcome, it suffices to guess an outcome $\alpha$ and then check that $\alpha$ is actually majority optimal.
Observe that guessing $\alpha$ requires an \NP machine, and the final check is feasible in \CoNP (see \cref{theo_is_weak_condorcet_generic_membership}), which can be carried out by an oracle.
Therefore, the overall procedure is feasible in \SigmaP2.
\end{proof}

To prove the \SigmaPh{2}{}ness of \ExistsMajorityOptimal, we use a reduction from QBF$^{\mathit{CNF}}_{2,\forall,\lnot}$.
Consider the following construction.
Let $\Phi = (\exists X)(\forall Y)\lnot\phi(X,Y)$ be a quantified formula where $\phi(X,Y)$ is a $3$CNF Boolean formula defined over two disjoint sets $X=\{x_1,\dots,x_{n_X}\}$ and $Y=\{y_1,\dots,y_{n_Y}\}$ of Boolean variables, and whose set of clauses is $C=\{c_1,\dots,c_m\}$.
From $\Phi$, we define the $6$\CPNet $\MNetExistsWeakCond(\Phi)=\tuple{\NetEWC1,\dots,\NetEWC6}$ as follows.

\smallskip

\noindent The features of $\MNetExistsWeakCond(\Phi)$ are:
\begin{itemize}
\item all the features of a net $\NetCNFSumm(\phi)$ (defined in \cref{sec_summ_formula_net}) in which, in this case, we distinguish two variable feature sets $\SetFeat{V}=\{V_i^T,V_i^F\mid x_i\in X\}$ and $\SetFeat{W}=\{W_i^T,W_i^F\mid y_i\in Y\}$ (recall that $\SetFeat{P}$ and $\SetFeat{D}$ are the sets of literal and clause features, respectively, and $\SetFeat{A}$ is the set of features of the conjunctive interconnecting net embedded in $\NetCNFSumm(\phi)$); for further reference, we call $A$ the apex of the interconnecting net;

\item all the features of set $\SetFeat{V'}=\{V_i'\mid x_i\in X\}$;

\item all the features of the set $\SetFeat{B}$ which are the features $B_i$ of a disjunctive interconnecting net $\NetInterOR(|\SetFeat{V'}|+|\SetFeat{W}|+|\SetFeat{P}|+|\SetFeat{D}|+|\SetFeat{A}|)$ and its apex is feature $B$ (observe that features $B_i$ are distinct from features $A_i$ of the conjunctive interconnecting net $\NetInterAND(m)$ embedded in $\NetCNFSumm(\phi)$).
\end{itemize}

To summarize, all the features of $\MNetExistsWeakCond(\Phi)$ are:
$\SetFeat{V} \cup \SetFeat{V'} \cup \SetFeat{W} \cup \SetFeat{P} \cup \SetFeat{D} \cup \SetFeat{A} \cup \SetFeat{B} \cup \{U_1,U_2\}$.

\begin{figure}
  \centering%
  \includegraphics[width=0.9\textwidth]{./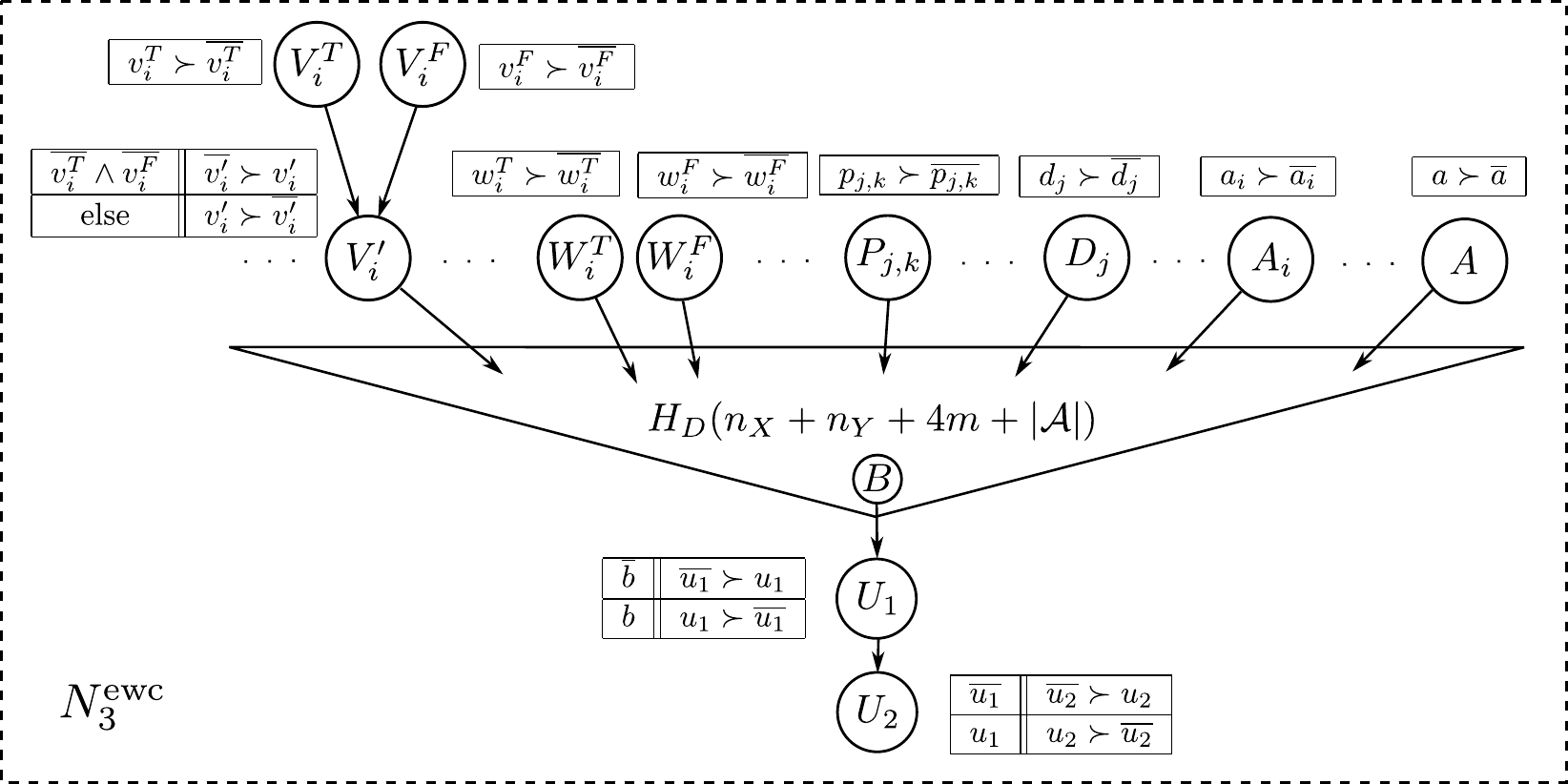}
  \caption{A schematic illustration of net $\NetEWC3$ of $\MNetExistsWeakCond(\Phi)$. The direct net embedded in $\NetEWC3$ is not reported in the figure.}
  \label{fig:ExistsWeakCond_Hardness}
\end{figure}

The \CPNets of $\MNetExistsWeakCond(\Phi)$ are:
\begin{itemize}
\item $\NetEWC1$ is composed by a net $\NetCNFSumm(\phi)$ with its features, links, CP tables, and a direct net $\NetDirect(\gamma)$ (see \cref{sec_direct_net}), where $\gamma$ is defined over the set of features $\SetFeat{V'}\cup\SetFeat{B}$ and assigns non-overlined values to all of them.

\item $\NetEWC2$ is similar to net $\NetEWC1$, with the only differences that features $U_1$ and $U_2$ are exchanged, and the CP tables of $\NetEWC2$ are adjusted to reflect this change.

\item $\NetEWC3=\tuple{\FeatNet{\NetEWC3},\LinkNet{\NetEWC3}}$ is as follows (see \cref{fig:ExistsWeakCond_Hardness} for a schematic representation of the links).
    Links of $\NetEWC3$ are the following:
    \begin{itemize}
    \item for each $x_i\in X$, $\{\{V_i^T,V_i'\},\{V_i^F,V_i'\}\} \subset \LinkNet{\NetEWC3}$;

    \item a disjunctive interconnecting $\NetInterAND(|\SetFeat{V'}|+|\SetFeat{W}|+|\SetFeat{P}|+|\SetFeat{D}|+|\SetFeat{A}|)$ over feature set $\SetFeat{B}$ which is connected to the features in $\SetFeat{V'}\cup \SetFeat{W}\cup \SetFeat{P}\cup \SetFeat{D}\cup \SetFeat{A}$;

    \item $\{\{B,U_1\},\{U_1,U_2\}\}\subset\LinkNet{\NetEWC3}$.
    \end{itemize}

    CP tables of $\NetEWC3$ are the following:
    \begin{itemize}
    \item features $F\in(\SetFeat{V}\cup\SetFeat{W}\cup\SetFeat{P}\cup\SetFeat{D}\cup\SetFeat{A})$ have the CP tables
    \begin{tabular}{|l|}
    \hline
    $f\Pref \ol{f}$\\
    \hline
    \end{tabular};

    \item for each variable $x_i\in X$, feature $V_i'\in\SetFeat{V'}$ has the CP table

   \smallskip  \begin{tabular}{|c||l|}
    \hline
    $\ol{v_i^T} \land \ol{v_i^F}$ & $\ol{v_i'}\Pref v_i'$\\
    \hline
    else & $v_i'\Pref \ol{v_i'}$\\
    \hline
    \end{tabular};

\smallskip
    \item features in $\SetFeat{B}$ of the interconnecting net have the usual CP tables;

    \item $U_1$ has the CP table

    \smallskip
    \begin{tabular}{|c||l|}
    \hline
    $\ol{b}$ & $\ol{u_1}\Pref u_1$\\
    \hline
    $b$ & $u_1\Pref \ol{u_1}$\\
    \hline
    \end{tabular};

\smallskip
    \item $U_2$ has the CP table

    \smallskip
    \begin{tabular}{|c||l|}
    \hline
    $\ol{u_1}$ & $\ol{u_2}\Pref u_2$\\
    \hline
    $u_1$ & $u_2\Pref \ol{u_2}$\\
    \hline
    \end{tabular}.
    \end{itemize}

\item $\NetEWC4$ is equal to net $\NetEWC3$;

\item $\NetEWC5$ is composed by a link from $U_1$ to $U_2$, and a direct net $\NetDirect(\gamma)$, where $\gamma$ is defined over all features but $U_1$ and $U_2$, and assigns non-overlined values to all of them.
    The other CP tables of $\NetEWC5$ are:

    \begin{itemize}
    \item feature $U_1$ has the CP table
        \begin{tabular}{|l|}
        \hline
        $\ol{u_1}\Pref u_1$\\
        \hline
        \end{tabular};

    \item feature $U_2$ has the CP table

    \smallskip
        \begin{tabular}{|c||l|}
        \hline
        $\ol{u_1}$ & $\ol{u_2}\Pref u_2$\\
        \hline
        $u_1$ & $u_2\Pref \ol{u_2}$\\
        \hline
        \end{tabular}.
    \end{itemize}

\item $\NetEWC6$ is characterized by having a link from feature $U_2$ to any other feature.
   The CP tables of $\NetEWC6$ are:

    \begin{itemize}
    \item feature $U_2$ has  the CP table
        \begin{tabular}{|l|}
        \hline
        $\ol{u_2}\Pref u_2$\\
        \hline
        \end{tabular};

    \item features $F$ different from $U_2$ have the CP table

    \smallskip
        \begin{tabular}{|c||l|}
        \hline
        $\ol{u_2}$ & $f \Pref \ol{f}$\\
        \hline
        $u_2$ & $\ol{f} \Pref f$\\
        \hline
        \end{tabular}.
    \end{itemize}
\end{itemize}

Observe that $\MNetExistsWeakCond(\Phi)$ is acyclic, binary, its indegree is three, and can be computed in polynomial time from $\phi$.
Moreover, the class of \MCPNets ${\{\MNetExistsWeakCond(\Phi)\}}_{\Phi}$ derived from formulas $\Phi$ of the specified kind and according to the reduction shown above is polynomially connected.
It is possible to show that $\Phi=(\exists X)(\forall Y)\lnot\phi(X,Y)$ is valid if and only if $\MNetExistsWeakCond(\Phi)$ has a majority optimal outcome.

\begin{lemma}\label{lemma_properties_of_ExistsWeakCondorcet_reduction}
Let $\Phi=(\exists X)(\forall Y)\lnot\phi(X,Y)$ be a quantified Boolean formula, where $\phi(X,Y)$ is a $3$CNF Boolean formula, defined over two disjoint sets $X$ and $Y$ of Boolean variables.
Then, $\Phi$ is valid if and only if $\MNetExistsWeakCond(\Phi)$ has a majority optimal outcome.
\end{lemma}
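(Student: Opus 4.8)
The plan is to show that $\MNetExistsWeakCond(\Phi)$ admits a majority optimal outcome exactly when there is an assignment $\sigma_X$ to $X$ such that the residual $3$CNF formula $\phi[\sigma_X]$ over $Y$ is unsatisfiable, which is precisely the validity condition of $\Phi=(\exists X)(\forall Y)\lnot\phi(X,Y)$. The intended ``certificate'' of a majority optimal outcome is an outcome $\alpha_{\sigma_X}$ that encodes a complete such assignment $\sigma_X$ on the feature set $\SetFeat{V}$ and takes the all-non-overlined configuration on every other feature (so the empty assignment on $\SetFeat{W}$). The gadget features $\SetFeat{V'}$, $\SetFeat{B}$, $U_1$, $U_2$ together with the conjunctive and disjunctive interconnecting nets spread across $\NetEWC1,\dots,\NetEWC6$ are what should force any majority optimal outcome to have this canonical shape, while the copies of the summarized formula net $\NetCNFSumm(\phi)$ tie dominance to satisfiability through \cref{corol:Pref_Net_CNF_Summ} and, for partial assignments of the $\SetFeat{V}$-features, through \cref{lemma_Pref_Net_CNF_Summ_new}.

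For the ``if'' direction I would fix $\sigma_X$ with $\phi[\sigma_X]$ unsatisfiable over $Y$, set $\alpha_{\sigma_X}$ as above, and argue that no $\beta\neq\alpha_{\sigma_X}$ is preferred by at least four of the six agents. I would split on how $\beta$ deviates from $\alpha_{\sigma_X}$: deviations confined to the gadget features are absorbed by the direct nets $\NetDirect(\gamma)$ of agents $1$, $2$, $5$, $6$ and by the monotone CP tables of agents $3$ and $4$, so at most three agents can endorse such a $\beta$; a $\beta$ that would require driving the $U_1$--$U_2$ chain (equivalently, flipping the apex of the conjunctive interconnecting net inside $\NetCNFSumm(\phi)$) to its overlined value cannot be endorsed by the agents carrying a copy of $\NetCNFSumm(\phi)$, since by \cref{lemma_Pref_Net_CNF_Summ_new} this would demand a satisfying extension of $\sigma_X$ over $Y$, which does not exist. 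Adding up the endorsements case by case keeps every $\beta$ at three or fewer, so $\alpha_{\sigma_X}$ is majority optimal.

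For the ``only if'' direction I would assume $\alpha^*$ is majority optimal and first show it must be canonical. If $\alpha^*$ deviates on a gadget feature -- for instance on some $V_i'$, a feature of $\SetFeat{B}$, on $U_1$ or $U_2$, or on a feature of $\SetFeat{P}\cup\SetFeat{D}\cup\SetFeat{A}$ in a way not consistent with a committed assignment on $\SetFeat{V}$ -- I would produce a concrete dominating outcome endorsed by at least four agents, exploiting that agents $1$, $2$, $5$, $6$ pull the gadget features toward a common target via their direct nets, that agents $3$ and $4$ are identical and hence contribute two aligned votes, and that $\NetEWC6$ realigns every feature to $U_2$. In particular this forces the $\SetFeat{V}$-projection of $\alpha^*$ to encode a complete assignment $\sigma_X$. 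Then, since $\alpha^*$ is majority optimal, the outcome reached from $\alpha^*$ by the $\NetCNFSumm$-traversal cannot be endorsed by four agents, and by \cref{lemma_Pref_Net_CNF_Summ_new} this means $\phi[\sigma_X]$ has no satisfying extension over $Y$; hence $\Phi$ is valid.

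The main obstacle is the detailed vote-counting bookkeeping underlying both directions: one must describe, for each of the six composite nets (each a layering of a possibly summarized formula net, conjunctive and disjunctive interconnecting nets, and direct nets), the set of outcomes that dominate a given outcome, and then for every candidate dominating outcome determine exactly how many of the six agents endorse it, checking that the majority threshold of four is met if and only if the relevant residual formula is satisfiable. This is delicate because the agents interact only through the shared features $U_1$, $U_2$ and the $\SetFeat{V}$, $\SetFeat{W}$ blocks, and one has to rely on the interconnecting gadgets faithfully propagating the ``all flipped'' and ``some flipped'' signals along their inverted-pyramid structure, as claimed when they were introduced in \cref{sec_interconnecting_net}.
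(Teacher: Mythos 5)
Your overall route coincides with the paper's: you relate assignments $\sigma_X$ to canonical outcomes (assignment encoded on $\SetFeat{V}$, all other features non-overlined, $U_1U_2=u_1u_2$), you show every non-canonical outcome is majority dominated by an explicitly constructed outcome collecting at least four of the six votes, and you tie majority optimality of canonical outcomes to unsatisfiability of the residual formula via \cref{lemma_Pref_Net_CNF_Summ_new} (agents $1$ and $2$ endorse the outcome $\ol{\alpha}$ with only $U_1,U_2$ overlined exactly when a satisfying extension exists, and together with agents $5$ and $6$ this reaches the threshold of four). The ``bookkeeping'' you defer is exactly what the paper's Properties~\ref*{lemma_properties_of_ExistsWeakCondorcet_reduction}.(1)--(8) carry out case by case, so in structure your plan is the published proof.

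One step of your ``only if'' direction would fail as stated: majority optimality does \emph{not} force the $\SetFeat{V}$-projection of $\alpha^*$ to encode a complete assignment. The canonical-shape analysis only excludes pairs with $\ol{v_i^T}\,\ol{v_i^F}$; a pair left at $v_i^T v_i^F$ (variable undefined) cannot be eliminated. For instance, trying to dominate such an outcome by flipping some $V_i^T$ to $\ol{v_i^T}$ is endorsed only by agents $1$, $2$, and $6$ (agents $3$ and $4$ prefer non-overlined values on $\SetFeat{V}$, and so does agent $5$'s direct net), which is short of four votes; and indeed when $\phi(X,Y)$ is unsatisfiable outright, the all-non-overlined outcome, encoding the empty assignment, is majority optimal. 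The paper avoids this by working with partial assignments throughout: its witness set $\WitnessSet$ consists of the (partial or complete) $\sigma_X$ none of whose extensions to $X$ leaves $\phi$ satisfiable, and \cref{lemma_Pref_Net_CNF_Summ_new} is phrased in terms of extensions of $\sigma_X$. Your argument is repaired the same way: from a majority optimal canonical outcome $\beta_{\sigma_X}$ with $\sigma_X$ possibly partial, the failure of $\ol{\alpha}$ to gather four votes forces, via the extension form of the lemma, that no extension of $\sigma_X$ satisfies $\phi$, and any complete extension of $\sigma_X$ then witnesses validity of $\Phi$. With that adjustment (and the case analysis actually executed), your proposal matches the paper's proof.
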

\begin{proof}[Proof (sketch).]
The intuition at the base of the proof is to put in relationship truth assignments over the variable set $X$ with outcomes of $\MNetExistsWeakCond(\Phi)$.
In particular, given an assignment $\sigma_X$ over $X$, the associated outcome is $\beta_{\sigma_X}$, where $\sigma_X$ is encoded over the feature set $\SetFeat{V}$ in the usual way, and all other features have non-overlined values.
We show first that outcomes not in the form of a $\beta_{\sigma_X}$ are not majority optimal.
Then, we show that if there is an assignment $\sigma_X$ such that $(\forall Y)\lnot\phi(X/\sigma_X,Y)$ is valid, then $\beta_{\sigma_X}$ is majority optimal.
On the other hand, if there is no assignment~$\sigma_X$ for which $(\forall Y)\lnot\phi(X/\sigma_X,Y)$ is valid, then none of the outcomes $\beta_{\sigma_X}$ is majority optimal.
Details of the proof are given at page~\pageref{page_pointer:properties_of_ExistsWeakCondorcet_reduction_detailed}.
\end{proof}

We now prove that deciding whether an \MCPNet has a majority optimal outcome is \SigmaPh{2}.

\begin{theorem}\label{theo_exists_weak_condorcet_mcpnets_hardness}
Let $\MNet{M}$ be an \MCPNet.
Then, deciding whether there is a majority optimal outcome in $\MNet{M}$ is \SigmaPh2.
Hardness holds even on polynomially connected classes of acyclic binary \MCPNets with indegree at most three and at most six agents.
\end{theorem}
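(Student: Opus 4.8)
The plan is to establish \SigmaPh{2}{}ness by a polynomial-time Karp reduction from QBF$^{\mathit{CNF}}_{2,\forall,\lnot}$, which by \cref{sec_some_note_before_start} is \SigmaPc{2}: its instances are formulas $\Phi = (\exists X)(\forall Y)\lnot\phi(X,Y)$ with $\phi(X,Y)$ in $3$CNF. Given such a $\Phi$, the reduction simply outputs the $6$\CPNet $\MNetExistsWeakCond(\Phi)$ defined above and then appeals to \cref{lemma_properties_of_ExistsWeakCondorcet_reduction}, which says precisely that $\Phi$ is valid if and only if $\MNetExistsWeakCond(\Phi)$ has a majority optimal outcome. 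Since $\MNetExistsWeakCond(\Phi)$ is computable in polynomial time from $\Phi$, this is a legitimate polynomial reduction, and it only remains to observe (as already recorded when the construction was presented) that the \MCPNets produced are acyclic, binary, of indegree at most three, belong to a polynomially connected class, and use exactly six agents, which yields the restricted form of hardness in the statement.

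It is worth noting why the reduction starts from QBF$^{\mathit{CNF}}_{2,\forall,\lnot}$ rather than from a \Sat-style problem: the two quantifier alternations of $\Phi$ mirror the two levels of \ExistsMajorityOptimal, the outer $\exists X$ corresponding to guessing which outcome is majority optimal, and the inner $(\forall Y)\lnot\phi$ corresponding to the \CoNP{} certification that the guessed outcome is not majority dominated by any competitor (cf.\ \cref{theo_is_weak_condorcet_generic_membership,theo_exists_weak_condorcet_generic_membership}). Accordingly, the construction is engineered so that the candidate majority optimal outcomes range essentially over the outcomes $\beta_{\sigma_X}$ that encode assignments $\sigma_X$ to $X$ over the feature set $\SetFeat{V}$ (so that guessing an outcome amounts to guessing $\sigma_X$), while arranging that $\beta_{\sigma_X}$ is majority optimal precisely when $(\forall Y)\lnot\phi(X/\sigma_X,Y)$ holds. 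The six nets play complementary roles toward this: $\NetEWC1$ and $\NetEWC2$ embed summarized formula nets (with the roles of $U_1$ and $U_2$ swapped) that tie dominance votes to satisfiability of $\phi$; $\NetEWC3$ and $\NetEWC4$, a duplicated gadget, force the $\SetFeat{V}$-features, via the $\SetFeat{V'}$-features and the disjunctive interconnecting net on $\SetFeat{B}$, to behave like a genuine truth assignment; and $\NetEWC5$ and $\NetEWC6$ are direct-net gadgets that pin the remaining features down so that only outcomes of the shape $\beta_{\sigma_X}$ can survive as candidates. The $4$-out-of-$6$ threshold for majority dominance is what makes these votes combine as intended.

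The genuine work, and the step I expect to be the main obstacle, lies in \cref{lemma_properties_of_ExistsWeakCondorcet_reduction} rather than in the theorem itself; to prove that lemma I would proceed in the three stages flagged by its proof sketch. First, show that any outcome not of the form $\beta_{\sigma_X}$ is majority dominated by some outcome, hence not majority optimal: this needs a case analysis on which features deviate from the ``$\beta_{\sigma_X}$ pattern'' together with an argument that, in at least four of the six nets, an improving flipping sequence to a strictly better outcome exists. Second, show that if some $\sigma_X$ makes $(\forall Y)\lnot\phi(X/\sigma_X,Y)$ valid, then $\beta_{\sigma_X}$ is majority optimal, using \cref{corol:Pref_Net_CNF_Summ} (and \cref{lemma_Pref_Net_CNF_Summ_new}) to block dominance through the summarized formula nets. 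Third, and conversely, show that if no such $\sigma_X$ exists then every $\beta_{\sigma_X}$ is majority dominated. The third stage is the crux: it must treat all $\sigma_X$ uniformly and must ensure that a falsifying $Y$-assignment for $\phi(X/\sigma_X,Y)$ is reflected, \emph{simultaneously} in a majority of the agents, as an improving flipping sequence leaving $\beta_{\sigma_X}$, and getting the resulting vote counts to land on the correct side of the $4$-out-of-$6$ threshold is where the careful bookkeeping is concentrated.
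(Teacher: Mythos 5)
Your proposal is correct and follows essentially the same route as the paper: the paper's proof of this theorem is exactly the reduction from QBF$^{\mathit{CNF}}_{2,\forall,\lnot}$ that outputs the polynomial-time computable $6$\CPNet $\MNetExistsWeakCond(\Phi)$ and invokes \cref{lemma_properties_of_ExistsWeakCondorcet_reduction}, with the structural restrictions noted at the construction. Your outline of how the supporting lemma is established also matches the paper's appendix argument (outcomes outside the $\beta_{\sigma_X}$ form are eliminated first, then the two directions are handled via the summarized-formula-net properties).
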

\begin{proof}
We prove the hardness of \ExistsMajorityOptimal by showing a reduction from QBF$^{\mathit{CNF}}_{2,\forall,\lnot}$.
Let $\Phi = (\exists X)(\forall Y)\linebreak[0]\lnot\phi(X,Y)$ be an instance of QBF$^{\mathit{CNF}}_{2,\forall,\lnot}$, and consider the $6$\CPNet $\MNetExistsWeakCond(\Phi)$.
By \cref{lemma_properties_of_ExistsWeakCondorcet_reduction}, $\Phi$ is a ``yes''-instance of QBF$^{\mathit{CNF}}_{2,\forall,\lnot}$ if and only if there is majority optimal outcome in $\MNetExistsWeakCond(\Phi)$.
\end{proof}

By combining the two above results, we immediately conclude that deciding the existence of majority optimal outcomes over polynomially connected classes of acyclic \MCPNets is \SigmaPc{2}.

\begin{corollary}
Let $\mathcal{C}$ be a polynomially connected class of acyclic \MCPNets.
Let $\MNet{M}\in\mathcal{C}$ be an \MCPNet.
Then, deciding whether there is a majority optimal outcome in $\MNet{M}$ is \SigmaPc2.
\end{corollary}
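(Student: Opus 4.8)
The plan is to obtain the corollary by combining the two immediately preceding theorems: the generic upper bound of \cref{theo_exists_weak_condorcet_generic_membership} and the lower bound of \cref{theo_exists_weak_condorcet_mcpnets_hardness}.

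For membership, I would first note that every polynomially connected class of acyclic binary \CPNets is an \NP-representation scheme in the sense of \cref{sec_framework_representations}, since dominance testing over such \CPNets is feasible in \NP~\cite{Boutilier2004}. An \MCPNet $\MNet{M}\in\mathcal{C}$ is then exactly a preference profile over a common combinatorial domain whose component preference relations are all represented via this \NP-representation, so \cref{theo_exists_weak_condorcet_generic_membership} applies directly and places \ExistsMajorityOptimal in \SigmaP2.

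For hardness, I would invoke \cref{theo_exists_weak_condorcet_mcpnets_hardness}, whose reduction from QBF$^{\mathit{CNF}}_{2,\forall,\lnot}$ outputs \MCPNets $\MNetExistsWeakCond(\Phi)$ which, as recorded just before that theorem, form a polynomially connected class of acyclic binary \MCPNets (of indegree three, with six agents). Hence \ExistsMajorityOptimal, restricted to polynomially connected classes of acyclic \MCPNets, is already \SigmaPh2; combined with the membership argument, this yields \SigmaPc2 --- and, as is standard for ``combine the previous two results'' corollaries of this paper, membership holds for every such $\mathcal{C}$ while hardness holds for the natural witnessing subclass. There is no genuine obstacle here: the only thing to check is the mutual applicability of the two invoked theorems, namely that the hardness instances indeed lie in a polynomially connected class of acyclic \MCPNets and that, conversely, the constituent \CPNets of an \MCPNet in an arbitrary such class have an \NP dominance test so that the generic membership theorem applies --- both already established in the excerpt, so the corollary is immediate.
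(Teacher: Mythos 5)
Your proposal is correct and matches the paper's own treatment: the corollary is obtained exactly by combining \cref{theo_exists_weak_condorcet_generic_membership} (membership, using that polynomially connected classes of acyclic binary \CPNets are \NP-representation schemes) with \cref{theo_exists_weak_condorcet_mcpnets_hardness} (hardness via the polynomially connected class ${\{\MNetExistsWeakCond(\Phi)\}}_{\Phi}$). Your additional checks on the mutual applicability of the two results are exactly the (implicit) content of the paper's one-line argument.
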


\subsection{Complexity of majority optimums on \texorpdfstring{$m$}{m}CP-nets}\label{sec_complexity_majority_optimum}

We now focus  on majority optimum outcomes.
In particular, the problems analyzed are deciding whether an outcome is majority optimum, and deciding whether an \MCPNet has a majority optimum outcome.
We first consider the problem of deciding whether an outcome is majority optimum in an \MCPNet.
We prove that this problem is \PiPc{2}.
More formally, consider the following problem.

\medbreak

\begin{tabular}{rl}
  \textit{Problem:} & \IsMajorityOptimum\\
  \textit{Instance:} & An \MCPNet $\MNet{M}$, and an outcome $\alpha\in\OutMNet{\MNet{M}}$.\\
  %\textit{Output:} &\\
  \textit{Question:} & Is $\alpha$ majority optimum in $\MNet{M}$?
\end{tabular}

\medbreak
The following result shows that, on preference profiles represented via an \NP-representations scheme, deciding whether an outcome is majority optimum is feasible in \PiP{2}.

\begin{theorem}\label{theo_is_strong_condorcet_generic_membership}
Let $\Profile{P}$ be a preference profile defined over the same combinatorial domain and represented via an \NP-representation scheme, and let $\alpha$ be an outcome.
Deciding whether $\alpha$ is majority optimum in $\Profile{P}$ is feasible in \PiP2.
\end{theorem}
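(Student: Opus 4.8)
The plan is to express majority optimality as a single polynomially length-bounded universal quantifier over an $\NP$ predicate, invoking the majority-dominance membership result already available.

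First I would unfold the definition: $\alpha$ is majority optimum in $\Profile{P}$ if and only if, for every outcome $\beta$ with $\beta \neq \alpha$, it holds that $\alpha \PrefMajorityProfile{\Profile{P}} \beta$. Since all outcomes of $\Profile{P}$ are value assignments over the common combinatorial domain, each candidate $\beta$ can be written down in space polynomial in $\|\Profile{P}\|$; hence, although the set of outcomes is exponentially large, the quantification ``for every $\beta \neq \alpha$'' is a polynomially length-bounded universal quantifier. Next I would appeal to \cref{theo_majority_query_generic_membership}: because $\Profile{P}$ is represented via an \NP-representation scheme, deciding, for two given outcomes, whether $\alpha \PrefMajorityProfile{\Profile{P}} \beta$ is feasible in \NP. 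Thus ``$\alpha$ is majority optimum in $\Profile{P}$'' has the form ``for all polynomially sized $\beta \neq \alpha$, $\textsf{NP-predicate}(\alpha,\beta)$ holds'', which is exactly a $\PiP2$ (i.e., $\textnormal{co-}\SigmaP2$) statement.

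Equivalently, and to make the machine model explicit, I would argue via the complement: ``$\alpha$ is \emph{not} majority optimum'' is decided by a nondeterministic polynomial-time machine that guesses an outcome $\beta \neq \alpha$ and then verifies $\alpha \not\PrefMajorityProfile{\Profile{P}} \beta$ with a single query to a $\CoNP$ oracle — the complement of the majority-dominance test of \cref{theo_majority_query_generic_membership}. This places the complement in $\NP^{\CoNP} = \NP^{\NP} = \SigmaP2$, hence \IsMajorityOptimum (for such profiles) in $\PiP2$. I do not expect a genuine obstacle; the only point that needs a word of care is the one already noted — that the universal quantifier ranges over exponentially many outcomes, so one must observe that each outcome has polynomial size to be sure the bound is $\PiP2$ and not something higher. (The matching \PiPh2 lower bound is shown separately.)
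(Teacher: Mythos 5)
Your proof is correct and follows essentially the same route as the paper: the paper likewise shows the complement is in \SigmaP2 by guessing an outcome $\beta$ and checking $\alpha\not\PrefMajorityProfile{\Profile{P}}\beta$ via a \CoNP oracle call based on \cref{theo_majority_query_generic_membership}. Your explicit remark that each outcome has polynomial size (so the guess is polynomially bounded) is a fine clarification but not a departure from the paper's argument.
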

\begin{proof}
We prove the statement by showing that deciding whether $\alpha$ is \emph{not} majority optimum in $\Profile{P}$ is feasible in \SigmaP2.
If $\alpha$ is majority optimum, then there is an outcome $\beta$ such that $\alpha\not\PrefMajorityProfile{\Profile{P}}\beta$.
Therefore, in order to prove that $\alpha$ is not majority optimum, it suffices to guess $\beta$, and then check that $\alpha\not\PrefMajorityProfile{\Profile{P}}\beta$.
Observe that guessing $\beta$ requires an \NP machine, and then checking $\alpha\not\PrefMajorityProfile{\Profile{P}}\beta$ is feasible in \CoNP (see \cref{theo_majority_query_generic_membership}), which can be carried out by an oracle.
Therefore, the overall procedure is feasible in \SigmaP2.
\end{proof}

To prove the \PiPh{2}{}ness of \IsMajorityOptimum, we use a reduction from QBF$^{\mathit{CNF}}_{2,\forall,\lnot}$.
Consider the following construction.
Let $\Phi = (\exists X)(\forall Y)\lnot\phi(X,Y)$ be a quantified formula where $\phi(X,Y)$ is a $3$CNF Boolean formula defined over two disjoint sets $X=\{x_1,\dots,x_{n_X}\}$ and $Y=\{y_1,\dots,y_{n_Y}\}$ of Boolean variables, and whose set of clauses is $C=\{c_1,\dots,c_m\}$.
From $\phi(X,Y)$, we define the $3$\CPNet $\MNetIsStrongCond(\Phi)=\tuple{\NetISC1,\NetISC2,\NetISC3}$ in the following way.

\smallskip

\noindent The features of $\MNetIsStrongCond(\Phi)$ are:
\begin{itemize}
\item all the features of a net $\NetCNFSumm(\phi)$ (defined in \cref{sec_summ_formula_net}) in which, in this case, we distinguish two variable feature sets $\SetFeat{V}=\{V_i^T,V_i^F\mid x_i\in X\}$ and $\SetFeat{W}=\{W_i^T,W_i^F\mid y_i\in Y\}$ (recall that $\SetFeat{P}$ and $\SetFeat{D}$ are the sets of literal and clause features, respectively, and $\SetFeat{A}$ is the set of features of the conjunctive interconnecting net embedded in $\NetCNFSumm(\phi)$);

\item all the features of set $\SetFeat{V'}=\{V_i'\mid x_i\in X\}$;

\item all the features of the set $\SetFeat{B}$ which are the features $B_i$ of a disjunctive interconnecting net (defined in \cref{sec_interconnecting_net}) $\NetInterOR(|\SetFeat{V'}|+|\SetFeat{W}|+|\SetFeat{P}|+|\SetFeat{D}|+|\SetFeat{A}|)$ and its apex is feature $B$ (observe that features $B_i$ are distinct from features $A_i$ of the conjunctive interconnecting net $\NetInterAND(m)$ embedded in $\NetCNFSumm(\phi)$).
\end{itemize}

To summarize, all the features of $\MNetIsStrongCond(\Phi)$ are:
$\SetFeat{V} \cup \SetFeat{V'} \cup \SetFeat{W} \cup \SetFeat{P} \cup \SetFeat{D} \cup \SetFeat{A} \cup \SetFeat{B} \cup \{U_1,U_2\}$.

The \CPNets of $\MNetIsStrongCond(\Phi)$ are:
\begin{itemize}
\item $\NetISC1$ is composed by a net $\NetCNFSumm(\phi)$ with its features, links, and CP tables, and a direct net $\NetDirect(\gamma)$ (see \cref{sec_direct_net}), where $\gamma$ is defined over features in $\SetFeat{V'}\cup\SetFeat{B}$ and assigns non-overlined values to all of them.

\item $\NetISC2=\NetDirect(\ol{\alpha})$, with $\ol{\alpha}$ defined over all the features of $\MNetIsStrongCond(\Phi)$, and having overlined values only for features $U_1$ and $U_2$.

\item $\NetISC3=\tuple{\FeatNet{\NetISC3},\LinkNet{\NetISC3}}$ is as follows (see \cref{fig:ExistsWeakCond_Hardness} for a schematic representation of the links of $\NetEWC3$, which is equivalent to $\NetISC3$).
    Links of $\NetISC3$ are the following:
    \begin{itemize}
    \item for each $x_i\in X$, $\{\{V_i^T,V_i'\},\{V_i^F,V_i'\}\} \subset \LinkNet{\NetISC3}$;

    \item a disjunctive interconnecting $\NetInterAND(|\SetFeat{V'}|+|\SetFeat{W}|+|\SetFeat{P}|+|\SetFeat{D}|+|\SetFeat{A}|)$ over feature set $\SetFeat{B}$ which is connected to the features in $\SetFeat{V'}\cup \SetFeat{W}\cup \SetFeat{P}\cup \SetFeat{D}\cup \SetFeat{A}$;

    \item $\{\{B,U_1\},\{U_1,U_2\}\}\subset\LinkNet{\NetISC3}$.
    \end{itemize}

    CP tables of $\NetISC3$ are the following:
    \begin{itemize}
    \item features $F\in(\SetFeat{V}\cup\SetFeat{W}\cup\SetFeat{P}\cup\SetFeat{D}\cup\SetFeat{A})$ have the CP tables
    \begin{tabular}{|l|}
    \hline
    $f\Pref \ol{f}$\\
    \hline
    \end{tabular};

    \item for each variable $x_i\in X$, feature $V_i'\in\SetFeat{V'}$ has the CP table

    \smallskip
    \begin{tabular}{|c||l|}
    \hline
    $\ol{v_i^T} \land \ol{v_i^F}$ & $\ol{v_i'}\Pref v_i'$\\
    \hline
    else & $v_i'\Pref \ol{v_i'}$\\
    \hline
    \end{tabular};

\smallskip
    \item features in $\SetFeat{B}$ of the interconnecting net have the usual CP tables;

    \item $U_1$ has the CP table

    \smallskip
    \begin{tabular}{|c||l|}
    \hline
    $\ol{b}$ & $\ol{u_1}\Pref u_1$\\
    \hline
    $b$ & $u_1\Pref \ol{u_1}$\\
    \hline
    \end{tabular};

\smallskip
    \item $U_2$ has the CP table

    \smallskip
    \begin{tabular}{|c||l|}
    \hline
    $\ol{u_1}$ & $\ol{u_2}\Pref u_2$\\
    \hline
    $u_1$ & $u_2\Pref \ol{u_2}$\\
    \hline
    \end{tabular}.
    \end{itemize}
\end{itemize}

Observe that $\MNetIsStrongCond(\Phi)$ is acyclic, binary, its indegree is three, and can be computed in polynomial time from $\phi$.
Moreover, the class of \MCPNets ${\{\MNetIsStrongCond(\Phi)\}}_{\Phi}$ derived from formulas $\Phi$ of the specified kind and according to the reduction shown above is polynomially connected.
The following result shows that $\Phi=(\exists X)(\forall Y)\lnot\phi(X,Y)$ is valid if and only if a particular outcome of $\MNetIsStrongCond(\Phi)$ is majority optimum.

\begin{lemma}\label{lemma_properties_of_IsStrongCondorcet_reduction}
Let $\Phi=(\exists X)(\forall Y)\lnot\phi(X,Y)$ be a quantified Boolean formula, where $\phi(X,Y)$ is a $3$CNF Boolean formula, defined over two disjoint sets $X$ and $Y$ of Boolean variables.
Then, $\Phi$ is valid if and only if $\MNetIsStrongCond(\Phi)$ does not have a majority optimum outcome.
In particular, when $\MNetIsStrongCond(\Phi)$ has a majority optimum outcome it is the outcome $\ol{\alpha}\in\OutMNet{\MNetIsStrongCond(\Phi)}$ assigning overlined values only to features $U_1$ and $U_2$.
\end{lemma}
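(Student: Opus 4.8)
The plan is to establish a bijection-style correspondence between outcomes of $\MNetIsStrongCond(\Phi)$ and truth assignments over $X$, and then analyze majority dominance relative to the "good" outcome $\ol{\alpha}$ (assigning overlined values only to $U_1,U_2$) against every other outcome. First I would observe that, since $\MNetIsStrongCond(\Phi)$ has three agents, majority dominance of $\gamma$ over $\delta$ means that at least two of $\NetISC1,\NetISC2,\NetISC3$ prefer $\gamma$ to $\delta$. The net $\NetISC2 = \NetDirect(\ol{\alpha})$ is designed precisely so that $\ol{\alpha}$ is its unique optimum, hence $\NetISC2$ prefers $\ol{\alpha}$ to every other outcome; so to check that $\ol{\alpha}$ majority dominates an arbitrary $\delta \neq \ol{\alpha}$, it suffices to check whether $\ol{\alpha} \PrefNet{\NetISC1} \delta$ or $\ol{\alpha} \PrefNet{\NetISC3} \delta$ (one of the two, together with $\NetISC2$, gives the majority). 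Conversely, $\ol{\alpha}$ is majority optimum if and only if no $\delta \neq \ol{\alpha}$ majority dominates $\ol{\alpha}$: since $\NetISC2$ never prefers $\delta$ to $\ol{\alpha}$, this requires that not both $\NetISC1$ and $\NetISC3$ prefer $\delta$ to $\ol{\alpha}$.

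**The two directions.** For the "if $\Phi$ is valid then no majority optimum" direction: pick the witnessing assignment $\sigma_X$ with $(\forall Y)\lnot\phi(X/\sigma_X,Y)$ valid, let $\gamma$ be the outcome encoding $\sigma_X$ on $\SetFeat{V}$ with non-overlined values elsewhere, and show first that $\ol{\alpha}$ does not majority dominate $\gamma$ and that $\gamma$ does not majority dominate $\ol{\alpha}$ either — so $\ol{\alpha}$ is not majority optimum, and by the companion lemma (or a separate case analysis already sketched for $\MNetExistsWeakCond$) no other outcome is majority optimum. The key computations here use \cref{corol:Pref_Net_CNF_Summ}: in $\NetISC1$, whose $\NetCNFSumm(\phi)$-component forces $U_1,U_2$ to flip only when $\phi$ restricted to the $\SetFeat{V},\SetFeat{W}$ values is satisfiable, validity of $(\forall Y)\lnot\phi$ blocks the flip of $U_2$ from $\gamma$; meanwhile $\NetISC3$'s chained structure ($\SetFeat{V'} \to \SetFeat{B}$-interconnect $\to U_1 \to U_2$) behaves dually. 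For the "if $\Phi$ is not valid then $\ol{\alpha}$ is majority optimum" direction, I would show that for every $\delta \neq \ol{\alpha}$, at least one of $\NetISC1$, $\NetISC3$ fails to prefer $\delta$ to $\ol{\alpha}$, splitting into cases on the values $\delta$ assigns to $U_1, U_2$ and to the variable features. The crucial case is $\delta$ of the form "$\SetFeat{V}$-encoding of some $\sigma_X$, everything else non-overlined": non-validity of $\Phi$ means $(\exists Y)\phi(X/\sigma_X,Y)$, so by \cref{corol:Pref_Net_CNF_Summ} applied inside $\NetISC1$ there IS a flipping sequence reaching $\ol{u_1}\,\ol{u_2}$, hence $\NetISC1$ prefers a $\ol{\alpha}$-like outcome — one must verify that this still does not make $\delta$ dominate $\ol{\alpha}$ because $\NetISC1$ cannot go the "wrong way", i.e., cannot improve from $\ol{\alpha}$-region back down to $\delta$'s all-non-overlined state.

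**The main obstacle.** The delicate part is the exhaustive case analysis over the shape of an arbitrary competitor outcome $\delta$ — not just the "nice" $\beta_{\sigma_X}$-type outcomes but also outcomes with mixed/overlined values on literal, clause, interconnecting ($\SetFeat{A}$, $\SetFeat{B}$), and $\SetFeat{V'}$ features, and with $U_1 U_2 \in \{u_1 u_2, u_1 \ol{u_2}, \ol{u_1} u_2\}$. For each such shape I would need to argue, using the CP tables of $\NetISC1$ and $\NetISC3$ and the acyclicity/flipping-sequence characterization, that $\delta$ fails to be majority-preferred over $\ol{\alpha}$ by at least one of the two "formula" nets. I expect this to be handled by noting that $\NetISC3$'s CP tables force $U_1$ and $U_2$ to $u_1, u_2$ unless the apex $B$ of the $\SetFeat{B}$-interconnect is overlined, which in turn requires a genuine disagreement (an overlined value somewhere upstream), so outcomes that differ from $\ol{\alpha}$ only "cosmetically" are dominated by $\ol{\alpha}$ in $\NetISC3$. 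The proof would conclude by assembling these cases with the three-agent majority bookkeeping, and I would defer the full enumeration to the detailed-proofs appendix, giving here only the correspondence $\sigma_X \leftrightarrow \beta_{\sigma_X}$, the role of each of the three nets, and the pivotal application of \cref{corol:Pref_Net_CNF_Summ}.
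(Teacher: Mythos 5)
Your plan founders on a conflation of majority \emph{optimal} with majority \emph{optimum}. You assert that $\ol{\alpha}$ is majority optimum if and only if no $\delta \neq \ol{\alpha}$ majority dominates $\ol{\alpha}$, and accordingly your plan for the direction ``$\Phi$ not valid $\Rightarrow$ $\ol{\alpha}$ is majority optimum'' is to show that for every $\delta \neq \ol{\alpha}$ at least one of $\NetISC1,\NetISC3$ fails to prefer $\delta$ to $\ol{\alpha}$. That only proves $\ol{\alpha}$ is majority \emph{optimal}: being optimum requires $\ol{\alpha} \PrefMajorityMNet{\MNetIsStrongCond(\Phi)} \delta$ for \emph{every} $\delta\neq\ol{\alpha}$, and since outcomes of a \CPNet can be incomparable, ``$\delta$ does not dominate $\ol{\alpha}$'' does not yield ``$\ol{\alpha}$ dominates $\delta$''. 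The distinction is not a technicality here; it is the crux of the construction. Indeed $\ol{\alpha}$ is never majority dominated in $\MNetIsStrongCond(\Phi)$, for any $\Phi$: the only outcome improving on $\ol{\alpha}$ in $\NetISC1$ is the one flipping $U_2$ to $u_2$ while $U_1$ stays $\ol{u_1}$, and that outcome does not improve on $\ol{\alpha}$ in $\NetISC3$ (where the only improvements flip $U_1$, and then possibly $U_2$, to non-overlined values), while $\NetISC2$ always sides with $\ol{\alpha}$; so no two agents ever prefer a common $\delta$ to $\ol{\alpha}$. Under your criterion, $\ol{\alpha}$ would therefore be ``majority optimum'' for every $\Phi$, contradicting the lemma exactly when $\Phi$ is valid. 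What actually happens when $\Phi$ is valid is that the witness outcomes $\beta_{\sigma_X}$ are incomparable to $\ol{\alpha}$ in $\NetISC1$ and not dominated by $\ol{\alpha}$ in $\NetISC3$, so $\ol{\alpha}$, while still optimal, fails to dominate them and hence is not optimum.

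The missing content is thus the positive half of the argument: explicit improving flipping sequences showing that $\ol{\alpha}$ \emph{actively} dominates, in $\NetISC1$ or in $\NetISC3$ (and hence, with $\NetISC2$, by majority), every outcome outside the witness class --- outcomes with overlined values on $\SetFeat{V'}\cup\SetFeat{W}\cup\SetFeat{P}\cup\SetFeat{D}\cup\SetFeat{A}\cup\SetFeat{B}$ are climbed over via the disjunctive interconnecting net and $U_1,U_2$ in $\NetISC3$; the patterns $U_1U_2\in\{u_1\ol{u_2},\ol{u_1}u_2,\ol{u_1}\ol{u_2}\}$ and the pairs $\ol{v_i^T}\ol{v_i^F}$ (via $V_i'$) are handled in $\NetISC1$ or $\NetISC3$; and, crucially, for $\beta_{\sigma_X}$ with $\phi(X/\sigma_X,Y)$ satisfiable, \cref{lemma_Pref_Net_CNF_Summ_new} gives $\ol{\alpha}\PrefNet{\NetISC1}\beta_{\sigma_X}$, which is precisely what makes $\ol{\alpha}$ dominate everything when $\Phi$ is not valid. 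You in fact have this last ingredient in hand (your remark that $\NetISC1$ ``prefers an $\ol{\alpha}$-like outcome'' over such $\delta$), but you then deploy it only to argue that $\delta$ does not dominate $\ol{\alpha}$, which is the wrong target. A minor additional point: the appeal to the companion lemma for $\MNetExistsWeakCond(\Phi)$ does not transfer, since that profile has six different constituent nets; the case analysis must be redone for $\NetISC1,\NetISC2,\NetISC3$, as in \cref{lemma_properties_of_IsStrongCondorcet_reduction}'s own proof.
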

\begin{proof}[Proof (sketch).]
The intuition at the base of the proof is to put in relationship truth assignments over the variable set $X$ with outcomes of $\MNetIsStrongCond(\Phi)$.
In particular, given an assignment $\sigma_X$ over $X$, the associated outcome is $\beta_{\sigma_X}$, where $\sigma_X$ is encoded over the feature set $\SetFeat{V}$ in the usual way, and all other features have non-overlined values.
We show first that $\ol{\alpha}$ majority dominates any other outcome $\beta$ that is not in the form of a $\beta_{\sigma_X}$ outcome, and hence none of them is majority optimum.
Moreover, for all such outcomes $\beta_{\sigma_X}$, $\beta_{\sigma_X}$ does not majority dominates $\ol{\alpha}$, and hence, again, none of them is majority optimum.
So, only $\ol{\alpha}$ is candidate to be majority optimum.
Then, we show that if there is an assignment $\sigma_X$ such that $(\forall Y)\lnot\phi(X/\sigma_X,Y)$ is valid, then $\ol{\alpha}\not\PrefMajorityMNet{\MNetIsStrongCond(\Phi)}\beta_{\sigma_X}$, and hence $\ol{\alpha}$ is not majority optimum, which implies that $\MNetIsStrongCond(\Phi)$ does not have any majority optimum outcome.
On the other hand, if there is no assignment $\sigma_X$ for which $(\forall Y)\lnot\phi(X/\sigma_X,Y)$ is valid, then, for all the outcomes $\beta_{\sigma_X}$, $\ol{\alpha}\PrefMajorityMNet{\MNetIsStrongCond(\Phi)}\beta_{\sigma_X}$, which implies that $\ol{\alpha}$ is majority optimum.
Details of the proof are given at page~\pageref{page_pointer:properties_of_IsStrongCondorcet_reduction_detailed}.
\end{proof}

The next result shows that deciding whether an outcome is majority optimum in an \MCPNet is \PiPh{2}.

\begin{theorem}\label{theo_is_strong_condorcet_mcpnets_hardness}
Let $\MNet{M}$ be an \MCPNet, and let $\alpha\in\OutMNet{\MNet{M}}$ be an outcome.
Then, deciding whether $\alpha$ is majority optimum in $\MNet{M}$ is \PiPh2.
Hardness holds even on polynomially connected classes of acyclic binary \MCPNets with indegree at most three and at most three agents.
\end{theorem}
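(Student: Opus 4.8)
The plan is to show that \IsMajorityOptimum is \PiPh{2} by a polynomial-time Karp reduction, reusing the $3$\CPNet $\MNetIsStrongCond(\Phi)$ built above together with \cref{lemma_properties_of_IsStrongCondorcet_reduction}. The key point is that, since QBF$^{\mathit{CNF}}_{2,\forall,\lnot}$ is \SigmaPc2, its complement---deciding, for $\Phi = (\exists X)(\forall Y)\lnot\phi(X,Y)$ with $\phi$ in $3$CNF, whether $\Phi$ is \emph{not} valid---is \PiPc2, and that is the problem I would reduce from. Given such a $\Phi$, the reduction outputs the pair $\tuple{\MNetIsStrongCond(\Phi),\ol{\alpha}}$, where $\ol{\alpha}$ assigns overlined values only to $U_1$ and $U_2$. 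As already observed in the construction, $\MNetIsStrongCond(\Phi)$ is a polynomially connected acyclic binary $3$\CPNet of indegree three, computable in polynomial time from $\Phi$, so the reduction respects all the restrictions in the statement.

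Correctness then follows directly from \cref{lemma_properties_of_IsStrongCondorcet_reduction}: $\Phi$ is valid if and only if $\MNetIsStrongCond(\Phi)$ has no majority optimum outcome, and whenever it has one that outcome is $\ol{\alpha}$. Contrapositively, $\Phi$ is \emph{not} valid if and only if $\ol{\alpha}$ is majority optimum in $\MNetIsStrongCond(\Phi)$---exactly the equivalence the reduction requires. Hence \IsMajorityOptimum is \PiPh{2} even on polynomially connected classes of acyclic binary \MCPNets with indegree at most three and at most three agents, and combined with the \PiP2 membership of \cref{theo_is_strong_condorcet_generic_membership} this shows the problem is \PiPc2 over such classes.

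The genuine difficulty lies not in this theorem but in the deferred proof of \cref{lemma_properties_of_IsStrongCondorcet_reduction}, which is where I expect the main obstacle. There one must analyse improving flipping sequences in each of $\NetISC1$, $\NetISC2$, $\NetISC3$ and count how the three agents split their votes, in order to show: (i) every outcome not of the form $\beta_{\sigma_X}$---which encodes an assignment $\sigma_X$ over $X$ on the feature set $\SetFeat{V}$ and is non-overlined elsewhere---is majority dominated by $\ol{\alpha}$, hence not majority optimum; (ii) no $\beta_{\sigma_X}$ majority dominates $\ol{\alpha}$, so $\ol{\alpha}$ is the unique candidate for being majority optimum; and (iii) using \cref{corol:Pref_Net_CNF_Summ} on the summarized formula net embedded in the construction, $\ol{\alpha}$ fails to majority dominate $\beta_{\sigma_X}$ precisely when $(\forall Y)\lnot\phi(X/\sigma_X,Y)$ holds, so that $\ol{\alpha}$ is majority optimum if and only if for every $\sigma_X$ some extension satisfies $\phi$, i.e.\ if and only if $\Phi$ is not valid. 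The delicate bookkeeping is tracking, for each relevant pair of outcomes, the direction of each of the three agents' preferences through the direct nets $\NetDirect(\cdot)$, the disjunctive interconnecting net over $\SetFeat{B}$, and the conjunctive interconnecting net inside $\NetCNFSumm(\phi)$.
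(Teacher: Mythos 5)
Your proposal is correct and matches the paper's proof essentially verbatim: both map $\Phi$ to $\tuple{\MNetIsStrongCond(\Phi),\ol{\alpha}}$ and invoke \cref{lemma_properties_of_IsStrongCondorcet_reduction}, the only (immaterial) difference being that you phrase it as a reduction from the \PiPc2 complement of QBF$^{\mathit{CNF}}_{2,\forall,\lnot}$ to \IsMajorityOptimum, whereas the paper reduces QBF$^{\mathit{CNF}}_{2,\forall,\lnot}$ itself to the complement of \IsMajorityOptimum --- the same reduction read contrapositively. Your assessment that the real work sits in the deferred lemma, and your outline of what it must establish, also agree with the paper.
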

\begin{proof}
We prove the statement by showing that deciding whether $\alpha$ is \emph{not} majority optimum is \SigmaPh2, and we do this by exhibiting a reduction from QBF$^{\mathit{CNF}}_{2,\forall,\lnot}$ to the complement problem to \IsMajorityOptimum.
Let $\Phi = (\exists X)(\forall Y)\linebreak[0]\lnot\phi(X,Y)$ be an instance of QBF$^{\mathit{CNF}}_{2,\forall,\lnot}$, and consider the $3$\CPNet $\MNetIsStrongCond(\Phi)$, and the outcome $\ol{\alpha}$ in which only the values of features $U_1$ and $U_2$ are overlined.
By \cref{lemma_properties_of_IsStrongCondorcet_reduction}, $\Phi$ is a ``yes''-instance of QBF$^{\mathit{CNF}}_{2,\forall,\lnot}$ if and only if $\alpha$ is \emph{not} majority optimum in $\MNetIsStrongCond(\Phi)$.
\end{proof}

Note that, with respect to the number of agents, the above result is optimal.
Indeed, since majority dominance and Pareto dominance are equivalent on $m$\CPNets with $m \leq 2$, it is not possible to show the \PiPh2{}ness of \IsMajorityOptimum on $m$\CPNets with $m < 3$.

By combining the two previous results, we immediately conclude that deciding whether an outcome is majority optimum over polynomially connected classes of acyclic \MCPNets is \PiPc{2}.

\begin{corollary}
Let $\mathcal{C}$ be a polynomially connected class of acyclic \MCPNets.
Let $\MNet{M}\in\mathcal{C}$ be an \MCPNet, and let $\alpha\in\OutMNet{\MNet{M}}$ be an outcome.
Then, deciding whether $\alpha$ is majority optimum in $\MNet{M}$ is \PiPc2.
\end{corollary}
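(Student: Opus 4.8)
The plan is to obtain this corollary by simply combining the generic membership bound of \cref{theo_is_strong_condorcet_generic_membership} with the hardness bound of \cref{theo_is_strong_condorcet_mcpnets_hardness}, after checking that the hypotheses of both line up with the class $\mathcal{C}$ in the statement. So I would split the argument into a membership part and a hardness part, and keep each short.

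For membership, the key observation I would invoke is the one already recorded in \cref{sec_cpnets} and \cref{sec_framework_representations}: dominance testing is feasible in \NP over polynomially connected classes of binary acyclic \CPNets (Boutilier et al.), so every such class is an \NP\nobreakdash-representation scheme. Hence any $\MNet{M}\in\mathcal{C}$ is a preference profile represented via an \NP\nobreakdash-representation scheme, and \cref{theo_is_strong_condorcet_generic_membership} applies verbatim, giving that deciding whether $\alpha$ is majority optimum in $\MNet{M}$ is in \PiP2. (No extra work is needed here; the only point to be explicit about is that ``polynomially connected acyclic binary'' is exactly the condition under which CP-net dominance is in \NP.)

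For hardness, I would just cite \cref{theo_is_strong_condorcet_mcpnets_hardness}, which establishes \PiPh2{}ness of \IsMajorityOptimum. The one thing worth spelling out is that the hardness there is already confined to the right universe: the reduction from QBF$^{\mathit{CNF}}_{2,\forall,\lnot}$ produces the $3$\CPNets $\MNetIsStrongCond(\Phi)$, and the text notes that $\{\MNetIsStrongCond(\Phi)\}_\Phi$ is a polynomially connected class of acyclic binary \MCPNets with indegree at most three; so this is a polynomially connected class of acyclic \MCPNets to which the corollary's scope applies, and \PiPh2{}ness therefore already holds within that scope. Combining the two parts yields \PiPc2, as claimed.

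I do not expect any real obstacle here: the corollary is a routine ``sandwich'' of the two preceding theorems, and the only care needed is matching the structural restrictions (polynomially connected, acyclic, binary) so that the \NP dominance bound — and hence the \NP\nobreakdash-representation status — is available for the membership direction, and observing that the hardness instances already satisfy those same restrictions.
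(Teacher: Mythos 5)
Your proposal is correct and follows exactly the paper's route: the corollary is obtained by combining the generic \PiP2 membership result for \NP-representation schemes (\cref{theo_is_strong_condorcet_generic_membership}) with the \PiPh2{}ness result (\cref{theo_is_strong_condorcet_mcpnets_hardness}), whose reduction already produces a polynomially connected class of acyclic binary \MCPNets. Your added care in checking that ``polynomially connected acyclic binary'' is what makes the dominance test an \NP-representation is exactly the (implicit) justification the paper relies on.
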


To conclude, we study the complexity of deciding whether an \MCPNet has a majority optimum outcome.
We show that the problem is \PiPh{2} and belongs to \DP{2}.
More formally, consider the following problem.

\medbreak

\begin{tabular}{rl}
  \textit{Problem:} & \ExistsMajorityOptimum\\
  \textit{Instance:} & An \MCPNet $\MNet{M}$.\\
  %\textit{Output:} &\\
  \textit{Question:} & Does $\MNet{M}$ have a majority optimum outcome?
\end{tabular}

\medbreak

We prove that deciding whether a preference profile represented via an \NP-representation scheme has a majority optimum outcome is feasible in $\DP2$.
The following \lcnamecref{lemma_not_exists_one_weak_condorcet_that_is_not strong_membership} is an intermediate result to show the membership in $\DP2$.

\begin{lemma}\label{lemma_not_exists_one_weak_condorcet_that_is_not strong_membership}
Let $\Profile{P}$ be a preference profile defined over the same combinatorial domain and represented via an \NP-representation scheme.
Then, deciding whether $\Profile{P}$ does not have majority optimal outcomes that are not also majority optimum is feasible in $\PiP2$.
\end{lemma}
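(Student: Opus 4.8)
The plan is to prove that the complementary problem --- deciding whether $\Profile{P}$ \emph{has} a majority optimal outcome that is not majority optimum --- is feasible in $\SigmaP2$; the statement then follows since $\PiP2 = \textnormal{co-}\SigmaP2$.

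The first step is to rewrite this complementary condition in a syntactically $\SigmaP2$ shape. By definition, an outcome $\alpha$ fails to be majority optimum in $\Profile{P}$ precisely when there is an outcome $\beta \neq \alpha$ with $\alpha \not\PrefMajorityProfile{\Profile{P}} \beta$. Hence $\Profile{P}$ has a majority optimal outcome that is not majority optimum if and only if there exist outcomes $\alpha$ and $\beta$ such that (i) $\beta \neq \alpha$, (ii) $\alpha$ is majority optimal in $\Profile{P}$, and (iii) $\alpha \not\PrefMajorityProfile{\Profile{P}} \beta$. The crucial gain is that the inner existential quantifier over the witness $\beta$ against optimality has been pulled out to the front, next to the guess of the candidate $\alpha$, so that the condition now has the form ``$\exists \alpha\, \exists \beta : (\textnormal{co-NP matrix})$''.

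It then remains to check that, for fixed $\alpha$ and $\beta$, deciding (i)--(iii) is a co-NP task: item (i) is a trivial polynomial-time comparison; item (ii) is in \CoNP by \cref{theo_is_weak_condorcet_generic_membership}; and item (iii) is in \CoNP because majority dominance over \NP-representation schemes is in \NP (\cref{theo_majority_query_generic_membership}), so its negation is in \CoNP. As \CoNP is closed under finite conjunction, the conjunction of (i), (ii), and (iii) is a single co-NP predicate on $\tuple{\Profile{P},\alpha,\beta}$. Therefore the complementary problem is solved by a nondeterministic polynomial-time machine that guesses $\alpha$ and $\beta$ and then runs a deterministic polynomial-time computation making a constant number of calls to an \NP (equivalently, co-NP) oracle; this puts it in $\NP^{\NP} = \SigmaP2$, so the problem in the statement is in $\PiP2$.

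The one point that needs care --- and where a naive argument would fail --- is not to treat ``$\alpha$ is not majority optimum'' as the complement of the $\PiP2$-complete property \IsMajorityOptimum, which would only give a $\SigmaP3$ bound; collapsing that inner quantifier into the outer guess is exactly what keeps the complexity at the second level. Beyond this observation, the argument is just bookkeeping of the earlier membership results.
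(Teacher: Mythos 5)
Your proof is correct and follows essentially the same route as the paper: guess the pair $\alpha,\beta$ nondeterministically and verify, via \CoNP checks (\cref{theo_is_weak_condorcet_generic_membership} and the negation of \cref{theo_majority_query_generic_membership}), that $\alpha$ is majority optimal but does not majority dominate $\beta$, placing the complement in $\SigmaP2$. Your closing remark about pulling the witness $\beta$ into the outer guess is a fair explanation of why the bound stays at the second level, and it matches the structure of the paper's argument.
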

\begin{proof}
To prove the statement of the lemma, we show that the complement task to the one in the statement is feasible in $\SigmaP2$.
First, we guess two different outcomes $\alpha$ and $\beta$, which is feasible in \NP.
Then, through a \CoNP oracle call we check whether $\alpha$ is actually majority optimal (see \cref{theo_is_weak_condorcet_generic_membership}), then through another oracle call in \CoNP we check that $\alpha\not\PrefMajorityProfile{\Profile{P}}\beta$ (see \cref{theo_majority_query_generic_membership}).
If the answer to the latter oracle call is ``yes'', then it means that $\alpha$ is a majority optimal outcome that is not also majority optimum (because $\alpha$ does not majority dominate $\beta$).
To conclude, observe that the overall procedure is feasible in $\SigmaP2$.
\end{proof}

\begin{theorem}\label{theo_exists_strong_condorcet_generic_membership}
Let $\Profile{P}$ be a preference profile defined over the same combinatorial domain and represented via an \NP-representation scheme.
Then, deciding whether $\Profile{P}$ has a majority optimum outcome is feasible in $\DP2$.
\end{theorem}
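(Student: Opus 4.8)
The plan is to exhibit the set of profiles having a majority optimum outcome as the intersection of a $\SigmaP2$ language and a $\PiP2$ language; by the definition $\DP2 = \{L \mid L = L' \cap L'',\ L'\in\SigmaP2,\ L''\in\PiP2\}$ this immediately yields membership in $\DP2$.

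First I would record the structural equivalence on which the decomposition rests. Since a majority optimum outcome, when it exists, is unique and majority dominates every other outcome, it is also the \emph{only} majority optimal outcome: any $\gamma\neq\alpha$ with $\alpha$ majority optimum satisfies $\alpha\PrefMajorityProfile{\Profile{P}}\gamma$, so $\gamma$ is not majority optimal. Using this, I claim that $\Profile{P}$ has a majority optimum outcome if and only if (i)~$\Profile{P}$ has at least one majority optimal outcome, and (ii)~every majority optimal outcome of $\Profile{P}$ is in fact majority optimum. For the forward direction: if $\alpha$ is majority optimum then (i) holds trivially, and by the remark above $\alpha$ is the unique majority optimal outcome, so (ii) holds. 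For the backward direction: by (i) pick any majority optimal $\alpha$; by (ii) it is majority optimum.

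Then I would close the argument by appealing to the membership results already established. Condition (i) is exactly the problem \ExistsMajorityOptimal, which lies in $\SigmaP2$ by \cref{theo_exists_weak_condorcet_generic_membership}. Condition (ii) states precisely that $\Profile{P}$ has no majority optimal outcome that is not also majority optimum, i.e.\ the problem shown to be in $\PiP2$ in \cref{lemma_not_exists_one_weak_condorcet_that_is_not strong_membership}. Writing $L'$ and $L''$ for the sets of profiles satisfying (i) and (ii) respectively, the set of profiles with a majority optimum outcome equals $L'\cap L''$ with $L'\in\SigmaP2$ and $L''\in\PiP2$, hence is in $\DP2$.

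I do not anticipate a genuine obstacle: the whole argument is a short reduction of the problem to two already-analyzed subproblems plus the definition of $\DP2$. The only point requiring care is the equivalence in the second paragraph, and specifically the observation that the existence of a majority optimum forces the majority optimal outcome to be unique, which is what lets condition (ii) be phrased universally (``all majority optimal outcomes are majority optimum'') so that it falls in $\PiP2$ via \cref{lemma_not_exists_one_weak_condorcet_that_is_not strong_membership} rather than in some higher class.
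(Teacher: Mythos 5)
Your proof is correct and matches the paper's own argument: the paper decides membership in $\DP2$ by writing the problem as the intersection of the set $A$ of profiles with at least one majority optimal outcome (in $\SigmaP2$ by \cref{theo_exists_weak_condorcet_generic_membership}) and the set $B$ of profiles with no majority optimal outcome that is not also majority optimum (in $\PiP2$ by \cref{lemma_not_exists_one_weak_condorcet_that_is_not strong_membership}), which is exactly your conditions (i) and (ii). Your explicit justification of the equivalence via uniqueness of the majority optimal outcome when a majority optimum exists is the step the paper dismisses with ``clearly,'' so nothing is missing.
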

\begin{proof}
Let $\mathcal{O}$ be a combinatorial domain.
Let $A$ be the set of all the preference profiles over $\mathcal{O}$ having at least a majority optimal outcome.
Let $B$ be the set of all the preference profiles over $\mathcal{O}$ not having majority optimal outcomes that are not also majority optimum (observe that in $B$ there might be profiles with no majority optimal outcome at all, and profiles having a majority optimal outcome that is also majority optimum).
Clearly, the intersection $A \cap B$ is the set of all the preference profiles over $\mathcal{O}$ having a majority optimum outcome.
Observe that, if we focus only on \NP-representations schemes, then set $A$ can be decided in $\SigmaP2$ (see \cref{theo_exists_weak_condorcet_generic_membership}), and set $B$ can be decided in $\PiP2$ (see \cref{lemma_not_exists_one_weak_condorcet_that_is_not strong_membership}).
Therefore, deciding whether $\Profile{P}$ has a majority optimum outcome is feasible in \DP2.
\end{proof}

For a lower bound of the problem \ExistsMajorityOptimum we can exploit the construction used to show the lower bound of \IsMajorityOptimum, and obtain the following \lcnamecref{theo_exists_strong_condorcet_hardness}, which proves that on \MCPNets deciding the existence of a majority optimum outcome is \PiPh2.

\begin{theorem}\label{theo_exists_strong_condorcet_hardness}
Let $\MNet{M}$ be an \MCPNet.
Then, deciding whether $\MNet{M}$ has a majority optimum outcome is \PiPh2.
Hardness holds even on polynomially connected classes of acyclic binary \MCPNets with indegree at most three and at most three agents.
\end{theorem}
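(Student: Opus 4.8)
The plan is to observe that the construction $\MNetIsStrongCond(\cdot)$ together with the equivalence of \cref{lemma_properties_of_IsStrongCondorcet_reduction} already yields the desired lower bound, essentially for free. Recall that QBF$^{\mathit{CNF}}_{2,\forall,\lnot}$ is \SigmaPc2. Hence it suffices to exhibit a polynomial-time Karp reduction from QBF$^{\mathit{CNF}}_{2,\forall,\lnot}$ to the \emph{complement} of \ExistsMajorityOptimum, i.e., to the problem of deciding whether an \MCPNet\ \emph{does not} have a majority optimum outcome: this proves that the complement is \SigmaPh2, and therefore that \ExistsMajorityOptimum is \PiPh2.

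Concretely, given an instance $\Phi = (\exists X)(\forall Y)\lnot\phi(X,Y)$ of QBF$^{\mathit{CNF}}_{2,\forall,\lnot}$, with $\phi(X,Y)$ in $3$CNF, I would output the $3$\CPNet $\MNetIsStrongCond(\Phi)$ defined above. As already recorded when this construction was introduced, $\MNetIsStrongCond(\Phi)$ is computable in polynomial time from $\Phi$, is acyclic and binary, has indegree three, has exactly three agents, and the family ${\{\MNetIsStrongCond(\Phi)\}}_{\Phi}$ is polynomially connected, so all the structural restrictions in the statement are automatically met. By \cref{lemma_properties_of_IsStrongCondorcet_reduction}, $\Phi$ is valid if and only if $\MNetIsStrongCond(\Phi)$ does \emph{not} have a majority optimum outcome; equivalently, $\Phi$ is a ``yes''-instance of QBF$^{\mathit{CNF}}_{2,\forall,\lnot}$ if and only if $\MNetIsStrongCond(\Phi)$ is a ``no''-instance of \ExistsMajorityOptimum. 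This is exactly the required reduction, and combined with \cref{theo_exists_strong_condorcet_generic_membership} it pins the problem between \PiP2 and \DP2.

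The one point that needs attention is the direction of the complementation: because \cref{lemma_properties_of_IsStrongCondorcet_reduction} is phrased as ``$\Phi$ valid $\iff$ \emph{no} majority optimum outcome'', the reduction lands in the complementary problem, and it is this flip that turns \SigmaP2 into \PiP2; one must not read the lemma as merely giving membership-side information. Beyond this there is no genuine obstacle: all the real work --- that every outcome not of the form $\beta_{\sigma_X}$ is majority dominated by $\ol{\alpha}$, that no $\beta_{\sigma_X}$ majority dominates $\ol{\alpha}$, and that $\ol{\alpha}\PrefMajorityMNet{\MNetIsStrongCond(\Phi)}\beta_{\sigma_X}$ holds for all $\sigma_X$ precisely when no $\sigma_X$ witnesses $(\forall Y)\lnot\phi(X/\sigma_X,Y)$ --- is encapsulated in the (detailed) proof of \cref{lemma_properties_of_IsStrongCondorcet_reduction}, so at the level of this theorem the argument is a two-line corollary.
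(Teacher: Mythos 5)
Your proposal is correct and coincides with the paper's own proof: the paper likewise reduces QBF$^{\mathit{CNF}}_{2,\forall,\lnot}$ to the complement of \ExistsMajorityOptimum by mapping $\Phi$ to $\MNetIsStrongCond(\Phi)$ and invoking \cref{lemma_properties_of_IsStrongCondorcet_reduction}, with the structural restrictions following from the properties of the construction. Nothing is missing.
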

\begin{proof}
We prove the statement by showing a reduction from QBF$^{\mathit{CNF}}_{2,\forall,\lnot}$ to the complement problem to \ExistsMajorityOptimum.
Let $\Phi = (\exists X)(\forall Y)\linebreak[0]\lnot\phi(X,Y)$ be an instance of QBF$^{\mathit{CNF}}_{2,\forall,\lnot}$, and consider the $3$\CPNet $\MNetIsStrongCond(\Phi)$.
By \cref{lemma_properties_of_IsStrongCondorcet_reduction}, $\Phi$ is a ``yes''-instance of QBF$^{\mathit{CNF}}_{2,\forall,\lnot}$ if and only if $\MNetIsStrongCond(\Phi)$ has not a majority optimum outcome.
\end{proof}

By combining the two above results, we immediately conclude that deciding the existence of majority optimum outcomes over polynomially connected classes of acyclic \MCPNets is in \DP{2} and \PiPh{2}.

\begin{corollary}
Let $\mathcal{C}$ be a polynomially connected class of acyclic \MCPNets.
Let $\MNet{M}\in\mathcal{C}$ be an \MCPNet.
Then, deciding whether $\MNet{M}$ has a majority optimum outcome is \PiPh2 and in \DP2.
\end{corollary}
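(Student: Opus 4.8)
The plan is to prove $\PiPh2$-ness by exhibiting a Karp reduction from QBF$^{\mathit{CNF}}_{2,\forall,\lnot}$ to the \emph{complement} of \ExistsMajorityOptimum. Recall that QBF$^{\mathit{CNF}}_{2,\forall,\lnot}$ is complete for $\SigmaP2$, so such a reduction establishes that the complement of \ExistsMajorityOptimum is \SigmaPh2, i.e., that \ExistsMajorityOptimum itself is \PiPh2. Crucially, no new gadget is needed: I would reuse verbatim the $3$\CPNet construction $\MNetIsStrongCond(\Phi)$ already introduced (and already shown to have the required properties) for the lower bound of \IsMajorityOptimum, since \cref{lemma_properties_of_IsStrongCondorcet_reduction} already characterizes exactly when $\MNetIsStrongCond(\Phi)$ has a majority optimum outcome.

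Concretely, given an instance $\Phi = (\exists X)(\forall Y)\lnot\phi(X,Y)$ of QBF$^{\mathit{CNF}}_{2,\forall,\lnot}$, I would let the reduction output the \MCPNet $\MNetIsStrongCond(\Phi)$, which is computable in polynomial time from $\Phi$. By \cref{lemma_properties_of_IsStrongCondorcet_reduction}, $\Phi$ is valid if and only if $\MNetIsStrongCond(\Phi)$ does \emph{not} have a majority optimum outcome; equivalently, $\Phi$ is a ``yes''-instance of QBF$^{\mathit{CNF}}_{2,\forall,\lnot}$ precisely when $\MNetIsStrongCond(\Phi)$ is a ``yes''-instance of the complement of \ExistsMajorityOptimum. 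All the hard content — that every outcome not of the form $\beta_{\sigma_X}$ is majority dominated by $\ol{\alpha}$, that $\ol{\alpha}$ is therefore the unique candidate, and that $\ol{\alpha}$ is majority optimum iff no assignment $\sigma_X$ makes $(\forall Y)\lnot\phi(X/\sigma_X,Y)$ valid — is entirely absorbed into that lemma, so the argument itself reduces to invoking it.

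For the structural refinements in the statement, I would simply point out (as recorded where $\MNetIsStrongCond(\Phi)$ is defined) that $\MNetIsStrongCond(\Phi)$ is acyclic, binary, and of indegree three, that the family $\{\MNetIsStrongCond(\Phi)\}_\Phi$ is polynomially connected, and that $\MNetIsStrongCond(\Phi)=\tuple{\NetISC1,\NetISC2,\NetISC3}$ consists of exactly three \CPNets, yielding the ``at most three agents'' bound. I do not anticipate any genuine obstacle here: the only subtle point to get right is the \emph{polarity} of the reduction — because \cref{lemma_properties_of_IsStrongCondorcet_reduction} characterizes the \emph{absence} of a majority optimum, the reduction targets the complement problem, and this must be stated explicitly, lest one erroneously land on $\SigmaPh2$ instead of $\PiPh2$. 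Combined with the $\DP2$ membership from \cref{theo_exists_strong_condorcet_generic_membership}, this pins the problem between $\PiP2$ and $\DP2$.
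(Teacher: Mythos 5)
Your proposal is correct and matches the paper's own argument: the \PiPh2{}ness is obtained exactly as in \cref{theo_exists_strong_condorcet_hardness}, by reducing QBF$^{\mathit{CNF}}_{2,\forall,\lnot}$ to the complement of \ExistsMajorityOptimum via the reused gadget $\MNetIsStrongCond(\Phi)$ and \cref{lemma_properties_of_IsStrongCondorcet_reduction}, and the \DP2 upper bound is imported from \cref{theo_exists_strong_condorcet_generic_membership}. Your explicit remark about the polarity of the reduction (targeting the complement problem) is precisely the one subtle point, and you handle it as the paper does.
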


\section{Related work}\label{sec_related}

In this section, we describe the larger context of this work in the literature, especially its relationship to previous work on
compact representations of preferences, global and sequential voting, and the notion of $\mathcal{O}$-legality.

\subsection{Compact representations of preferences}\label{sec_representation_preferences}

The preferences of agents can be represented in different ways, and preference (representation) models can essentially be divided into
\emph{quantitative}  and \emph{qualitative} ones~\cite{Lang2004,Boutilier2004}.
The former preference models associate with each outcome a numerical value, which is the value of a utility function, and
preferences between outcomes are evaluated by comparing the utility values.
The latter preference models provide a (not necessarily complete) order over the outcomes, which can be represented in multiple ways, e.g.,
via a plain sequence of outcomes, or, more generally, via a binary relation over the outcomes, i.e., a set of ordered pairs of outcomes, in which the first outcome of the pair is considered preferred to the second.
The key point of qualitative preferences is that there is no precise quantification of the utility associated with the outcomes.
For this reason, it has been argued in the literature that qualitative preference are easier to be stated by humans~\cite{Boutilier2004,Brafman2009}, as it is not necessary to estimate the utility values, which can be quite challenging.
In this paper, we focus only on qualitative preference models, encoding preference relations.

A preference model is \emph{extensive}, if for any preference relation $\Ranking{R}$ that it is capable to represent, each outcome of the domain of $\Ranking{R}$ appears explicitly at least once in its representation of $\Ranking{R}$.
This implies that the size of its representation of $\Ranking{R}$  is at least linear in the number of outcomes of $\Ranking{R}$.
So, for example, a sequence of all the outcomes, or listing all the pairs of a preference relation, are both extensive representations.
By definition, the number of possible outcomes in combinatorial domains is exponential in the number of features.
Hence, the extensive representation of agents' preferences over combinatorial domains becomes quickly infeasible and unrealistic when the number of features is more than just a few.
For this reason, compact formalisms to represent combinatorial preferences are needed~\cite{Lang2002,Lang2004,Lang_handbook}.
An ideal compact representation scheme for combinatorial preferences would be one such that the space required for the representation is polynomial in the number of the features (and not in the number of the outcomes, like for extensive representations).
Clearly, for information-theoretic reasons, it is not possible to have a preference model to compactly represent \emph{any} possible preference relation over a combinatorial domain.
Indeed, for a combinatorial domain characterized by $n$ features of two values each, i.e., \emph{binary features}, the number of all possible (complete) relations is $(2^n)!$, which is $O(2^{2^n})$.
Hence, no preference model could ever represent all the possible preference relations using only polynomial space in $n$~\cite{Lang_handbook}.
Nonetheless, if we want to represent preference relations showing particular structures or patterns, then it is possible to take advantage of these patterns and ``decompose'' the relation to summarize it into a concise representation.
In the literature, there has been considerable work on exploiting the structure of preferences to appropriately decompose them, and, e.g., Lang~\cite{Lang2004} gives a survey of different logical languages for compact
preference~representation (see also~\cite{Boutilier2004} for more references).

Among the preferences' structural patterns exploited to achieve compact representations, the (conditional) preference independence of the features is one of the most studied (see~\cite{Keeney1976,Boutilier2004,Lang2004,Lang_ai_magazine,Lang_handbook,Lang2007,Xia2007,Lang2009} and references therein).
Intuitively, the (conditional) preferential independence of the features implies that the preference relation between outcomes varying on specific features can be influenced by the values of some features and can be totally independent from the values of some other features. When features are (conditionally) preferentially independent, it roughly means that portions of the structure of the preference relation are replicated throughout the preference relation.
Therefore, these patterns in the preferences can be ``factorised'' to save space in the representation.

In the literature, \emph{(conditional) ceteris paribus preference statements} have been proposed several times to 
compactly represent preferences with (conditional) preferential independencies among (sets of) features~\cite{Lang2004,Boutilier2004}.
Moreover, a preference representation scheme should capture statements that are natural for agents to assess, and (conditional) ceteris paribus preference statements have several times been argued to be intuitive for users, as they resemble the way in which humans express their preferences and act upon them~\cite{Hansson1996,Boutilier1999,Hansson2002,Boutilier2004}.

Ceteris paribus means ``all else being equal'', and ceteris paribus statements were classically defined to be non-conditional~\cite{Hansson1996}.
Given a set of features $\SetFeat{V}\subseteq\SetFeat{F}$, a non-conditional ceteris paribus preference statement sounds like:
``Outcomes varying over $\SetFeat{V}$, and all else being equal, are ranked according to the following preference relation restricted over $\SetFeat{V}$''.
For example, a non-conditional ceteris paribus preference statement is ``I prefer a round table to a square one, all else being equal''~\cite{Hansson1996}.
In this example, the set $\SetFeat{V}$ contains the feature ``shape'' (of the table).
Observe that, here, the ``all else being equal'' does not refer to the fact that the values of the other features determine the preference relation restricted over $\SetFeat{V}$. Instead, it means that the given ceteris paribus statement allows to compare outcomes varying only on $\SetFeat{V}$: outcomes varying over $\SetFeat{V}$ \emph{and} other features cannot be compared via the given statement.

Boutilier et al.~\cite{Boutilier1999,Boutilier2004} extended the idea of ceteris paribus statements to conditional ceteris paribus preference statements.
In this case, given two disjoint sets of features $\SetFeat{V},\SetFeat{Z}\subseteq\SetFeat{F}$, a conditional ceteris paribus preference statement sounds like:
``Given the specific instantiation $\gamma$ of values for the features in $\SetFeat{Z}$, outcomes varying over $\SetFeat{V}$, and all else being equal, are ranked according to the following preference relation restricted over $\SetFeat{V}$''.
For example, a conditional ceteris paribus preference statement could be ``Given that the main course of the dinner is meat, I prefer a red wine to a white one, all else being equal''.
Here, $\SetFeat{Z}$ contains the feature ``main course'', while  $\SetFeat{V}$ contains the feature ``wine''.

Among the representation schemes proposed in the literature based on (conditional) ceteris paribus statements, one using propositional logics was proposed by Lang~\cite{Lang2004}, while Boutilier et al.~\cite{Boutilier1999,Boutilier2004} proposed \CPNets; for other representations based on (conditional) ceteris paribus statements, see references in~\cite{Lang2004,Boutilier2004}.
More details on \CPNets are given in the introduction and in the preliminaries.

\subsection{Global and sequential voting} \label{sec72}

The graphical structure of \CPNets shows that, in general, combinatorial preferences may exhibit dependencies between features.
Dependencies are certainly a critical characteristic to model, especially to attain compact representations,
however, they can become troublesome when combinatorial preferences are aggregated.
Whether dependencies are actually problematic or not depends on the specific ways in which agents' votes are collected.
These specific ways in which votes are collected are called \emph{voting protocols}~\cite{Conitzer2005}.
A voting protocol characterizes how a voting rule is implemented, i.e., it determines which information is elicited from the agents, and when this is done.
In the literature, a number of different voting protocols for combinatorial vote have been considered (see, e.g.,~\cite{Lang_ai_magazine,Lang_handbook,Lacy2000} and references therein).
Two of them are global and sequential voting~\cite{Lang2007,Lang_ai_magazine,Xia2007,Lang_handbook,Lang2009,Lacy2000}.

In \emph{global voting}, agents' preferences over the \emph{entire combinatorial domain} are collected at the \emph{same time}, and then a voting rule is applied to the entire preference profile to select the winner(s).
In case of ($m$)\CPNets, global voting consists in the agents communicating their whole \CPNets for vote aggregation.
In \emph{sequential voting}, votes are collected feature-by-feature.
In particular, agents express at the \emph{same time} their preferences over the \emph{individual} features, and votes for the different features are collected \emph{sequentially} in consecutive steps.
In this case, a voting rule determines the winning value $v$ for a feature, and, most importantly, agents are informed about $v$ before the next feature is considered for voting in the protocol. 
Note that sometimes, in sequential voting, instead of voting for a sequence of single features, agents may be asked in a step of the protocol to express their preferences over the combined values (i.e., the value vectors) of a set of features (see, e.g.,~\cite{Xia2007,Airiau2011}).
Global voting is the protocol of the voting semantics in \MCPNets.

Dependencies are not an issue in global voting, because agents communicate their preferences over the entire (exponential) combinatorial domain (or the entire \CPNets, if \CPNets are used to represent the agents' preferences), and hence all the information needed for the aggregation is available.
However, global voting can be expensive to implement and evaluate, especially if extensive representations are adopted or preference relations are extensively unfolded from the compact representation before any further processing. %
Strictly speaking, from a theoretical perspective the computational complexity of aggregating preferences represented via extensive schemes can be lower than then complexity of performing the same task over compactly represented preferences. However, this computational simplification is artificial, because the computational complexity of a problem is evaluated relative to the size of the input. For extensive representations, the input is huge, and hence the computational complexity of problems over this kind of input can be low. Nevertheless, processing input of remarkable size can be computationally challenging.
The computational burden of global voting can be limited by adopting 
sequential voting.
A benefit of 
sequential voting over global voting is lowering the communication complexity, i.e., the amount of information needed to be exchanged by the agents to implement the protocol~\cite{Conitzer2005,Lang_handbook,Lang_ai_magazine}.
In sequential voting, agents are enquired in consecutive steps about their preferences for individual features, therefore the information required to be exchanged is only the preferences ``projected'' over individual features.
However, feature dependencies can be quite detrimental for 
sequential voting, to the point that sub-optimal outcomes are selected---examples of this phenomenon are called \emph{multiple election paradoxes}---or agents can experience regret after voting~\cite{Lacy2000,Lang2007,Lang2009,Xia2007,Lang_ai_magazine,Lang_handbook}.
Intuitively, 
paradoxes may occur in sequential voting, because votes over the different features are collected separately, and this can clash with the different individual preferential dependencies that agents may have between features---more specifically, it might be the case that agents have to vote on a feature whose preferences depend on features for which it has not been voted yet~\cite{Brams1998}. 

Lacy and Niou~\cite{Lacy2000} showed that these paradoxes in sequential voting can be (partly) avoided if the considered preferences are separable, i.e., they do not have dependencies among features.
Intuitively, when represented via \CPNets, combinatorial preferences without dependencies among features do not have any edge between vertices.
Clearly, this is a very strong assumption, and it is unlikely to be met in practice~\cite{Lang2007,Xia2007,Xia2007a,Lang2009}.

\subsection{Overcoming multiple election paradoxes via \texorpdfstring{$\mathcal{O}$}{O}-legality}\label{sec73}

To overcome the strong limitation imposed by preference separability, Lang~\cite{Lang2007} proposed and investigated a weaker structural restriction of preferences, called $\mathcal{O}$-legality, which preserves the nice properties of sequential voting (when evaluated over separable preferences) on a wider class of combinatorial preferences.
Intuitively, a preference profile is $\mathcal{O}$-legal, if the dependencies among the features for all the agents comply with a common sequence.
More formally, if $\mathcal{O}=(F_1,\dots,F_m)$ is a sequence of all the features of the combinatorial domain, a preference profile $\mathcal{P}$ is \emph{$\mathcal{O}$-legal}, if for any agent $A$ and any two features $F_i$ and $F_j$, if $F_i$ precedes $F_j$ in $\mathcal{O}$, then $A$'s preferences for $F_i$ do not depend on $F_j$'s value~\cite{Xia2007a}.
The concept of $\mathcal{O}$-legality has an immediate translation over \CPNets.
Indeed, a profile of (preferences represented via) \CPNets is $\mathcal{O}$-legal, if $\mathcal{O}$ is a topological order for all the \CPNets' graphs of the profile.
Observe that the existence of a topological order for the features of a \CPNet imposes that the graph of the \CPNet is acyclic.

The intuitiveness of \CPNets to model combinatorial preferences, together with the convenient characterization of $\mathcal{O}$-legality in \CPNets, has largely encouraged the study of sequential voting in $\mathcal{O}$-legal (and acyclic) \CPNets.
In a first group of works, the sequential composition of the voting rule assumed that a feature order was given beforehand (and the voting rule was defined upon the given order)~\cite{Lang2007,Xia2007,Lang2009}.
Next, this idea was generalized by assuming that the sequential voting rule was not defined given the specific order~\cite{Xia2007a}, however, the existence of a shared topological order among the features was anyway required.
A study on how well solutions computed via sequential voting approximate the winning outcomes obtained via global voting was presented in \cite{Conitzer2012}.
Also various other works considered $\mathcal{O}$-legal \CPNets (see, e.g.,~\cite{Loreggia2018,Mattei2013,Maran2013,Cornelio2015,Grandi2014}). Among them, an interesting approach to preference aggregation over $\mathcal{O}$-legal \CPNets was proposed in \cite{Cornelio2015}, where ``probabilistic'' \CPNets were used to represent the result of the aggregation.

\subsection{Going beyond \texorpdfstring{$\mathcal{O}$}{O}-legality}
However, also $\mathcal{O}$-legality is somewhat demanding, because it imposes that there are no ``inversions'' in the preference dependencies.
For example, if in a profile of \CPNets encoding preferences for a dinner there were an agent whose choice of the main dish influences the choice of the wine and an other agent whose choice of the wine influences the choice of the main dish, then those \CPNets would not be $\mathcal{O}$-legal.
(Observe that the profile of \CPNets in \cref{fig:dinner_aggregation} is not $\mathcal{O}$-legal).
Hence, also assuming $\mathcal{O}$-legality is in the end quite restrictive~\cite{Xia2008,Li2011,Sikdar2017a}.
To go beyond the restrictions imposed by $\mathcal{O}$-legality, different approaches were proposed.

One of these approaches is generalizing the idea of sequential voting.
A voting agenda specifies the order in which the features have to be considered in sequential voting.
If the voting agenda does not clash with any of the feature dependencies of all the agents, then it is possible to avoid multiple election paradoxes.
In particular, if $\mathcal{O}$ is a shared topological order of the features of the \CPNets in a profile, then $\mathcal{O}$ is a voting agenda compatible with the feature dependencies of all the agents.
A generalization of this idea is to have a sequence of elections for sets of joint features that cannot be decomposed due to preferential dependencies of some of the agents.
In this way, the problem can be shifted to deciding suitable generalized voting agendas that do not clash with the agents' feature dependencies~\cite{Airiau2011}.

Another approach to overcome the limitations imposed by $\mathcal{O}$-legality is  hypercubewise preference aggregation~\cite{Xia2008}. 
This family of voting rules decomposes the preference aggregation task into two phases:
first, a (hypercubewise) dominance graph is built by applying local voting rules to set of outcomes differing only for the value of a single specific feature %
(these outcomes are the neighboring vertices of the extended preference graph of \CPNets, which, in case of binary features, constitute a hypercube---this is where the name of the voting rule comes from);
and then the winners are chosen from the hypercubewise dominance graph via choice sets functions, which may select dominating or undominated outcomes in the graph.
The idea of the hypercubewise aggregation is at the base of the definition of the hypercubewise Condorcet winner (i.e., the hypercubewise majority winner, which is the outcome majority dominating all its neighbors)~\cite{Xia2008,Conitzer2011}, also called the local Condercet winner, in which the hypercubewise dominance graph is obtained via majority voting.
The definition of the hypercubewise Condorcet winner is different from the definition of the standard, or global, Condorcet winner (the majority optimum outcome, in this paper) obtained from global voting~\cite{Xia2008,Conitzer2011}.
Also the dominance relation inferred from the hypercubewise dominance graph is different from the standard majority dominance relation obtained via global voting~\cite{Li2011}.
In~\cite{Li2011}, also other relations between local and global Condorcet winners are investigated.
It was shown that deciding the existence of hypercubewise Condorcet winners is \NPc~\cite{Conitzer2011,Li2011}.

A third approach proposes, similarly to the previous, to define new voting rules that can be applied also over profiles of non $\mathcal{O}$-legal \CPNets.
These new rules select the outcomes minimizing the total value of a loss function evaluated over the profile of \CPNets~\cite{Sikdar2017,Sikdar2017a}.

\subsection{Analysis of global voting over CP-nets}
Although the proposed approaches can deal with \CPNet profiles that are not $\mathcal{O}$-legal, they do not address the complexity analysis of global voting over (not necessarily $\mathcal{O}$-legal) acyclic \CPNets.
In fact, global voting over non-$\mathcal{O}$-legal acyclic \CPNets has not received as much attention as sequential voting, although it was explicitly stated in the literature that a theoretical comparison between global and sequential voting would have been highly promising~\cite{Lang2007}.

The first work studying global voting over (not necessarily $\mathcal{O}$-legal) acyclic \CPNets was~\cite{Rossi2004}, in which \MCPNets were defined.
Recall that the group dominance semantics of \MCPNets is global voting over a profile of \CPNets.
Voting schemes over \MCPNets were considered from an algorithmic perspective in~\cite{Rossi2004}, which gave a computational insight for global voting over \CPNets.
However, most of the algorithms considered in~\cite{Rossi2004} were brute-force.
Therefore, these algorithms gave only \ExpTime upper bounds for most of the global voting tasks over \CPNets, and no hardness result was provided in~\cite{Rossi2004}.

Algorithms exploiting SAT solvers to compute Pareto and majority optimal outcomes according to global voting over profiles of (not necessarily $\mathcal{O}$-legal) acyclic \CPNets were proposed in \cite{Li2010a} and \cite{Li2010}, respectively.
A SAT solver is used in \cite{Li2011} to compute, over profiles of even cyclic (and therefore also non $\mathcal{O}$-legal) \CPNets, majority optimal and majority optimum outcomes according to global voting, starting from hypercubewise weak Condorcet winners.
The approach of \cite{Li2011} was subsequently extended in \cite{Li2015} to consider also the possibility of multi-valued and incomplete \CPNets.

Despite the mentioned works advanced the study of global voting over (not necessarily $\mathcal{O}$-legal) acyclic \CPNets, they still did not provide precise complexity results.
As mentioned in the introduction, the precise complexity of these problems was actually reported as an open problem multiple times in the literature~\cite{Lang2007,Li2010,Li2010a,Li2011,Li2015,Sikdar2017}.
Our work is the first in the literature tackling directly the complexity analysis of dominance in \MCPNets (and hence the complexity of global voting over \CPNets).

\section{Conclusion}\label{sec_concl_future}
In this paper, we have carried out a thorough complexity analysis of the Pareto and majority semantics in \MCPNets.
Given the specific definitions of  group dominance in \MCPNets, these results characterizes also the complexity of Pareto and majority global voting over \CPNets, which was missing and asked for in the literature various times.
Unlike what is often assumed in the literature, in this work, we have not restricted the profiles of \CPNets to be $\mathcal{O}$-legal, which makes the results achieved here more general.
For the Pareto and the majority voting schemes, we have analyzed the problems of deciding dominance, optimal and optimum outcomes, and the existence of optimal and optimum outcomes.
We have shown completeness results for most cases, which means that we have provided tight lower bounds for problems that (up to date) did not have any explicit lower bound transcending the obvious hardness due to the dominance test over the underlying \CPNets.
Our hardness results are given for polynomially connected classes of binary acyclic ($m$)\CPNets.
This means that our hardness results extend to classes of ($m$)\CPNets encompassing the \CPNets here considered, and in particular also to general \MCPNets with partial \CPNets or multi-valued features.
The various problems analyzed here have been put at various levels of the polynomial hierarchy, and some of them are even tractable (in \PTIME or \LogSpace), which is quite interesting given that for most of these tasks only \ExpTime upper-bounds were known in the literature.

There are various possible directions for further research.
The lower bound for the problem of deciding the existence of majority optimum outcomes does not match the upper bound (\PiPh{2}{}ness and membership in \DP{2}, respectively).
Hence, it would be interesting to close this gap and find the precise complexity of the problem.
Furthermore, characterizing the complexity of preference aggregation when partial \CPNets are allowed to be part of \MCPNets would be interesting, since with (standard) \CPNets, indifference between outcomes are not allowed.
Having constraints on outcomes's feasibility is another interesting direction of investigation.
Without any constraint, \CPNets model agents' preferences when it is assumed that all outcomes are attainable.
However, this is not always the case.
During the aggregation precess, we should take into account what outcomes are feasible.
For example, to decide whether an outcome is majority dominated by another, we should check that the latter is actually feasible.
It will be interesting studying the case in which constraints are issued over the outcome domain prior the preference aggregation and the case in which constraints are considered after the aggregation.
A similar idea characterized the solution concepts in NTU cooperative games defined via constraints~\cite{constrained_games-conferenza,constrained_games-rivista}.
This approach could be merged with the definition of constrained \CPNets~\cite{Boutilier2004a,Prestwich2005},
Finally, it will also be interesting investigating structural restrictions on the structure of \CPNets, in the spirit of what was done in~\cite{bargset_kernel_nucleolus-conferenza,core_bargset_kernel-rivista,core_coalition_structures-conferenza,Ieong2005,Brafman2010a}, to identify broader classes of \CPNets where the dominance test is tractable, whereas, in general, over acyclic \CPNets the dominance test is \NPh.

\section*{Acknowledgments}
This work was supported by the UK EPSRC grants EP/J008346/1, EP/L012138/1, and EP/M025268/1, and by the Alan Turing Institute under the EPSRC grant EP/N510129/1.

\appendix

\section{Detailed proofs}\label{sec_detailed_proofs}

\subsection{Proofs for Section~\ref{sec_complexity_cpnets}}
\phantomsection\label{page_pointer:Pref_Net_CNF_new_detailed}

\smallskip
\noindent\textbf{\cref{lemma_Pref_Net_CNF_new}.}
\emph{%
Let $\phi(X)$ be a Boolean formula in $3$CNF defined over a set $X$ of Boolean variables, and let $\sigma_{X}$ be an assignment on $X$.
Let $\alpha_{\sigma_X}$ be the outcome of $\NetCNF(\phi)$ encoding $\sigma_X$ on the feature set $\SetFeat{V}$, and assigning non-overlined values to all other features, and let $\ol{\beta}$ be the outcome assigning overlined values to all and only variable and clause features.
Then:
\begin{itemize}
\item[(1)] There is an extension of $\sigma_{X}$ to $X$ satisfying $\phi(X)$ if and only if $\ol{\beta}\PrefNet{\NetCNF(\phi)}\alpha_{\sigma_X}$.
\item[(2)] There is no extension of $\sigma_{X}$ to $X$ satisfying $\phi(X)$ if and only if  $\ol{\beta}\IncompNet{\NetCNF(\phi)}\alpha_{\sigma_X}$.
\end{itemize}
}% end emph

\begin{proof}
We first  prove~(1).
\begin{itemize}
\item[$(\Rightarrow)$]
Assume that there is an extension $\sigma_{X}'$ of $\sigma_X$ to $X$ satisfying $\phi(X)$.
To prove that $\ol{\beta}\PrefNet{\NetCNF(\phi)}\alpha_{\sigma_X}$ we show that there is an improving flipping sequence from $\alpha_{\sigma_X}$ to $\ol{\beta}$.

Recall that $\sigma_{X}'$ is a complete assignment over $X$, and that if $\sigma_X$ is complete, then $\sigma_X' = \sigma_X$.
For each variable $x_i$ not defined in $\sigma_X$, if $\sigma_{X}'[x_i]=\valtrue$, then we flip feature $V_i^T$ from $v_i^T$ to $\ol{v_i^T}$, analogously if $\sigma_{X}'[x_i]=\valfalse$, then we flip feature $V_i^F$ from $v_i^F$ to $\ol{v_i^F}$.

For all literals $\ell_{j,k}$ evaluating to \valtrue in $\sigma_{X}'$, we flip the corresponding literal features $P_{j,k}$ from $p_{j,k}$ to $\ol{p_{j,k}}$.

Since $\sigma_{X}'$ is a satisfying assignment, for each clause $c_j$ of $\phi$, there is at least a literal $\ell_{j,k}$ evaluating to \valtrue in $\sigma_{X}'$.
For this reason, given any clause feature $D_j$, at this point of the flipping sequence, there is at least one literal feature $P_{j,k}$ with value $\ol{p_{j,k}}$, and hence we can flip $D_j$ from $d_j$ to $\ol{d_j}$.
We can do this for every clause feature.

Then, we flip to their overlined value all variable features that have not been flipped until now.
By the definition of the CP tables of literal features, we can flip all features $P_{j,k}$ having value $\ol{p_{j,k}}$ to $p_{j,k}$, because in the outcome having been built so far through the flips shown above, for all pairs of features $(V_i^T,V_i^F)$, their values are $\ol{v_i^T}\ol{v_i^F}$.

To conclude, observe that the obtained outcome is exactly $\ol{\beta}$, and hence $\ol{\beta}\PrefNet{\NetCNF(\phi)}\alpha_{\sigma_X}$.

\item[$(\Leftarrow)$]
Assume that $\ol{\beta}\PrefNet{\NetCNF(\phi)}\alpha_{\sigma_X}$.
We show that there is an extension $\sigma_{X}'$ of $\sigma_X$ to $X$ satisfying $\phi(X)$.
Since $\ol{\beta}\PrefNet{\NetCNF(\phi)}\alpha_{\sigma_X}$, there is an improving flipping sequence $\rho\colon\gamma_0\ImpFlip\dots\ImpFlip\gamma_z$ from $\gamma_0=\alpha_{\sigma_X}$ to $\gamma_z=\ol{\beta}$.
Consider the truth assignment~$\sigma_{X}'$ built as follows:
If there is an index $q$ such that $\gamma_q[V_i^T V_i^F]=\ol{v_i^T}v_i^F$, then $\sigma_X'[x_i]=\valtrue$; and if there is an index $q$ such that $\gamma_q[V_i^T V_i^F]={v_i^T}\ol{v_i^F}$, then $\sigma_X'[x_i]=\valfalse$.

We first show that $\sigma_X'$ is consistent, complete, and an extension of $\sigma_X$ to $X$.
Observe that, by the definition of the CP tables of variable features, and the fact that those features have no parents, once a variable feature is flipped to its overlined value, it cannot be flipped back.
Therefore, it cannot be the case that there are indices $q$ and $r$ for which there are variable features $V_i^T$ and $V_i^F$ such that $\gamma_q[V_i^T V_i^F]=\ol{v_i^T}v_i^F$ and $\gamma_r[V_i^T V_i^F]={v_i^T}\ol{v_i^F}$.
Hence, $\sigma_X'$ is consistent.
Moreover, we claim that, for any variable $x_i$, there is always an index $q$ such that $\gamma_q[V_i^T V_i^F]=\ol{v_i^T}v_i^F$ or $\gamma_q[V_i^T V_i^F]={v_i^T}\ol{v_i^F}$, which implies that $\sigma_X'$ is complete.
Indeed, if $x_i$ has a value in $\sigma_X$, then the index $q$ that we are looking for is $q = 0$ (because either $\gamma_0[V_i^T V_i^F]=\ol{v_i^T}v_i^F$ or $\gamma_0[V_i^T V_i^F]={v_i^T}\ol{v_i^F}$, by the definition of $\gamma_0 = \alpha_{\sigma_X}$).
Observe that this, along with the consistency of $\sigma_X'$ proven above, implies that $\sigma_X'[x_i] = \sigma_X[x_i]$ for each variable $x_i$ having a truth value in $\sigma_X$.
On the other hand, if $x_i$ has not a value in $\sigma_X$, then, since $\gamma_0[V_i^T V_i^F]={v_i^T}v_i^F$ and $\gamma_z[V_i^T V_i^F]=\ol{v_i^T}\ol{v_i^F}$, it must be the case that there is an index $q > 0$ such that $\gamma_q[V_i^T V_i^F]=\ol{v_i^T}v_i^F$ or $\gamma_q[V_i^T V_i^F]={v_i^T}\ol{v_i^F}$.
Hence, $\sigma_X'$ is complete.
To conclude, by all the properties above, $\sigma_X'$ is also an extension of $\sigma_X$ to $X$.

We now prove that $\sigma_X'$ satisfies $\phi(X)$ by showing that $\sigma_X'$ satisfies all clauses of $\phi(X)$.

Let $c_j$ be a clause of $\phi(X)$, and consider clause feature $D_j$.
Because $\gamma_0[D_j] = d_j$ and $\gamma_z[D_j]=\ol{d_j}$, there is an index $t$ such that $\gamma_t[D_j]=d_j$, $\gamma_{t+1}[D_j]=\ol{d_j}$, and $\gamma_t\ImpFlipVar{D_j}\gamma_{t+1}$.
Since flipping $D_j$ has to be an improving flip, it must be the case that there is a literal feature $P_{j,k}$ such that $\gamma_t[P_{j,k}]=\ol{p_{j,k}}$.
Therefore, since $\gamma_0[P_{j,k}] = p_{j,k}$, there is an index $s<t$ such that $\gamma_s[P_{j,k}]=p_{j,k}$, $\gamma_{s+1}[P_{j,k}]=\ol{p_{j,k}}$, and $\gamma_s\ImpFlipVar{P_{j,k}}\gamma_{s+1}$.
Now there are two cases: either (a) $\ell_{j,k}=x_i$, or (b) $\ell_{j,k}=\lnot x_i$.
For (a), since flipping $P_{j,k}$ has to be an improving flip, it must be the case that $\gamma_s[V_i^T V_i^F]=\ol{v_i^T}v_i^F$, therefore $\sigma_X'[x_i]=\valtrue$, and hence $\sigma_X'$ satisfies $c_j$.
For (b), again since flipping $P_{j,k}$ has to be an improving flip, it must be the case that $\gamma_s[V_i^T V_i^F]={v_i^T}\ol{v_i^F}$, therefore $\sigma_X'[x_i]=\valfalse$ and hence $\sigma_X'$ satisfies $c_j$.

Therefore, $\sigma_X'$ satisfies all clauses of $\phi$, and hence $\sigma_X'$ is an extension of $\sigma_X$ to $X$ satisfying $\phi(X)$.
\end{itemize}

We now prove~(2).
We know that $\ol{\beta}\IncompNet{\NetCNF(\phi)}\alpha_{\sigma_X}$ if and only if $\alpha_{\sigma_X}\not\PrefNet{\NetCNF(\phi)}\ol{\beta}$ \emph{and} $\ol{\beta}\not\PrefNet{\NetCNF(\phi)}\alpha_{\sigma_X}$.
First, observe that $\alpha_{\sigma_X}\not\PrefNet{\NetCNF(\phi)}\ol{\beta}$ is always true, as there is no improving flipping sequence from $\ol{\beta}$ to $\alpha_{\sigma_X}$.
Indeed, the values of variable features in $\ol{\beta}$ cannot be flipped to their non-overlined values according to their CP tables in $\NetCNF(\phi)$, because they are the most preferred values and variable features do not have parents.
So, $\ol{\beta}\IncompNet{\NetCNF(\phi)}\alpha_{\sigma_X}$ if and only if $\ol{\beta}\not\PrefNet{\NetCNF(\phi)}\alpha_{\sigma_X}$.
Hence, showing that there is \emph{no} extension of $\sigma_X$ to $X$ satisfying $\phi(X)$ if and only if $\ol{\beta}\IncompNet{\NetCNF(\phi)}\alpha_{\sigma_X}$ is equivalent to showing that there is \emph{no} extension of $\sigma_X$ to $X$ satisfying $\phi(X)$ if and only if $\ol{\beta}\not\PrefNet{\NetCNF(\phi)}\alpha_{\sigma_X}$.
However, we have already shown this in~(1).
\end{proof}

\subsection{Proofs for Section~\ref{sec_complexity_pareto_voting}}
\phantomsection\label{page_pointer:properties_of_IsParetoOptimal_reduction_detailed}

\smallskip
\noindent\textbf{\cref{lemma_properties_of_IsParetoOptimal_reduction}.}
\emph{Let $\phi(X)$ be a $3$CNF Boolean formula, and let $\alpha$ be the outcome of $\MNetIsParOpt(\phi)$ assigning non-overlined values to all features.
Then, $\phi(X)$ is satisfiable if and only if $\alpha$ is not Pareto optimal in $\MNetIsParOpt(\phi)$.}
\begin{proof}
To prove the statement of the lemma, we first show the two following properties.

\medbreak

\noindent\textbf{Property~\ref*{lemma_properties_of_IsParetoOptimal_reduction}.(1).} \emph{If $\phi(X)$ is unsatisfiable, and $\beta\in\OutMNet{\MNetIsParOpt(\phi)}$ is an outcome such that $\beta\neq\alpha$, then $\beta\PrefNet{\NetIPO1}\alpha$ implies that $\beta\not\PrefNet{\NetIPO2}\alpha$, and $\beta\PrefNet{\NetIPO2}\alpha$ implies that $\beta\not\PrefNet{\NetIPO1}\alpha$.}

\begin{adjustwidth}{1.5em}{}
\begin{proof}
By inspection of the proof of \cref{lemma_Pref_Net_CNF_new}, since $\phi$ is unsatisfiable, there is no improving flipping sequence in $\NetIPO1$ that from $\alpha$ arrives to an outcome in which the values of \emph{all} the clause features of the net $\NetCNF(\phi)^a$ are overlined.
For this reason, by the definition of the interconnecting net $\NetInterAND(m)$, in $\NetIPO1$ there is no improving flipping sequence that from $\alpha$ arrives to an outcome in which the value of the apex of $\NetInterAND(m)$ is overlined.
This implies, moreover, that in $\NetIPO1$ there is no improving flipping sequence that from $\alpha$ arrives to an outcome in which the value of any of the features of the net $\NetCNF(\phi)^b$ is overlined.
So, any improving flipping sequence in $\NetIPO1$ from $\alpha$ arrives to outcomes in which values of features of $\NetCNF(\phi)^a$ are overlined, while values of feature of $\NetCNF(\phi)^b$ are \emph{non}-overlined.

Symmetrically, since $\phi$ is unsatisfiable, any improving flipping sequence in $\NetIPO2$ from $\alpha$ arrives to outcomes in which values of features of $\NetCNF(\phi)^a$ are \emph{non}-overlined, while values of feature of $\NetCNF(\phi)^b$ are overlined.

Now, assume that $\beta$ is such that $\beta\PrefNet{\NetIPO1}\alpha$, hence in $\NetIPO1$ there is an improving flipping sequence from $\alpha$ to $\beta$.
From what we have said, $\beta$ is such that values of features of $\NetCNF(\phi)^a$ are overlined, while values of feature of $\NetCNF(\phi)^b$ are \emph{non}-overlined.
Therefore, $\beta$ cannot be reached through an improving flipping sequence in $\NetIPO2$.
Thus, $\beta\not\PrefNet{\NetIPO2}\alpha$.

Symmetrically, it can be shown that if $\beta\PrefNet{\NetIPO2}\alpha$, then $\beta\not\PrefNet{\NetIPO1}\alpha$.
\end{proof}
\end{adjustwidth}

\medbreak

\noindent\textbf{Property~\ref*{lemma_properties_of_IsParetoOptimal_reduction}.(2).} \emph{If $\phi(X)$ is satisfiable, then outcome $\beta$ assigning overlined values to all variable and clause features of $\MNetIsParOpt(\phi)$ and to all features of $\NetInterAND(m)$ is such that $\beta\PrefNet{\NetIPO1}\alpha$ and $\beta\PrefNet{\NetIPO2}\alpha$.}

\begin{adjustwidth}{1.5em}{}
\begin{proof}
Since $\phi$ is satisfiable, by inspection of the proof of \cref{lemma_Pref_Net_CNF_new}, there is an improving flipping sequence in $\NetIPO1$ that from $\alpha$ arrives to an outcome $\alpha_1$ in which the values of \emph{all} variable and clause features of $\NetCNF(\phi)^a$ are overlined.
By definition of the interconnecting net $\NetInterAND(m)$, in $\NetIPO1$ there is also an improving flipping sequence that from $\alpha_1$ arrives to an outcome $\alpha_2$ in which the values of all features (and also of the apex) of $\NetInterAND(m)$ are overlined.
This allows $\alpha_2$ to be further improved by a flipping sequence to an outcome $\alpha_3$ in which the values of \emph{all} variable and clause features of $\NetCNF(\phi)^b$ are overlined, because $\phi$ is satisfiable.
Observe that $\alpha_3 = \beta$.

Symmetrically, it can be shown that in $\NetIPO2$ there exists an improving flipping sequence from $\alpha$ to the very same~$\beta$.
Therefore, $\beta\PrefNet{\NetIPO1}\alpha$ and $\beta\PrefNet{\NetIPO2}\alpha$.
\end{proof}
\end{adjustwidth}

\medbreak

\noindent
We now show that $\phi(X)$ is satisfiable if and only if $\alpha$ is not a Pareto optimal outcome of $\MNetIsParOpt(\phi)$.

\begin{itemize}
\item[$(\Rightarrow)$]
Assume that $\phi(X)$ is satisfiable.
Then, by Property~\ref*{lemma_properties_of_IsParetoOptimal_reduction}.(2), there is an outcome $\beta$ that is preferred to $\alpha$ by all agents of $\MNetIsParOpt(\phi)$.
Therefore, $\beta\PrefParetoMNet{\MNetIsParOpt(\phi)}\alpha$, and hence $\alpha$ is not Pareto optimal in $\MNetIsParOpt(\phi)$.

\item[$(\Leftarrow)$]
Assume that $\phi(X)$ is not satisfiable.
Then, by Property~\ref*{lemma_properties_of_IsParetoOptimal_reduction}.(1), there is no outcome $\beta$ that is preferred to $\alpha$ by all agents of $\MNetIsParOpt(\phi)$.
Therefore, $\alpha$ is Pareto optimal in $\MNetIsParOpt(\phi)$.\qedhere
\end{itemize}
\end{proof}

\subsection{Proofs for Section \ref{sec_complexity_majority_voting}}
\phantomsection\label{page_pointer:Pref_Net_CNF_Summ_new_detailed}

\noindent\textbf{\cref{lemma_Pref_Net_CNF_Summ_new}.}
\emph{Let $\phi(X)$ be a Boolean formula in $3$CNF defined over a set $X$ of Boolean variables, and let $\sigma_X$ be an assignment on $X$.
	Let $\alpha_{\sigma_X}$ be the outcome of $\NetCNFSumm(\phi)$ encoding $\sigma_X$ on the feature set $\SetFeat{V}$, and assigning non-overlined values to all other features.
	Let $\ol{\beta}$ be an outcome of $\NetCNFSumm(\phi)$ such that $\ol{\beta}[U_1 U_2] = \ol{u_1}\ol{u_2}$, assigning any value to the features of $\SetFeat{V}$, and assigning non-overlined values to all other features.
	Then:
	\begin{itemize}
		\item[(1)] There is an extension of $\sigma_X$ to $X$ satisfying $\phi(X)$ if and only if $\ol{\beta}\PrefNet{\NetCNFSumm(\phi)}\alpha_{\sigma_X}$;
		\item[(2)] There is no extension of $\sigma_X$ to $X$ satisfying $\phi(X)$ if and only if $\ol{\beta}\IncompNet{\NetCNFSumm(\phi)}\alpha_{\sigma_X}$.
	\end{itemize}}
	
	\begin{proof}
		We first prove~(1).
		\begin{itemize}
			\item[$(\Rightarrow)$]
			Assume that there is an extension $\sigma_X'$ of $\sigma_X$ to $X$ satisfying $\phi(X)$.
			We show that $\ol{\beta}\PrefNet{\NetCNFSumm(\phi)}\alpha_{\sigma_X}$ by exhibiting an improving flipping sequence from $\alpha_{\sigma_X}$ to $\ol{\beta}$.
			
			We claim that, by \cref{lemma_Pref_Net_CNF_new}, outcome $\beta$ (which is different from $\ol{\beta}$) assigning overlined values to all variable and clause features is such that $\beta\PrefNet{\NetCNFSumm(\phi)}\alpha_{\sigma_X}$.
			Indeed, features in $\SetFeat{V}\cup\SetFeat{P}\cup\SetFeat{D}$ are linked in $\NetCNFSumm(\phi)$ through the very same links of a net $\NetCNF(\phi)$.
			Moreover, the value $u_1$ assigned to feature $U_1$ in $\alpha_{\sigma_X}$ selects in the CP tables of features $\SetFeat{V}\cup\SetFeat{P}\cup\SetFeat{D}$ specific preference rankings that are equivalent to those in the CP tables of features $\SetFeat{V}\cup\SetFeat{P}\cup\SetFeat{D}$ in $\NetCNF(\phi)$.
			Now, since $\beta\PrefNet{\NetCNFSumm(\phi)}\alpha_{\sigma_X}$, there is an improving flipping sequence from $\alpha_{\sigma_X}$ to $\beta$.
			Then, we can flip the values of all features of the interconnecting net $\NetInterAND(m)$ (including the apex) and of $U_2$.
			Now we flip the value of feature $U_1$ from $u_1$ to $\ol{u_1}$.
			Recall that in $\beta$, all variable features have overlined values, and their values have not been flipped after outcome $\beta$ was reached in the improving flipping sequence.
			Therefore, since the value of $U_1$ is $\ol{u_1}$, given the CP table of variable features, we can flip features in $\SetFeat{V}$ to any configuration of values (even leaving everything as it is), and in particular we can flip them to match the values of features of $\SetFeat{V}$ in $\ol{\beta}$ .
			Next, all literal features can be flipped to their non-overlined values (recall that $U_1$ has value $\ol{u_1}$ now).
			Then, we flip all clause features to their non-overlined values, and after this, in the proper order, we can flip the features of the interconnecting net to their non-overlined values.
			Observe that the obtained outcome is precisely $\ol{\beta}$, and hence $\ol{\beta}\PrefNet{\NetCNFSumm(\phi)}\alpha_{\sigma_X}$.
			
			\item[$(\Leftarrow)$]
			Assume that $\ol{\beta}\PrefNet{\NetCNFSumm(\phi)}\alpha_{\sigma_X}$.
			We show that there is an extension $\sigma_X'$ of $\sigma_X$ to $X$ satisfying $\phi(X)$.
			Since $\ol{\beta}\PrefNet{\NetCNFSumm(\phi)}\alpha_{\sigma_X}$, there is an improving flipping sequence $\rho\colon\gamma_0\ImpFlip\dots\ImpFlip\gamma_z$ from $\gamma_0=\alpha_{\sigma_X}$ to $\gamma_z=\ol{\beta}$.
			
			Since $\gamma_0[U_1] = u_1$ and $\gamma_z[U_1] = \ol{u_1}$, there must be an index $s$ in which $\gamma_s[U_1]=u_1$, $\gamma_{s+1}[U_1]=\ol{u_1}$, and $\gamma_s\ImpFlipVar{U_1}\gamma_{s+1}$.
			Moreover, because $U_1$ has no parents, in the sequence $\rho$, feature $U_1$ can be flipped only once.
			Therefore, for all $p < s$, $\alpha_p[U_1] = u_1$, and for all $p > s$, $\alpha_p[U_1] = \ol{u_1}$.
			
			We claim that we can assume \Wlog that all variable features have overlined values in $\gamma_s$.
			Indeed, if this is not the case, we can always modify as follows an improving flipping sequence $\rho$, from $\alpha_{\sigma_X}$ to $\ol{\beta}$, to obtain an improving flipping sequence $\rho'$, from $\alpha_{\sigma_X}$ to $\ol{\beta}$, satisfying the required assumption.
			In particular, consider all variable features $F$ having non-overlined values just before feature $U_1$ is flipped.
			We can flip all of them before flipping $U_1$ to $\ol{u_1}$, and, after having flipped $U_1$, we can flip them all back to the values they had before.
			Clearly, the new sequence satisfies the required assumption, and, moreover, it is still improving, and it is still a sequence from $\alpha_{\sigma_X}$ to $\ol{\beta}$.
			
			Consider the truth assignment $\sigma_X'$ built as follows:
			If there is an index $p < s$ such that $\gamma_p[V_i^T V_i^F]=\ol{v_i^T}v_i^F$, then $\sigma_X'[x_i]=\valtrue$; if there is an index $p < s$ such that $\gamma_p[V_i^T V_i^F]={v_i^T}\ol{v_i^F}$, then $\sigma_X'[x_i]=\valfalse$.
			
			We first show that $\sigma_X'$ is consistent, complete, and an extension of $\sigma_X$ to $X$.
			Observe that, by the definition of the CP tables of variable features, before the $s$-th step of the sequence, once a variable feature is flipped to its overlined value, it cannot be flipped back (this may happen only after the $s$-th step).
			Therefore, it cannot be the case that there are indices $p < s$ and $q < s$ for which there are variable features $V_i^T$ and $V_i^F$ such that $\gamma_p[V_i^T V_i^F]=\ol{v_i^T}v_i^F$ and $\gamma_q[V_i^T V_i^F]={v_i^T}\ol{v_i^F}$.
			Hence, $\sigma_X'$ is consistent.
			Moreover, we claim that, for any variable $x_i$, there is always an index $p < s$ such that $\gamma_p[V_i^T V_i^F]=\ol{v_i^T}v_i^F$ or $\gamma_p[V_i^T V_i^F]={v_i^T}\ol{v_i^F}$, which implies that $\sigma_X'$ is complete.
			Indeed, if $x_i$ has a value in $\sigma_X$, then the index $p$ that we are looking for is $p = 0$ (because either $\gamma_0[V_i^T V_i^F]=\ol{v_i^T}v_i^F$ or $\gamma_0[V_i^T V_i^F]={v_i^T}\ol{v_i^F}$, by the definition of $\gamma_0 = \alpha_{\sigma_X}$).
			Observe that this and the consistency of $\sigma_X'$ proven above imply that $\sigma_X'[x_i] = \sigma_x[x_i]$ for each variable $x_i$ having a truth value in $\sigma_X$.
			On the other hand, if $x_i$ has not a value in $\sigma_X$, then, since $\gamma_0[V_i^T V_i^F]={v_i^T}v_i^F$, and we are assuming that $\gamma_s[V_i^T V_i^F]=\ol{v_i^T}\ol{v_i^F}$, it must be the case that there is an index $p < s$ such that $\gamma_p[V_i^T V_i^F]=\ol{v_i^T}v_i^F$ or $\gamma_p[V_i^T V_i^F]={v_i^T}\ol{v_i^F}$.
			Hence, $\sigma_X'$ is complete.
			To conclude, by all the properties above, $\sigma_X'$ is an extension of $\sigma_X$ to $X$.
			
			We now show that $\sigma_X'$ satisfies $\phi(X)$ by showing that $\sigma_X'$ satisfies all the clauses of $\phi(X)$.
			
			Because $\gamma_0[U_2] = u_2$ and $\gamma_z[U_2]=\ol{u_2}$, there must be an index $t$ such that $\gamma_t[U_2]=u_2$, $\gamma_{t+1}[U_2]=\ol{u_2}$, and $\gamma_t\ImpFlipVar{U_2}\gamma_{t+1}$.
			(Indices $s$ and $t$ are not in any particular relationship; it could be $s < t$ but also $t < s$.)
			By the definition of the CP tables,  for $\gamma_t\ImpFlipVar{U_2}\gamma_{t+1}$ to be an improving flip, it must be the case that, for the apex $A$ of the interconnecting net, $\gamma_t[A]=\ol{a}$.
			This requires that there is an index $r < t$, for which in $\gamma_r$ all clause features have their overlined values (because the conjunctive interconnecting net is linked to the set of clause features).
			
			Let $c_j$ be any clause of $\phi(X)$, and consider feature $D_j$.
			Because $\gamma_0[D_j] = d_j$ and $\gamma_r[D_j]=\ol{d_j}$, there is an index $q < r$ such that $\gamma_q[D_j]=d_j$, $\gamma_{q+1}[D_j]=\ol{d_j}$, and $\gamma_q\ImpFlipVar{D_j}\gamma_{q+1}$.
			Since flipping $D_j$ has to be an improving flip, it must be the case that there is a literal feature $P_{j,k}$ such that $\gamma_q[P_{j,k}]=\ol{p_{j,k}}$.
			Therefore, since $\gamma_0[P_{j,k}] = p_{j,k}$ and $\gamma_q[P_{j,k}]=\ol{p_{j,k}}$, there is an index $p < q$ such that $\gamma_p[P_{j,k}]=p_{j,k}$, $\gamma_{p+1}[P_{j,k}]=\ol{p_{j,k}}$, and $\gamma_p\ImpFlipVar{P_{j,k}}\gamma_{p+1}$.
			By the definition of the CP table of $P_{j,k}$, it must also be the case that $p < s$.
			This means that, for all the literal features that in $\rho$ change their value from non-overlined to overlined, their flipping happens before the $s$-th step.
			Hence, if two different literal features linked to the same variable features $V_i^T$ and $V_i^F$ flip before the $s$-th step, then their flipping is based on consistent values assigned to $V_i^T$ and $V_i^F$.
			
			Now there are two cases: either (a) $\ell_{j,k}=x_i$, or (b) $\ell_{j,k}=\lnot x_i$.
			For (a), since flipping $P_{j,k}$ has to be an improving flip, it must be the case that $\gamma_p[V_i^T V_i^F]=\ol{v_i^T}v_i^F$, therefore $\sigma_X'[x_i]=\valtrue$, and hence $\sigma_X'$ satisfies $c_j$.
			For (b), again since flipping $P_{j,k}$ has to be an improving flip, it must be the case that $\gamma_p[V_i^T V_i^F]={v_i^T}\ol{v_i^F}$, therefore $\sigma_X'[x_i]=\valfalse$, and hence $\sigma_X'$ satisfies $c_j$.
			
			Therefore, if $\ol{\beta}\PrefNet{\NetCNFSumm(\phi)}\alpha_{\sigma_X}$, then there is an extension $\sigma_X'$ of $\sigma_X$ to $X$ satisfying $\phi(X)$.
		\end{itemize}
		
		We now prove~(2).
		We know that $\ol{\beta}\IncompNet{\NetCNFSumm(\phi)}\alpha_{\sigma_X}$ if and only if $\alpha_{\sigma_X}\not\PrefNet{\NetCNFSumm(\phi)}\ol{\beta}$ \emph{and} $\ol{\beta}\not\PrefNet{\NetCNFSumm(\phi)}\alpha_{\sigma_X}$.
		First, observe that $\alpha_{\sigma_X}\not\PrefNet{\NetCNFSumm(\phi)}\ol{\beta}$ is always true, because there is no improving flipping sequence from $\ol{\beta}$ to $\alpha_{\sigma_X}$, as the value $\ol{u_1}$ of feature $U_1$ in $\ol{\beta}$ cannot be flipped in $\NetCNFSumm(\phi)$, because it is the most preferred value of $U_1$, and $U_1$ does not have parents.
		So, $\ol{\beta}\IncompNet{\NetCNFSumm(\phi)}\alpha_{\sigma_X}$ if and only if $\ol{\beta}\not\PrefNet{\NetCNFSumm(\phi)}\alpha_{\sigma_X}$.
		Hence, showing that there is \emph{no} extension of $\sigma_X$ to $X$ satisfying $\phi(X)$ if and only if $\ol{\beta}\IncompNet{\NetCNFSumm(\phi)}\alpha_{\sigma_X}$ is equivalent to showing that there is \emph{no} extension of $\sigma_X$ to $X$ satisfying $\phi(X)$ if and only if $\ol{\beta}\not\PrefNet{\NetCNFSumm(\phi)}\alpha_{\sigma_X}$.
		However, we have already shown this in~(1).
	\end{proof}

\smallskip \phantomsection\label{page_pointer:properties_of_ExistsWeakCondorcet_reduction_detailed}
\noindent\textbf{\cref{lemma_properties_of_ExistsWeakCondorcet_reduction}.}
\emph{Let $\Phi=(\exists X)(\forall Y)\lnot\phi(X,Y)$ be a quantified Boolean formula, where $\phi(X,Y)$ is a $3$CNF Boolean formula, defined over two disjoint sets $X$ and $Y$ of Boolean variables.
Then, $\Phi$ is valid if and only if $\MNetExistsWeakCond(\Phi)$ has a majority optimal outcome.}

\begin{proof}
The intuition at the base of the proof is to put in relationship truth assignments over the variable set $X$ with outcomes of $\MNetExistsWeakCond(\Phi)$.
In particular, given an assignment $\sigma_X$ over $X$, the associated outcome is $\beta_{\sigma_X}$, where $\sigma_X$ is encoded over the feature set $\SetFeat{V}$ in the usual way, and all other features have non-overlined values.
We show first that outcomes not in the form of a $\beta_{\sigma_X}$ are not Majority optimal.
Then, we show that if there is an assignment $\sigma_X$ such that $(\forall Y)\lnot\phi(X/\sigma_X,Y)$ is valid, then $\beta_{\sigma_X}$ is Majority optimal.
On the other hand, if there is no assignment $\sigma_X$ for which $(\forall Y)\lnot\phi(X/\sigma_X,Y)$ is valid, then none of the outcomes $\beta_{\sigma_X}$ is Majority optimal.

To prove the statement of the lemma, we have to analyze the majority dominance relationships between outcome pairs.
To organize this task, we define the following sets of outcomes:
\begin{itemize}
\item $O_d={O_d}'\cup {O_d}''\cup {O_d}'''$, where
\begin{itemize}
\item ${O_d}'=\{\beta\in\OutMNet{\MNetExistsWeakCond(\Phi)}\mid (\exists F)(F\in(\SetFeat{V'}\cup\SetFeat{W}\cup\SetFeat{P}\cup\SetFeat{D}\cup\SetFeat{A}\cup\SetFeat{B})\land \beta[F]=\ol{f})\}$;
\item ${O_d}''=\{\beta\in\OutMNet{\MNetExistsWeakCond(\Phi)}\mid \beta[U_1 U_2]\neq u_1 u_2\}$;
\item ${O_d}'''=\{\beta\in\OutMNet{\MNetExistsWeakCond(\Phi)}\mid (\exists i)(\beta[V_i^T V_i^F]=\ol{v_i^T}\ol{v_i^F})\}$.
\end{itemize}
\item $O_c=\{\beta\in\OutMNet{\MNetExistsWeakCond(\Phi)}\mid \beta\notin O_d\}$.
\end{itemize}
Clearly, $O_d$ and $O_c$ constitute a partition of $\OutMNet{\MNetExistsWeakCond(\Phi)}$.
On the contrary, ${O_d}'$, ${O_d}''$, and ${O_d}'''$, do not constitute a partition of $O_d$, because they are not disjoint.
We show that only outcomes of a subset $S$ (whose detailed characterization will be given toward the end of the proof) of $O_c$ might be Majority optimal.
We do so by showing that (1) all outcomes in $(O_d\cup O_c)\setminus S$ are majority dominated by some other outcome, and hence they are not Majority optimal; and that (2) all outcomes in $S$, which might be empty, are not majority dominated, and hence they are Majority optimal.
Therefore, $\MNetExistsWeakCond(\Phi)$ has a majority optimal outcome if and only if $S$ is non-empty.

We recall that, since $\MNetExistsWeakCond(\Phi)$ is a $6$\CPNet, if $\alpha$ and $\beta$ are two outcomes, $|\SetAgentsPrefMNet{\MNetExistsWeakCond(\Phi)}(\beta,\alpha)|\geq 4$ implies that $\beta\PrefMajorityMNet{\MNetExistsWeakCond(\Phi)}\alpha$.

\medbreak

\noindent\textbf{Property~\ref*{lemma_properties_of_ExistsWeakCondorcet_reduction}.(1).} \emph{Let $\beta'\in {O_d}'$ be an outcome.
Then, $\beta'$ is not Majority optimal in $\MNetExistsWeakCond(\Phi)$.}

\begin{adjustwidth}{1.5em}{}
\begin{proof}
There are two cases: either (1) $\beta'[U_1 U_2] \neq u_1 u_2$, or (2) $\beta'[U_1 U_2] = u_1 u_2$:

\medskip \noindent (1)
Let $\gamma$ be the outcome assigning non-overlined values to all features in $(\SetFeat{V}\cup\SetFeat{V'}\cup\SetFeat{W}\cup\SetFeat{P}\cup\SetFeat{D}\cup\SetFeat{A}\cup\SetFeat{B})$, and such that $\gamma[U_1 U_2] = \beta'[U_1 U_2]$.
We prove that $\gamma\PrefMajorityMNet{\MNetExistsWeakCond(\Phi)}\beta'$.
Consider net $\NetEWC3$.
The following is an improving flipping sequence from $\beta'$ to $\gamma$, showing that $\gamma\PrefNet{\NetEWC3}\beta'$.
We flip to their non-overlined value, in the following order, all features in $\SetFeat{V}$, $\SetFeat{V'}$, $\SetFeat{W}$, $\SetFeat{P}$, $\SetFeat{D}$, and $\SetFeat{A}$, having an overlined value in $\beta'$.
Next, we flip in the proper order all features in $\SetFeat{B}$ having an overlined value in $\beta'$ to their non-overlined value.
The outcome obtained is precisely $\gamma$.
Moreover, since $\NetEWC4 = \NetEWC3$, $\gamma\PrefNet{\NetEWC4}\beta'$, as well.
Now, observe that by the definition of $\NetEWC5$, $\gamma\PrefNet{\NetEWC5}\beta'$.
We have seen that there are three agents preferring $\gamma$ to $\beta'$.

Now there are two cases: either (a) $\beta'[U_1] = \ol{u_1}$, or (b) $\beta'[U_1] = u_1$:

(a)
Let us  focus on $\NetEWC1$.
The following is an improving flipping sequence from $\beta'$ to $\gamma$ in $\NetEWC1$, showing that $\gamma\PrefNet{\NetEWC1}\beta'$.
Since $\beta'[U_1] = \ol{u_1}$, we can flip to their non-overlined value all features in $\SetFeat{V}$ and $\SetFeat{W}$ having an overlined value in $\beta'$.
Next, we flip to their non-overlined value all features in $\SetFeat{P}$, $\SetFeat{D}$, $\SetFeat{A}$ (in the proper order), $\SetFeat{V'}$, and $\SetFeat{B}$, having an overlined value in $\beta'$.
The outcome reached is exactly $\gamma$.
Therefore, in~(1)(a), there are four agents preferring $\gamma$ to $\beta'$, and hence $\gamma\PrefMajorityMNet{\MNetExistsWeakCond(\Phi)}\beta'$.

(b)
As we are in~(1)(b), $\beta'[U_1] = u_1$ and $\beta'[U_1 U_2] \neq u_1 u_2$, hence $\beta'[U_1 U_2] = u_1 \ol{u_2}$.
Let us focus on $\NetEWC2$.
The improving flipping sequence from $\beta'$ to $\gamma$ in $\NetEWC1$  in~(1)(a) is also an improving flipping sequence from $\beta'$ to $\gamma$ in $\NetEWC2$ (because $\beta'[U_2] = \ol{u_2}$).
Therefore, in~(1)(b), there are four agents preferring~$\gamma$ to $\beta'$, and hence $\gamma\PrefMajorityMNet{\MNetExistsWeakCond(\Phi)}\beta'$.

\medskip \noindent (2)
Let $\gamma'$ be the outcome assigning an overlined value only to $U_1$.
We prove that $\gamma'\PrefMajorityMNet{\MNetExistsWeakCond(\Phi)}\beta'$.
Consider net $\NetEWC1$.
By the definition of the net, we can flip $U_1$ from $u_1$ to $\ol{u_1}$.
Once this is done, by performing the improving flipping sequence from $\beta'$ to $\gamma$ in $\NetEWC1$ exhibited in~(1)(a), we reach $\gamma'$ in this case.
Hence, $\gamma'\PrefNet{\NetEWC1}\beta'$.
Consider now net $\NetEWC3$.
Since $\beta'\in {O_d}'$, there is a feature $F\in(\SetFeat{V'}\cup\SetFeat{W}\cup\SetFeat{P}\cup\SetFeat{D}\cup\SetFeat{A}\cup\SetFeat{B})$ such that $\beta'[F] = \ol{f}$.
By the definition of the disjunctive interconnecting net embedded in $\NetEWC3$, there is an improving flipping sequence from $\beta'$ to an outcome in which $U_1$ has an overlined value (we flip the features in the interconnecting net until we flip its apex, and then we flip $U_1$ to its overlined value).
At this point, by performing the improving flipping sequence from $\beta'$ to $\gamma$ in $\NetEWC3$ shown in~(1) we reach $\gamma'$ in this case.
Hence, $\gamma'\PrefNet{\NetEWC3}\beta'$.
Since $\NetEWC4 = \NetEWC3$, $\gamma'\PrefNet{\NetEWC4}\beta'$, as well.
Finally, by the definition of $\NetEWC5$, $\gamma'\PrefNet{\NetEWC5}\beta'$.
Therefore, $\gamma'\PrefMajorityMNet{\MNetExistsWeakCond(\Phi)}\beta'$, and hence $\beta'$ is not Majority optimal.
\end{proof}
\end{adjustwidth}

\medbreak

\noindent\textbf{Property~\ref*{lemma_properties_of_ExistsWeakCondorcet_reduction}.(2).} \emph{Let $\beta''\in {O_d}''$ be an outcome.
Then, $\beta''$ is not Majority optimal in $\MNetExistsWeakCond(\Phi)$.}

\begin{adjustwidth}{1.5em}{}
\begin{proof}
By Property~\ref*{lemma_properties_of_ExistsWeakCondorcet_reduction}.(1) we can focus on those outcomes $\beta''$ assigning non-overlined values to all features in $\SetFeat{V'}\cup\SetFeat{W}\cup\SetFeat{P}\cup\SetFeat{D}\cup\SetFeat{A}\cup\SetFeat{B}$.
There are three cases: (1) $\beta'[U_1 U_2] = \ol{u_1}u_2$, (2) $\beta'[U_1 U_2] = u_1\ol{u_2}$, or (3) $\beta'[U_1 U_2] = \ol{u_1}\ol{u_2}$:

\medskip\noindent~(1)
Let $\gamma$ be the outcome such that, for all features $F\notin\{U_1,U_2\}$, $\gamma[F] = \beta''[F]$, and $\gamma[U_1 U_2] = \ol{u_1}\ol{u_2} \neq \beta''[U_1 U_2]$.
By the definition of $\NetEWC2$, $\NetEWC3$, $\NetEWC4$, and $\NetEWC5$, $\gamma\PrefNet{\NetEWC2}\beta''$, $\gamma\PrefNet{\NetEWC3}\beta''$, $\gamma\PrefNet{\NetEWC4}\beta''$, $\gamma\PrefNet{\NetEWC5}\beta''$, respectively.
Therefore, $\gamma\PrefMajorityMNet{\MNetExistsWeakCond(\Phi)}\beta''$, and $\beta''$ is not Majority optimal.

\medskip\noindent~(2)
Let $\gamma$ be the outcome such that, for all features $F\notin\{U_1,U_2\}$, $\gamma[F] = \beta''[F]$, and $\gamma[U_1 U_2] = u_1 u_2 \neq \beta''[U_1 U_2]$.
We show that $\gamma\PrefMajorityMNet{\MNetExistsWeakCond(\Phi)}\beta''$.
Consider net $\NetEWC1$.
Since we are assuming that all features in $\SetFeat{A}$ have non-overlined value in $\beta''$, we can flip $U_2$ from $\ol{u_2}$ to $u_2$.
Hence, $\gamma\PrefNet{\NetEWC1}\beta''$.
By the definition of $\NetEWC3$, $\NetEWC4$, and $\NetEWC5$, $\gamma\PrefNet{\NetEWC3}\beta''$, $\gamma\PrefNet{\NetEWC4}\beta''$, $\gamma\PrefNet{\NetEWC5}\beta''$, respectively.
Therefore, $\gamma\PrefMajorityMNet{\MNetExistsWeakCond(\Phi)}\beta''$, and $\beta''$ is not Majority optimal.

\medskip\noindent~(3)
Let $\gamma$ be the outcome such that, for all features $F\notin\{U_1,U_2\}$, $\gamma[F] = \beta''[F]$, and $\gamma[U_1 U_2] = u_1 \ol{u_2} \neq \beta''[U_1 U_2]$.
We show that $\gamma\PrefMajorityMNet{\MNetExistsWeakCond(\Phi)}\beta''$.
Consider net $\NetEWC2$.
Since we are assuming that all features in $\SetFeat{A}$ have non-overlined value in $\beta''$, we can flip $U_1$ from $\ol{u_1}$ to $u_1$.
Hence, $\gamma\PrefNet{\NetEWC2}\beta''$.
Moreover, we are assuming also that all features in $\SetFeat{B}$ have non-overlined value in $\beta''$, which implies that in $\NetEWC3$ we can flip $U_1$ from $\ol{u_1}$ to $u_1$.
Therefore, $\gamma\PrefNet{\NetEWC3}\beta''$.
Since $\NetEWC3 = \NetEWC4$, $\gamma\PrefNet{\NetEWC4}\beta''$, as well.
Finally, by the definition of $\NetEWC6$, $\gamma\PrefNet{\NetEWC6}\beta''$ (as we are assuming $\beta''[U_2] = \ol{u_2}$).
Thus, $\gamma\PrefMajorityMNet{\MNetExistsWeakCond(\Phi)}\beta''$, and $\beta''$ is not Majority optimal.
\end{proof}
\end{adjustwidth}

\medbreak

\noindent\textbf{Property~\ref*{lemma_properties_of_ExistsWeakCondorcet_reduction}.(3).} \emph{Let $\beta'''\in {O_d}'''$ be an outcome.
Then, $\beta'''$ is not Majority optimal in $\MNetExistsWeakCond(\Phi)$.}

\begin{adjustwidth}{1.5em}{}
\begin{proof}
By Properties~\ref*{lemma_properties_of_ExistsWeakCondorcet_reduction}.(1) and~\ref*{lemma_properties_of_ExistsWeakCondorcet_reduction}.(2), we can focus on those outcomes $\beta'''$ assigning non-overlined values to all features in $\SetFeat{V'}\cup\SetFeat{W}\cup\SetFeat{P}\cup\SetFeat{D}\cup\SetFeat{A}\cup\SetFeat{B}$, and such that $\beta'''[U_1 U_2] = u_1 u_2$.
Since $\beta'''\in {O_d}'''$, there is a pair of features $(V_i^T, V_i^F)$ such that $\beta'''[V_i^T V_i^F] = \ol{v_i^T}\ol{v_i^F}$.
Let $\gamma$ be the outcome such that, for all features $F\notin\{V_i^T,V_i^F,U_1,U_2\}$, $\gamma[F] = \beta'''[F]$, $\gamma[V_i^T V_i^F] = v_i^T v_i^F \neq \beta'''[V_i^T V_i^F]$, and $\gamma[U_1 U_2] = \ol{u_1} u_2 \neq \beta'''[U_1 U_2]$.
We show that $\gamma\PrefMajorityMNet{\MNetExistsWeakCond(\Phi)}\beta'''$.

Consider net $\NetEWC1$.
We can flip feature $U_1$ from $u_1$ to $\ol{u_1}$.
Then, we can flip features $V_i^T$ and $V_i^F$ to their non-overlined value.
The reached outcome is precisely $\gamma$, and hence $\gamma\PrefNet{\NetEWC1}\beta'''$.

Consider now net $\NetEWC3$.
Since $\beta'''[V_i^T V_i^F] = \ol{v_i^T}\ol{v_i^F}$, we can flip $V_i'$ from $v_i'$ to $\ol{v_i'}$.
Then, we flip $V_i^T$ and $V_i^F$ to their non-overlined value.
At this point, since $\beta'''[U_1 U_2] = u_1 u_2$, the improving flipping sequence in $\NetEWC3$ exhibited in Case~(2) of Property~\ref*{lemma_properties_of_ExistsWeakCondorcet_reduction}.(1), in which the flips of features in $\SetFeat{V}$ are ignored, is an improving flipping sequence from $\beta'''$ to $\gamma$ in this case.
Hence, $\gamma\PrefNet{\NetEWC3}\beta'''$.
Since $\NetEWC3 = \NetEWC4$, $\gamma\PrefNet{\NetEWC4}\beta'''$, as well.

Finally, by the definition of $\NetEWC5$, $\gamma\PrefNet{\NetEWC5}\beta'''$.
Therefore, $\gamma\PrefMajorityMNet{\MNetExistsWeakCond(\Phi)}\beta'''$.
\end{proof}
\end{adjustwidth}

\medbreak

\noindent
The three properties above show that any outcome $\beta\in O_d$ is not Majority optimal.
Consider now outcomes in $O_c$.
Observe that, since $O_d$ and $O_c$ are disjoint, all outcomes $\beta_c\in O_c$ are characterized by having all features in $\SetFeat{V'}\cup\SetFeat{W}\cup\SetFeat{P}\cup\SetFeat{D}\cup\SetFeat{A}\cup\SetFeat{B}$ with non-overlined values, $\beta_c[U_1 U_2] = u_1 u_2$, and, for all variables $x_i\in X$, $\beta_c[V_i^T V_i^F] \neq \ol{v_i^T} \ol{v_i^F}$.

Given an assignment $\sigma_X$ for the variables in $X$, we define $\beta_{\sigma_X}\in O_c$ as the outcome encoding $\sigma_X$ over the feature set~$\SetFeat{V}$ as usual.
Denote by $\NonWitnessSet_c$ the set of all \emph{complete} assignments $\sigma_X$ over $X$ such that $(\forall Y)\lnot\phi(X/\sigma_X,Y)$ is \emph{not} valid (i.e., such that $\phi(X/\sigma_X,Y)$ is satisfiable).
Let $\NonWitnessSet$ be the set of all (partial or complete) assignments $\sigma_X$ over $X$ such that there is an extension of $\sigma_X$ to $X$ belonging to $\NonWitnessSet_c$, and let $\WitnessSet$ be the set of all (partial or complete) assignments over $X$ not belonging to $\NonWitnessSet$.
Recall that if $\sigma_X$ is a complete assignment over $X$, then $\sigma_X$ itself is the unique extension of $\sigma_X$ to $X$.
Given the above definitions, $O_c^{\WitnessSet}=\{\beta_{\sigma_X}\in O_c\mid \sigma_X\in\WitnessSet\}$ and $O_c^{\NonWitnessSet}=\{\beta_{\sigma_X}\in O_c\mid \sigma_X\in\NonWitnessSet\}$ constitute a partition of $O_c$.
This implies that, if $\beta\in O_c$ is an outcome, then there is a (partial or complete) assignment $\sigma_X$ over the variables in $X$ such that $\beta = \beta_{\sigma_X}$.

We show that all outcomes in $O_c^{\NonWitnessSet}$ are not Majority optimal, and that all outcomes in $O_c^{\WitnessSet}$ are majority optimal (i.e., $O_c^{\WitnessSet}$ is the set $S$ mentioned earlier).

\medbreak

\noindent\textbf{Property~\ref*{lemma_properties_of_ExistsWeakCondorcet_reduction}.(4).} \emph{Let $\beta_c\in O_c^{\NonWitnessSet}$ be an outcome.
Then, $\beta_c$ is not Majority optimal in $\MNetExistsWeakCond(\Phi)$.}

\begin{adjustwidth}{1.5em}{}
\begin{proof}
Let $\ol{\alpha}$ be the outcome assigning overlined values only to $U_1$ and $U_2$.
We show that $\ol{\alpha}\PrefMajorityMNet{\MNetExistsWeakCond(\Phi)}\beta_c$.
Let $\sigma_X\in\NonWitnessSet$ be the (partial or complete) assignment over $X$ such that $\beta_c = \beta_{\sigma_X}$.

Consider the net $\NetEWC1$.
First, note that, for any outcome $\beta_c\in O_c$, $\beta_c$ assigns non-overlined values to all features in $\SetFeat{V'}\cup\SetFeat{B}$.
Therefore, the part of net $\NetEWC1$ over feature sets $\SetFeat{V'}$ and $\SetFeat{B}$ does not play an active role in any improving flipping sequence (if exists) either from $\ol{\alpha}$ to $\beta_c$, or from $\beta_c$ to $\ol{\alpha}$, because in $\NetEWC1$, features in $\SetFeat{V'}\cup\SetFeat{B}$ have no parents, and they have already their most preferred values in $\ol{\alpha}$ and $\beta_c$.

Consider now the non-quantified formula $\phi(X,Y)$.
If we consider the set $X\cup Y$ of all the Boolean variables in~$\phi$, the assignment $\sigma_X$ is a partial assignment over $X\cup Y$.
Since $\sigma_X\in\NonWitnessSet$, there is an extension $\sigma_X'$ of~$\sigma_X$ to $X$ such that $(\forall Y)\lnot\phi(X/\sigma_X',Y)$ is not valid (i.e., $\phi(X/\sigma_X',Y)$ is satisfiable), and hence there is an extension of~$\sigma_X$ to $X\cup Y$ satisfying $\phi$.
Therefore, by \cref{lemma_Pref_Net_CNF_Summ_new}, $\ol{\alpha}\PrefNet{\NetEWC1}\beta_c$.
Observe that the differences between $\NetEWC1$ and~$\NetEWC2$ are only in the roles of features $U_1$ and $U_2$, which are exchanged.
Hence, by (an adaptation of) \cref{lemma_Pref_Net_CNF_Summ_new}, $\ol{\alpha}\PrefNet{\NetEWC2}\beta_c$, as well.

Consider now $\NetEWC5$.
The following is an improving flipping sequence from $\beta_c$ to $\ol{\alpha}$.
We flip $U_1$ from $u_1$ to $\ol{u_1}$, and then $U_2$ from $u_2$ to $\ol{u_2}$.
After this, we flip to their non-overlined value all features in $\SetFeat{V}$ having overlined values in $\beta_c$.
The outcome reached is precisely $\ol{\alpha}$, and hence $\ol{\alpha}\PrefNet{\NetEWC5}\beta_c$.

Finally, consider net $\NetEWC6$.
The following is an improving flipping sequence from $\beta_c$ to $\ol{\alpha}$.
We flip $U_1$ from $u_1$ to $\ol{u_1}$ (remember that $\beta_c[U_2] = u_2$).
Next, we flip $U_2$ from $u_2$ to $\ol{u_2}$.
To conclude, we flip to their non-overlined value all features in $\SetFeat{V}$ having overlined values in $\beta_c$.
The outcome reached is precisely $\ol{\alpha}$, and hence $\ol{\alpha}\PrefNet{\NetEWC6}\beta_c$.
Therefore, $\ol{\alpha}\PrefMajorityMNet{\MNetExistsWeakCond(\Phi)}\beta_c$, and $\beta_c$ is not Majority optimal.
\end{proof}
\end{adjustwidth}

\medbreak

\noindent The goal of the next properties is to show that outcomes in $O_c^{\WitnessSet}$ are Majority optimal.
We prove this by showing that any outcome $\beta_c\in O_c^{\WitnessSet}$ is not majority dominated by any other outcome.
Note that, since $\MNetExistsWeakCond(\Phi)$ is a $6$\CPNet, for any two outcomes $\alpha$ and $\beta$, if $|\SetAgentsPrefMNet{\MNetExistsWeakCond(\Phi)}(\beta,\alpha)|\leq 3$, then $\beta\not\PrefMajorityMNet{\MNetExistsWeakCond(\Phi)}\alpha$.

\medbreak

\noindent\textbf{Property~\ref*{lemma_properties_of_ExistsWeakCondorcet_reduction}.(5).} \emph{Let $\beta_c\in O_c^{\WitnessSet}$ and $\gamma$ be two outcomes such that there is a feature $F\in(\SetFeat{W}\cup\SetFeat{P}\cup\SetFeat{D}\cup\SetFeat{A})$ for which $\gamma[F] \neq \beta_c[F]$.
Then, $\gamma\not\PrefMajorityMNet{\MNetExistsWeakCond(\Phi)}\beta_c$.}

\begin{adjustwidth}{1.5em}{}
\begin{proof}
Since $\gamma[F] \neq \beta_c[F]$, and $\beta_c[F] = f$ because $\beta_c \in O_c^{\WitnessSet}\subseteq O_c$, it must be the case that $\gamma[F] = \ol{f}$.
Therefore, by the definition of $\NetEWC3$, $\NetEWC4$, and $\NetEWC5$, $\gamma\not\PrefNet{\NetEWC3}\beta_c$, $\gamma\not\PrefNet{\NetEWC4}\beta_c$, and $\gamma\not\PrefNet{\NetEWC5}\beta_c$.
Thus, $|\SetAgentsPrefMNet{\MNetExistsWeakCond(\Phi)}(\gamma,\beta_c)|\leq 3$, and hence $\gamma\not\PrefMajorityMNet{\MNetExistsWeakCond(\Phi)}\beta_c$.
\end{proof}
\end{adjustwidth}

\medbreak

\noindent\textbf{Property~\ref*{lemma_properties_of_ExistsWeakCondorcet_reduction}.(6).} \emph{Let $\beta_c\in O_c^{\WitnessSet}$ and $\gamma$ be two outcomes such that there is a feature $F\in(\SetFeat{V'}\cup\SetFeat{B})$ for which $\gamma[F] \neq \beta_c[F]$.
Then, $\gamma\not\PrefMajorityMNet{\MNetExistsWeakCond(\Phi)}\beta_c$.}

\begin{adjustwidth}{1.5em}{}
\begin{proof}
Since $\gamma[F] \neq \beta_c[F]$, and $\beta_c[F] = f$ because $\beta_c \in O_c^{\WitnessSet}\subseteq O_c$, it must be the case that $\gamma[F] = \ol{f}$.
Therefore, by the definition of $\NetEWC1$, $\NetEWC2$, and $\NetEWC5$, $\gamma\not\PrefNet{\NetEWC1}\beta_c$, $\gamma\not\PrefNet{\NetEWC2}\beta_c$, and $\gamma\not\PrefNet{\NetEWC5}\beta_c$.
Thus, $|\SetAgentsPrefMNet{\MNetExistsWeakCond(\Phi)}(\gamma,\beta_c)|\leq 3$, and hence $\gamma\not\PrefMajorityMNet{\MNetExistsWeakCond(\Phi)}\beta_c$.
\end{proof}
\end{adjustwidth}

\medbreak

\noindent\textbf{Property~\ref*{lemma_properties_of_ExistsWeakCondorcet_reduction}.(7).} \emph{Let $\beta_c\in O_c^{\WitnessSet}$ and $\gamma$ be two outcomes such that $\gamma[U_1 U_2] \neq \beta_c[U_1 U_2] = u_1 u_2$.
Then, $\gamma\not\PrefMajorityMNet{\MNetExistsWeakCond(\Phi)}\beta_c$.}

\begin{adjustwidth}{1.5em}{}
\begin{proof}
Consider first net $\NetEWC3$.
Since $\beta_c \in O_c^{\WitnessSet}\subseteq O_c$, all features in $\SetFeat{V'}\cup\SetFeat{W}\cup\SetFeat{P}\cup\SetFeat{D}\cup\SetFeat{A}\cup\SetFeat{B}$ have a non-overlined value in $\beta_c$, and there is no pair of features $(V_i^T,V_i^F)$ such that $\beta_c[V_i^T V_i^F] = \ol{v_i^T}\ol{v_i^F}$.
Hence, the disjunctive interconnecting net embedded in $\NetEWC3$ cannot be exploited in any improving flipping sequence that aims at reaching an outcome in which $U_1$ or $U_2$ has an overlined value.
Hence, for any outcome $\gamma$ such that $\gamma[U_1] = \ol{u_1}$ or $\gamma[U_2] = \ol{u_2}$, $\gamma\not\PrefNet{\NetEWC3}\beta_c$.
By $\NetEWC3 = \NetEWC4$, for any outcome $\gamma$ such that $\gamma[U_1] = \ol{u_1}$ or $\gamma[U_2] = \ol{u_2}$, $\gamma\not\PrefNet{\NetEWC4}\beta_c$, as well.

Since $\gamma[U_1 U_2] \neq \beta_c[U_1 U_2] = u_1 u_2$, there are the three cases (1) $\gamma[U_1 U_2] = u_1 \ol{u_2}$, (2) $\gamma[U_1 U_2] = \ol{u_1} u_2$, or (3)~$\gamma[U_1 U_2] = \ol{u_1} \ol{u_2}$:

\medskip \noindent (1)
Let us focus on $\NetEWC1$.
First note that, for any outcome $\beta_c\in O_c$, $\beta_c$ assigns non-overlined values to all features in $\SetFeat{V'}\cup\SetFeat{B}$, and by Property~\ref*{lemma_properties_of_ExistsWeakCondorcet_reduction}.(7) we can assume that $\gamma$ assigns non-overlined values to features in $\SetFeat{V'}\cup\SetFeat{B}$.
Therefore, the part of net $\NetEWC1$ over feature sets $\SetFeat{V'}$ and $\SetFeat{B}$ does not play an active role in any improving flipping sequence (if exists) either from $\gamma$ to $\beta_c$, or from $\beta_c$ to $\gamma$ because, in $\NetEWC1$, features in $\SetFeat{V'}\cup\SetFeat{B}$ have no parents, and they have already their most preferred values in $\beta_c$ and $\gamma$.

Let $\sigma_X\in\WitnessSet$ be the (partial or complete) assignment over $X$ such that $\beta_c = \beta_{\sigma_X}$.
Since $\sigma_X\in\WitnessSet$, there is no extension $\sigma_X'$ of $\sigma_X$ to $X$ such that $(\forall Y)\lnot\phi(X/\sigma_X',Y)$ is not valid, (i.e., such that $\phi(X/\sigma_X',Y)$ is satisfiable).
Consider the non-quantified formula $\phi(X,Y)$.
If we consider the set $X\cup Y$ of all the Boolean variables in $\phi$, the assignment $\sigma_X$ is a partial assignment over $X\cup Y$.
Since $\phi(X/\sigma_X,Y)$ is not satisfiable, there is no extension of $\sigma_X$ to $X\cup Y$ satisfying $\phi$.
Therefore, by inspection of the proof of \cref{lemma_Pref_Net_CNF_Summ_new}, in $\NetEWC1$ there is no improving flipping sequence from $\beta_c$ to an outcome in which $U_2$ has an overlined value, hence $\gamma\not\PrefNet{\NetEWC1}\beta_c$.

From this, and from what we have already shown for nets $\NetEWC3$ and $\NetEWC4$, it follows that $|\SetAgentsPrefMNet{\MNetExistsWeakCond(\Phi)}(\gamma,\beta_c)|\leq 3$, and hence $\gamma\not\PrefMajorityMNet{\MNetExistsWeakCond(\Phi)}\beta_c$.

\medskip \noindent (2)
If we focus on $\NetEWC2$, a similar argument to the one used in~(1) for net $\NetEWC1$ shows that $\gamma\not\PrefNet{\NetEWC2}\beta_c$ (simply observe that the roles on $U_1$ and $U_2$ are exchanged).
Again, from this, and from what we have already shown for nets $\NetEWC3$ and $\NetEWC4$, it follows that $|\SetAgentsPrefMNet{\MNetExistsWeakCond(\Phi)}(\gamma,\beta_c)|\leq 3$, and hence $\gamma\not\PrefMajorityMNet{\MNetExistsWeakCond(\Phi)}\beta_c$.

\medskip \noindent (3)
By combining the discussions in~(1) and~(2), it is possible to show that $\gamma\not\PrefNet{\NetEWC1}\beta_c$ and $\gamma\not\PrefNet{\NetEWC2}\beta_c$ (and that $\gamma\not\PrefNet{\NetEWC3}\beta_c$ and $\gamma\not\PrefNet{\NetEWC4}\beta_c$).
Thus, $|\SetAgentsPrefMNet{\MNetExistsWeakCond(\Phi)}(\gamma,\beta_c)|\leq 2$, and hence $\gamma\not\PrefMajorityMNet{\MNetExistsWeakCond(\Phi)}\beta_c$.
\end{proof}
\end{adjustwidth}

\medbreak

\noindent\textbf{Property~\ref*{lemma_properties_of_ExistsWeakCondorcet_reduction}.(8).} \emph{Let $\beta_c\in O_c^{\WitnessSet}$ and $\gamma$ be two outcomes such that there is a feature $F\in\SetFeat{V}$ for which $\gamma[F] \neq \beta_c[F]$.
Then, $\gamma\not\PrefMajorityMNet{\MNetExistsWeakCond(\Phi)}\beta_c$.}

\begin{adjustwidth}{1.5em}{}
\begin{proof}
Let $\Change = \{F\in\SetFeat{V}\mid \beta_c[F] \neq \gamma[F]\}$ be the set of all variable features in $\SetFeat{V}$ changing value from $\beta_c$ to $\gamma$.
Let $\Up = \{F\in\SetFeat{V}\mid \beta_c[F] = f \land \gamma[F] = \ol{f}\}$ be the subset of $\Change$ containing the variable features in $\SetFeat{V}$ changing their value from non-overlined in $\beta_c$ to overlined in $\gamma$.
Let $\Down = \{F\in\SetFeat{V}\mid \beta_c[F] = \ol{f} \land \gamma[F] = f\}$ be the subset of $\Change$ containing the variable features in $\SetFeat{V}$ changing their value from overlined in $\beta_c$ to non-overlined in $\gamma$.
Clearly, $\Up$ and $\Down$ constitute a partition of $\Change$, and, since from the statement of this property, we assume that $\Change \neq \emptyset$, it must be the case that $(\Up \cup \Down) \neq \emptyset$.
There are the two cases (1) $\Up \neq \emptyset $, or (2) $\Up = \emptyset$:

\medskip \noindent (1)
Since there are variable features in $\SetFeat{V}$ changing their value from non-overlined in $\beta_c$ to overlined in $\gamma$, by the definition of $\NetEWC3$, $\NetEWC4$, and $\NetEWC5$, $\gamma\not\PrefNet{\NetEWC3}\beta_c$, $\gamma\not\PrefNet{\NetEWC4}\beta_c$, and $\gamma\not\PrefNet{\NetEWC5}\beta_c$.
Therefore, $|\SetAgentsPrefMNet{\MNetExistsWeakCond(\Phi)}(\gamma,\beta_c)|\leq 3$, and hence $\gamma\not\PrefMajorityMNet{\MNetExistsWeakCond(\Phi)}\beta_c$.

\medskip \noindent (2)
Since $\Up = \emptyset$ and $\Change \neq \emptyset$, it must be the case that $\Down \neq \emptyset$, and hence there are variable features in $\SetFeat{V}$ changing their value from overlined in $\beta_c$ to non-overlined in $\gamma$.
Moreover, by Property~\ref*{lemma_properties_of_ExistsWeakCondorcet_reduction}.(7), we can assume that $\gamma[U_1 U_2] = u_1 u_2$.
Consider net $\NetEWC1$.
Observe that feature $U_1$ in $\NetEWC1$ has no parents.
Hence, once $U_1$ is flipped from $u_1$ to $\ol{u_1}$, it cannot be flipped back.
Therefore, since $\beta_c[U_1] = \gamma[U_1] = u_1$, in any improving flipping sequence in $\NetEWC1$ from $\beta_c$ to $\gamma$ (if exists), feature $U_1$ cannot be flipped at all.
However, by the definition of the CP tables in $\NetEWC1$, when $U_1$ has value $u_1$, variable features can be flipped only from non-overlined to overlined.
Hence, from $\Down \neq \emptyset$ it follows that $\gamma\not\PrefNet{\NetEWC1}\beta_c$.
A similar argument (but focused on $U_2$) shows that $\gamma\not\PrefNet{\NetEWC2}\beta_c$ and $\gamma\not\PrefNet{\NetEWC6}\beta_c$.
Therefore, $|\SetAgentsPrefMNet{\MNetExistsWeakCond(\Phi)}(\gamma,\beta_c)|\leq 3$, and hence $\gamma\not\PrefMajorityMNet{\MNetExistsWeakCond(\Phi)}\beta_c$.
\end{proof}
\end{adjustwidth}

\medbreak

\noindent We are now ready to prove that $\Phi=(\exists X)(\forall Y)\lnot\phi(X,Y)$ is valid if and only if $\MNetExistsWeakCond(\Phi)$ has a majority optimal outcome.

\begin{itemize}
\item[$(\Rightarrow)$]
Assume that $\Phi$ is valid, hence there is an assignment $\sigma_X$ for the variables in $X$ such that $\sigma_X\in\WitnessSet$.
By Properties~\ref*{lemma_properties_of_ExistsWeakCondorcet_reduction}.(5), \ref*{lemma_properties_of_ExistsWeakCondorcet_reduction}.(6), \ref*{lemma_properties_of_ExistsWeakCondorcet_reduction}.(7), and~\ref*{lemma_properties_of_ExistsWeakCondorcet_reduction}.(8), $\beta_{\sigma_X}$ is Majority optimal in $\MNetExistsWeakCond(\Phi)$, and hence $\MNetExistsWeakCond(\Phi)$ has a majority optimal outcome.

\item[$(\Leftarrow)$]
Assume that $\Phi$ is not valid, hence there is no assignment in $\WitnessSet$, and so $O_c^{\WitnessSet}$ is empty.
By Properties~\ref*{lemma_properties_of_ExistsWeakCondorcet_reduction}.(1), \ref*{lemma_properties_of_ExistsWeakCondorcet_reduction}.(2), \ref*{lemma_properties_of_ExistsWeakCondorcet_reduction}.(3), and~\ref*{lemma_properties_of_ExistsWeakCondorcet_reduction}.(4), all outcomes in $O_d\cup O_c^{\NonWitnessSet}$ are not Majority optimal, and, since $O_c^{\WitnessSet}$ is empty, $\MNetExistsWeakCond(\Phi)$ does not have a majority optimal outcome.\qedhere
\end{itemize}
\end{proof}

\medbreak

\phantomsection\label{page_pointer:properties_of_IsStrongCondorcet_reduction_detailed}
\noindent\textbf{\cref{lemma_properties_of_IsStrongCondorcet_reduction}.}
\emph{Let $\Phi=(\exists X)(\forall Y)\lnot\phi(X,Y)$ be a quantified Boolean formula, where $\phi(X,Y)$ is a $3$CNF Boolean formula, defined over two disjoint sets $X$ and $Y$ of Boolean variables.
Then, $\Phi$ is valid if and only if $\MNetIsStrongCond(\Phi)$ does not have a majority optimum outcome.
In particular, when $\MNetIsStrongCond(\Phi)$ has a majority optimum outcome it is the outcome $\ol{\alpha}\in\OutMNet{\MNetIsStrongCond(\Phi)}$ assigning overlined values only to features $U_1$ and $U_2$.}

\begin{proof}
The intuition at the base of the proof is to put in relationship truth assignments over the variable set $X$ with outcomes of $\MNetIsStrongCond(\Phi)$.
In particular, given an assignment $\sigma_X$ over $X$, the associated outcome is $\beta_{\sigma_X}$, where $\sigma_X$ is encoded over the feature set $\SetFeat{V}$ in the usual way, and all other features have non-overlined values.
We show first that $\ol{\alpha}$ majority dominates any other outcome $\beta$ that is not in the form of a $\beta_{\sigma_X}$ outcome, and hence none of them is Majority optimum.
Moreover, for all such outcomes $\beta_{\sigma_X}$, $\beta_{\sigma_X}$ does not majority dominates $\ol{\alpha}$, and hence, again, none of them is Majority optimum.
So, only $\ol{\alpha}$ is candidate to be Majority optimum.
Then, we show that if there is an assignment $\sigma_X$ such that $(\forall Y)\lnot\phi(X/\sigma_X,Y)$ is valid, then $\ol{\alpha}\not\PrefMajorityMNet{\MNetIsStrongCond(\Phi)}\beta_{\sigma_X}$, and hence $\ol{\alpha}$ is not Majority optimum, which implies that $\MNetIsStrongCond(\Phi)$ does not have any majority optimum outcome.
On the other hand, if there is no assignment $\sigma_X$ for which $(\forall Y)\lnot\phi(X/\sigma_X,Y)$ is valid, then, for all the outcomes $\beta_{\sigma_X}$, $\ol{\alpha}\PrefMajorityMNet{\MNetIsStrongCond(\Phi)}\beta_{\sigma_X}$, which implies that $\ol{\alpha}$ is majority optimum.

To prove the statement of the lemma we have to analyze the majority dominance relationship between $\ol{\alpha}$ and the other outcomes.
To organize this task, let us define the following sets of outcomes:
\begin{itemize}
\item $O_d={O_d}'\cup {O_d}''\cup {O_d}'''$, where
\begin{itemize}
\item ${O_d}'=\{\beta\in\OutMNet{\MNetIsStrongCond(\Phi)}\mid (\exists F)(F\in(\SetFeat{V'}\cup\SetFeat{W}\cup\SetFeat{P}\cup\SetFeat{D}\cup\SetFeat{A}\cup\SetFeat{B})\land \beta[F]=\ol{f})\}$;
\item ${O_d}''=\{\beta\in\OutMNet{\MNetIsStrongCond(\Phi)}\mid \beta[U_1 U_2]\neq u_1 u_2\}$;
\item ${O_d}'''=\{\beta\in\OutMNet{\MNetIsStrongCond(\Phi)}\mid (\exists i)(\beta[V_i^T V_i^F]=\ol{v_i^T}\ol{v_i^F})\}$.
\end{itemize}
\item $O_c=\{\beta\in\OutMNet{\MNetIsStrongCond(\Phi)}\mid \beta\notin O_d\}$.
\end{itemize}
Clearly, $O_d$ and $O_c$ constitute a partition of $\OutMNet{\MNetIsStrongCond(\Phi)}$.
On the contrary, ${O_d}'$, ${O_d}''$, and ${O_d}'''$, do not constitute a partition of $O_d$ because they are not disjoint.
Note that $\ol{\alpha}\in {O_d}''$.
We show that $\ol{\alpha}$ is the only outcome that might be Majority optimum.
We do so by showing that (1) all outcomes different from $\ol{\alpha}$ in $O_d\cup O_c$, but a specific subset $S$ (whose detailed characterization will be given toward the end of the proof) of outcomes of $O_c$, are majority dominated by $\ol{\alpha}$, which means that all outcomes in $(O_d\cup O_c)\setminus(\{\ol{\alpha}\}\cup S)$ are Majority optimum; and that (2) all outcomes in $S$, which might be empty, neither majority dominate, nor are majority dominated by, $\ol{\alpha}$, and hence they are not Majority optimum.
Therefore, $\ol{\alpha}$ is Majority optimum if and only if $S$ is empty.

Note that, by the definition of $\NetISC2$, $\ol{\alpha}\PrefNet{\NetISC2}\beta$ for any outcome $\beta\neq\ol{\alpha}$, and hence, since $\MNetIsStrongCond(\Phi)$ is a $3$\CPNet, if $\ol{\alpha}\PrefNet{\NetISC1}\beta$ or $\ol{\alpha}\PrefNet{\NetISC3}\beta$, then $\ol{\alpha}\PrefMajorityMNet{\MNetIsStrongCond(\Phi)}\beta$.

\medbreak

\noindent\textbf{Property~\ref*{lemma_properties_of_IsStrongCondorcet_reduction}.(1).} \emph{Let $\beta'\in {O_d}'$ be an outcome.
Then, $\ol{\alpha}\PrefMajorityMNet{\MNetIsStrongCond(\Phi)}\beta'$.}

\begin{adjustwidth}{1.5em}{}
\begin{proof}
Since $\beta'\in {O_d}'$, let $F\in(\SetFeat{V'}\cup\SetFeat{W}\cup\SetFeat{P}\cup\SetFeat{D}\cup\SetFeat{A}\cup\SetFeat{B})$ be a feature such that $\beta'[F] = \ol{f}$.
Because $\ol{\alpha}[F] = f$, $\beta'\neq \ol{\alpha}$, and hence $\ol{\alpha}\PrefNet{\NetISC2}\beta'$.
Consider net $\NetISC3$.
The following is an improving flipping sequence from $\beta'$ to $\ol{\alpha}$ in $\NetISC3$, showing that $\ol{\alpha}\PrefNet{\NetISC3}\beta'$, and hence that $\ol{\alpha}\PrefMajorityMNet{\MNetIsStrongCond(\Phi)}\beta'$.
Since $\beta'[F] = \ol{f}$, by the definition of the disjunctive interconnecting net embedded in $\NetISC3$, we can flip in the proper order some features of the interconnecting net to their overlined values, until we flip its apex $B$.
Once the apex has an overlined value, we flip to their overlined value $U_1$ and then $U_2$ (if they do not have an overlined value already).
After this, we flip to their non-overlined values all features in $\SetFeat{V}\cup\SetFeat{W}\cup\SetFeat{P}\cup\SetFeat{D}\cup\SetFeat{A}$ having overlined values in $\beta'$.
Next, we flip to their non-overlined values all features in $\SetFeat{V'}$ having overlined values, and subsequently those in $\SetFeat{B}$ having overlined values.
\end{proof}
\end{adjustwidth}

\medbreak

\noindent\textbf{Property~\ref*{lemma_properties_of_IsStrongCondorcet_reduction}.(2).} \emph{Let $\beta''\in {O_d}''$ be an outcome different from $\ol{\alpha}$.
Then, $\ol{\alpha}\PrefMajorityMNet{\MNetIsStrongCond(\phi)}\beta''$.}

\begin{adjustwidth}{1.5em}{}
\begin{proof}
Since we are assuming $\beta''\neq \ol{\alpha}$, it holds that $\ol{\alpha}\PrefNet{\NetISC2}\beta''$.
There are the three cases (1) $\beta''[U_1 U_2] = \ol{u_1} \ol{u_2}$, or (2)~$\beta''[U_1 U_2] = \ol{u_1} u_2$, or (3) $\beta''[U_1 U_2] = u_1 \ol{u_2}$:

\medskip \noindent (1)
Let us focus on net $\NetISC3$.
The following is an improving flipping sequence from $\beta''$ to $\ol{\alpha}$ in $\NetISC3$, showing that $\ol{\alpha}\PrefNet{\NetISC3}\beta''$, and hence that $\ol{\alpha}\PrefMajorityMNet{\MNetIsStrongCond(\Phi)}\beta''$.
We can flip all features, but $U_1$ and $U_2$, to their non-overlined values in a proper sequence which could be $\SetFeat{V}$, $\SetFeat{V'}$, $\SetFeat{W}$, $\SetFeat{P}$, $\SetFeat{D}$, $\SetFeat{A}$, and $\SetFeat{B}$.

\medskip \noindent (2)
Let us focus again on net $\NetISC3$.
The following is an improving flipping sequence from $\beta''$ to $\ol{\alpha}$ in $\NetISC3$, showing that $\ol{\alpha}\PrefNet{\NetISC3}\beta''$, and hence that $\ol{\alpha}\PrefMajorityMNet{\MNetIsStrongCond(\Phi)}\beta''$.
Since $\beta''[U_1 U_2] = \ol{u_1}u_2$, we can flip $U_2$ from $u_2$ to $\ol{u_2}$.
Now we are again in the case in which $\beta''[U_1 U_2] = \ol{u_1}\ol{u_2}$.
Hence, there is an improving flipping sequence to $\ol{\alpha}$ as shown in~(1).

\medskip \noindent (3)
Let us consider net $\NetISC1$.
The following is an improving flipping sequence from $\beta''$ to $\ol{\alpha}$ in $\NetISC1$, showing that $\ol{\alpha}\PrefNet{\NetISC1}\beta''$, and hence that $\ol{\alpha}\PrefMajorityMNet{\MNetIsStrongCond(\Phi)}\beta''$.
We can flip $U_1$ from $u_1$ to $\ol{u_1}$.
Then, in the order $\SetFeat{V}$, $\SetFeat{W}$, $\SetFeat{P}$, $\SetFeat{D}$, $\SetFeat{A}$, $\SetFeat{V'}$, and $\SetFeat{B}$, we flip features having overlined values  to their non-overlined values.
\end{proof}
\end{adjustwidth}

\medbreak

\noindent\textbf{Property~\ref*{lemma_properties_of_IsStrongCondorcet_reduction}.(3).} \emph{Let $\beta'''\in {O_d}'''$ be an outcome.
Then, $\ol{\alpha}\PrefMajorityMNet{\MNetIsStrongCond(\Phi)}\beta'''$.}

\begin{adjustwidth}{1.5em}{}
\begin{proof}
Since $\beta'''\in {O_d}''''$, there is a pair of features $(V_i^T,V_i^F)$ such that $\beta'''[V_i^T V_i^F] = \ol{v_i^T}\ol{v_i^F}$.
Because $\ol{\alpha}[V_i^T V_i^F] = v_i^T v_i^F$, $\beta'''\neq \ol{\alpha}$, and hence $\ol{\alpha}\PrefNet{\NetISC2}\beta'''$.
Consider net $\NetISC3$.
We show that there is an improving flipping sequence from $\beta'''$ to $\ol{\alpha}$ in $\NetISC3$, proving that $\ol{\alpha}\PrefNet{\NetISC3}\beta'''$, and hence that $\ol{\alpha}\PrefMajorityMNet{\MNetIsStrongCond(\Phi)}\beta'''$.

Since $\beta'''[V_i^T V_i^F] = \ol{v_i^T}\ol{v_i^F}$, if $\beta'''[V_i'] = v_i'$, we can flip the value of $V_i'$ from $v_i'$ to $\ol{v_i'}$.
This bring us in the case in which there is a feature $F$ in $\SetFeat{V'}$ with overlined value.
We have already shown in the proof of Property~\ref*{lemma_properties_of_IsStrongCondorcet_reduction}.(1) that in $\NetISC3$ there exists an improving flipping sequence from this outcome to $\ol{\alpha}$.
\end{proof}
\end{adjustwidth}

\medbreak

\noindent
The three properties above show that any outcome $\beta\in O_d$ that is different from $\ol{\alpha}$ is majority dominated by $\ol{\alpha}$.
This proves that any such outcome $\beta$ is not Majority optimum, and that $\ol{\alpha}$ majority dominates all of them.

Let us now consider outcomes in $O_c$.
Observe that, since $O_d$ and $O_c$ are disjoint, all outcomes $\beta_c\in O_c$ are characterized by having all features in $\SetFeat{V'}\cup\SetFeat{W}\cup\SetFeat{P}\cup\SetFeat{D}\cup\SetFeat{A}\cup\SetFeat{B}$ with non-overlined values, $\beta_c[U_1 U_2] = u_1 u_2$, and, for all variables $x_i\in X$, $\beta_c[V_i^T V_i^F] \neq \ol{v_i^T} \ol{v_i^F}$.

Given an assignment $\sigma_X$ for the variables in $X$, we define $\beta_{\sigma_X}\in O_c$ as the outcome encoding $\sigma_X$ over the feature set $\SetFeat{V}$ as usual.
Let us denote by $\NonWitnessSet_c$ the set of all \emph{complete} assignments $\sigma_X$ over $X$ such that $(\forall Y)\lnot\phi(X/\sigma_X,Y)$ is \emph{not} valid (i.e., such that $\phi(X/\sigma_X,Y)$ is satisfiable).
Let $\NonWitnessSet$ be the set of all (partial or complete) assignments $\sigma_X$ over $X$ such that there is an extension of $\sigma_X$ to $X$ belonging to $\NonWitnessSet_c$, and let $\WitnessSet$ be the set of all (partial or complete) assignments over $X$ not belonging to $\NonWitnessSet$.
Remember that if $\sigma_X$ is a complete assignment over $X$, then $\sigma_X$ itself is the unique extension of $\sigma_X$ to $X$.
Given the above definitions, $O_c^{\WitnessSet}=\{\beta_{\sigma_X}\in O_c\mid \sigma_X\in\WitnessSet\}$, and $O_c^{\NonWitnessSet}=\{\beta_{\sigma_X}\in O_c\mid \sigma_X\in\NonWitnessSet\}$ constitute a partition of $O_c$.
This implies that, if $\beta\in O_c$ is an outcome, then there is a (partial or complete) assignment $\sigma_X$ over the variables in $X$ such that $\beta = \beta_{\sigma_X}$.

We show that all outcomes in $O_c^{\NonWitnessSet}$ are  majority dominated by $\ol{\alpha}$, and that all outcomes in $O_c^{\WitnessSet}$ are not majority dominated by $\ol{\alpha}$ and do not majority dominate $\ol{\alpha}$ (i.e., $O_c^{\WitnessSet}$ is the set $S$ mentioned earlier).

For the following two properties it is useful to note that, for any outcome $\beta_c\in O_c$, $\beta_c$ assigns non-overlined values to all features in $\SetFeat{V'}\cup\SetFeat{B}$, and also $\ol{\alpha}$ assigns non-overlined values to features in $\SetFeat{V'}\cup\SetFeat{B}$.
Therefore, the part of net $\NetISC1$ over feature sets $\SetFeat{V'}$ and $\SetFeat{B}$ does not play an active role in any improving flipping sequence (if exists) either from $\ol{\alpha}$ to $\beta_c$, or from $\beta_c$ to $\ol{\alpha}$ because, in $\NetISC1$, features in $\SetFeat{V'}\cup\SetFeat{B}$ have no parents, and they have already their most preferred values in $\ol{\alpha}$ and $\beta_c$.

\medbreak

\noindent\textbf{Property~\ref*{lemma_properties_of_IsStrongCondorcet_reduction}.(4).} \emph{Let $\beta_c\in O_c^{\NonWitnessSet}$ be an outcome.
Then, $\ol{\alpha}\PrefMajorityMNet{\MNetIsStrongCond(\Phi)}\beta_c$.}

\begin{adjustwidth}{1.5em}{}
\begin{proof}
Let $\sigma_X\in\NonWitnessSet$ be the (partial or complete) assignment over $X$ such that $\beta_c = \beta_{\sigma_X}$.

Let us focus on net $\NetISC1$.
Consider now the non-quantified formula $\phi(X,Y)$.
If we consider the set $X\cup Y$ of all the Boolean variables in $\phi$, the assignment $\sigma_X$ is a partial assignment over $X\cup Y$.
Since $\sigma_X\in\NonWitnessSet$, there is an extension $\sigma_X'$ of $\sigma_X$ to $X$ such that $(\forall Y)\lnot\phi(X/\sigma_X',Y)$ is not valid (i.e., such that $\phi(X/\sigma_X',Y)$ is satisfiable), and hence there is an extension of $\sigma_X$ to $X\cup Y$ satisfying $\phi$.
Therefore, by \cref{lemma_Pref_Net_CNF_Summ_new}, $\ol{\alpha}\PrefNet{\NetISC1}\beta_c$.
Since $\ol{\alpha}\in {O_d}''$ and $\beta_c\in {O_c}$, $\beta_c\neq \ol{\alpha}$, and hence $\ol{\alpha}\PrefNet{\NetISC2}\beta_c$.
Thus, $\ol{\alpha}\PrefMajorityMNet{\MNetIsStrongCond(\Phi)}\beta_c$.
\end{proof}
\end{adjustwidth}

\medbreak

\noindent\textbf{Property~\ref*{lemma_properties_of_IsStrongCondorcet_reduction}.(5).} \emph{Let $\beta_c\in O_c^{\WitnessSet}$ be an outcome.
Then, $\ol{\alpha}\not\PrefMajorityMNet{\MNetIsStrongCond(\Phi)}\beta_c$ and $\beta_c\not\PrefMajorityMNet{\MNetIsStrongCond(\Phi)}\ol{\alpha}$.}

\begin{adjustwidth}{1.5em}{}
\begin{proof}
Let $\sigma_X\in\WitnessSet$ be the (partial or complete) assignment over $X$ such that $\beta_c = \beta_{\sigma_X}$.
Since $\sigma_X\in\WitnessSet$, there is no extension $\sigma_X'$ of $\sigma_X$ to $X$ such that $(\forall Y)\lnot\phi(X/\sigma_X',Y)$ is not valid, (i.e., such that $\phi(X/\sigma_X',Y)$ is satisfiable).

Now consider net $\NetISC1$.
We claim that $\beta_c\IncompNet{\NetISC1}\ol{\alpha}$.
Consider the non-quantified formula $\phi(X,Y)$.
If we consider the set $X\cup Y$ of all the Boolean variables in $\phi$, the assignment $\sigma_X$ is a partial assignment over $X\cup Y$.
Since $\phi(X/\sigma_X,Y)$ is not satisfiable, there is no extension of $\sigma_X$ to $X\cup Y$ satisfying $\phi$.
Therefore, by \cref{lemma_Pref_Net_CNF_Summ_new}, $\beta_c\IncompNet{\NetISC1}\ol{\alpha}$.

Now, since $\ol{\alpha}\in {O_d}''$ and $\beta_c\in {O_c}$, $\beta_c\neq \ol{\alpha}$, and hence $\ol{\alpha}\PrefNet{\NetISC2}\beta_c$.
To conclude, let us now focus on net $\NetISC3$.
By $\beta_c\in O_c$, all features in $\SetFeat{B}$ have non-overlined values, $\beta_c[U_1 U_2] = u_1 u_2$, and there is no pair of features $(V_i^T,V_i^F)$ such that $\beta_c[V_i^T V_i^F] = \ol{v_i^T}\ol{v_i^F}$.
Since in $\NetISC3$ feature $U_1$ is attached to the apex $B$ of the interconnecting net, and $U_1$ can be flipped from $u_1$ to $\ol{u_1}$ only when the apex $B$ of the interconnecting net has an overlined value, there is no improving flipping sequence from $\beta_c$ to $\ol{\alpha}$, and hence $\ol{\alpha}\not\PrefNet{\NetISC3}\beta_c$.

To summarize, we showed that $\beta_c\IncompNet{\NetISC1}\ol{\alpha}$, $\ol{\alpha}\PrefNet{\NetISC2}\beta_c$, and $\ol{\alpha}\not\PrefNet{\NetISC3}\beta_c$.
Therefore, $\ol{\alpha}\not\PrefMajorityMNet{\MNetIsStrongCond(\Phi)}\beta_c$, because $|\SetAgentsPrefMNet{\MNetIsStrongCond(\Phi)}(\ol{\alpha},\beta_c)| < 2$, and $\beta_c\not\PrefMajorityMNet{\MNetIsStrongCond(\Phi)}\ol{\alpha}$, because $|\SetAgentsPrefMNet{\MNetIsStrongCond(\Phi)}(\beta_c,\ol{\alpha})| < 2$.
\end{proof}
\end{adjustwidth}

\medbreak

\noindent We are now ready to prove that $\Phi=(\exists X)(\forall Y)\lnot\phi(X,Y)$ is valid if and only if $\ol{\alpha}$ is not majority optimum in $\MNetIsStrongCond(\Phi)$.

\begin{itemize}
\item[$(\Rightarrow)$]
Assume that $\Phi$ is valid.
By Properties~\ref*{lemma_properties_of_IsStrongCondorcet_reduction}.(1), \ref*{lemma_properties_of_IsStrongCondorcet_reduction}.(2), \ref*{lemma_properties_of_IsStrongCondorcet_reduction}.(3), and~\ref*{lemma_properties_of_IsStrongCondorcet_reduction}.(4), all outcomes in $O_d\cup O_c^{\NonWitnessSet}$ different from $\ol{\alpha}$ are not Majority optimum.
Moreover, from Property~\ref*{lemma_properties_of_IsStrongCondorcet_reduction}.(5), all outcomes in $O_c^{\WitnessSet}$ are not majority optimum, and $\ol{\alpha}$ does not majority dominate outcomes in $O_c^{\NonWitnessSet}$.
Hence, also $\ol{\alpha}$ is not Majority optimum.
Therefore, in $\MNetIsStrongCond(\Phi)$ there is no majority optimum outcome.

\item[$(\Leftarrow)$]
Assume that $\Phi$ is not valid, hence there is no assignment in $\WitnessSet$, and so $O_c^{\WitnessSet}$ is empty.
By Properties~\ref*{lemma_properties_of_IsStrongCondorcet_reduction}.(1), \ref*{lemma_properties_of_IsStrongCondorcet_reduction}.(2), \ref*{lemma_properties_of_IsStrongCondorcet_reduction}.(3), and~\ref*{lemma_properties_of_IsStrongCondorcet_reduction}.(4), all outcomes in $O_d\cup O_c^{\NonWitnessSet}$ different from $\ol{\alpha}$ are majority dominated by $\ol{\alpha}$.
Since $O_c^{\WitnessSet}$ is empty, $\ol{\alpha}$ majority dominates all other outcomes, which implies that in $\MNetIsStrongCond(\Phi)$ there is a majority optimum outcome, which is $\ol{\alpha}$.\qedhere
\end{itemize}
\end{proof}


\begin{thebibliography}{78}
\providecommand{\natexlab}[1]{#1}
\providecommand{\url}[1]{\texttt{#1}}
\expandafter\ifx\csname urlstyle\endcsname\relax
  \providecommand{\doi}[1]{doi: #1}\else
  \providecommand{\doi}{doi: \begingroup \urlstyle{rm}\Url}\fi

\bibitem[Airiau et~al.(2011)Airiau, Endriss, Grandi, Porello, and
  Uckelman]{Airiau2011}
S.~Airiau, U.~Endriss, U.~Grandi, D.~Porello, and J.~Uckelman.
\newblock Aggregating dependency graphs into voting agendas in multi-issue
  elections.
\newblock In T.~Walsh, editor, \emph{Proceedings of the 22nd International
  Joint Conference on Artificial Intelligence (IJCAI 2011)}, pages 18--23,
  Barcelona, Catalonia, Spain, July 16--22 2011.
\newblock URL \url{http://ijcai.org/Proceedings/11/Papers/015.pdf}.

\bibitem[Arrow et~al.(2002)Arrow, Sen, and Suzumura]{Arrow2002}
K.~J. Arrow, A.~K. Sen, and K.~Suzumura, editors.
\newblock \emph{Handbook of Social Choice and Welfare, Volume 1}, volume~19 of
  \emph{Handbooks in Economics}.
\newblock North Holland/Elsevier, Amsterdam, The Netherlands, 2002.

\bibitem[Arrow et~al.(2011)Arrow, Sen, and Suzumura]{Arrow2011}
K.~J. Arrow, A.~K. Sen, and K.~Suzumura, editors.
\newblock \emph{Handbook of Social Choice and Welfare, Volume 2}, volume~19 of
  \emph{Handbooks in Economics}.
\newblock North Holland/Elsevier, Amsterdam, The Netherlands, 2011.

\bibitem[Baumeister and Rothe(2016)]{BaumeisterRoth:Voting}
D.~Baumeister and J.~Rothe.
\newblock Preference aggregation by voting.
\newblock In J.~Rothe, editor, \emph{Economics and Computation}, chapter~4,
  pages 197--326. Springer-Verlag, Berlin Heidelberg, Germany, 2016.

\bibitem[Boerkoel~Jr. et~al.(2010)Boerkoel~Jr., Durfee, and
  Purrington]{Boerkoel2010}
J.~C. Boerkoel~Jr., E.~H. Durfee, and K.~Purrington.
\newblock Generalized solution techniques for preference-based constrained
  optimization with {CP}-nets.
\newblock In W.~van~der Hoek, G.~A. Kaminka, Y.~Lesp{\'e}rance, M.~Luck, and
  S.~Sen, editors, \emph{Proceedings of the 9th International Conference on
  Autonomous Agents and Multiagent Systems (AAMAS 2010)}, pages 291--298,
  Toronto, ON, Canada, May 10--14 2010.
\newblock URL \url{http://dl.acm.org/citation.cfm?doid=1838206.1838247}.

\bibitem[Borgwardt et~al.(2016)Borgwardt, Fazzinga, Lukasiewicz, Shrivastava,
  and Tifrea-Marciuska]{Borgwardt2016}
S.~Borgwardt, B.~Fazzinga, T.~Lukasiewicz, A.~Shrivastava, and
  O.~Tifrea-Marciuska.
\newblock Preferential query answering over the {Semantic Web} with
  possibilistic networks.
\newblock In S.~Kambhampati, editor, \emph{Proceedings of the 25th
  International Joint Conference on Artificial Intelligence (IJCAI 2016)},
  pages 994--1000, New York, NY, USA, July 9--15 2016.
\newblock URL \url{http://www.ijcai.org/Abstract/16/145}.

\bibitem[Boutilier et~al.(1999)Boutilier, Brafman, Hoos, and
  Poole]{Boutilier1999}
C.~Boutilier, R.~I. Brafman, H.~H. Hoos, and D.~Poole.
\newblock Reasoning with conditional ceteris paribus preference statements.
\newblock In K.~B. Laskey and H.~Prade, editors, \emph{Proceedings of the 15th
  Conference on Uncertainty in Artificial Intelligence (UAI `99)}, pages
  71--80, Stockholm, Sweden, July 30--August 1 1999.
\newblock URL \url{www.cs.toronto.edu/kr/papers/CPnets.pdf}.

\bibitem[Boutilier et~al.(2004{\natexlab{a}})Boutilier, Brafman, Domshlak,
  Hoos, and Poole]{Boutilier2004}
C.~Boutilier, R.~I. Brafman, C.~Domshlak, H.~H. Hoos, and D.~Poole.
\newblock {CP}-nets: A tool for representing and reasoning with conditional
  ceteris paribus preference statements.
\newblock \emph{Journal of Artificial Intelligence Research}, 21:\penalty0
  135--191, 2004{\natexlab{a}}.
\newblock \doi{10.1613/jair.1234}.

\bibitem[Boutilier et~al.(2004{\natexlab{b}})Boutilier, Brafman, Domshlak,
  Hoos, and Poole]{Boutilier2004a}
C.~Boutilier, R.~I. Brafman, C.~Domshlak, H.~H. Hoos, and D.~Poole.
\newblock Preference-based constrained optimization with {CP}-nets.
\newblock \emph{Computational Intelligence}, 20\penalty0 (2):\penalty0
  137--157, 2004{\natexlab{b}}.
\newblock \doi{10.1111/j.0824-7935.2004.00234.x}.

\bibitem[Brafman and Chernyavsky(2005)]{Brafman2005}
R.~I. Brafman and Y.~Chernyavsky.
\newblock Planning with goal preferences and constraints.
\newblock In S.~Biundo, K.~L. Myers, and K.~Rajan, editors, \emph{Proceedings
  of the 15th International Conference on Automated Planning and Scheduling
  (ICAPS 2005)}, pages 182--191, Monterey, CA, USA, June 5--10 2005.
\newblock URL \url{https://www.aaai.org/Papers/ICAPS/2005/ICAPS05-019.pdf}.

\bibitem[Brafman and Domshlak(2002)]{Brafman2002}
R.~I. Brafman and C.~Domshlak.
\newblock {TCP}-nets for preference-based product configuration.
\newblock In M.~Aldanondo, editor, \emph{Proceedings of the ECAI 2002 Workshop
  on Configuration}, pages 101--106, Lyon, France, July 22--23 2002.
\newblock URL
  \url{http://citeseerx.ist.psu.edu/viewdoc/download?doi=10.1.1.131.227\&rep=rep1\&type=pdf}.

\bibitem[Brafman and Domshlak(2009)]{Brafman2009}
R.~I. Brafman and C.~Domshlak.
\newblock Preference handling --- {A}n introductory tutorial.
\newblock \emph{AI Magazine}, 30\penalty0 (1):\penalty0 58--86, 2009.
\newblock \doi{10.1609/aimag.v30i1.2114}.

\bibitem[Brafman et~al.(2010{\natexlab{a}})Brafman, Domshlak, Engel, and
  Tennenholtz]{Brafman2010a}
R.~I. Brafman, C.~Domshlak, Y.~Engel, and M.~Tennenholtz.
\newblock Transferable utility planning games.
\newblock In M.~Fox and D.~Poole, editors, \emph{Proceedings of the 24th AAAI
  Conference on Artificial Intelligence (AAAI 2010)}, pages 709--714, Atlanta,
  GA, USA, July 11-15 2010{\natexlab{a}}.
\newblock URL
  \url{https://www.aaai.org/ocs/index.php/AAAI/AAAI10/paper/view/1627}.

\bibitem[Brafman et~al.(2010{\natexlab{b}})Brafman, Rossi, Salvagnin, Venable,
  and Walsh]{Brafman2010}
R.~I. Brafman, F.~Rossi, D.~Salvagnin, K.~B. Venable, and T.~Walsh.
\newblock Finding the next solution in constraint- and preference-based
  knowledge representation formalisms.
\newblock In F.~Lin, U.~Sattler, and M.~Truszczynski, editors,
  \emph{Proceedings of the 12th International Conference on the Principles of
  Knowledge Representation and Reasoning (KR 2010)}, pages 425--433, Toronto,
  ON, Canada, May 9--13 2010{\natexlab{b}}.
\newblock URL \url{http://aaai.org/ocs/index.php/KR/KR2010/paper/view/1348}.

\bibitem[Brams et~al.(1998)Brams, Kilgour, and Zwicker]{Brams1998}
S.~J. Brams, D.~M. Kilgour, and W.~S. Zwicker.
\newblock The paradox of multiple elections.
\newblock \emph{Social Choice and Welfare}, 15\penalty0 (2):\penalty0 211--236,
  1998.
\newblock \doi{10.1007/s003550050101}.

\bibitem[Brandt et~al.(2013)Brandt, Conitzer, and Endriss]{BrandtEtAlMAS2013}
F.~Brandt, V.~Conitzer, and U.~Endriss.
\newblock Computational social choice.
\newblock In G.~Weiss, editor, \emph{Multiagent Systems}, pages 213--283. MIT
  Press, Cambridge, MA, 2013.

\bibitem[Brandt et~al.(2016)Brandt, Conitzer, Endriss, Lang, and
  Procaccia]{SurveyCompSocChoice}
F.~Brandt, V.~Conitzer, U.~Endriss, J.~Lang, and A.~D. Procaccia, editors.
\newblock \emph{Handbook of Computational Social Choice}.
\newblock Cambridge University Press, New York, NY, USA, 2016.

\bibitem[Chevaleyre et~al.(2008)Chevaleyre, Endriss, Lang, and
  Maudet]{Lang_ai_magazine}
Y.~Chevaleyre, U.~Endriss, J.~Lang, and N.~Maudet.
\newblock Preference handling in combinatorial domains: From {AI} to social
  choice.
\newblock \emph{AI Magazine}, 29\penalty0 (4):\penalty0 37--46, 2008.
\newblock \doi{10.1609/aimag.v29i4.2201}.

\bibitem[Conitzer and Sandholm(2005)]{Conitzer2005}
V.~Conitzer and T.~Sandholm.
\newblock Communication complexity of common voting rules.
\newblock In M.~J. Riedl, John~Kearns and M.~K. Reiter, editors,
  \emph{Proceedings 6th {ACM} Conference on Electronic Commerce (EC 2005)},
  pages 78--87, Vancouver, BC, Canada, June 5--8 2005.
\newblock URL \url{http://dl.acm.org/citation.cfm?doid=1064009.1064018}.

\bibitem[Conitzer and Xia(2012)]{Conitzer2012}
V.~Conitzer and L.~Xia.
\newblock Paradoxes of multiple elections: An approximation approach.
\newblock In G.~Brewka, T.~Eiter, and S.~A. McIlraith, editors,
  \emph{Proceedings of the 13th International Conference on the Principles of
  Knowledge Representation and Reasoning (KR 2012)}, pages 179--187, Rome,
  Italy, June 10--14 2012.
\newblock URL
  \url{https://www.aaai.org/ocs/index.php/KR/KR12/paper/view/4541/4889}.

\bibitem[Conitzer et~al.(2011)Conitzer, Lang, and Xia]{Conitzer2011}
V.~Conitzer, J.~Lang, and L.~Xia.
\newblock Hypercubewise preference aggregation in multi-issue domains.
\newblock In T.~Walsh, editor, \emph{Proceedings of the 22nd International
  Joint Conference on Artificial Intelligence (IJCAI 2011)}, pages 158--163,
  Barcelona, Catalonia, Spain, July 16--22 2011.
\newblock URL \url{http://ijcai.org/Proceedings/11/Papers/038.pdf}.

\bibitem[Cornelio et~al.(2015)Cornelio, Grandi, Goldsmith, Mattei, Rossi, and
  Venable]{Cornelio2015}
C.~Cornelio, U.~Grandi, J.~Goldsmith, N.~Mattei, F.~Rossi, and K.~B. Venable.
\newblock Reasoning with {PCP}-nets in a multi-agent context.
\newblock In G.~Weiss, P.~Yolum, R.~H. Bordini, and E.~Elkind, editors,
  \emph{Proceedings of the 14th International Conference on Autonomous Agents
  and Multiagent Systems (AAMAS 2015)}, pages 969--977, Istanbul, Turkey, May
  4--8 2015.
\newblock URL \url{http://dl.acm.org/citation.cfm?id=2773276}.

\bibitem[Di~Noia et~al.(2015)Di~Noia, Lukasiewicz, Martinez, Simari, and
  Tifrea-Marciuska]{DiNoia2015}
T.~Di~Noia, T.~Lukasiewicz, M.~V. Martinez, G.~I. Simari, and
  O.~Tifrea-Marciuska.
\newblock Combining existential rules with the power of {CP}-theories.
\newblock In Q.~Yang and M.~Wooldridge, editors, \emph{Proceedings of the 24th
  International Joint Conference on Artificial Intelligence (IJCAI 2015)},
  pages 2918--2925, Buenos Aires, Argentina, July 25--31 2015.
\newblock URL \url{http://ijcai.org/Abstract/15/413}.

\bibitem[Felfernig et~al.(2014)Felfernig, Hotz, Bagley, and
  Tiihonen]{ConfigurationBook}
A.~Felfernig, L.~Hotz, C.~Bagley, and J.~Tiihonen, editors.
\newblock \emph{Knowledge-Based Configuration: From Research to Business
  Cases}.
\newblock Morgan Kaufmann/Elsevier, Waltham, MA, USA, 2014.

\bibitem[Felsenthal and Tideman(2014)]{Felsenthal2014}
D.~S. Felsenthal and N.~Tideman.
\newblock Weak condorcet winner(s) revisited.
\newblock \emph{Public Choice}, 160\penalty0 (3):\penalty0 313--326, 2014.
\newblock \doi{10.1007/s11127-014-0180-4}.

\bibitem[Garey and Johnson(1979)]{Garey1979}
M.~R. Garey and D.~S. Johnson.
\newblock \emph{Computers and Intractability. A guide to the theory of
  {NP-C}ompleteness}.
\newblock W.~H.~Freeman and Company, New York, NY, USA, 1979.

\bibitem[Goldsmith et~al.(2008)Goldsmith, Lang, Truszczynski, and
  Wilson]{Goldsmith2008}
J.~Goldsmith, J.~Lang, M.~Truszczynski, and N.~Wilson.
\newblock The computational complexity of dominance and consistency in
  {CP}-nets.
\newblock \emph{Journal of Artificial Intelligence Research}, 33:\penalty0
  403--432, 2008.
\newblock \doi{10.1613/jair.2627}.

\bibitem[Gottlob and Malizia(2014)]{Gottlob_Malizia2014}
G.~Gottlob and E.~Malizia.
\newblock Achieving new upper bounds for the hypergraph duality problem through
  logic.
\newblock In T.~A. Henzinger and D.~Miller, editors, \emph{Proceedings of the
  Joint Meeting of the 23rd EACSL Annual Conference on Computer Science Logic
  (CSL) and the 29th Annual ACM/IEEE Symposium on Logic in Computer Science
  (LICS) (CSL-LICS 2014)}, pages 43:1--43:10, Vienna, Austria, July 14--18
  2014.
\newblock URL \url{http://dl.acm.org/citation.cfm?doid=2603088.2603103}.

\bibitem[Gottlob and Malizia(2018)]{Gottlob_Malizia:DUAL_journal}
G.~Gottlob and E.~Malizia.
\newblock Achieving new upper bounds for the hypergraph duality problem through
  logic.
\newblock \emph{SIAM Journal on Computing}, 47\penalty0 (2):\penalty0 456--492,
  2018.
\newblock \doi{10.1137/15M1027267}.

\bibitem[Grandi et~al.(2014)Grandi, Luo, Maudet, and Rossi]{Grandi2014}
U.~Grandi, H.~Luo, N.~Maudet, and F.~Rossi.
\newblock Aggregating {CP}-nets with unfeasible outcomes.
\newblock In B.~O'Sullivan, editor, \emph{Principles and Practice of Constraint
  Programming - 20th International Conference, CP 2014, Lyon, France, September
  8--12, 2014. Proceedings}, volume 8656 of \emph{LNCS}, pages 366--381.
  Springer-Verlag, Berlin Heidelberg, Germany, 2014.
\newblock URL
  \url{https://link.springer.com/chapter/10.1007/978-3-319-10428-7_28}.

\bibitem[Greco et~al.(2009{\natexlab{a}})Greco, Malizia, Palopoli, and
  Scarcello]{bargset_kernel_nucleolus-conferenza}
G.~Greco, E.~Malizia, L.~Palopoli, and F.~Scarcello.
\newblock On the complexity of compact coalitional games.
\newblock In C.~Boutilier, editor, \emph{Proceedings of the 21st International
  Joint Conference on Artificial Intelligence (IJCAI 2009)}, pages 147--152,
  Pasadena, CA, USA, July 11--17 2009{\natexlab{a}}.
\newblock URL \url{https://www.ijcai.org/Proceedings/09/Papers/035.pdf}.

\bibitem[Greco et~al.(2009{\natexlab{b}})Greco, Malizia, Palopoli, and
  Scarcello]{constrained_games-conferenza}
G.~Greco, E.~Malizia, L.~Palopoli, and F.~Scarcello.
\newblock Constrained coalitional games: formal framework, properties, and
  complexity results.
\newblock In C.~Sierra, C.~Castelfranchi, K.~S. Decker, and J.~S. Sichman,
  editors, \emph{Proceedings of the 8th International Conference on Autonomous
  Agents and Multiagent Systems (AAMAS 2009)}, pages 1295--1296, Budapest,
  Hungary, May 10--15 2009{\natexlab{b}}.
\newblock URL \url{https://dl.acm.org/citation.cfm?doid=1558109.1558260}.

\bibitem[Greco et~al.(2010)Greco, Malizia, Palopoli, and
  Scarcello]{constrained_games-rivista}
G.~Greco, E.~Malizia, L.~Palopoli, and F.~Scarcello.
\newblock Non-transferable utility coalitional games via mixed-integer linear
  constraints.
\newblock \emph{Journal of Artificial Intelligence Research}, 38:\penalty0
  633--685, 2010.
\newblock \doi{10.1613/jair.3060}.

\bibitem[Greco et~al.(2011{\natexlab{a}})Greco, Malizia, Palopoli, and
  Scarcello]{core_bargset_kernel-rivista}
G.~Greco, E.~Malizia, L.~Palopoli, and F.~Scarcello.
\newblock On the complexity of core, kernel, and bargaining set.
\newblock \emph{Artificial Intelligence}, 175\penalty0 (12--13):\penalty0
  1877--1910, 2011{\natexlab{a}}.
\newblock \doi{10.1016/j.artint.2011.06.002}.

\bibitem[Greco et~al.(2011{\natexlab{b}})Greco, Malizia, Palopoli, and
  Scarcello]{core_coalition_structures-conferenza}
G.~Greco, E.~Malizia, L.~Palopoli, and F.~Scarcello.
\newblock On the complexity of the core over coalition structures.
\newblock In T.~Walsh, editor, \emph{Proceedings of the 22nd International
  Joint Conference on Artificial Intelligence (IJCAI 2011)}, pages 216--221,
  Barcelona, Catalonia, Spain, July 16--22 2011{\natexlab{b}}.
\newblock URL \url{http://ijcai.org/Proceedings/11/Papers/047.pdf}.

\bibitem[Greco et~al.(2014)Greco, Malizia, Palopoli, and
  Scarcello]{nucleolus-rivista}
G.~Greco, E.~Malizia, L.~Palopoli, and F.~Scarcello.
\newblock The complexity of the nucleolus in compact games.
\newblock \emph{ACM Transactions on Computation Theory}, 7\penalty0
  (1):\penalty0 3:1--3:52, 2014.
\newblock \doi{10.1145/2692372.2692374}.

\bibitem[Hansson(1996)]{Hansson1996}
S.~O. Hansson.
\newblock What is ceteris paribus preference?
\newblock \emph{Journal of Philosophical Logic}, 25\penalty0 (3):\penalty0
  307--332, 1996.
\newblock \doi{10.1007/BF00248152}.

\bibitem[Hansson(2002)]{Hansson2002}
S.~O. Hansson.
\newblock Preference logic.
\newblock In D.~M. Gabbay and F.~Guenthner, editors, \emph{Handbook of
  Philosophical Logic, Volume 4}, pages 319--393. Springer, Berlin Heidelberg,
  Germany, 2nd edition, 2002.
\newblock URL
  \url{http://link.springer.com/chapter/10.1007%2F978-94-017-0456-4_4}.

\bibitem[Ieong and Shoham(2005)]{Ieong2005}
S.~Ieong and Y.~Shoham.
\newblock Marginal contribution nets: a compact representation scheme for
  coalitional games.
\newblock In J.~Riedl, M.~J. Kearns, and M.~K. Reiter, editors,
  \emph{Proceedings 6th {ACM} Conference on Electronic Commerce (EC 2005)},
  pages 193--202, Vancouver, BC, Canada, June 5--8 2005.
\newblock URL \url{https://dl.acm.org/citation.cfm?doid=1064009.1064030}.

\bibitem[Johnson(1990)]{Johnson1990}
D.~S. Johnson.
\newblock A catalog of complexity classes.
\newblock In J.~van Leeuwen, editor, \emph{Handbook of Theoretical Computer
  Science (Vol.~A)}, pages 67--161. Elsevier Science Publishers~B.V.,
  Amsterdam, The Netherlands, 1990.

\bibitem[Karp(1972)]{Karp1972}
R.~M. Karp.
\newblock Reducibility among combinatorial problems.
\newblock In R.~E. Miller, J.~W. Thatcher, and J.~D. Bohlinger, editors,
  \emph{Complexity of Computer Computations. Proceedings of a symposium on the
  Complexity of Computer Computations, held March 20--22, 1972, at the IBM
  Thomas J. Watson Research Center, Yorktown Heights, New York}, The IBM
  Research Symposia Series, pages 85--103. Springer-Verlag, Berlin Heidelberg,
  Germany, 1972.
\newblock URL
  \url{http://link.springer.com/chapter/10.1007/978-1-4684-2001-2_9}.

\bibitem[Keeney and Raiffa(1976)]{Keeney1976}
R.~L. Keeney and H.~Raiffa.
\newblock \emph{Decisions with Multiple Objectives: Preferences and Value
  Tradeoffs}.
\newblock Wiley Series in Probability and Mathematical Statistics. John Wiley
  \& Sons, New York, NY, USA, 1976.

\bibitem[Lacy and Niou(2000)]{Lacy2000}
D.~Lacy and E.~M.~S. Niou.
\newblock A problem with referendums.
\newblock \emph{Journal of Theoretical Politics}, 12\penalty0 (1):\penalty0
  5--31, 2000.
\newblock \doi{10.1177/0951692800012001001}.

\bibitem[Lang(2002)]{Lang2002}
J.~Lang.
\newblock From preference representation to combinatorial vote.
\newblock In D.~Fensel, F.~Giunchiglia, D.~L. McGuinness, and M.-A. Williams,
  editors, \emph{Proceedings of the 8th International Conference on Principles
  and Knowledge Representation and Reasoning (KR 2002)}, pages 277--290,
  Toulouse, France, April 22--25 2002.
\newblock URL \url{http://www.lamsade.dauphine.fr/~lang/papers/l02.ps}.

\bibitem[Lang(2004)]{Lang2004}
J.~Lang.
\newblock Logical preference representation and combinatorial vote.
\newblock \emph{Annals of Mathematics and Artificial Intelligence}, 42\penalty0
  (1):\penalty0 37--71, 2004.
\newblock \doi{10.1023/B:AMAI.0000034522.25580.09}.

\bibitem[Lang(2007)]{Lang2007}
J.~Lang.
\newblock Vote and aggregation in combinatorial domains with structured
  preferences.
\newblock In M.~M. Veloso, editor, \emph{Proceedings of the 20th International
  Joint Conference on Artificial Intelligence (IJCAI 2007)}, pages 1366--1371,
  Hyderabad, India, January 6--12 2007.
\newblock URL \url{http://ijcai.org/Proceedings/07/Papers/220.pdf}.

\bibitem[Lang and Xia(2009)]{Lang2009}
J.~Lang and L.~Xia.
\newblock Sequential composition of voting rules in multi-issue domains.
\newblock \emph{Mathematical Social Sciences}, 57\penalty0 (3):\penalty0
  304--324, 2009.
\newblock \doi{10.1016/j.mathsocsci.2008.12.010}.

\bibitem[Lang and Xia(2016)]{Lang_handbook}
J.~Lang and L.~Xia.
\newblock Voting in combinatorial domains.
\newblock In F.~Brandt, V.~Conitzer, U.~Endriss, J.~Lang, and A.~D. Procaccia,
  editors, \emph{Handbook of Computational Social Choice}, pages 197--222.
  Cambridge University Press, New York, NY, USA, 2016.
\newblock URL \url{http://www.lamsade.dauphine.fr/%7Elang/papers/LX16.pdf}.

\bibitem[Li et~al.(2010{\natexlab{a}})Li, Vo, and Kowalczyk]{Li2010}
M.~Li, Q.~B. Vo, and R.~Kowalczyk.
\newblock An effcient majority-rule-based approach for collective decision
  making with {CP}-nets.
\newblock In F.~Lin, U.~Sattler, and M.~Truszczynski, editors,
  \emph{Proceedings of the 12th International Conference on Principles of
  Knowledge Representation and Reasoning (KR 2010)}, pages 578--580, Toronto,
  ON, Canada, May 9--13 2010{\natexlab{a}}.
\newblock URL \url{http://aaai.org/ocs/index.php/KR/KR2010/paper/view/1229}.

\bibitem[Li et~al.(2010{\natexlab{b}})Li, Vo, and Kowalczyk]{Li2010a}
M.~Li, Q.~B. Vo, and R.~Kowalczyk.
\newblock An efficient procedure for collective decision-making with {CP}-nets.
\newblock In H.~Coelho, R.~Studer, and M.~Wooldridge, editors,
  \emph{Proceedings of the 19th European Conference on Artificial Intelligence
  (ECAI 2010)}, pages 375--380, Lisbon, Portugal, August 16--20
  2010{\natexlab{b}}.
\newblock URL \url{http://ebooks.iospress.nl/publication/5801}.

\bibitem[Li et~al.(2011)Li, Vo, and Kowalczyk]{Li2011}
M.~Li, Q.~B. Vo, and R.~Kowalczyk.
\newblock Majority-rule-based preference aggregation on multi-attribute domains
  with {CP}-nets.
\newblock In L.~Sonenberg, P.~Stone, K.~Tumer, and P.~Yolum, editors,
  \emph{Proceedings of the 10th International Conference on Autonomous Agents
  and Multiagent Systems (AAMAS 2011)}, pages 659--666, Taipei, Taiwan, May
  2--6 2011.
\newblock URL \url{http://dl.acm.org/citation.cfm?id=2031711}.

\bibitem[Li et~al.(2015)Li, Vo, and Kowalczyk]{Li2015}
M.~Li, Q.~B. Vo, and R.~Kowalczyk.
\newblock Aggregating multi-valued {CP}-nets: {A CSP}-based approach.
\newblock \emph{Journal of Heuristics}, 21\penalty0 (1):\penalty0 107--140,
  2015.
\newblock \doi{10.1007/s10732-014-9276-8}.

\bibitem[Loreggia et~al.(2018)Loreggia, Mattei, Rossi, and
  Venable]{Loreggia2018}
A.~Loreggia, N.~Mattei, F.~Rossi, and K.~B. Venable.
\newblock A notion of distance between cp-nets.
\newblock In M.~Dastani and G.~Sukthankar, editors, \emph{Proceedings of the
  17th International Conference on Autonomous Agents and Multiagent Systems
  (AAMAS 2018)}, Stockholm, Sweden, July 10--13 2018.
\newblock To appear.

\bibitem[Lukasiewicz and Malizia(2016)]{LukasiewiczMaliziaAAAI2016}
T.~Lukasiewicz and E.~Malizia.
\newblock On the complexity of {$m$CP}-nets.
\newblock In D.~Schuurmans and M.~Wellman, editors, \emph{Proceedings of the
  30th AAAI Conference on Artificial Intelligence (AAAI 2016)}, pages 558--564,
  Phoenix, AZ, USA, February 12--17 2016.
\newblock URL
  \url{http://www.aaai.org/ocs/index.php/AAAI/AAAI16/paper/view/12239}.

\bibitem[Lukasiewicz and Malizia(2017)]{LukasiewiczMalizia-GeneralThetaPK}
T.~Lukasiewicz and E.~Malizia.
\newblock A novel characterization of the complexity class {$\Theta^P_k$} based
  on counting and comparison.
\newblock \emph{Theoretical Computer Science}, 694:\penalty0 21--33, 2017.
\newblock \doi{10.1016/j.tcs.2017.06.023}.

\bibitem[Lukasiewicz et~al.(2013)Lukasiewicz, Martinez, and
  Simari]{Lukasiewicz2013}
T.~Lukasiewicz, M.~V. Martinez, and G.~I. Simari.
\newblock Preference-based query answering in {D}atalog+/-- ontologies.
\newblock In F.~Rossi, editor, \emph{Proceedings of the 23rd International
  Joint Conference on Artificial Intelligence (IJCAI 2013)}, pages 1017--1023,
  Beijing, China, August 3--9 2013.
\newblock URL
  \url{http://www.aaai.org/ocs/index.php/IJCAI/IJCAI13/paper/view/6505}.

\bibitem[Lukasiewicz et~al.(2014)Lukasiewicz, Martinez, Simari, and
  Tifrea-Marciuska]{Lukasiewicz2014}
T.~Lukasiewicz, M.~V. Martinez, G.~I. Simari, and O.~Tifrea-Marciuska.
\newblock Ontology-based query answering with group preferences.
\newblock \emph{{ACM} Transactions on Internet Technology}, 14\penalty0
  (4):\penalty0 25:1--25:24, 2014.
\newblock \doi{10.1145/2677207}.

\bibitem[Maran et~al.(2013)Maran, Nicolas, Pini, Rossi, and Venable]{Maran2013}
A.~Maran, M.~Nicolas, M.~S. Pini, F.~Rossi, and K.~B. Venable.
\newblock A framework for aggregating influenced {CP}-nets and its resistance
  to bribery.
\newblock In M.~desJardins and M.~L. Littman, editors, \emph{Proceedings of the
  27th AAAI Conference on Artificial Intelligence (AAAI 2013)}, pages 668--678,
  Bellevue, WA, USA, July 14--18 2013.
\newblock URL
  \url{https://www.aaai.org/ocs/index.php/AAAI/AAAI13/paper/viewFile/6383/7233}.

\bibitem[Mattei et~al.(2013)Mattei, Pini, Rossi, and Venable]{Mattei2013}
N.~Mattei, M.~S. Pini, F.~Rossi, and K.~B. Venable.
\newblock Bribery in voting with {CP}-nets.
\newblock \emph{Annals of Mathematics and Artificial Intelligence}, 68\penalty0
  (1--3):\penalty0 135--160, 2013.
\newblock \doi{10.1007/s10472-013-9330-5}.

\bibitem[Papadimitriou(1994)]{Papadimitriou1994}
C.~H. Papadimitriou.
\newblock \emph{Computational Complexity}.
\newblock Addison Wesley, Reading, MA, USA, 1994.

\bibitem[Papadimitriou and Yannakakis(1984)]{Papadimitriou1984}
C.~H. Papadimitriou and M.~Yannakakis.
\newblock The complexity of facets (and some facets of complexity).
\newblock \emph{Journal of Computer and System Sciences}, 28\penalty0
  (2):\penalty0 244--259, 1984.
\newblock \doi{10.1016/0022-0000(84)90068-0}.

\bibitem[Prestwich et~al.(2005)Prestwich, Rossi, Venable, and
  Walsh]{Prestwich2005}
S.~D. Prestwich, F.~Rossi, K.~B. Venable, and T.~Walsh.
\newblock Constraint-based preferential optimization.
\newblock In M.~M. Veloso and S.~Kambhampati, editors, \emph{Proceedings of the
  20th National Conference on Artificial Intelligence (AAAI 2005)}, pages
  461--466, Pittsburgh, PA, USA, July 9--13 2005.
\newblock URL \url{https://www.aaai.org/Papers/AAAI/2005/AAAI05-073.pdf}.

\bibitem[Ricci et~al.(2015)Ricci, Rokach, and Shapira]{HandbookRecSys}
F.~Ricci, L.~Rokach, and B.~Shapira, editors.
\newblock \emph{Recommender Systems Handbook}.
\newblock Springer, New York, NY, USA, 2nd edition, 2015.

\bibitem[Rossi et~al.(2004)Rossi, Venable, and Walsh]{Rossi2004}
F.~Rossi, K.~B. Venable, and T.~Walsh.
\newblock {$m$CP N}ets: {R}epresenting and reasoning with preferences of
  multiple agents.
\newblock In D.~L. McGuinness and G.~Ferguson, editors, \emph{Proceedings of
  the 19th National Conference on Artificial Intelligence (AAAI 2004)}, pages
  729--734, San Jose, CA, USA, July 25--29 2004.
\newblock URL \url{http://www.aaai.org/Library/AAAI/2004/aaai04-115.php}.

\bibitem[Russell(2012)]{Russell2012}
R.~A. Russell.
\newblock Planning with preferences using maximum satisfiability.
\newblock UCAM-CL-TR 822, Computing Laboratory, University of Cambridge, UK,
  2012.
\newblock URL \url{http://www.cl.cam.ac.uk/techreports/UCAM-CL-TR-822.html}.

\bibitem[Shaparau et~al.(2006)Shaparau, Pistore, and Traverso]{Shaparau2006}
D.~Shaparau, M.~Pistore, and P.~Traverso.
\newblock Contingent planning with goal preferences.
\newblock In Y.~Gil and R.~J. Mooney, editors, \emph{Proceedings of the 21st
  National Conference on Artificial Intelligence (AAAI 2006)}, pages 927--935,
  Boston, MA, USA, July 16--20 2006.
\newblock URL \url{https://www.aaai.org/Papers/AAAI/2006/AAAI06-146.pdf}.

\bibitem[Sikdar et~al.(2017{\natexlab{a}})Sikdar, Adali, and Xia]{Sikdar2017}
S.~Sikdar, S.~Adali, and L.~Xia.
\newblock Optimal decision making with {CP}-nets and {PCP}-nets.
\newblock In H.~Aziz, J.~P. Dickerson, O.~Lev, and N.~Mattei, editors,
  \emph{Proceedings of the 4th Workshop on Exploring Beyond the Worst Case In
  Computational Social Choice (EXPLORE 2017)}, pages 32--40, S{\~{a}}o Paulo,
  Brazil, May 9 2017{\natexlab{a}}.
\newblock URL
  \url{http://www.explore-2017.preflib.org/wp-content/uploads/2017/04/paper_6.pdf}.

\bibitem[Sikdar et~al.(2017{\natexlab{b}})Sikdar, Adali, and Xia]{Sikdar2017a}
S.~Sikdar, S.~Adali, and L.~Xia.
\newblock Optimal decision making with {CP}-nets and {PCP}-nets (extended
  abstract).
\newblock In K.~Larson, M.~Winikoff, S.~Das, and E.~Durfee, editors,
  \emph{Proceedings of the 16th International Conference on Autonomous Agents
  and Multiagent Systems (AAMAS 2017)}, pages 1736--1738, S{\~{a}}o Paulo,
  Brazil, May 8--12 2017{\natexlab{b}}.
\newblock URL \url{http://dl.acm.org/citation.cfm?id=3091422}.

\bibitem[Son and Pontelli(2006)]{Son2006}
T.~C. Son and E.~Pontelli.
\newblock Planning with preferences using logic programming.
\newblock \emph{Theory and Practice of Logic Programming}, 6\penalty0
  (5):\penalty0 559--607, 2006.
\newblock \doi{10.1017/S1471068406002717}.

\bibitem[Stein et~al.(2014)Stein, Nunes, and Cirilo]{Stein2014}
J.~Stein, I.~Nunes, and E.~Cirilo.
\newblock Preference-based feature model configuration with multiple
  stakeholders.
\newblock In S.~Gnesi, A.~Fantechi, P.~Heymans, J.~Rubin, K.~Czarnecki, and
  D.~Dhungana, editors, \emph{Proceedings of the 18th International Software
  Product Line Conference (SPLC `14)}, pages 132--141, Florence, Italy,
  September 15--19 2014.
\newblock URL \url{http://dl.acm.org/citation.cfm?doid=2648511.2648525}.

\bibitem[Stockmeyer(1976)]{Stockmeyer1976}
L.~J. Stockmeyer.
\newblock The polynomial-time hierarchy.
\newblock \emph{Theoretical Computer Science}, 3\penalty0 (1):\penalty0 1--22,
  1976.
\newblock \doi{10.1016/0304-3975(76)90061-X}.

\bibitem[Wagner(1988)]{Wagner1988}
K.~W. Wagner.
\newblock Bounded query computations.
\newblock In \emph{Proceedings of the 3rd IEEE Conference on Structure in
  Complexity Theory (CoCo 1988)}, pages 260--277, Georgetown University,
  Washington D.~C., USA, June 14--17 1988.
\newblock URL
  \url{http://ieeexplore.ieee.org/xpl/articleDetails.jsp?arnumber=5286}.

\bibitem[Wagner(1990)]{Wagner1990}
K.~W. Wagner.
\newblock Bounded query classes.
\newblock \emph{SIAM Journal on Computing}, 19\penalty0 (5):\penalty0 833--846,
  1990.
\newblock \doi{10.1137/0219058}.

\bibitem[Wrathall(1976)]{Wrathall1976}
C.~Wrathall.
\newblock Complete sets and the polynomial-time hierarchy.
\newblock \emph{Theoretical Computer Science}, 3\penalty0 (1):\penalty0 23--33,
  1976.
\newblock \doi{10.1016/0304-3975(76)90062-1}.

\bibitem[Xia et~al.(2007{\natexlab{a}})Xia, Lang, and Ying]{Xia2007}
L.~Xia, J.~Lang, and M.~Ying.
\newblock Sequential voting rules and multiple elections paradoxes.
\newblock In D.~Samet, editor, \emph{Proceedings of the 11th Conference on
  Theoretical Aspects of Rationality and Knowledge (TARK 2007)}, pages
  279--288, Brussels, Belgium, June 25--27 2007{\natexlab{a}}.
\newblock URL \url{http://www.tark.org/proceedings/tark_jun25_07/p279-xia.pdf}.

\bibitem[Xia et~al.(2007{\natexlab{b}})Xia, Lang, and Ying]{Xia2007a}
L.~Xia, J.~Lang, and M.~Ying.
\newblock Strongly decomposable voting rules on multiattribute domains.
\newblock In A.~Howe and R.~C. Holte, editors, \emph{Proceedings of the 22nd
  AAAI Conference on Artificial Intelligence (AAAI 2007)}, pages 776--781,
  Vancouver, BC, Canada, July 22--26 2007{\natexlab{b}}.
\newblock URL \url{http://www.aaai.org/Papers/AAAI/2007/AAAI07-123.pdf}.

\bibitem[Xia et~al.(2008)Xia, Conitzer, and Lang]{Xia2008}
L.~Xia, V.~Conitzer, and J.~Lang.
\newblock Voting on multiattribute domains with cyclic preferential
  dependencies.
\newblock In D.~Fox and C.~P. Gomes, editors, \emph{Proceedings of the 23rd
  AAAI Conference on Artificial Intelligence (AAAI 2008)}, pages 202--207,
  Chicago, IL, USA, July 13--17 2008.
\newblock URL \url{http://www.aaai.org/Papers/AAAI/2008/AAAI08-032.pdf}.

\bibitem[Xia et~al.(2011)Xia, Conitzer, and Lang]{Xia2011}
L.~Xia, V.~Conitzer, and J.~Lang.
\newblock Strategic sequential voting in multi-issue domains and
  multiple-election paradoxes.
\newblock In Y.~Shoham, Y.~Chen, and T.~Roughgarden, editors, \emph{Proceedings
  12th {ACM} Conference on Electronic Commerce (EC 2011)}, pages 179--188, San
  Jose, CA, USA, June 5--9 2011.
\newblock URL \url{http://dl.acm.org/citation.cfm?doid=1993574.1993602}.

\end{thebibliography}
\end{document}